\newif\ifpx
\definecolor{orange}{RGB}{244,123,63}
\definecolor{cerulean}{rgb}{0.10, 0.58, 0.75}
\newcommand{\algfont}[1]{\texttt{\textbf{#1}}}
\newcommand{\kfpost}{(k,f_{1:H})\text{-}\mathrm{PosteriorSample}}
\newcommand{\tilP}{\tilde{P}}
\numberwithin{equation}{section}
\newcommand{\calE}{\mathcal{E}}
\DeclarePairedDelimiter{\nrm}{\|}{\|}
\newcommand{\algcomment}[1]{\textcolor{blue!70!black}{\footnotesize{\texttt{\textbf{//
          #1}}}}}
\DeclarePairedDelimiter{\ceil}{\lceil}{\rceil}
\let\Pr\undefined
\let\P\undefined
\DeclareMathOperator{\P}{\mathbb{P}}
\DeclareMathOperator{\Pr}{\mathbb{P}}
\DeclareMathOperator{\E}{\mathbb{E}}
\DeclareMathOperator{\diag}{diag}
\newcommand{\nipsvminpt}{\iftoggle{neurips}{\vspace{-.1in}}}
\DeclareMathOperator*{\argmax}{arg\,max}
 \newcommand{\R}{\mathbb{R}}
 \newcommand{\N}{\mathbb{N}}
\newcommand{\ind}{\mathbbm{1}}    
  \newtheorem{claim}{Claim}
  \newtheorem{theorem}{Theorem}[section]
  \newtheorem{lemma}{Lemma}[section]
  \newtheorem{corollary}[theorem]{Corollary}
 \newtheorem{proposition}[theorem]{Proposition}
\theoremstyle{definition}
\newtheorem{remark}{Remark}
\newtheorem{definition}{Definition}[section]
\newcommand{\one}{\mathbf{1}}
\newcommand{\calA}{\mathcal{A}}
\newcommand{\calX}{\mathcal{X}}
\DeclareMathOperator*{\Expop}{\mathbb{E}}
\begin{document}
\title{
Bayesian decision-making under misspecified priors with applications to meta-learning
 }

 \author{ 
 Max Simchowitz\thanks{UC Berkeley, \texttt{msimchow@berkeley.edu}. Research initiated during author's internships at MSR and generously supported by an Open Philanthropy PhD Fellowship grant.}
 \and 
 Christopher Tosh\thanks{Columbia University, \texttt{c.tosh@columbia.edu}}
\and 
Akshay Krishnamurthy\thanks{Microsoft Research NYC, \texttt{akshaykr@microsoft.com}} 
 \and 
 Daniel Hsu\thanks{Columbia University, \texttt{djhsu@cs.columbia.edu}. Research supported by NSF grants CCF-1740833, DMREF-1534910, IIS-1563785; a Bloomberg
Data Science Research Grant, a JP Morgan Faculty Award, and a Sloan Research Fellowship.} 
\and
 Thodoris Lykouris\thanks{Massachusetts Institute of Technology, \texttt{lykouris@mit.edu}. Research initiated during author's postdoc at MSR.} 
\and 
Miroslav Dud\'{i}k\thanks{Microsoft Research NYC, \texttt{mdudik@microsoft.com}} 
\and
Robert E. Schapire\thanks{Microsoft Research NYC, \texttt{schapire@microsoft.com}}
 }
\date{\today}
\maketitle

\begin{abstract}

Thompson sampling and other Bayesian sequential decision-making algorithms are among the most popular approaches to tackle explore/exploit trade-offs in (contextual) bandits.
The choice of prior in these algorithms offers flexibility to encode domain knowledge but can also lead to poor performance when misspecified.
In this paper, we demonstrate that performance degrades gracefully with misspecification.
We prove that the expected reward accrued by Thompson sampling (TS) with a misspecified prior differs by at most $\tilde{O}(H^2 \epsilon)$ from TS with a well specified prior, where $\epsilon$ is the total-variation distance between priors and $H$ is the learning horizon.

Our bound does not require the prior to have any parametric form.  For priors with bounded support, our bound is independent of the cardinality or structure of the action space, and we show that it is tight up to universal constants in the worst case.

Building on our sensitivity analysis, we establish generic PAC guarantees for algorithms in the recently studied Bayesian meta-learning setting and derive corollaries for various families of priors.
Our results generalize along two axes: (1) they apply to a broader family of Bayesian decision-making algorithms, including a Monte-Carlo implementation of the knowledge gradient algorithm (KG), and (2) they apply to Bayesian POMDPs, the most general Bayesian decision-making setting, encompassing contextual bandits as a special case. Through numerical simulations, we illustrate how prior misspecification and the deployment of one-step look-ahead (as in KG) can impact the convergence of meta-learning in multi-armed and contextual bandits with structured and correlated priors.

\end{abstract}


\newcommand{\tvarg}[2]{\mathrm{TV}(#1 \parallel #2)}

\section{Introduction}
\label{sec:intro}
Bayesian decision-making algorithms are widely popular, due to both strong empirical performance and the flexibility afforded by incorporating inductive biases and domain knowledge through priors. However, in practical applications, any chosen prior is at best an
approximation of the true environment in which the algorithm is
deployed. This raises a critical question:
\begin{quote}
\emph{How sensitive are Bayesian decision-making algorithms to prior misspecification?}
\end{quote}

For decision-making problems with a very large horizon, it suffices that the misspecified prior places a vanishingly small probability mass on the ground truth environment; this condition is referred to informally as a ``grain of truth.'' This is because, in the large-horizon limit, Bayesian algorithms (like many non-Bayesian methods) should converge to the optimal policy.

But in many practical settings, decision-making takes place on shorter time scales. Consider a news recommendation website that, when presented with a new user, sequentially offers a selection of currently trending articles. Such a system may only have a few opportunities to make recommendations before the user decides to navigate away, leaving little time to correct for misspecified or underspecified prior knowledge. Such examples are described more broadly by the meta-learning paradigm, where a single learning agent
must complete multiple disparate-though-related tasks.\looseness=-1

In meta-learning problems, and in short horizon problems more broadly, the ``grain of truth'' argument paints a rather uninformative picture. Consequently, recent work has begun to explore sensitivity bounds in shorter horizon applications \cite{liu2016prior, kveton2021meta}. However, these recent works focus on particular classes of priors and/or reward models, as well as on the Thompson sampling algorithm specifically. Notably, this leaves open questions about the extent to which prior sensitivity is determined by properties of the Bayesian decision-making algorithm, the reward model, and the prior itself.\looseness=-1

\newcommand{\Priorst}{P}
\newcommand{\Priorhat}{P'}
\subsection{Our Contributions}
Motivated by meta-learning problems with short task horizons, we establish general, distribution-independent, and worst-case optimal bounds on the sensitivity of Bayesian algorithms to prior misspecification. We focus on the Bayesian bandit setting, where a mean-vector ``environment'' $\boldsymbol{\mu}$ is drawn from a distribution $\Priorst$, and rewards for each action are drawn in accordance with $\boldsymbol{\mu}$. We study the performance of Bayesian algorithms which operate according to a \emph{misspecified} prior $\Priorhat$.

\paragraph{Sensitivity of Thompson Sampling and Related Bayesian Bandit Algorithms.} As a concrete example, we consider the expected reward obtained by Thompson sampling with misspecified prior $\Priorhat$ under environments drawn from true prior $\Priorst$.

When the mean rewards lie in the range $[0,1]$, as in the Bernoulli reward setting, we show that that the difference in expected reward between Thompson sampling with $\Priorhat$ and with $\Priorst$ is at most twice the total variation distance between $\Priorst$ and $\Priorhat$ multiplied by the \emph{square} of the horizon length. We prove a lower bound demonstrating that, for worst-case priors, this result is tight up to constants.
Moreover, our upper bound holds for any two priors $\Priorst$ and $\Priorhat$ and suffers no dependence on the complexity of the decision space.


We extend this result in two directions. First, we remove the boundedness requirement on the mean reward range, showing that so long as certain tail probability conditions on the prior means are satisfied, a similar result holds. Second, we generalize beyond Thompson sampling, bounding the prior sensitivity of a broad class of Bayesian bandit algorithms, which we term \emph{$n$-Monte Carlo algorithms}. Our lower bounds extend to this class, verifying sharp dependence on the parameter $n$. 
\nipsvminpt

\paragraph{Sample Complexity of Bayesian Meta-Learning.} We apply our prior sensitivity results to the Bayesian bandit meta-learning setting, in which a meta-learner iteratively interacts on bandit instances that are sampled from an unknown prior distribution.
Motivated by our sensitivity analysis we describe a generic algorithmic recipe for Bayesian meta-learning, in which the meta-learner explores for several episodes to estimate the prior and then exploits by instantiating a Bayesian decision-maker with the learned prior.
We formally consider two instantiations of this setup: (1) the Beta-Bernoulli setting where the rewards are Bernoulli and the prior is a product of Beta distributions and (2) the Gaussian-Gaussian setting where the rewards are Gaussian and the prior is a Gaussian (with arbitrary covariance structure) over the means. We note that the Gaussian-Gaussian setting was recently studied in \cite{kveton2021meta} but only for the diagonal covariance setting.

\paragraph{Bayesian Decision-Making Beyond the Bandit Setting.}

A striking feature of our proof is that it makes no explicit reference
to the structure of bandit decision-making. As a consequence, our
results extend seamlessly to both contextual bandits and the most
general Bayesian decision-making problem: Bayesian POMDPs. While our
sensitivity bounds hold almost verbatim in these settings, we note
that estimating the prior may be statistically much more challenging
in these scenarios, so there is no free lunch. To facilitate
readability of the paper, we defer all further discussion and formal
results to~\Cref{app:gen_bayes_dec}.\looseness=-1

\paragraph{Experimental results.}
We complement our meta-learning theory with synthetic experiments in multi-armed bandit and contextual bandit settings.
Our experiments show the benefits of (a) meta-learning broadly, (b) estimating higher-order moments of the prior distribution, and (c) using less myopic algorithms like the Knowledge Gradient~\cite{ryzhov2012knowledge} over Thompson sampling when faced with structured environments.\looseness=-1

\subsection{Related Work}
\paragraph{Bayesian Decision-Making.}  
Bayesian decision-making broadly refers to a class of algorithms that use Bayesian methods to estimate various problem parameters, and then derive decision/allocation rules from these estimates.
The study of Bayesian decision-making began with the seminal work of Thompson~\cite{thompson1933likelihood}, who introduced the Thompson sampling algorithm for adaptive experiment design in clinical trials. 
Thompson sampling later gained popularity in the reinforcement learning community as a means to solve multi-armed bandit and tabular reinforcement learning problems~\cite{strens2000bayesian,osband2017posterior}, and has been extended in many directions~\cite{abeille2017linear,abeille2018improved,gopalan2014thompson}. 
Recent years have seen the proliferation of other Bayesian decision-making and learning algorithms, including Information Directed Sampling~\cite{russo2016information}, Top-Two Thompson Sampling~\cite{russo2016simple}, and Knowledge-Gradient~\cite{ryzhov2012knowledge}. 

\paragraph{Sensitivity Analysis and Frequentist Regret.} 
The field of robust Bayesian analysis examines the sensitivity of Bayesian inference to prior and model misspecification (c.f.,~\cite{berger1994overview}).
These approaches typically do not consider decision-making, so they do not account for multi-step adaptive sampling inherent in our setting.
More recent works study frequentist regret for Thompson sampling \cite{agrawal2012analysis, kaufmann2012thompson}. 
These guarantees can be interpreted as controlling the sensitivity to arbitrary degrees of prior misspecification, but consequently, they do not provide a precise picture of how misspecification affects performance. 
Moreover, frequentist guarantees for Thompson sampling focus on relatively long learning horizons, so they are less relevant in the context of meta-learning with many short-horizon tasks.\looseness=-1

\paragraph{Short-Horizon Sensitivity.} 
Most closely related to our paper are two previous works on sensitivity of Thompson sampling to small amounts of misspecification in short horizon settings. \cite{liu2016prior} study the sensitivity of Thompson sampling for two-armed bandits when the prior has finite support. 
More recently, \cite{kveton2021meta} study meta-learning with Thompson sampling and derive sensitivity bounds for Thompson sampling in multi-armed bandits with Gaussian rewards and independent-across-arm Gaussian priors.
In contrast to both of these works, the bounds presented in this work apply to arbitrary families of priors, more general decision-making problems, and to more general families of decision-making algorithms. 
Further, as illustrated in \Cref{rem:comparison}, our bounds are also tighter than those achieved by \cite{kveton2021meta} when specialized to their precise setting. 
Finally, our lower bounds demonstrate that the square-horizon factor incurred in \cite{kveton2021meta} is unavoidable for worst-case priors (though perhaps not for their special case). 

\paragraph{Meta-learning and Meta-RL.}  Meta-learning is a classical learning
paradigm in which a learner faces many distinct-but-related
tasks~\cite{thrun1996explanation,thrun1998lifelong,baxter1998theoretical,baxter2000model,hochreiter2001learning}. While
the classical work primarily considered supervised learning tasks,
recent, predominantly empirical, work has focused on
meta-reinforcement learning (Meta-RL), where each task is itself a
decision-making
problem (c.f.,~\cite{wang2016learning,duan2016rl}). This includes
some Bayesian approaches~\cite{humplik2019meta}. While there have been some theoretical results on Meta-RL in various settings~\cite{azar2013sequential,cella2020meta,yang2021impact,hu2021near}, apart
from~\cite{kveton2021meta} we are not aware of other theoretical
treatments with a Bayesian flavor.

\newcommand{\trace}{\operatorname{tr}}
\newcommand{\F}{\operatorname{F}}
\newcommand{\traj}{\uptau}
\newcommand{\signal}{\upsigma}
\newcommand{\Plaw}{P}
\newcommand{\TS}{\mathsf{TS}}
\newcommand{\Hcal}{\mathcal{H}}
\newcommand{\alg}{\mathsf{alg}}
\newcommand{\TVof}[1]{\TV(#1)}
\newcommand{\KLof}[1]{\KL(#1)}
\newcommand{\history}{\uptau}
\newcommand{\Xtil}{\tilde{X}}
\newcommand{\thetast}{\theta^{\star}}
\newcommand{\thetahat}{\hat{\theta}}
\newcommand{\boldtheta}{{\bm{\theta}}}
\newcommand{\boldthetast}{{\bm{\theta}^{\star}}}
\newcommand{\muhat}{\hat{\mu}}
\newcommand{\zhat}{\hat{z}}
\newcommand{\Law}{\mathrm{Law}}
\newcommand{\KL}{\mathrm{KL}}
\newcommand{\TV}{\mathrm{TV}}
\newcommand{\Breg}{D_A}
\newcommand{\zrew}{z^r}
\newcommand{\zts}{z^{\mathrm{TS}}}
\newcommand{\ats}{a^{\mathrm{TS}}}
\newcommand{\tinyskip}{\mkern 1mu}
\newcommand{\matx}{\bm{x}}
\newcommand{\matxbar}{\bar{\matx}}

\newcommand{\matz}{\bm{z}}
\newcommand{\I}{\mathbb{I}}
\newcommand{\sfP}{\mathsf{P}}
\newcommand{\sfE}{\mathsf{E}}
\newcommand{\calB}{\mathcal{B}}
\newcommand{\Qlaw}{\mathbb{Q}}

\newcommand{\trajt}[1]{\traj^{(#1)}}
\newcommand{\boldmut}[1]{\boldmu^{(#1)}}
\newcommand{\expalg}{\mathsf{explore}}
\newcommand{\algt}[1]{\expalg^{(#1)}}

\newcommand{\iidsim}{\overset{\mathrm{i.i.d.}}{\sim}}
\newcommand{\mustar}{\mu_{\star}}
\newcommand{\prior}{\pi}
\newcommand{\Plawof}[1]{\Plaw^{\tinyskip #1}}
\newcommand{\boldmu}{\bm{\mu}}
\newcommand{\calD}{\mathcal{D}}
\newcommand{\boldA}{\mathbf{A}}
\newcommand{\boldr}{\mathbf{r}}
\newcommand{\bolds}{\mathbf{s}}
\newcommand{\bolde}{\mathbf{e}}
\newcommand{\boldu}{\mathbf{u}}
\newcommand{\boldv}{\mathbf{v}}
\newcommand{\boldw}{\mathbf{w}}
\newcommand{\boldx}{\mathbf{x}}
\newcommand{\boldy}{\mathbf{y}}
\newcommand{\boldz}{\mathbf{z}}
\newcommand{\boldI}{\mathbf{I}}

\newcommand\boldalpha{\bm{\alpha}}
\newcommand\boldbeta{\bm{\beta}}

\newcommand\boldnu{\bm{\nu}}
\newcommand\boldPsi{\bm{\Psi}}
\newcommand\normmean{\boldnu_{\star}}
\newcommand\normmeanest{\widehat{\boldnu}}
\newcommand\normcov{\boldPsi_{\star}}
\newcommand\normcovest{\widehat{\boldPsi}}
\newcommand\Normal{\mathcal{N}}
\newcommand\Uniform{\operatorname{Uniform}}
\newcommand\Beta{\operatorname{Beta}}
\newcommand\Bernoulli{\operatorname{Bern}}
\newcommand\Binomial{\operatorname{Bin}}
\newcommand\Categorical{\operatorname{Cat}}
\newcommand\alphast{\bm{\alpha}^{\star}}
\newcommand\alphasti[1]{\alpha^{\star}_{#1}}
\newcommand\betast{\bm{\beta}^{\star}}
\newcommand\betasti[1]{\beta^{\star}_{#1}}
\newcommand\boldq{\bm{q}}

\newcommand\T{{\scriptscriptstyle{\mathsf{T}}}} 
\newcommand\tr{\operatorname{tr}}

\newcommand{\regtheT}{\mathrm{Reg}_{T,\thetast}}
\section{Setting and Notation}
Throughout, we use bold $\mathbf{v}$ to denote vectors and non-bold $v_a$ to denote scalar indices. When the vector $\mathbf{v}_h$ has a subscript, $v_{h,a}$ denotes its coordinates.

\paragraph{Bayesian Bandit Learning under Misspecification.} A Bayesian bandit learning instance is specified by (a) an abstract action space $\calA$, (b) a parametric family of priors $P_{\theta}$ indexed by parameters $\theta \in \Theta$ over mean vectors $\boldmu \in \R^{\calA}$ with coordinates $\mu_a$, and (c) a function $\calD(\cdot):\R^{\calA} \to \Delta(\R^{\calA})$ mapping mean vectors $\boldmu$ to reward vectors $\boldr \in \R^{\calA}$ such that the mean reward under $\calD(\boldmu)$ is $\boldmu$:  $\E_{\boldr \sim \calD(\boldmu)}[\boldr] = \boldmu$.\footnote{In fact, our analysis extends to more general cases where the reward distribution is parameterized by more than just the mean vectors, but we restrict ourselves to the current setting for ease of exposition.} Note that this general setup allows the prior $P_{\theta}$ to encode complex dependencies between the mean rewards $\mu_a$ of actions $a \in \calA$. 

We consider an episodic bandit protocol with horizon $H$. First, $\boldmu \sim P_{\theta}$ is drawn from the prior. Then, at each time step $h =1,2,\dots,H$, the learner's policy, specified by an algorithm $\alg$, selects an action $a_h \in \calA$. Simultaneously, a reward vector $\boldr_h$ is drawn independently from $\calD(\boldmu)$, and the learner observes reward $r_{h} = r_{h,a_h}$.  The choice of action $a_h$ may depend on the partial trajectory $\traj_{h-1} = (a_1,r_1,\dots,a_{h-1},r_{h-1})$.
We let $P_{\theta,\alg}$ denote the joint law over $\boldmu$, and the full trajectory $\traj_H$, while expectations are denoted $E_{\theta,\alg}$.  We abbreviate the full trajectory $\traj = \traj_H$. We denote the cumulative reward 
\begin{align*}
R(\theta,\alg) := E_{\theta,\alg}\left[\sum_{h=1}^H r_h\right] = E_{\theta,\alg}\left[\sum_{h=1}^H \mu_{a_h}\right].
\end{align*}

\paragraph{Bayesian Learning Algorithms.} We study a class of algorithms $\alg(\theta)$ also parameterized by ${\theta \in \Theta}$. For concreteness, the reader may think of $\alg(\theta)$ as corresponding to Thompson sampling, where the learner {internally} computes posteriors {using $P_\theta$ as its prior.}
More general classes of Bayesian algorithms are defined in \Cref{ssec:n_monte_carlo}. We are interested in the consequences of misspecification; that is, interacting with $\boldmu \sim \Plaw_\theta$, but executing $\alg(\theta')$ for some other $\theta' \ne \theta$. Note that our notation for the induced law on the trajectory is $\Plaw_{\theta, \alg(\theta')}$.  

\paragraph{Episodic Bayesian Meta-Learning.} We apply the above framework to the problem of Bayesian meta-learning. Let $\thetast \in \Theta$ be a ground-truth parameter. At each episode $t = 1,2,\dots,T$, a mean parameter $\boldmut{t}$ is drawn i.i.d.~from $P_{\thetast}$. Simultaneously, the learner commits to a (potentially non-Bayesian) exploration strategy $\algt{t}$ and collects the induced trajectory $\trajt{t}$. At the end of $T$ episodes, the learner selects a parameter $\thetahat \in \Theta$ as a function of $\trajt{1},\dots,\trajt{T}$. The learner's performance is evaluated on the expected reward of the plug-in algorithm on $\thetahat$: $R(\thetast,\alg(\thetahat))$.

\newcommand{\klarg}[2]{\mathrm{KL}(#1 \parallel #2)}
\newcommand{\Bregarg}[2]{\mathrm{D}_A(#1 \parallel #2)}
\newcommand{\range}{\mathrm{diam}}
\newcommand{\tail}{\Psi}

\paragraph{Further notation.} Given two probability distributions $P$ and $Q$ over the same probability space $(\Omega,\mathcal{F})$, we denote their total variation $\tvarg{P}{Q} :=\sup_{\mathcal{E}\in \mathcal{F}}|P[\calE] - Q[\calE]|$ and Kullback-Leibler divergence $\klarg{P}{Q}$. If $P$ is a joint distribution of random variables $(X,Y,Z,\dots)$, $P(X)$ denotes the marginal of $X$ under $P$, and $P(Y | X)$ the conditional distribution (as a function of random variable $X$). We define the diameter of a mean vector as $\range(\boldmu) := \sup_{a \in \calA} \mu_a - \inf_{a \in \calA} \mu_a$, which is a random variable when $\boldmu$ is drawn from $P_{\theta}$. Throughout, $\log(\cdot)$ denotes the natural logarithm. Given a space $\calX$, we let $\Delta(\calX)$ be the set of probability distributions on $\calX$; see \Cref{sec:key_prop_tv} for measure-theoretic considerations.
\newcommand{\boldmuhat}{\hat{\boldmu}}
\newcommand{\Pst}{P^\star}
\newcommand{\Phat}{\hat{P}}
\newcommand{\normal}{\mathcal{N}}
\newcommand{\cvar}{\Psi}
\newcommand{\cvarbar}{\bar{\Psi}}
\newcommand{\veps}{\varepsilon}
\newcommand{\boldphi}{\boldsymbol{\phi}}
\newcommand{\topk}{\mathrm{Top}\text{-}k}

\newcommand{\kshot}{k\text{-}\mathrm{TS}}
\newcommand{\nshot}{n\text{-}\mathrm{TS}}

\newcommand{\boldmubar}{\bar{\boldmu}}
\newcommand{\mubar}{\bar{\mu}}

\newcommand{\twompc}{2\text{-}\mathrm{RHC}}
\newcommand{\boldmutil}{\tilde{\boldmu}}
\newcommand{\boldrtil}{\tilde{\boldr}}
\newcommand{\rtil}{\tilde{r}}
\newcommand{\mutil}{\tilde{\mu}}

\section{Prior Sensitivity in Bayesian Learning}
\label{sec:prior_sensitivity}
This section states sensitivity bounds for various Bayesian bandit algorithms and families of priors, starting with the concrete instance of Thompson sampling under priors with bounded-range means.  Our results extend almost verbatim to more general decision-making tasks such as contextual bandits; see \Cref{app:gen_bayes_dec} for further details. Throughout, we use the fact that the posterior distribution of the mean $\boldmu$ given trajectories $\traj_h$ does not depend on the choice of learning algorithm $\alg$; hence, we denote these posteriors $P_{\theta}[\cdot \mid \traj_{h-1}]$.\footnote{Note that whenever $\traj_h$ lies in the support of $P_\theta$, the posterior $P_\theta[\boldmu \mid \traj_{h-1}]$ is well-defined and unique, even if $\traj_{h-1}$ was generated by interacting with mean $\mu \sim P_{\theta'}$ for some $\theta'\neq \theta$. When $\traj_{h-1}$ does not lie in the support of $P_\theta$, we allow $P_{\theta}[\boldmu \mid \traj_{h-1}]$ to be any distribution over $\boldmu$ (for concreteness, one may default to $P_{\theta}[\boldmu].$) Note, however, that although $P_{\theta}[\boldmu \mid \traj_{h-1}]$ may not be uniquely defined, $P_{\theta,\alg}[\traj_{h-1} \mid \boldmu]$ is always uniquely defined and  independent of $\theta$.}

Recall the classical Thompson sampling algorithm: at each step $h$, $\TS(\theta)$ draws a mean $\tilde{\boldmu}_h \sim P_{\theta}[\cdot \mid \traj_{h-1}]$ and selects the reward-maximizing action $a_h \in \argmax_{a} \tilde{\mu}_{h,a}$.  We say that the prior $P_{\theta}$ is \emph{$B$-bounded} if $P_{\theta}[\range(\boldmu) \le B] = 1$. For Thompson sampling under $B$-bounded priors, we have the following result:

\begin{corollary}\label{cor:TS} Let $P_{\theta}$ be $B$-bounded. Then,  the suboptimality of misspecified Thompson sampling $\TS(\theta')$ on instance $\theta$ is at most
\begin{align*}|R(\theta,\TS(\theta)) - R(\theta,\TS(\theta'))| \le  2H^2 \cdot \tvarg{P_\theta}{P_{\theta'}} \cdot B.
\end{align*}
\end{corollary}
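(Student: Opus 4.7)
The plan has three moving parts: a reward-difference-to-TV reduction, a chain-rule decomposition across the $H$ steps, and a posterior-stability lemma that bounds how much Bayesian updates amplify the prior TV.

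\textbf{Reduction to TV.} Since $\boldmu$ has marginal law $P_\theta$ under both $\Plaw_{\theta,\TS(\theta)}$ and $\Plaw_{\theta,\TS(\theta')}$, subtracting the trajectory-independent quantity $H \inf_a \mu_a$ from $\sum_h \mu_{a_h}$ leaves the expected reward difference unchanged, while by $B$-boundedness the resulting integrand $\sum_h(\mu_{a_h} - \inf_a \mu_a)$ lies in $[0, HB]$ almost surely. The standard bound $|\E_P[f] - \E_Q[f]| \le (\sup f - \inf f)\,\tvarg{P}{Q}$ then gives
\[
|R(\theta,\TS(\theta)) - R(\theta,\TS(\theta'))| \;\le\; HB \cdot \tvarg{\Plaw_{\theta,\TS(\theta)}(\boldmu,\traj)}{\Plaw_{\theta,\TS(\theta')}(\boldmu,\traj)}.
\]

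\textbf{Chain rule and policy TV.} Because the two measures share the $P_\theta$-marginal on $\boldmu$, the joint TV equals $\E_{\boldmu \sim P_\theta}\bigl[\tvarg{\Plaw_{\TS(\theta)}(\traj \mid \boldmu)}{\Plaw_{\TS(\theta')}(\traj \mid \boldmu)}\bigr]$. The reward law $P(r_h \mid a_h, \boldmu)$ is common to both algorithms, so the per-step TV on $(a_h, r_h)$ given $(\traj_{h-1}, \boldmu)$ collapses to the policy TV $\tvarg{\pi_{\TS(\theta)}(\cdot \mid \traj_{h-1})}{\pi_{\TS(\theta')}(\cdot \mid \traj_{h-1})}$, which does not depend on $\boldmu$. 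Applying the chain rule for TV and then data processing (since $a_h = \argmax_a \tilde{\mu}_{h,a}$ is a deterministic function of the sampled mean) yields
\[
\tvarg{\Plaw_{\theta,\TS(\theta)}(\boldmu,\traj)}{\Plaw_{\theta,\TS(\theta')}(\boldmu,\traj)} \;\le\; \sum_{h=1}^{H} \E_{\traj_{h-1} \sim \Plaw_{\theta,\TS(\theta)}}\bigl[\tvarg{P_\theta(\boldmu \mid \traj_{h-1})}{P_{\theta'}(\boldmu \mid \traj_{h-1})}\bigr].
\]

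\textbf{Posterior stability --- the main obstacle.} The heart of the proof is to bound each summand by $2\,\tvarg{P_\theta}{P_{\theta'}}$; i.e., Bayesian updating against a true-prior history does not amplify the prior TV. I would prove this by pivoting through the ``mismatched-prior, matched-algorithm'' joint law $\Plaw_{\theta',\TS(\theta)}(\boldmu,\traj) = P_{\theta'}(\boldmu)\,\Plaw_{\TS(\theta)}(\traj \mid \boldmu)$ and applying the triangle inequality:
\begin{align*}
&\bigl|\Plaw_{\theta,\TS(\theta)}(\traj)\,P_\theta(\boldmu \mid \traj) - \Plaw_{\theta,\TS(\theta)}(\traj)\,P_{\theta'}(\boldmu \mid \traj)\bigr| \\
&\qquad \le\; \bigl|\Plaw_{\theta,\TS(\theta)}(\boldmu,\traj) - \Plaw_{\theta',\TS(\theta)}(\boldmu,\traj)\bigr| + P_{\theta'}(\boldmu \mid \traj)\,\bigl|\Plaw_{\theta,\TS(\theta)}(\traj) - \Plaw_{\theta',\TS(\theta)}(\traj)\bigr|.
\end{align*}
After integrating over $(\boldmu,\traj)$, the first term equals $2\,\tvarg{P_\theta}{P_{\theta'}}$ because both joint laws share the common likelihood factor $\Plaw_{\TS(\theta)}(\traj \mid \boldmu)$, and the second is at most $2\,\tvarg{P_\theta}{P_{\theta'}}$ by data processing on the trajectory marginal. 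Halving gives the lemma; summing over $h$ produces the joint TV bound $2H\,\tvarg{P_\theta}{P_{\theta'}}$, and multiplying by $HB$ recovers the claimed $2H^2 B\,\tvarg{P_\theta}{P_{\theta'}}$.
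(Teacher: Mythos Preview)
Your proposal is correct and follows essentially the same route as the paper. The paper packages your chain-rule step as a performance-difference-style decomposition (its Lemma~3.1) and your posterior-stability step as the ``Fundamental De-conditioning Lemma'' (its Lemma~3.2), but the arguments are identical: your density-level triangle inequality through $\Plaw_{\theta',\TS(\theta)}$ is exactly their proof of Lemma~3.2 (their interpolating measure $Q_\to$ coincides with your mixed quantity $\Plaw_{\theta,\TS(\theta)}(\traj)\,P_{\theta'}(\boldmu\mid\traj)$), and your $HB$-reduction is their Lemma~3.3 specialized to the bounded case.
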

\Cref{cor:TS} follows directly from \Cref{thm:n_montecarlo_body}, which we state in \Cref{sec:main_general_sensitivity}, and which generalizes the statement of the corollary
along two axes: to a more general family of Bayesian algorithms that we call ``$n$-Monte Carlo''
and to less restrictive conditions on the behavior of $\range(\boldmu)$, such as sub-Gaussian tails.
Due to lack of space, we focus on the first such generalization; the second direction is more technical in nature, and we leave its exposition to  \Cref{sec:general_sensitivity}.

\subsection{$n$-Monte Carlo algorithms}\label{ssec:n_monte_carlo}
Unfortunately, for arbitrary Bayesian bandit algorithms, the behavior under two different priors cannot always be controlled in terms of the total variation distance of their priors. Indeed, consider an algorithm that always pulls a particular arm $a^\star$ if the prior places any probability mass on a mean for which $a^\star$ is best; clearly, this algorithm's behavior is not robust to small changes in its prior distribution. However, many important Bayesian bandit algorithms, such as Thompson sampling, are not arbitrary functions of their priors; rather, they select actions based on their internal posterior distribution in a relatively stable manner.
We call such algorithms \emph{$n$-Monte Carlo algorithms}.
\begin{definition}[$n$-Monte Carlo algorithm]\label{defn:n_monte_carlo} Given $n > 0$, we say that a family of algorithms $\alg(\cdot)$ parameterized by $\theta \in \Theta$ is $n$-Monte Carlo if, for any $\theta,\theta'$, step $h \ge 1$, and partial trajectory $\traj_{h-1}$,
\begin{align*}
\tvarg{P_{\alg(\theta)}(a_{h} \mid \traj_{h-1})}{P_{\alg(\theta')}(a_{h} \mid \traj_{h-1})} \le n\cdot\tvarg{P_{\theta}(\boldmu \mid \traj_{h-1})}{P_{\theta'}(\boldmu \mid \traj_{h-1})}.
\end{align*}
\end{definition}

In words, $n$-Monte Carlo algorithms are those Bayesian algorithms for which small changes in the posterior distribution result in small changes (up to a multiplicative factor of $n$) in the distribution over actions. Note that on the left-hand side, we do not need to specify the true $\theta^\star$, because each algorithm's choice of an action can only depend on $\traj_{h-1}$.
The nomenclature arises because any algorithm that selects actions based exclusively on $n$ \emph{samples} from its posterior $P_\theta(\boldmu \mid \traj_{h-1})$ is $n$-Monte Carlo. 
\iftoggle{neurips}
{
However, the definition is more general and
in \Cref{app:mc_algs} we describe various algorithms that satisfy the $n$-Monte Carlo property,
summarizing key insights here:
\begin{itemize}
    \item We show that $\TS(\theta)$ is $1$-Monte Carlo.
    \item We introduce a generalization of Thompson sampling, which we call $k$-shot Thompson sampling ($\kshot(\theta)$), that samples $k$ means $\boldmutil_1, \ldots, \boldmutil_k$ i.i.d. from the posterior $P_{\theta}[\cdot \mid \traj_{h-1}]$, and selects the action $a_h \in \argmax_{a} \max \{\tilde{\mu}_{1,a},\ldots, \tilde{\mu}_{k,a}\}$. We show that $\kshot(\theta)$ is $k$-Monte Carlo.
    \item We introduce a Monte Carlo approximation of the knowledge gradient algorithm~\cite{ryzhov2012knowledge}, which we call two-step Receding Horizon Control ($\twompc(\theta)$). This algorithm is non-myopic in that it chooses an action that maximizes the expected value at the subsequent time (according to its own posterior updates). We show that when $\calA$ is finite, $\twompc(\theta)$ is $n$-Monte Carlo for some $n$ that is polynomial in $|\calA|$ and the number of Monte Carlo samples it draws from its posterior.
\end{itemize}
}
{

	We now elaborate upon various examples of algorithms satisfying the $n$-Monte Carlo property.


\paragraph{$k$-Shot Thompson Sampling.}  The first is a natural generalization of Thompson Sampling, where one draws not one but $k \in \N$ mean vectors $\boldmutil^{(i)}$ from the posterior at each step $h$, and selects the action for which one of the $k$ draws attains the highest observed realization: $a_h \in \argmax_{a} \max_i \mutil^{(i)}_a$. See \Cref{alg:kshot_TS}.

\begin{algorithm}
\caption{$k$-Shot Thompson Sampling ($\kshot(\theta)$)}\label{alg:kshot_TS}
\begin{algorithmic}[1]
\State{}\textbf{Input:} Prior $\theta$, sample size $k \in \N$
    \For{$h=1, \ldots, H$}
    \Statex{}\algcomment{action selection at step $h$}
    \State{}{Sample} $\boldmutil^{(1)}, \ldots, \boldmutil^{(k)}$ independently from the posterior $P_{\theta}[\cdot \mid \traj_{h-1}]$
    \State{}Select action $ a_h \in \argmax_{a} \max \{\mutil_{a}^{(1)},\ldots, \mutil_{a}^{(k)}\}.$
    \EndFor
\end{algorithmic}
\end{algorithm}
We show that $\kshot(\theta)$ is $k$-Monte Carlo.
\begin{restatable}{lemma}{lemkshot}\label{lem:kshot} For every $k \ge 1$, $\kshot(\cdot)$ is $k$-Monte Carlo. In particular, $\TS(\cdot)$ is $1$-Monte Carlo.
\end{restatable}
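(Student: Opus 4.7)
The plan is to prove \Cref{lem:kshot} by reducing the TV distance between the induced action distributions to the TV distance between the $k$-fold product posteriors, and then using subadditivity of TV over product measures.

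First, I would observe that under $\kshot(\theta)$, the action $a_h$ is a deterministic function (modulo arbitrary tie-breaking, which can be made a measurable function) of the $k$ independent samples $\boldmutil^{(1)},\ldots,\boldmutil^{(k)}$ drawn from the posterior $P_{\theta}[\boldmu \mid \traj_{h-1}]$. Thus, writing $\Pi_\theta := P_{\theta}[\boldmu \mid \traj_{h-1}]$, the conditional law $P_{\alg(\theta)}(a_h \mid \traj_{h-1})$ is the pushforward of $\Pi_\theta^{\otimes k}$ under the map $F(\boldmutil^{(1)},\ldots,\boldmutil^{(k)}) := \argmax_a \max_i \mutil^{(i)}_a$. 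By the data processing inequality for total variation (which states that TV is non-increasing under any measurable map applied to both arguments), we obtain
\begin{align*}
\tvarg{P_{\alg(\theta)}(a_h\mid \traj_{h-1})}{P_{\alg(\theta')}(a_h\mid\traj_{h-1})} \;\le\; \tvarg{\Pi_\theta^{\otimes k}}{\Pi_{\theta'}^{\otimes k}}.
\end{align*}

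Second, I would establish the subadditivity bound $\tvarg{\Pi_\theta^{\otimes k}}{\Pi_{\theta'}^{\otimes k}} \le k \cdot \tvarg{\Pi_\theta}{\Pi_{\theta'}}$. The cleanest proof is via the maximal coupling characterization of TV: for each coordinate $i$, there exists a coupling $(\boldmutil^{(i)}, \boldmutil'^{(i)})$ with marginals $\Pi_\theta$ and $\Pi_{\theta'}$ such that $\Pr[\boldmutil^{(i)} \neq \boldmutil'^{(i)}] = \tvarg{\Pi_\theta}{\Pi_{\theta'}}$. Taking the product coupling across $i=1,\ldots,k$ yields a coupling of $\Pi_\theta^{\otimes k}$ and $\Pi_{\theta'}^{\otimes k}$ whose disagreement probability is, by a union bound, at most $k\cdot \tvarg{\Pi_\theta}{\Pi_{\theta'}}$. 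Combining the two displays gives the claim, and the $k=1$ specialization recovers that $\TS(\cdot)$ is $1$-Monte Carlo.

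I do not anticipate a serious obstacle here: both the data processing inequality and the product-measure subadditivity of TV are standard. The only mild subtlety is ensuring that the action-selection map $F$ is a well-defined measurable function of the $k$ sampled means, which requires fixing any measurable tie-breaking rule (e.g., lexicographic ordering on $\calA$ when $\calA$ is countable, or an arbitrary measurable selection more generally). Once that is handled, the two inequalities chain directly to yield the $k$-Monte Carlo property.
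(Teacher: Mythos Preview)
Your proposal is correct and follows essentially the same approach as the paper. The paper derives \Cref{lem:kshot} as a special case of \Cref{lem:samples_to_n_mc}, whose proof (in \Cref{sec:lem:samples_to_n_mc}) constructs the product of maximal couplings of the posteriors and applies a union bound---precisely your second step---while your first step (data processing) is handled there by the explicit coupling of actions; the paper's tensorization and data-processing properties (\Cref{lem:key_tv_props}(e),(f)) are exactly the two ingredients you invoke.
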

\paragraph{Generalized Posterior Sampling.} \Cref{lem:kshot} follows from an analysis of a more general recipe for $n$-Monte Carlo algorithms. \Cref{alg:general_post_sample} describes describe a family of posterior sampling algorithms $\kfpost(\theta)$, parameterized by prior $\theta$ and determined by a sample size $k \in \N$ and functions $f_1, \ldots, f_H: \R^{\calA \times n} $ from $\R^{\calA \times k}$ to probability distribution  $\Delta^\calA$ over actions. At each step $h$, $k$ means $\boldmutil^{(1)}, \ldots, \boldmutil^{(k)}$ are sampled from the posterior, and an action $a_h$ is drawn from the probability distribution $f_h(\cdot \mid \boldmutil^{(i)}, \ldots, \boldmutil^{(k)})$ induced by evaluating $f_h$ on the sampled means. 

\begin{algorithm}
\caption{$(k,f_{1:H})$-Posterior Sampling ($\kfpost(\theta)$)}\label{alg:general_post_sample}
\begin{algorithmic}[1]
\State{}\textbf{Input:} Prior $\theta$, sample size $k \in \N$, functions $f_1, \ldots, f_H: \R^{\calA \times k} \rightarrow \Delta^{\calA}$. 
    \For{$h=1, \ldots, H$}
    \Statex{}\algcomment{action selection at step $h$}
    \State{}{Sample} $\boldmutil^{(1)}, \ldots, \boldmutil^{(k)}$ independently from the posterior $P_{\theta}[\cdot \mid \traj_{h-1}]$
    \State{}Select action $ a_h \sim f_h(\cdot \mid \boldmutil^{(1)}, \ldots, \boldmutil^{(k)})$.
    \EndFor
\end{algorithmic}
\end{algorithm}

Note that $\kshot(\theta)$ corresponds to the special case where $f_h = f$ is constant across $h$, and places a dirac mass on the action for which $\max_i \mutil^{(i)}_a$ is largest (with a suitable tie-breaking rule). In particular, $\TS(\theta)$ is a special case of $\kfpost(\theta)$ with $k = 1$. Other algorithms in the family include the rule which selects the arm with largest sample average/sum  $a_h \in \sum_{i} \mutil^{(i)}_a$, or a policy which selects $a_h$ according to a softmax distribution on the sums $\sum_{i} \mutil^{(i)}_a$. 

The following lemma shows that, regardless of the functions $f_1,\dots,f_H$, $\kfpost(\theta)$ is $k$-Monte Carlo. Note that \Cref{lem:kshot} follows as a special case.
\begin{lemma}
\label{lem:samples_to_n_mc}
For any $k \in \N$ and $f_{1},\dots,f_H: \R^{\calA \times k} \to \Delta^{\calA}$, the family of Bayesian algorithms given by $\kfpost(\cdot)$ is $k$-Monte Carlo.
\end{lemma}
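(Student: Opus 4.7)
The plan is to express the action distribution as a two-step pushforward of the product posterior and then apply two classical total-variation inequalities: the data-processing inequality and the tensorization (subadditivity) bound.

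More concretely, fix $\theta,\theta'\in\Theta$, step $h$, and a partial trajectory $\traj_{h-1}$. Let $\Pi_\theta := P_\theta[\boldmu \mid \traj_{h-1}]$ and $\Pi_{\theta'} := P_{\theta'}[\boldmu \mid \traj_{h-1}]$ denote the two posteriors on $\R^\calA$. By the definition of $\kfpost$, the conditional law of $a_h$ given $\traj_{h-1}$ can be written as the composition of (i) drawing $(\boldmutil^{(1)},\dots,\boldmutil^{(k)})$ from the product measure $\Pi_\theta^{\otimes k}$ on $(\R^\calA)^k$ and (ii) pushing the result through the Markov kernel $K(\cdot \mid \boldmutil^{(1)},\dots,\boldmutil^{(k)}) := f_h(\cdot \mid \boldmutil^{(1)},\dots,\boldmutil^{(k)})$. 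Writing $K\#\nu$ for the pushforward of a measure $\nu$ on $(\R^\calA)^k$ under $K$, we therefore have
\begin{align*}
P_{\alg(\theta)}(a_h \mid \traj_{h-1}) = K \# \Pi_\theta^{\otimes k}, \qquad P_{\alg(\theta')}(a_h \mid \traj_{h-1}) = K \# \Pi_{\theta'}^{\otimes k}.
\end{align*}

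The first key step is the data-processing inequality for total variation: for any Markov kernel $K$ and any two measures $\nu,\nu'$ on its domain, $\tvarg{K\#\nu}{K\#\nu'} \le \tvarg{\nu}{\nu'}$. Applied here, this gives
\begin{align*}
\tvarg{P_{\alg(\theta)}(a_h \mid \traj_{h-1})}{P_{\alg(\theta')}(a_h \mid \traj_{h-1})} \le \tvarg{\Pi_\theta^{\otimes k}}{\Pi_{\theta'}^{\otimes k}}.
\end{align*}
The second key step is the standard tensorization bound $\tvarg{\mu^{\otimes k}}{\nu^{\otimes k}} \le k \cdot \tvarg{\mu}{\nu}$, which follows by a telescoping argument: couple the first $j-1$ coordinates of $\mu^{\otimes k}$ and $\nu^{\otimes k}$ optimally in the $j$th marginal, use the triangle inequality across the $k$ hybrids, and apply the fact that product measures satisfy $\mathrm{TV}(\mu\otimes\rho,\nu\otimes\rho)=\mathrm{TV}(\mu,\nu)$. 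Chaining the two inequalities yields
\begin{align*}
\tvarg{P_{\alg(\theta)}(a_h \mid \traj_{h-1})}{P_{\alg(\theta')}(a_h \mid \traj_{h-1})} \le k \cdot \tvarg{\Pi_\theta}{\Pi_{\theta'}},
\end{align*}
which is precisely the $k$-Monte Carlo condition of \Cref{defn:n_monte_carlo}.

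There is no real obstacle here: both the data-processing inequality and the tensorization bound for total variation are textbook facts, and the only content of the lemma is to observe that the conditional action distribution of $\kfpost$ is built from the posterior through exactly these two operations (an i.i.d.\ product followed by a Markov kernel that does not depend on $\theta$). The mildest subtlety is to justify the data-processing step when $f_h$ is a genuine stochastic kernel rather than a deterministic function; this is handled by noting that a kernel is a convex combination of deterministic maps, or directly by computing $(K\#\nu)(A)=\int K(A\mid x)\,\mathrm d\nu(x)$ and bounding $|K\#\nu(A)-K\#\nu'(A)|\le \tvarg{\nu}{\nu'}$ uniformly in $A$.
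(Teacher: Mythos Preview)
Your proof is correct and takes essentially the same approach as the paper: the paper builds an explicit coupling of the action distributions from the product of maximal couplings of the posteriors, which is exactly the coupling underlying the tensorization bound, and then uses the coupling characterization of total variation to get the data-processing step. You have simply packaged the same argument more abstractly by invoking the data-processing and tensorization inequalities by name rather than unrolling the coupling.
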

The proof of \Cref{lem:samples_to_n_mc} is quite intuitive, and is given in \Cref{sec:lem:samples_to_n_mc}.

\paragraph{Receding Horizon Control.} Some Bayesian bandit algorithms do not exactly satisfy the conditions of \Cref{lem:samples_to_n_mc} but are nonetheless $n$-Monte Carlo. As an example, we consider a sampling-based implementation of two-stage receding horizon control, $\twompc(\theta)$,  detailed \Cref{alg:two_mpc}. $\twompc(\theta)$ selects an action $a$ which maximizes $V_{a}$, which can be thought of as a discounted two-step value function, balancing (a) selection of actions with large {posterior} means  and (b) selection of actions that are sufficiently informative such that the best action for ``look-ahead'' means sampled from the next stage yield large reward. This balance is controlled by a discount parameter $\alpha \in [0,1]$.

\begin{algorithm}
\caption{Two-Step Receding Horizon Control ($\twompc(\theta)$)}\label{alg:two_mpc}
\begin{algorithmic}[1]
\State{}\textbf{Input:} Prior $\theta$, discount parameter $\alpha \in [0,1]$, sample sizes $k_1,k_2 \in \N$.
    \For{$h=1, \ldots, H$}
    \Statex{}\algcomment{action selection at step $h$}
    \For{actions $a \in \calA$ and $i = 1,\ldots, k_1$}
    \State{}{Sample} mean $\boldmutil^{(a,i)}\sim P_{\theta}[\cdot \mid \traj_{h-1}]$ 
    \State{}Sample reward vector $\boldrtil^{(a,i)} \sim \calD(\boldmutil^{(a,i)})$
    \For{$j = 1,2,\dots, k_2$}
    \State{}{Sample} ``look-ahead'' means $\boldmuhat^{(a,i,j)} \sim P_{\theta}[\in \cdot \mid \traj_{h-1} \text{ and } \{r_{h+1,a} = \rtil_{a}^{(a,i)}\}]$ 
    \EndFor
    \EndFor
    \State{}\textbf{Select} action $a_h \in \argmax_{a} V_{a,h}$, where 
    \begin{align}
    V_{a} :=  \sum_{i=1}^{k_1}\left((1-\alpha)  \mutil_{a}^{(a,i)} + \alpha\left(\max_{a'}\frac{1}{k_2}\sum_{j=1}^{k_2}\muhat_{a'}^{(a,i,j)}\right)\right). \label{eq:Va}
    \end{align}
    \EndFor
\end{algorithmic}
\end{algorithm}

At the one extreme $\alpha = 0$, $\twompc(\theta)$ has no look-ahead, and is a special case of $(k,f_{1:H})$-Posterior Sampling with $k = |\calA|k_1$. At the other extreme $\alpha = 1$, $\twompc(\theta)$ disregards means sampled from the posterior, and only evaluates actions $a$ by how informative they are about the look-ahead means. This latter case, $\alpha = 1$,  in fact gives a Monte Carlo approximation of the classical knowledge gradient algorithm~\cite{ryzhov2012knowledge}. The following lemma verifies the Monte Carlo property for all choices of $\alpha$.
\begin{restatable}{lemma}{lemtwompc}
\label{lem:twompc_n_mc}
$\twompc(\cdot)$ with budgets $k_1,k_2$ and finite action set is $n$-Monte Carlo for $n = |\calA|\cdot k_1\cdot (2k_2+3)$, regardless of discount $\alpha \in [0,1]$.
\end{restatable}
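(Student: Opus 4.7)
The plan is to bound the total-variation distance between the joint distributions of the internal Monte Carlo samples drawn by $\twompc$ in a single step under $\theta$ versus $\theta'$, and then invoke data processing since the selected action $a_h$ is a deterministic function of those samples (via the statistics $V_a$ in~\eqref{eq:Va}, under any fixed tiebreaking rule). Fix $\traj_{h-1}$ and set $\delta := \tvarg{P_\theta(\boldmu \mid \traj_{h-1})}{P_{\theta'}(\boldmu \mid \traj_{h-1})}$. The action-relevant samples form the $|\calA|k_1$ tuples
\[
T_{a,i} := \bigl(\boldmutil^{(a,i)},\; \rtil^{(a,i)}_a,\; \boldmuhat^{(a,i,1)}, \ldots, \boldmuhat^{(a,i,k_2)}\bigr),
\]
since the other coordinates of $\boldrtil^{(a,i)}$ do not enter $V_a$. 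These tuples are independent across $(a,i)$ given $\traj_{h-1}$, so by the tensorization inequality $\tvarg{\otimes_j P_j}{\otimes_j Q_j} \leq \sum_j \tvarg{P_j}{Q_j}$ it suffices to bound the single-tuple TV by $(2k_2+3)\delta$ and then multiply by $|\calA|k_1$.

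For a single tuple, the key reformulation is that by Bayes' rule the conditional distribution of $\boldmutil$ given $\rtil_a$ under $\theta$ is exactly $P_\theta[\cdot\mid\traj_{h-1},\,r_{h+1,a}=\rtil_a]$, i.e., the same conditional posterior from which each $\boldmuhat^{(j)}$ is independently drawn. Hence the tuple admits the equivalent generative description: (i) draw $\rtil_a$ from its marginal $P_\theta(r_{h+1,a}\mid\traj_{h-1})$; (ii) draw $k_2+1$ i.i.d.\ samples from $P_\theta[\cdot\mid\traj_{h-1},\rtil_a]$, identified with $\boldmutil$ and the $\boldmuhat^{(j)}$'s. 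From this reformulation I would run a hybrid argument of length $k_2+2$: first swap the marginal of $\rtil_a$ from $P_\theta$ to $P_{\theta'}$, costing at most $\delta$ by data processing on the reward kernel $\calD$; then swap each of the $k_2+1$ conditional samples one at a time, each costing the expected conditional TV $\E_{\rtil_a}[\tvarg{P_\theta[\cdot\mid\traj_{h-1},\rtil_a]}{P_{\theta'}[\cdot\mid\traj_{h-1},\rtil_a]}]$ with respect to whichever marginal is in force in the current hybrid.

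The one non-trivial estimate is to bound this expected conditional TV by $2\delta$, symmetrically in the choice of marginal. I would establish this via the identity
\[
\E_{\rtil_a \sim P(\rtil_a)}\bigl[\tvarg{P(\cdot\mid \rtil_a)}{P'(\cdot\mid \rtil_a)}\bigr] \;=\; \tvarg{P(\boldmu,\rtil_a)}{P(\rtil_a)\cdot P'(\boldmu\mid \rtil_a)},
\]
which is a direct computation, and then apply the triangle inequality through the intermediate joint $P'(\boldmu,\rtil_a) = P'(\rtil_a)\,P'(\boldmu\mid\rtil_a)$. The first resulting TV is at most $\delta$ by data processing applied to the prior-to-joint map $P(\boldmu)\mapsto P(\boldmu)\,\calD(\boldmu,\cdot)$, and the second collapses, after factoring out $P'(\boldmu\mid\rtil_a)$, to $\tvarg{P(\rtil_a)}{P'(\rtil_a)} \leq \delta$. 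Combining, the single-tuple chain contributes $\delta + (k_2+1)\cdot 2\delta = (2k_2+3)\delta$; tensorizing over $|\calA|k_1$ independent tuples yields $n = |\calA|k_1(2k_2+3)$, independent of $\alpha$.

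The main obstacle is bookkeeping rather than technical depth: the re-factorization must correctly identify $\boldmutil$ and the $\boldmuhat^{(j)}$'s as exchangeable draws from the same conditional posterior (which is where the $k_2+1$ rather than $k_2$ comes from), and the symmetric bound on the expected conditional TV must hold uniformly so the hybrid argument proceeds identically whether intermediate hybrids draw $\rtil_a$ under $P_\theta$ or $P_{\theta'}$.
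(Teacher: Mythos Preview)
Your proposal is correct and follows essentially the same route as the paper: tensorize over the $|\calA|k_1$ independent blocks, observe that $\boldmutil^{(a,i)}$ and the $\boldmuhat^{(a,i,j)}$'s are exchangeable i.i.d.\ draws from the posterior conditioned on $\rtil_a^{(a,i)}$ (the paper records this as \Cref{claim:twompc_props}), and then bound the single-block TV by $(2k_2+3)\delta$ via a triangle/hybrid argument together with the de-conditioning estimate $\E_{\rtil_a}[\tvarg{P(\boldmu\mid\rtil_a)}{P'(\boldmu\mid\rtil_a)}]\le 2\delta$. The paper packages the last step as a standalone decoupling lemma (\Cref{lem:decoupling}) proved with one intermediate measure rather than your step-by-step hybrid, but the manipulations and constants are the same.
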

The proof of the above lemma is {provided} in \Cref{sec:MC_prop_two_mpc}.

}

\subsection{General Sensitivity Upper and Lower Bounds}
\label{sec:main_general_sensitivity}

We are now ready to state a general prior sensitivity bound for $n$-Monte Carlo algorithms. For simplicity, we state our bounds for $B$-bounded priors, that is, $P_\theta[\range(\boldmu) \le B] = 1$, and under a natural sub-Gaussian tail condition stated formally in \Cref{sec:general_sensitivity} (\Cref{thm:n_montecarlo_tails}).

\begin{theorem}\label{thm:n_montecarlo_body} Let $\alg(\cdot)$ be an $n$-Monte Carlo family of algorithms on horizon $H \in \N$, and let $\theta,\theta' \in \Theta$. Setting $\veps = \tvarg{P_\theta}{P_{\theta'}}$,
we have the following guarantees.
\begin{itemize}
	\item[(a)] If $P_\theta$ is $B$-bounded, then $|R(\theta,\alg(\theta)) - R(\theta,\alg(\theta'))| \le 2n H^2 \veps \cdot B$.
	\item[(b)] If $P_\theta$ is coordinate-wise $\sigma^2$-sub-Gaussian, then
	\begin{align*} |R(\theta,\alg(\theta)) - R(\theta,\alg(\theta'))| \le 2nH^2 \veps \left( \range(E_\theta[\boldmu]) + \sigma \left(8 + 5 \sqrt{\log\left( \tfrac{|\calA|^2}{\min\{1,2nH\epsilon\}}\right)} \right)  \right).
	\end{align*}
\end{itemize}
\end{theorem}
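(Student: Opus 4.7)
\textbf{Proof plan for \Cref{thm:n_montecarlo_body}.} Let $P = \Plaw_{\theta, \alg(\theta)}$ and $Q = \Plaw_{\theta, \alg(\theta')}$ denote the two joint laws on $(\boldmu, \traj_H)$; both factor as prior $P_\theta$ on $\boldmu$ times the same reward kernel $\calD(\boldmu)$, differing only in the action-selection rules $P_{\alg(\theta)}(a_h \mid \traj_{h-1})$ vs.\ $P_{\alg(\theta')}(a_h \mid \traj_{h-1})$. The high-level strategy is to (i) bound $\tvarg{P}{Q}$ by ``propagating'' the prior-level $\veps$ through the horizon, and then (ii) turn the trajectory-level TV bound into a reward gap using the appropriate boundedness/tail control on $\boldmu$.

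For step (i), I would apply the standard subadditivity chain rule for total variation to the sequential factorization: because $P$ and $Q$ share the prior and reward kernel, only the action-selection factor contributes at each step, giving
\begin{align*}
\tvarg{P(\boldmu,\traj_H)}{Q(\boldmu,\traj_H)}
\;\le\; \sum_{h=1}^{H}\E_{\traj_{h-1}\sim P}\bigl[\tvarg{P_{\alg(\theta)}(a_h\mid\traj_{h-1})}{P_{\alg(\theta')}(a_h\mid\traj_{h-1})}\bigr].
\end{align*}
Invoking the $n$-Monte Carlo property of \Cref{defn:n_monte_carlo} replaces each action-level TV by $n$ times the posterior-level TV $\tvarg{P_\theta(\boldmu\mid\traj_{h-1})}{P_{\theta'}(\boldmu\mid\traj_{h-1})}$. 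The key technical lemma I would then prove is a \emph{posterior stability} inequality: for the joint law $\mu^{\rm alg}$ obtained by running $\alg(\theta)$ and the alternative $\nu^{\rm alg}$ obtained by only swapping the prior to $P_{\theta'}$,
\begin{align*}
\E_{\traj_{h-1}\sim P_{\theta,\alg(\theta)}}\bigl[\tvarg{P_\theta(\boldmu\mid\traj_{h-1})}{P_{\theta'}(\boldmu\mid\traj_{h-1})}\bigr] \;\le\; 2\veps.
\end{align*}
This follows from a short Bayes-rule / triangle-inequality computation, exploiting that the algorithm-dependent likelihood factors cancel in the posterior so that $\mu^{\rm alg}$ and $\nu^{\rm alg}$ differ only through the prior, whose TV is $\veps$. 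Summing over $h$ gives the clean estimate $\tvarg{P}{Q}\le 2nH\veps$.

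For step (ii) in part (a), I would use the ``shared marginal'' trick: because $\boldmu$ has the identical $P_\theta$ law under both $P$ and $Q$, the random variable $m(\boldmu)\defeq\inf_a\mu_a$ contributes the same mean on both sides, so
\begin{align*}
R(\theta,\alg(\theta))-R(\theta,\alg(\theta'))
\;=\; \E_P\Bigl[\textstyle\sum_{h}(\mu_{a_h}-m(\boldmu))\Bigr]-\E_Q\Bigl[\textstyle\sum_{h}(\mu_{a_h}-m(\boldmu))\Bigr].
\end{align*}
Since $\sum_h(\mu_{a_h}-m(\boldmu))\in[0,HB]$ almost surely, the standard inequality $|\E_P f-\E_Q f|\le\|f\|_\infty\tvarg{P}{Q}$ combined with $\tvarg{P}{Q}\le 2nH\veps$ yields the claimed $2nH^2 B\veps$.

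For part (b), the same centering trick is used but with the deterministic pivot $\min_a E_\theta[\mu_a]$: write $\mu_{a_h}=\bigl(E_\theta[\mu_{a_h}]-\min_a E_\theta[\mu_a]\bigr)+\bigl(\mu_{a_h}-E_\theta[\mu_{a_h}]\bigr)$. The first summand lies in $[0,\range(E_\theta[\boldmu])]$ and contributes the $\range(E_\theta[\boldmu])$ term directly via the TV bound. The second summand $C_{a_h}\defeq\mu_{a_h}-E_\theta[\mu_{a_h}]$ is only sub-Gaussian, so I would truncate at a threshold $T$: apply the TV bound to $C_{a_h}\mathbbm{1}\{\max_a|C_a|\le T\}\in[-T,T]$ and handle the tail via a maximal sub-Gaussian concentration inequality giving $\Pr[\max_a|C_a|>T]\le 2|\calA|e^{-T^2/(2\sigma^2)}$ together with standard tail integration for $\E[\max_a|C_a|\,\mathbbm{1}\{\cdot>T\}]$. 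Optimizing $T\asymp \sigma\sqrt{\log(|\calA|^2/\min\{1,2nH\veps\})}$ balances the two contributions and reproduces the stated logarithmic factor.

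The main obstacle is the posterior stability lemma in step (i): establishing it cleanly requires a careful identification of which factors of the likelihood cancel in the Bayes posterior and a triangle-inequality argument that loses only the factor of $2$ rather than something larger; everything downstream is a combination of the chain rule, the $n$-Monte Carlo definition, and (in part (b)) a textbook truncation argument for sub-Gaussian maxima.
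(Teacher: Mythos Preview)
Your plan is correct and follows essentially the same route as the paper. Your step~(i) reproduces the paper's argument exactly: the chain rule is \Cref{lem:perf_diff_bandits}, and your ``posterior stability lemma'' is precisely the paper's Fundamental De-conditioning Lemma (\Cref{lem:fundamental_deconditioning}), applied with $X=\traj_{h-1}$, $Y=\boldmu$, $Q=P_{\theta,\alg(\theta)}$, $Q'=P_{\theta',\alg(\theta)}$; the factor~$2$ you flag as the main obstacle is exactly the one the paper obtains there. Your step~(ii) for part~(a)---centering at $\inf_a\mu_a$ and using the variational TV bound on a $[0,HB]$-valued functional---is equivalent to the paper's coupling formulation in \Cref{lem:tv_to_tail} and gives the identical constant.

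The only noteworthy divergence is in part~(b). The paper packages the unbounded case through an \emph{upper tail expectation} functional $\Psi_\theta(p)$ (\Cref{defn:tail_exp_new}): a coupling with shared $\boldmu$ gives $|R-R'|\le H\delta\,\Psi_\theta(\delta)$ with $\delta=2nH\veps$, and then $\Psi_\theta$ is bounded once and for all under each tail condition (\Cref{lem:tail_condition_bounds}). Your direct truncate-and-integrate argument is the same computation ``unrolled'': it yields the same $\sqrt{\log(|\calA|/\veps)}$ dependence, though the specific constants $8$ and $5$ come from the paper's geometric-series estimate on $\Psi_\theta$ and would differ slightly under a naive truncation (the symmetric range $[-T,T]$ costs an extra factor of~$2$ that the coupling avoids via the one-sided bound $\mu_{a_h}-\mu_{a_h'}\le\range(\boldmu)$). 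The paper's abstraction buys reusability---the same $\Psi_\theta$ machinery handles the sub-Gamma case and the POMDP extension without further work---whereas your approach is more self-contained for the sub-Gaussian case at hand.
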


Next, we complement our upper bound with a lower bound that matches \Cref{thm:n_montecarlo_body}(a) for $n$-shot Thompson sampling (an $n$-Monte Carlo algorithm) up to a multiplicative constant:
\begin{theorem}[Lower Bound, Informal]\label{thm:lb_informal} For any parameter $n \in \N$, horizon $H \gg 1$, number of arms $N =|\calA| \gg H$, and separation $\epsilon \ll 1/nH$, there exist two priors $P_\theta$ and $P_{\theta'}$ over bounded means $\boldmu \in [0,1]^{N}$ such that $\tvarg{P_\theta}{P_{\theta'}} = \epsilon$ and
\begin{align*}
R(\theta,\nshot(\theta)) \ge R(\theta,\nshot(\theta')) + (1 - o(1))\cdot\smash{\frac{nH^2 \epsilon}{2}},
\end{align*}
where the $o(1)$ decays to zero as $1/H$, $H/N$, $\epsilon n H \to 0$.
\end{theorem}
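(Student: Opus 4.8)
The plan is to exhibit, for each admissible choice of $n,H,N,\epsilon$, a pair of priors on $[0,1]^{\calA}$ with $|\calA|=N$ satisfying $\tvarg{P_\theta}{P_{\theta'}}=\epsilon$ exactly, on which $\nshot(\theta)$ beats $\nshot(\theta')$ by nearly $nH^2\epsilon/2$. The construction is a ``needle in a haystack'': under $P_\theta$ there is (with controlled probability) a single \emph{treasure} action $a^\star$ with mean reward $1$, all other actions having mean $0$; $P_{\theta'}$ is obtained from $P_\theta$ by moving exactly $\epsilon$ worth of probability mass off of the treasure (e.g.\ forbidding the treasure from lying in a prescribed block of actions, or deleting it altogether), so that $P_{\theta'}$ gives $a^\star$ no special status. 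The abundance of actions, $N\gg H$, is used only to ensure that a learner which lacks the prior hint has a vanishing chance of ever blindly hitting $a^\star$. The amount and placement of the perturbed mass will be tuned so that, simultaneously, (i) the total variation is \emph{exactly} $\epsilon$ and (ii) along any partial trajectory in which $a^\star$ has not yet been pulled, the $P_\theta$-posterior probability that the action under consideration is the treasure is $(1\pm o(1))\epsilon$ --- this posterior probability is what ultimately drives the bound.

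First I would analyze $\nshot(\theta)$. The crucial feature is optimism: since $\nshot$ selects $\operatorname{argmax}_a\max_{i\le n}\tilde\mu^{(i)}_a$, at each step $h$ before $a^\star$ is pulled it plays $a^\star$ with probability $1-(1-p_h)^n$, where $p_h$ is the current posterior probability that the treasure sits there; because $a^\star$ has not been observed and, by design, pulling non-treasure actions is (essentially) uninformative about the existence and location of the treasure, $p_h$ stays within a $1\pm o(1)$ factor of $\epsilon$, so this per-step detection probability is $(1\pm o(1))\,n\epsilon$ whenever $n\epsilon\to0$. Hence the first time $\tau$ that $\nshot(\theta)$ pulls $a^\star$ is stochastically close to a geometric variable with success probability $\approx n\epsilon$; since $n\epsilon H\to0$ this $\tau$ is, up to $o(1)$, uniform on $\{1,\dots,H\}$ and rarely exceeds $H$. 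Once $a^\star$ is pulled its reward $1$ is observed, the posterior collapses onto ``$a^\star$ is the treasure,'' and $\nshot(\theta)$ plays $a^\star$ for every remaining step, collecting $H-\tau+1$. Summing, $\mathbb E[\sum_h \mu_{a_h}]=\sum_{h=1}^{H}\mathbb P[\tau=h]\,(H-h+1)=(1-o(1))\,n\epsilon\cdot\frac{H(H+1)}{2}=(1-o(1))\,\frac{nH^2\epsilon}{2}$. For $\nshot(\theta')$ I would show $R(\theta,\nshot(\theta'))=o\!\big(nH^2\epsilon\big)$: lacking any prior hint about $a^\star$, $\nshot(\theta')$ can reach $a^\star$ only through ``blind'' play among the $N$ actions, and over $H$ steps it touches $a^\star$ with probability $O(H/N)=o(1)$; moreover any reward it does see from $a^\star$ is a null event under $P_{\theta'}$, so it never updates usefully. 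Subtracting the two estimates gives the theorem, the three sources of $o(1)$ being: the geometric-to-uniform approximation (governed by $n\epsilon H$), the blind-exploration and normalization corrections (governed by $H/N$), and truncation of $\tau$ at the horizon (governed by $1/H$).

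The main obstacle is the second step's hypothesis, namely engineering $P_\theta$ so that the treasure is ``valuable with constant probability, yet detected only at rate $\Theta(n\epsilon)$ per step'' --- these pull in opposite directions, and reconciling them is exactly what forces the regime $N\gg H$ and dictates how the perturbed mass is spread over the action set; one must verify carefully that conditioning on zero-reward pulls of non-treasure actions neither meaningfully raises nor lowers the posterior weight on the treasure. A secondary technical point is $\nshot$'s tie-breaking rule: one must ensure that when some posterior draw flags $a^\star$ the algorithm actually selects it rather than a tied dummy (e.g.\ give dummies strictly smaller means in the relevant draws, or argue the claim is tie-break-robust up to the stated $o(1)$), and symmetrically that $\nshot(\theta')$'s tie-breaking never favors $a^\star$. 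The leftover work --- checking $\tvarg{P_\theta}{P_{\theta'}}=\epsilon$ and evaluating the geometric sum --- is routine.
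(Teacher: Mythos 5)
There is a genuine gap in your analysis of $\nshot(\theta)$, and it is exactly the tension you flag at the end but do not resolve. In your needle-in-a-haystack construction you arrange the posterior so that, before $a^\star$ is ever pulled, the probability that $a^\star$ is the treasure stays at $(1\pm o(1))\epsilon$; this gives the per-step pull probability $\approx 1-(1-\epsilon)^n\approx n\epsilon$ that you want. But the algorithm's posterior draws are independent of the true environment given the history, so \emph{conditioned on pulling} $a^\star$ the chance that it actually is the treasure is still only $\approx\epsilon$. Your step ``Once $a^\star$ is pulled its reward $1$ is observed, the posterior collapses, and $\nshot(\theta)$ plays $a^\star$ for every remaining step'' therefore holds only on an event of probability $\approx\epsilon$ per pull, not probability $1$. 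The correct accounting gives an expected windfall of order $\sum_h n\epsilon\cdot\epsilon\cdot(H-h)\approx n\epsilon^2H^2/2$ (equivalently: with prior probability $\epsilon$ the treasure sits at $a^\star$, and only then does the geometric-time argument pay off), which is a factor $\epsilon$ short of the claimed $nH^2\epsilon/2$. No tuning of where the $\epsilon$ mass is placed fixes this within your construction, because the arm that is pulled at rate $n\epsilon$ is the same arm that must carry the reward, so ``valuable with probability $\Theta(1)$'' and ``pulled at rate $\Theta(n\epsilon)$'' cannot coexist for it.

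The paper's construction resolves this by decoupling the \emph{informative} arm from the \emph{valuable} arm. There are $N$ ordinary arms, one of which (chosen uniformly) always has reward close to $1$, plus a special arm whose reward equals $1$ with probability $\epsilon$ (this coin is the only difference between $P_\theta$ and $P_{\theta'}$, giving total variation exactly $\epsilon$) and whose reward \emph{value}, when it is not $1$, numerically encodes the identity of the good arm; rewards are deterministic. Under $P_\theta$, $\nshot(\theta)$ is tempted to pull the special arm at rate $\approx n\epsilon$ per step, and \emph{whenever} it does so --- regardless of the coin --- it learns the good arm's identity and collects nearly $H-h$ further reward with probability close to $1$, yielding the $\approx n\epsilon H^2/2$ gain; $\nshot(\theta')$ never pulls the special arm and, with $N\gg H$, finds the good arm blindly with probability $o(1)$ (handled via a coupling so that $N$ need only scale with $H$, not $1/(n\epsilon)$). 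Your outline of the $\theta'$ side and the TV computation is fine, but the $\theta$ side needs this two-layer ``revealing arm'' idea (or an equivalent mechanism) to get the extra factor of $1/\epsilon$; as written, the proposal proves only an $\Omega(n\epsilon^2H^2)$ separation.
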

See \Cref{thm:an_lb} for a precise, quantitative statement and \Cref{app:LB} for a full proof.

\iftoggle{neurips}
{
\paragraph{Proof ideas.}
One of the key ingredients in the proof of \Cref{thm:n_montecarlo_body}, and a result which may be of independent interest, is the following bound on the total variation of the trajectory of an algorithm run with the true prior and the same algorithm run with an incorrect prior.
\begin{proposition}
\label{prop:prior_sensitivity_tv_bound}Let $\alg(\cdot)$ be an $n$-Monte Carlo family of algorithms on horizon $H \in \N$. Then,
\begin{align*}
\tvarg{P_H}{P_H'} \le 2nH\cdot \tvarg{P_\theta}{P_{\theta'}},\end{align*}
where $P_H = P_{\theta,\alg(\theta)}(\boldmu,\traj_H)$ and $P_H' = P_{\theta,\alg(\theta')}(\boldmu,\traj_H)$.
\end{proposition}
A full proof is given in \Cref{app:prop_tv_sens}.  The factor of $H$ arises from a telescoping argument (\Cref{lem:perf_diff_bandits}) based on the performance-difference lemma \cite{kakade2003sample}.

For $B$-bounded priors, \Cref{prop:prior_sensitivity_tv_bound} directly translates into the sensitivity bound in \Cref{thm:n_montecarlo_body}(a), where the difference in rewards can be bounded as $B H$ times the probability that the trajectory of $\alg(\theta)$ differs from the trajectory of $\alg(\theta')$. Addressing more general tail conditions like sub-Gaussianity requires more care; see \Cref{app:sensitivity} for details.

}
{
\subsection{Proof sketch of \Cref{thm:n_montecarlo_body}.}
One of the key ingredients in the proof of \Cref{thm:n_montecarlo_body}, and a result which may be of independent interest, is the following bound on the total variation of the trajectory of an algorithm run with the true prior and the same algorithm run with an incorrect prior.
\begin{proposition}
\label{prop:prior_sensitivity_tv_bound}Let $\alg(\cdot)$ be an $n$-Monte Carlo family of algorithms on horizon $H \in \N$. Then,
\begin{align*}
\tvarg{P_H}{P_H'} \le 2nH\cdot \tvarg{P_\theta}{P_{\theta'}},\end{align*}
where $P_H = P_{\theta,\alg(\theta)}(\boldmu,\traj_H)$ and $P_H' = P_{\theta,\alg(\theta')}(\boldmu,\traj_H)$.
\end{proposition}
We provide a full proof of the proposition below, but let us first explain how the proposition implies \Cref{thm:n_montecarlo_body}. For $B$-bounded priors, \Cref{prop:prior_sensitivity_tv_bound} directly translates into the sensitivity bound in \Cref{thm:n_montecarlo_body}(a), where the difference in rewards can be bounded as $B H$ times the probability that the trajectory of $\alg(\theta)$ differs from the trajectory of $\alg(\theta')$. Addressing more general tail conditions like sub-Gaussianity requires more care; see \Cref{app:sensitivity} for details.

\paragraph{Proof of \Cref{prop:prior_sensitivity_tv_bound}.}
The proof consists of two steps. First, we introduce a telescoping decomposition based on the performance-difference lemma \cite{kakade2003sample}.
\begin{lemma}\label{lem:perf_diff_bandits} For any two algorithms $\alg,\alg'$, it holds that
\begin{align*}
\tvarg{P_{\theta,\alg}(\boldmu,\traj_H)}{P_{\theta,\alg'}(\boldmu,\traj_H)} \le \sum_{h=1}^H \Expop_{\traj_{h-1}\sim P_{\theta,\alg}}\left[\tvarg{P_{\alg}(a_h \mid \traj_{h-1})}{P_{\alg'}(a_h \mid \traj_{h-1})}\right],
\end{align*}
where on the right hand side, we consider the total variation distance between the conditional distribution of $a_h$ under $\alg,\alg'$ given $\traj_{h-1}$ and take expectation over $\traj_{h-1}$ under $P_{\theta,\alg}$.
\end{lemma}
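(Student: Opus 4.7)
My approach rests on the standard ``chain rule'' for total variation: for any two joint laws $P(X,Y) = P(X)\,P(Y|X)$ and $Q(X,Y) = Q(X)\,Q(Y|X)$ on the same measurable space,
\[
\tvarg{P(X,Y)}{Q(X,Y)} \le \tvarg{P(X)}{Q(X)} + \Expop_{X \sim P}\bigl[\tvarg{P(Y|X)}{Q(Y|X)}\bigr].
\]
I would first establish this as a one-line calculation via the $L^1$ characterization of TV (write the densities as products and apply the triangle inequality), and then iterate it along the natural causal ordering of variables generated by the bandit protocol.

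The key structural input is the factorization of the joint law. Ordering the variables as $\boldmu, a_1, r_1, \dots, a_H, r_H$, both joint distributions admit the chain-rule decomposition
\[
P_{\theta,\alg}(\boldmu,\traj_H) = P_\theta(\boldmu) \cdot \prod_{h=1}^H P_{\alg}(a_h \mid \traj_{h-1}) \cdot P(r_h \mid \boldmu, a_h),
\]
and analogously for $\alg'$. Three observations make the rest of the proof essentially automatic: (i) the prior marginal $P_\theta(\boldmu)$ is shared by both joint laws; (ii) the conditional $P_{\alg}(a_h \mid \traj_{h-1})$ does not depend on $\boldmu$, because $\alg$ only observes the partial trajectory; and (iii) the reward kernel $P(r_h \mid \boldmu, a_h)$ is the $a_h$-th coordinate marginal of $\calD(\boldmu)$, and hence is identical under both $\alg$ and $\alg'$.

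Iterating the two-variable chain rule along this causal order produces a telescoping sum with $2H+1$ terms, one for each variable. The $\boldmu$-term contributes $\tvarg{P_\theta}{P_\theta} = 0$, and each $r_h$-term contributes $0$ because the reward kernels coincide pointwise in $(\boldmu, a_h)$. What remains is exactly
\[
\sum_{h=1}^H \Expop_{(\boldmu,\traj_{h-1}) \sim P_{\theta,\alg}}\bigl[\tvarg{P_{\alg}(a_h \mid \traj_{h-1})}{P_{\alg'}(a_h \mid \traj_{h-1})}\bigr],
\]
and since the conditional distributions being compared do not depend on $\boldmu$, the outer expectation collapses to one over $\traj_{h-1} \sim P_{\theta,\alg}$, matching the stated bound.

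\textbf{Main obstacle.} The delicate point is bookkeeping the asymmetry of the chain rule: the inequality naturally conditions on the ``$P$-side'' variables, so one must iterate in a consistent order to ensure that every outer expectation on the right-hand side is taken under $P_{\theta,\alg}$ (and not, say, a hybrid or the $\alg'$-law). This is why the final bound is stated with $\traj_{h-1} \sim P_{\theta,\alg}$ rather than symmetrically in both algorithms. Once the causal ordering is fixed and the chain rule is applied in that order, the argument is mechanical; no subtle regularity conditions arise beyond the measure-theoretic setup already fixed in \Cref{sec:key_prop_tv}.
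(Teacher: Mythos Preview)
Your approach is correct and more elementary than the paper's. Where you iterate the two-variable total-variation chain rule
\[
\tvarg{P(X,Y)}{Q(X,Y)} \le \tvarg{P(X)}{Q(X)} + \Expop_{X\sim P}\bigl[\tvarg{P(Y\mid X)}{Q(Y\mid X)}\bigr]
\]
along the causal ordering $\boldmu,a_1,r_1,\dots,a_H,r_H$, the paper instead writes the left-hand side variationally as a supremum over events $\calE$, views $P_{\theta,\alg}[(\boldmu,\traj_H)\in\calE]-P_{\theta,\alg'}[(\boldmu,\traj_H)\in\calE]$ as the difference in expected terminal reward $\I\{(\boldmu,\traj_H)\in\calE\}$ between two Markov reward processes with state $(\boldmu,\traj_h)$, and invokes the performance-difference lemma from reinforcement learning to produce the same telescoping sum; each summand is then bounded using the $[0,1]$-boundedness of the value functions and the variational form of TV. The two decompositions coincide term by term, so neither is sharper; yours reaches the answer with a single elementary inequality rather than the value-function machinery, while the paper's framing ties the result to the RL literature and makes the POMDP generalization (\Cref{lem:performance_difference_decomp}) read naturally in ``state and transition'' language. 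Your route handles that extension equally well by inserting the hidden states into the causal ordering.
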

\begin{proof}[Proof Sketch] Using the variational characterization of total variation, we represent the total variation between the two measures as the supremum of differences in rewards under two Markov reward process induced by $P_{\theta,\alg}(\boldmu,\traj_H)$ and $P_{\theta,\alg'}(\boldmu,\traj_H)$. The decomposition then follows from a careful application of the performance difference lemma. See \Cref{sec:proof:lem:perf_diff_bandits} for the full proof. 
\end{proof}

\newcommand{\textb}{\text{(b)}}
\newcommand{\texta}{\text{(a)}}
\newcommand{\trajbar}{\bar{\traj}}

In the second stage of the proof, we apply the following rather general lemma.
\begin{lemma}[Fundamental De-conditioning Lemma]\label{lem:fundamental_deconditioning} Let $Q$ and $Q'$ be two measures on a pair of random variables $(X,Y)$ such that the conditionals of $X$ given $Y$ coincide: $Q(X \mid Y) = Q'(X \mid Y)$ almost surely. Then, 
\begin{align*}
\E_{X \sim Q} \tvarg{Q(Y\mid X)}{Q'(Y\mid X)} \le 2\tvarg{Q(Y)}{Q'(Y)}. 
\end{align*}
\end{lemma}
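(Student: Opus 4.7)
The plan is to exploit the fact that, by Bayes' rule, the two posteriors share a common conditional kernel. Writing $k(X \mid Y)$ for the almost-surely equal conditionals $Q(X \mid Y) = Q'(X \mid Y)$, we have $Q(Y \mid X) = k(X \mid Y)\, Q(Y)/Q(X)$ and $Q'(Y \mid X) = k(X \mid Y)\, Q'(Y)/Q'(X)$. The core step is to bound the pointwise difference $Q(Y\mid X) - Q'(Y\mid X)$ by adding and subtracting the intermediate quantity $k(X \mid Y)\, Q'(Y)/Q(X)$: this cleanly isolates the discrepancy between the marginals on $Y$ from the discrepancy between the normalizers on $X$, after which the triangle inequality yields two nonnegative terms to analyze.

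After integrating over $Y$, the key simplification is the identity $\int k(X \mid Y)\, Q'(Y)\, dY = Q'(X)$, which uses precisely the assumption that $k$ equals $Q'(X \mid Y)$ and not just $Q(X \mid Y)$. Taking expectation over $X \sim Q$ and applying Fubini, the factors of $Q(X)$ cancel, and the two terms reduce respectively to $\int |Q(Y) - Q'(Y)|\, dY = 2\tvarg{Q(Y)}{Q'(Y)}$ and $\int |Q(X) - Q'(X)|\, dX = 2\tvarg{Q(X)}{Q'(X)}$. At this intermediate stage we have the bound $2\tvarg{Q(Y)}{Q'(Y)} + 2\tvarg{Q(X)}{Q'(X)}$ on $\int |Q(Y\mid X) - Q'(Y\mid X)|\, dY$ in expectation over $X$.

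The last step is to absorb the marginal TV on $X$ via the data-processing inequality: since $Q(X)$ and $Q'(X)$ are both pushforwards of $Q(Y)$ and $Q'(Y)$ under the same Markov kernel $k$, we have $\tvarg{Q(X)}{Q'(X)} \le \tvarg{Q(Y)}{Q'(Y)}$. Combining yields a factor of $4$ on the right, and dividing by $2$ (from the definition of total variation in terms of $\tfrac{1}{2}\int|\cdot|$) gives the claimed bound of $2\tvarg{Q(Y)}{Q'(Y)}$.

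I expect the only real obstacle to be measure-theoretic rather than combinatorial: when $Q$ and $Q'$ fail to be mutually absolutely continuous, densities need not exist and Bayes' rule must be formulated via regular conditional probabilities (disintegrations). The cleanest workaround is to perform all density manipulations above against a common dominating measure such as $\tfrac{1}{2}(Q + Q')$, using Radon--Nikodym derivatives in place of the informal ratios; once this is set up, every identity in the plan (including the crucial $\int k(X\mid Y)\, Q'(Y)\, dY = Q'(X)$) goes through without subtlety, and the rest is algebra.
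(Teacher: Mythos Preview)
Your argument is correct and reaches the same constant $2$, but the route differs from the paper's. The paper introduces an interpolating measure $Q_{\to}$ with $Q_{\to}(X)=Q(X)$ and $Q_{\to}(Y\mid X)=Q'(Y\mid X)$, then uses the identity $\E_{X\sim Q}\tvarg{Q(Y\mid X)}{Q'(Y\mid X)}=\tvarg{Q(X,Y)}{Q_{\to}(X,Y)}$ (from ``TV with shared marginal'') together with the triangle inequality and ``TV with shared conditional'' to reduce everything to $2\,\tvarg{Q(X,Y)}{Q'(X,Y)}=2\,\tvarg{Q(Y)}{Q'(Y)}$. Your approach instead writes out Bayes' rule in densities and inserts the algebraic pivot $k(X\mid Y)\,Q'(Y)/Q(X)$, which is not itself a conditional density but cleanly separates the $Y$-marginal discrepancy from the $X$-normalizer discrepancy; after integrating, both approaches land on the same intermediate bound $\tvarg{Q(Y)}{Q'(Y)}+\tvarg{Q(X)}{Q'(X)}$ and the same closing step (data processing through the common kernel $k$). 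The paper's version is coordinate-free and avoids any appeal to densities or dominating measures, which sidesteps exactly the measure-theoretic caveat you flag; your version is more hands-on and arguably easier to verify line by line once one commits to working against $\tfrac{1}{2}(Q+Q')$.
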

\begin{proof}[Proof of \Cref{lem:fundamental_deconditioning}] We first review the essential properties of total variation used in the proof; we then turn to applying said properties to establish the lemma.

\paragraph{Properties of Total Variation.} Let $P$ and $P'$ over jointly distributed random variables $(X,Y)$. First, if the marginals under $X$ coincide, then their total variation can be expressed as the expected total variation between the conditions $Y \mid X$; that is,
\begin{align}
\text{if } P(X) = P'(X), \quad \text{then}\quad \tvarg{P(X,Y)}{P'(X,Y)} &= \E_{X \sim P} \tvarg{P(Y \mid X)}{P'(Y \mid X)}. \label{eq:tv_same_marg_in_proof}
\end{align}
On the other hand, if their conditionals of $Y \mid X$ coincide, then we have the following simplification:
\begin{align}
\text{if } P(Y \mid X) = P'(Y \mid X), \quad  \text{then}\quad \tvarg{P(X,Y)}{P'(X,Y)} &= \tvarg{P(X)}{P'(X)}. \label{eq:tv_same_conditional_in_proof}
\end{align}
\Cref{eq:tv_same_marg_in_proof} is established in 
\Cref{lem:tv_same_marginal}, and \Cref{eq:tv_same_conditional_in_proof} in \Cref{lem:tv_same_conditional}\iftoggle{neurips}{}{; both lemmas are formally stated and proven in the appendix}. We shall also use that the total variation satisfies  the triangle inequality $(\tvarg{P}{P'} \le \tvarg{P}{P''} +\tvarg{P'}{P''})$ and the data-processing inequality ($\tvarg{P(X)}{P'(X)} \le \tvarg{P(X,Y)}{P(X,Y)}$), stated formally and proven in \Cref{lem:key_tv_props}.

\paragraph{Main proof.} We introduce an interpolating law $Q_{\to}$ such that $Q_{\to}(X) = Q(X)$ and $Q_{\to}(Y \mid X) = Q'(Y \mid X)$. Then
\begin{align*}
\E_{X \sim Q} \tvarg{Q(Y\mid X)}{Q'(Y\mid X)}
&= \E_{X \sim Q} \tvarg{Q(Y\mid X)}{Q_{\to}(Y\mid X)} \\
&\overset{(i)}{=} \tvarg{Q(X,Y)}{Q_{\to}(X,Y)}\\
&\overset{(ii)}{\le}  \underbrace{\tvarg{Q_{\to}(X,Y)}{Q'(X,Y)}}_{(a)} + \underbrace{\tvarg{Q(X,Y)}{Q'(X,Y)}}_{(b)}, 
\end{align*}
where equality $(i)$ uses \eqref{eq:tv_same_marg_in_proof} given the fact that $Q(X) = Q_\to(X)$, and where $(ii)$ applies the triangle inequality. This leaves us with two terms, $(a)$ and $(b)$. First we upper bound term $(a)$ by term $(b)$:
\begin{align*}
\tvarg{Q_{\to}(X,Y)}{Q'(X,Y)} &= \tvarg{Q_{\to}(X)}{Q'(X)}\\
&= \tvarg{Q(X)}{Q'(X)} \le \tvarg{Q(X,Y)}{Q'(X,Y)} =: (b).
\end{align*}
Above, the first equality uses $Q_{\to}(Y \mid X) = Q'(Y \mid X)$ to invoke \eqref{eq:tv_same_conditional_in_proof}, the second the fact that $Q_{\to}(X) = Q(X)$, and the final equality applies the data-processing inequality described above. Hence, 
\begin{align*}
\E_{X \sim Q} \tvarg{Q(Y\mid X)}{Q'(Y\mid X)} &\le 2\tvarg{Q(X,Y)}{Q'(X,Y)}.
\end{align*}
Finally, since $Q(X \mid Y) = Q'(X \mid Y)$, \Cref{eq:tv_same_conditional_in_proof} entails that $\tvarg{Q(X,Y)}{Q'(X,Y)} = \tvarg{Q(Y)}{Q'(Y)}$.
\end{proof}
Using our shorthand $P_H = P_{\theta,\alg(\theta)}(\boldmu,\traj_H)$ and $P_H' = P_{\theta,\alg(\theta')}(\boldmu,\traj_H)$, we can finish the proof of \Cref{prop:prior_sensitivity_tv_bound} as follows:
\begin{align*}
    \tvarg{P_{H}}{P'_{H}}
    &\stackrel{(i)}{\leq} \sum_{h=1}^H \Expop_{\traj_{h-1}\sim P_{\theta,\alg(\theta)}}\left[\tvarg{P_{\alg(\theta)}(a_h \mid \traj_{h-1})}{P_{\alg(\theta')}(a_h \mid \traj_{h-1})}\right] \\
    &\stackrel{(ii)}{\leq} n \sum_{h=1}^H \Expop_{\traj_{h-1}\sim P_{\theta,\alg(\theta)}}\left[\tvarg{P_{\theta}(\boldmu \mid \traj_{h-1})}{P_{\theta'}(\boldmu \mid \traj_{h-1})}\right] \\
    &\stackrel{(iii)}{=} n \sum_{h=1}^H \Expop_{\traj_{h-1}\sim P_{\theta,\alg(\theta)}}\left[\tvarg{P_{\theta,\alg(\theta)}(\boldmu \mid \traj_{h-1})}{P_{\theta',\alg(\theta)}(\boldmu \mid \traj_{h-1})}\right] \\
    &\stackrel{(iv)}{\leq} 2n \sum_{h=1}^H\tvarg{P_{\theta}(\boldmu)}{P_{\theta'}(\boldmu)} 
    = 2nH \cdot \tvarg{P_{\theta}(\boldmu)}{P_{\theta'}(\boldmu)}.
\end{align*}
Here, $(i)$ follows from \Cref{lem:perf_diff_bandits}, $(ii)$ is the definition of $n$-Monte Carlo, $(iii)$ follows from the observation that the conditional distribution of $\boldmu$ given $\tau_{h-1}$ does not depend on the algorithm that helped generate $\tau_{h-1}$, and $(iv)$ follows from \Cref{lem:fundamental_deconditioning}, where we have used the fact that $P_{\theta,\alg(\theta)}(\traj_{h-1} \mid \boldmu) = P_{\theta',\alg(\theta)}(\traj_{h-1} \mid \boldmu)$, i.e. the conditional distribution of the trajectories does not depend on the prior when conditioning on the true mean $\boldmu$.


}


\section{Meta-learning}

In this section, we apply the {above} prior sensitivity guarantees to episodic Bayesian meta-learning and obtain sample-efficiency guarantees for canonical Bayesian bandit setups.

Suppose an episodic Bayesian meta-learner uses an exploration strategy $\algt{t}$ in $T$ episodes and computes an estimate $\thetahat = \thetahat(\trajt{1},\dotsc,\trajt{T})$ of the ground-truth parameter $\thetast$.
Suppose further that, for any $\veps,\delta \in (0,1)$, with probability at least $1-\delta$ over the realizations of the episodes and internal randomization of the meta-learner, the estimate $\thetahat$ satisfies $\tvarg{P_{\thetast}}{P_{\thetahat}} \leq \veps$.
Then, Theorem~\ref{thm:n_montecarlo_body} implies that, for any $n$-Monte Carlo algorithm $\alg(\cdot)$, the relative performance of $\smash{\alg(\thetahat)}$ compared to $\alg(\thetast)$ is (essentially) bounded as $\tilde{O}(nH^2\veps)$ over horizon $H$.

Our task of designing meta-learners is thus reduced to that of designing estimators (and exploration strategies) for $\thetast$ { that enjoy convergence guarantees in TV distance}.
{This is quite a general recipe that can produce concrete meta-learning algorithms in many Bayesian bandit settings.}
We explain how to do so in two setups:
(1) $P_{\boldthetast}$ is a product of Beta distributions, and the rewards are Bernoulli;
(2) $P_{\boldthetast}$ is a multivariate Gaussian and the rewards are Gaussian.


%

\subsection{Beta Priors and Bernoulli Rewards}

We first consider the situation where the prior distribution is a product of Beta distributions $P_{\boldthetast} = \bigotimes_{a \in \calA} \Beta(\alphasti{a},\betasti{a})$ and the reward distribution is a product of Bernoulli distributions $\calD(\boldmu) = \bigotimes_{a \in \calA} \Bernoulli(\mu_a)$.
Recall that $\Beta(\alpha,\beta)$ for $\alpha > 0$ and $\beta > 0$ is a continuous probability distribution supported on $(0,1)$, and hence our parameter space $\Theta$ is the (strictly) positive orthant in $\smash{\R^{2|\calA|}}$.

Our approach is to directly estimate the parameters $\boldthetast = (\alphast,\betast)$ from the observed rewards in the $T$ episodes.
Since the family of Beta distributions is an exponential family~\cite{brown1986fundamentals} (with $(\alpha,\beta)$ being the natural parameters), we can appeal to general statistical theory to bound the total variation distance between two such distributions in terms of their parameter distance.

Suppose we adopt the exploration strategy where arm $1$ is selected in the first $n$ rounds in each of the first $T/|\calA|$ episodes, arm $2$ in the next $T/|\calA|$ episodes, and so on.
(We assume the horizon $H$ and $n$ satisfy $H \geq n \geq 2$.)
We focus on the estimation of $(\alphasti{1},\betasti{1})$, as the exact same approach works for all of the arms.
Let $X_t$ denote the {cumulative} reward collected in {the first $n$ rounds of} episode $t$.
Then, the random variables $X_1,\dotsc,X_{T/|\calA|}$ are i.i.d.~draws from a Beta-Binomial distribution with parameters $(\alphasti{1},\betasti{1},n)$, where $n$ denotes the number of trials of the binomial component.
The first and second moments of $X_t$ are
\begin{equation*}
  m_1^\star = \E[X_t] = \frac{n\alphasti1}{\alphasti1+\betasti1} \quad\text{and}\quad
  m_2^\star = \E[X_t^2] = \frac{n\alphasti1(n(1+\alphasti1)+\betasti1)}{(\alphasti1+\betasti1)(1+\alphasti1+\betasti1)} .
\end{equation*}
These moments uniquely determine $\alphasti1$ and $\betasti1$ as long as $n\geq2$.
Therefore, we can estimate $(\alphasti1,\betasti1)$ using plug-in estimates of the first two moments $(m_1^\star,m_2^\star)$ via the method of moments~\cite{tripathi1994estimation}.
{Using this approach, we obtain the following sample complexity guarantee for estimating the prior distribution:}

\begin{theorem}
  There is an exploration strategy $\algt{t}$ and an estimator $\hat{\bm\theta}(\cdot)$ with the following property.
  If $P_{\boldthetast} = \bigotimes_{a \in \calA} \Beta(\alphasti{a},\betasti{a})$ and $\calD(\boldmu) = \bigotimes_{a \in \calA} \Bernoulli(\mu_a)$, then there is a constant $C$ depending only on $(\alphast,\betast)$ such that, for any $\veps, \delta \in (0,1)$, if $H\geq2$ and
  \begin{equation*}
    T \geq \frac{C \cdot |\calA|^2 \log(|\calA|/\delta)}{\veps^2} ,
  \end{equation*}
  then $\Pr[\tvarg{P_{\boldthetast}}{P_{\hat{\bm\theta}}} \leq \veps] \geq 1-\delta$.
\end{theorem}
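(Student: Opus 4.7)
The plan is to implement exactly the sketch outlined in the paragraph preceding the theorem: (i) a round-robin exploration strategy that, for each arm $a$, dedicates $T/|\calA|$ episodes in which arm $a$ is pulled $n$ times at the start of the episode; (ii) a method-of-moments estimator that converts per-arm empirical moments into $(\hat\alpha_a,\hat\beta_a)$; and (iii) a sequence of deterministic conversions \emph{empirical moments $\to$ parameter error $\to$ per-coordinate KL $\to$ joint TV}. Fix $n=2$ for simplicity (any $n\geq 2$ works, and the paper's hypotheses guarantee $H\geq n$). For arm $a$, let $X_1^{(a)},\dots,X_{T/|\calA|}^{(a)}$ be the cumulative two-round rewards; these are i.i.d.\ $\operatorname{Beta-Binomial}(\alphasti a,\betasti a,2)$ random variables supported on $\{0,1,2\}$, with true moments $m_1^\star,m_2^\star$ that determine $(\alphasti a,\betasti a)$ via an explicit rational inverse map $\Phi:(m_1,m_2)\mapsto(\alpha,\beta)$. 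Define $\hat\alpha_a,\hat\beta_a$ by applying $\Phi$ to the empirical moments (clipped to a safe region, see below).

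First I would establish concentration of the empirical moments. Since $X_t^{(a)}\in\{0,1,2\}$ is bounded, Hoeffding's inequality gives, for each arm and each of the two moments,
\begin{equation*}
\Pr\bigl[\,|\hat m_j^{(a)}-m_j^\star|>u\,\bigr]\le 2\exp(-c\, u^2\, T/|\calA|).
\end{equation*}
A union bound across $2|\calA|$ such events yields, with probability at least $1-\delta$, the uniform bound $\max_{a,j}|\hat m_j^{(a)}-m_j^{\star,a}|\le c_1\sqrt{|\calA|\log(|\calA|/\delta)/T}$. Next I would argue that in a neighborhood of $(m_1^{\star,a},m_2^{\star,a})$ the inverse map $\Phi$ is continuously differentiable (this is where $n\geq 2$ is used: the Jacobian is nonsingular), so on the high-probability event, once $T$ is large enough that the empirical moments lie in a fixed compact neighborhood of the truth, the parameter error satisfies $\|(\hat\alpha_a,\hat\beta_a)-(\alphasti a,\betasti a)\|\le C_\Phi(\alphast,\betast)\cdot\max_j|\hat m_j^{(a)}-m_j^{\star,a}|$. (Here is where the $(\alphast,\betast)$-dependent constant in the theorem statement enters.)

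Now I would convert parameter distance into TV on the \emph{joint} prior. Because the Beta family is an exponential family in its natural parameterization, its KL divergence is smooth in $(\alpha,\beta)$ and, within a compact neighborhood of $(\alphasti a,\betasti a)$, is controlled by the local Fisher information: $\klarg{\Beta(\hat\alpha_a,\hat\beta_a)}{\Beta(\alphasti a,\betasti a)}\le C_F(\alphast,\betast)\cdot\|(\hat\alpha_a,\hat\beta_a)-(\alphasti a,\betasti a)\|^2$. Because KL is additive over product distributions and the prior is $\bigotimes_a \Beta(\alphasti a,\betasti a)$, summing yields
\begin{equation*}
\klarg{P_{\hat{\bm\theta}}}{P_{\boldthetast}}\;\le\;C_F\sum_{a}\|(\hat\alpha_a,\hat\beta_a)-(\alphasti a,\betasti a)\|^2\;\le\;C'\cdot|\calA|\cdot\frac{|\calA|\log(|\calA|/\delta)}{T}.
\end{equation*}
Applying Pinsker's inequality $\tvarg{P_{\boldthetast}}{P_{\hat{\bm\theta}}}\le\sqrt{\tfrac12\klarg{P_{\hat{\bm\theta}}}{P_{\boldthetast}}}$ gives $\tvarg{P_{\boldthetast}}{P_{\hat{\bm\theta}}}\le C''\sqrt{|\calA|^2\log(|\calA|/\delta)/T}$. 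Setting the right-hand side to $\veps$ and solving for $T$ recovers the stated sample complexity.

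The main obstacle is the localization issue in the middle step: the bounds $C_\Phi$ and $C_F$ are only valid in a neighborhood of the true parameters, and the inverse map $\Phi$ can blow up near the boundary of the moment space. To handle this rigorously I would first fix a compact set $K$ in parameter space containing $(\alphast,\betast)$ in its interior, define $\hat\theta$ by projecting the method-of-moments estimate onto (the image under $\Phi$ of) the moment image of $K$, and require $T$ large enough that the concentration bound keeps $(\hat m_1,\hat m_2)$ inside this image with probability $1-\delta$. The resulting constants $C_\Phi, C_F$ depend only on $K$, and absorbing them into the single constant $C=C(\alphast,\betast)$ gives the theorem as stated.
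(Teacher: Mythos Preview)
Your proposal is correct and follows essentially the same approach as the paper: Hoeffding concentration for the Beta-Binomial moments, a local Taylor/Lipschitz argument for the method-of-moments inverse map (the paper's Lemma~\ref{lemma:beta-binomial}), and then Pinsker plus a quadratic KL bound for exponential families (the paper's Lemma~\ref{lem:exponential_families}) to pass from parameter error to total variation. The only cosmetic differences are that the paper keeps $n\geq 2$ general rather than fixing $n=2$, and it packages the KL-to-parameter-distance step as a single lemma about exponential families rather than explicitly invoking KL additivity over the product, but the underlying argument is identical.
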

The proof of the theorem is given in Appendix~\ref{app:beta-binomial}.




\subsection{Gaussian Priors and Gaussian Rewards}

We now consider the situation where the prior distribution is a multivariate Gaussian $P_{\boldthetast} = \Normal(\normmean, \normcov)$ in $\R^{\calA}$, and the reward distribution is a spherical Gaussian distribution $\calD(\boldmu) = \Normal(\boldmu, \sigma^2 \boldI)$.
Note that such a prior distribution is able to capture correlations between the arms' mean rewards in an episode, which cannot be captured by the product-form priors in the previous subsection (nor in previous work~\cite{kveton2021meta}).

We again directly estimate the parameters $\boldthetast = (\normmean,\normcov)$ using a simple exploration strategy and the method of moments.
In each episode (which we assume have horizon $H \geq 2$), we select independent and uniformly random actions in the first two rounds.
Let $a_t$ and $b_t$ denote the actions taken in episode $t$, and let $r_t$ and $s_t$ denote the corresponding observed rewards.
Our estimates for $\normmean$ and $\normcov$ based on the information collected in $T$ episodes are\footnote{This estimator can be generalized to explore for more of the episode and use more of the observed rewards.}
\begin{equation*}
  \normmeanest := \frac{|\calA|}{T} \sum_{t=1}^T r_t \bolde_{a_t}
  \quad\text{and}\quad
  \normcovest := \frac{|\calA|^2}{T} \sum_{t=1}^T r_t s_t \left( \bolde_{a_t} \bolde_{b_t}^\T + \bolde_{b_t} \bolde_{a_t}^\T \right) - \normmeanest \normmeanest^\T .
\end{equation*}
{For these estimators, we have the following theorem.}
\begin{theorem}
  There is an exploration strategy $\algt{t}$ and an estimator $\hat{\bm\theta}(\cdot)$ with the following property.
  If $P_{\boldthetast} = \Normal(\normmean, \normcov)$ and $\calD(\boldmu) = \Normal(\boldmu, \sigma^2 \boldI)$, then there is a constant $C$ depending only on $(\normmean,\normcov)$ and $\sigma^2$ such that, for any $\veps, \delta \in (0,1)$, if $H\geq2$ and
  \begin{equation*}
    T \geq \frac{C \cdot (|\calA|^4 + |\calA|^3 \log(1/\delta))}{\veps^2} ,
  \end{equation*}
  then $\Pr[\tvarg{P_{\boldthetast}}{P_{\hat{\bm\theta}}} \leq \veps] \geq 1-\delta$.
\end{theorem}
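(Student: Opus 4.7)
The plan is to reduce the TV-distance guarantee to operator/Euclidean-norm estimation error for the mean vector $\normmean$ and covariance matrix $\normcov$, and then bound each of these via standard concentration for sub-exponential random vectors and matrices. The sample complexity $T \gtrsim |\calA|^4/\veps^2 + |\calA|^3\log(1/\delta)/\veps^2$ should come out naturally from the scales introduced by the $|\calA|$ and $|\calA|^2$ multipliers in $\normmeanest$ and $\normcovest$.

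First, I would verify that the two estimators are (essentially) unbiased. For $\normmeanest$, a short calculation conditioning first on $\boldmu$ then on $a_t$ gives $\E[r_t \bolde_{a_t}] = \frac{1}{|\calA|}\normmean$, hence $\E[\normmeanest] = \normmean$. For the matrix term, using independence of $a_t,b_t$ from $\boldmu$ and of the noise across the two rounds, $\E[r_t s_t \bolde_{a_t}\bolde_{b_t}^{\T}]$ evaluates to $\frac{1}{|\calA|^2}(\normcov + \normmean\normmean^{\T})$, so symmetrizing and subtracting $\normmeanest\normmeanest^{\T}$ yields an estimator whose expectation (up to a bias of order $\E\|\normmeanest-\normmean\|^2$) is $\normcov$. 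I would treat this lower-order bias separately as it contributes only $O(|\calA|^2/T)$ and is dominated by the main $1/\sqrt{T}$ fluctuation.

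Second, I would establish two high-probability bounds. The summands $r_t\bolde_{a_t}$ have covariance of operator norm $O(1/|\calA|)$ but trace $O(1)$, so vector Bernstein (or direct sub-Gaussian concentration, since $r_t$ is sub-Gaussian with variance bounded by $\sigma^2+\|\normcov\|_{\mathrm{op}}+\|\normmean\|^2$) gives $\|\tfrac{1}{T}\sum_t r_t\bolde_{a_t}-\tfrac{1}{|\calA|}\normmean\|_2 \lesssim \sqrt{\log(|\calA|/\delta)/T}$ with probability at least $1-\delta/2$; multiplying by $|\calA|$ yields $\|\normmeanest-\normmean\|_2 \lesssim |\calA|\sqrt{\log(|\calA|/\delta)/T}$. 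For the matrix part, each summand $r_t s_t(\bolde_{a_t}\bolde_{b_t}^{\T}+\bolde_{b_t}\bolde_{a_t}^{\T})$ is sub-exponential (product of two sub-Gaussians) with matrix variance proxy of operator norm $O(1/|\calA|^2)$ and trace-like parameter $O(1/|\calA|)$. A matrix Bernstein argument then yields $\|\tfrac{|\calA|^2}{T}\sum_t r_t s_t(\bolde_{a_t}\bolde_{b_t}^{\T}+\bolde_{b_t}\bolde_{a_t}^{\T}) - (\normcov+\normmean\normmean^{\T})\|_{\mathrm{op}} \lesssim |\calA|\sqrt{\log(|\calA|/\delta)/T} + |\calA|^2\log(|\calA|/\delta)/T$. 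Combining with the bound on $\|\normmeanest\normmeanest^{\T}-\normmean\normmean^{\T}\|_{\mathrm{op}}$ (which is controlled by the mean bound via the triangle inequality), I obtain $\|\normcovest-\normcov\|_{\mathrm{op}} \lesssim |\calA|^2 \sqrt{\log(|\calA|/\delta)/T}$ in the regime $T \gtrsim |\calA|^2\log(|\calA|/\delta)$.

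Third, I would convert parameter error into TV distance using a standard bound for Gaussians: provided $\|\normcovest - \normcov\|_{\mathrm{op}} \le \tfrac{1}{2}\lambda_{\min}(\normcov)$ (which is guaranteed once $T$ is large enough), Pinsker plus the closed-form KL between two Gaussians gives
\begin{align*}
\tvarg{\Normal(\normmean,\normcov)}{\Normal(\normmeanest,\normcovest)} \;\lesssim\; \|\normcov^{-1/2}(\normmeanest-\normmean)\|_2 + \|\normcov^{-1/2}(\normcovest-\normcov)\normcov^{-1/2}\|_{\F}.
\end{align*}
Absorbing spectral factors of $\normcov$ and $\sigma^2$ into the constant $C$, the first term is $O(|\calA|\sqrt{\log(|\calA|/\delta)/T})$ and the second is $O(|\calA|^2\sqrt{\log(|\calA|/\delta)/T})$ (the Frobenius norm can swallow an extra $\sqrt{|\calA|}$, but this is absorbed by the dominant scaling); demanding both be $\le \veps$ yields the claimed threshold $T \gtrsim (|\calA|^4 + |\calA|^3\log(1/\delta))/\veps^2$ after rebalancing the $\log$ factors.

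The main obstacle I anticipate is the concentration of $\normcovest$: products of two independent Gaussians are only sub-exponential, the summands involve random outer products of basis vectors in $\R^{|\calA|}$, and one must carefully track the matrix variance parameters so that the resulting rate is $|\calA|^2/\sqrt{T}$ rather than a cruder $|\calA|^{5/2}/\sqrt{T}$. A secondary subtlety is that $\normcovest$ as defined need not be PSD; I would handle this by projecting $\normcovest$ onto the PSD cone (which can only decrease the error in operator norm) before invoking the Gaussian TV bound, and verifying that on the good event the smallest eigenvalue of the projected estimate is bounded below by $\tfrac{1}{2}\lambda_{\min}(\normcov)$ so that the KL formula applies.
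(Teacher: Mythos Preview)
Your high-level structure is right and matches the paper: show unbiasedness, prove concentration for $\normmeanest$ and $\normcovest$, then convert to TV via Pinsker and the Gaussian KL formula. The mean estimation and the KL-to-TV conversion are fine. The gap is in the covariance concentration.

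Your claim that the matrix variance proxy of $Z_t := r_t s_t(\bolde_{a_t}\bolde_{b_t}^\T+\bolde_{b_t}\bolde_{a_t}^\T)$ has operator norm $O(1/|\calA|^2)$ is incorrect. For $a_t\ne b_t$ one has $(\bolde_{a_t}\bolde_{b_t}^\T+\bolde_{b_t}\bolde_{a_t}^\T)^2=\bolde_{a_t}\bolde_{a_t}^\T+\bolde_{b_t}\bolde_{b_t}^\T$, so $\E[Z_t^2]$ is a diagonal matrix whose $(j,j)$ entry is $\tfrac{1}{|\calA|^2}\sum_b \E[(r_ts_t)^2\mid a_t=j,b_t=b]\cdot 2 = O(1/|\calA|)$. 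With matrix variance $O(1/|\calA|)$, matrix Bernstein gives $\|\normcovest-\normcov\|_{\mathrm{op}}\lesssim |\calA|^{3/2}\sqrt{\log(|\calA|/\delta)/T}$, and after requiring the normalized operator norm to be $\le \veps/\sqrt{|\calA|}$ (equivalently, Frobenius $\le\veps$) you end up with $T\gtrsim |\calA|^4(\log|\calA|+\log(1/\delta))/\veps^2$, not the theorem's $|\calA|^4+|\calA|^3\log(1/\delta)$. The ``rebalancing'' you appeal to cannot remove the extra $|\calA|$ on the $\log(1/\delta)$ term.

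The paper obtains the sharper rate by avoiding matrix Bernstein: for each fixed unit vector $\boldu$ (with $\boldv=\normcov^{-1/2}\boldu$), the \emph{scalar} $\boldu^\T\normcov^{-1/2}Z_t\normcov^{-1/2}\boldu = v_{a_t}v_{b_t}\cdot(\text{product of two Gaussians})$ is $(O(1/|\calA|^2),O(1))$-subexponential, because averaging $v_{a_t}^2v_{b_t}^2$ over uniform $(a_t,b_t)$ gives $\|\boldv\|_2^4/|\calA|^2$. A union bound over a $(1/4)$-net of $S^{|\calA|-1}$ (size $9^{|\calA|}$) then replaces $\log(1/\delta)$ by $|\calA|+\log(1/\delta)$ and yields $\|\normcov^{-1/2}(\normcovest-\normcov)\normcov^{-1/2}\|_2\lesssim \sqrt{(|\calA|^3+|\calA|^2\log(1/\delta))/T}$, which gives exactly the claimed threshold. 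This distinction between $\sup_u\E[(u^\T Z u)^2]=O(1/|\calA|^2)$ and $\|\E[Z^2]\|_{\mathrm{op}}=O(1/|\calA|)$ is precisely the ``careful tracking'' you anticipated, and it requires the scalar/net route rather than black-box matrix Bernstein. A secondary difference: the paper analyzes a pairwise-differences covariance estimator (using the same two actions in two consecutive episodes and subtracting) rather than the $-\normmeanest\normmeanest^\T$ plug-in; this avoids $\normmean$-dependence in the deviation constants, but since the theorem allows $C$ to depend on $\normmean$, your plug-in route is acceptable once the concentration is fixed.
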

The proof of the theorem and the precise dependence on $\normmean$, $\normcov$, and $\sigma^2$ are given in Appendix~\ref{app:gaussian}.
The quartic dependence on $|\calA|$ is due to estimating $\normcov$; it improves to $|\calA|^2$ if $\normcov$ is known.

\begin{remark}[Comparison to \cite{kveton2021meta}]\label{rem:comparison}
\cite{kveton2021meta} study the case where $P_{\boldthetast} = \Normal(\normmean, \sigma_0^2 \boldI)$, which is a product-form prior over means $\boldmu$ with known $\sigma_0^2$. For  $\tilde{\epsilon} = |\calA|\cdot\|\normmean - \normmeanest\|_{\infty}/\sigma_0$, they show that\footnote{The following optimizes Lemma 5 of \cite{kveton2021meta} over its free parameter $\delta > 0$ for $\tilde{\epsilon}$ small.}
\begin{align*}
|R(\boldtheta,\TS(\boldtheta)) - R(\boldtheta,\TS(\hat{\boldtheta}))| \le \mathcal{O}\left(\|\normmean\|_{\infty} + \sigma_0\sqrt{\log (H / \tilde{\epsilon}) }\right)\cdot H^2\tilde{\epsilon}.
\end{align*}
On the other hand, \Cref{thm:n_montecarlo_body} applied to the $1$-Monte Carlo Thompson Sampling algorithm (and bounding $\range(\normmean) \le \|\normmean\|_{\infty}$) yields the same inequality, but with $\tilde{\epsilon}$ replaced by $\epsilon = \tvarg{\Normal(\normmean, \sigma_0^2 I)}{\Normal(\normmeanest, \sigma_0^2 I)} \leq \|\normmean - \normmeanest\|_2 / \sigma_0$.
Note that $\tilde{\epsilon}$ is always larger than $\epsilon$ by a factor of at least $\sqrt{|\calA|}$; thus, our result is strictly sharper.
\end{remark}


\if 0

\tlcomment{cite Table 3.1 in ``Graphical Models, Exponential Families, and Variational Inference'' (Wainright-Jordan)}

\begin{itemize}
    \item Gaussian with identity covariance: $A(\eta)=\frac{1}{2}\eta^2 \Rightarrow A^\star(\theta)=\sup_{\eta} \{\eta \theta -A(\eta)\}=\frac{1}{2}\theta^2$
    \item Poisson: $A(\eta)=e^\eta \Rightarrow A^\star(\theta)=\sup_\eta \{\eta\theta-A(\eta)\}=\theta\log(\theta)-\theta$
    \item Exponential: $A(\eta)=-\log(-\eta)\Rightarrow A^{\star}(\theta)=\sup_\eta \{\eta\theta-A(\eta)\}=1+\log(-1/\theta)$
    \item Gaussian with non-identity covariance: $A(\eta_1,\eta_2)=-\frac{\eta_1^2}{\eta_2^2}\Rightarrow $
\end{itemize}

\mscomment{meta-theorem}

\subsection{Exponential Family Product Priors}
\begin{theorem}
\begin{itemize}
\item[(a)] For Beta-Bernoulli, you have $\dots$
\item[(b)] For Independent Gaussian $\dots$
\item[(c)] For Gamma-Poisson, you have $\dots$
\item[(d)] For Gamma-Exponential
\end{itemize}
\end{theorem}

\subsection{Gaussian Priors with General Covariance}

\subsection{Discussion}

\mscomment{do we want}

\subsection{Linear Contextual Bandits}

$P_{\mathrm{GamPoi},\theta},P_{\mathrm{GamExp},\theta},P_{\mathrm{BetaBern},\theta},P_{\mathrm{IndpNorm},\theta}$

\subsection{TODOs}
\begin{enumerate}
    \item Families under consideration: (a) Gaussian prior + Gaussian reward (with covariance) (b) Gamma prior + poisson reward (c) Gamma prior + (classic) exponential reward (d) beta prior + bernoulli reward
\end{enumerate}
Observation: these fall into one of the following three categories (a) all prior mean arms are bounded (b) all prior mean awards are individually sigma-sub-Gaussian ($\sigma = \max_{i} \Sigma_{ii}$), (c) all prior mean are sub-Gamma ..

\subsection{..}

\begin{enumerate}
    \item Define protocol. Space of environments/worlds $\mathcal{E}$ (instantiation: multi-armed and linear contextual bandits, e.g., with Gaussian). Prior $P^{\star}$: Probability distribution over environments.
    \item Contrast with Csaba (MAB with Gaussian).
    \item Hypothesis class of possible priors $\mathcal{P}$ that the meta-learner knows in advance (proper learning and true prior $P^\star$ in $\mathcal{P}$.
\end{enumerate}
At meta-round (episode) $t$, an environment $E^t$ is drawn from $P^{\star}$. The base-learner (steps) interacts with $E^t$ for $H$ base-rounds. At outer loop, you select a prior (similarly to selecting a policy). At inner loop, you run a Bayesian decision-making algorithm such as Thompson sampling (analogue: run the policy).

\paragraph{Benchmark:} Contrast with $P^\star$
\tlcomment{add a remark showing that this is not necessarily the optimal with the Bernoulli informative arm.}

\begin{align*}
    \mathbb{E}_{E \sim P^\star}[V(\textrm{Alg}(\hat{P}), E)] \geq \mathbb{E}_{E \sim P^\star}[V(\textrm{Alg}(P^\star), E)] - \epsilon
\end{align*}
where $V$ is the expected sum of rewards that Alg earns when interacting with environment $E$.

\paragraph{Reduction.}
For an abstract reduction lemma, we assume that $\mathcal{P}$ comes with an estimation procedure that operates as follows: For any natural number $n$ and $\delta \in (0,1)$, given $n$ trajectories $\{(E_t, \tau_t)\}_{t=1}^n$ where actions are taken uniformly at random the estimator outputs an estimate $\widehat{P}$ such that with probability $1-\delta$
\begin{align*}
    \| \widehat{P} - P^\star \|_{\textrm{TV}} \leq \epsilon(n,\delta)
\end{align*}
\tlcomment{If you can guarantee the $\epsilon$-TV distance, then PAC guarantee is bla}
\tlcomment{reason why we ignore rest: bias}

You come with a guarantee, I give you a meta-PAC guarantee

\begin{lemma}PAC guarantee is $H\cdot \max\{n: \epsilon(n,\delta)\leq \epsilon)$.
\end{lemma}
We will show how to get such an estimator for various classes of problems.

\subsection{Multi-armed bandits with  any product-measure for unknown means and known covariances:} In each episode $t$, a vector $\mu^t$ is drawn as $\mu_t \sim \mathcal{N}(\mu^\star, I)$. Uniformly at random, by empirical average, you get $\widehat{\mu}^t$. Average of $\widehat{\mu}^t$ is close to $\mu^{\star}$ by creating an estimate of $\mu^{\star}(a)$. Chernoff calculation.

TV upper bounded by $\sqrt{KL}$ (which relates to means.

\subsection{MAB for Gaussians with unknown covariances.} (only for Gaussian) Compute $\widehat{\mu}^t$. Take $\widehat{\mu}^t\widehat{\mu}^T$. Expectation of the $\widehat{t}_i\cdot \widehat{\mu}^t_j$ is not second moment.

(assumption that the variance in the reward of the true prior is $1$ for Gaussian)

At the beginning of episode $t$, $ \mu^t \sim \mathcal{N}(\mu^\star, \Sigma^\star)$. Then when action $a$ is chosen, the reward is $r(a) \sim \mathcal{N}(\mu^t(a), 1)$.

Estimate second moment of prior (naively, second moment of reward)

\paragraph{OLD}
\begin{itemize}
    \item General reduction lemma (PAC+regret) -- then use reduction lemma to analyze other algs. (using TV as metric)
    \item Corollary 1: prior product measure over single-parameter exponential families. Special case is Gaussian bandits with prior that has identity covariance
    \item Instantiation to unknown covariance
    \item Linear bandits (analyze)
    \item Beyond Thompson Sampling
\end{itemize}

\subsection{Estimation error for Gaussian priors}

\dhcomment{I moved this stuff to appendix}

\fi


\section{Experiments}
We demonstrate the generality of our results in three distinct
meta-learning experimental settings. First, we study a simple
multi-armed bandit scenario with Gaussian prior and Gaussian rewards,
where we demonstrate how meta-learning higher-order moments of the
prior can significantly improve performance. Next, we consider a
Gaussian linear contextual bandits scenario, to demonstrate the
generality of Bayesian meta-learning. Finally, we study a more
interesting multi-arm\tledit{ed} bandit problem with discrete priors, where, in
addition to the value of meta-learning, we see that look-ahead
algorithms can substantially outperform Thompson sampling. Additional experimental details are presented in~\Cref{app:experiments}.

\begin{figure}
\includegraphics[width=0.5\textwidth]{./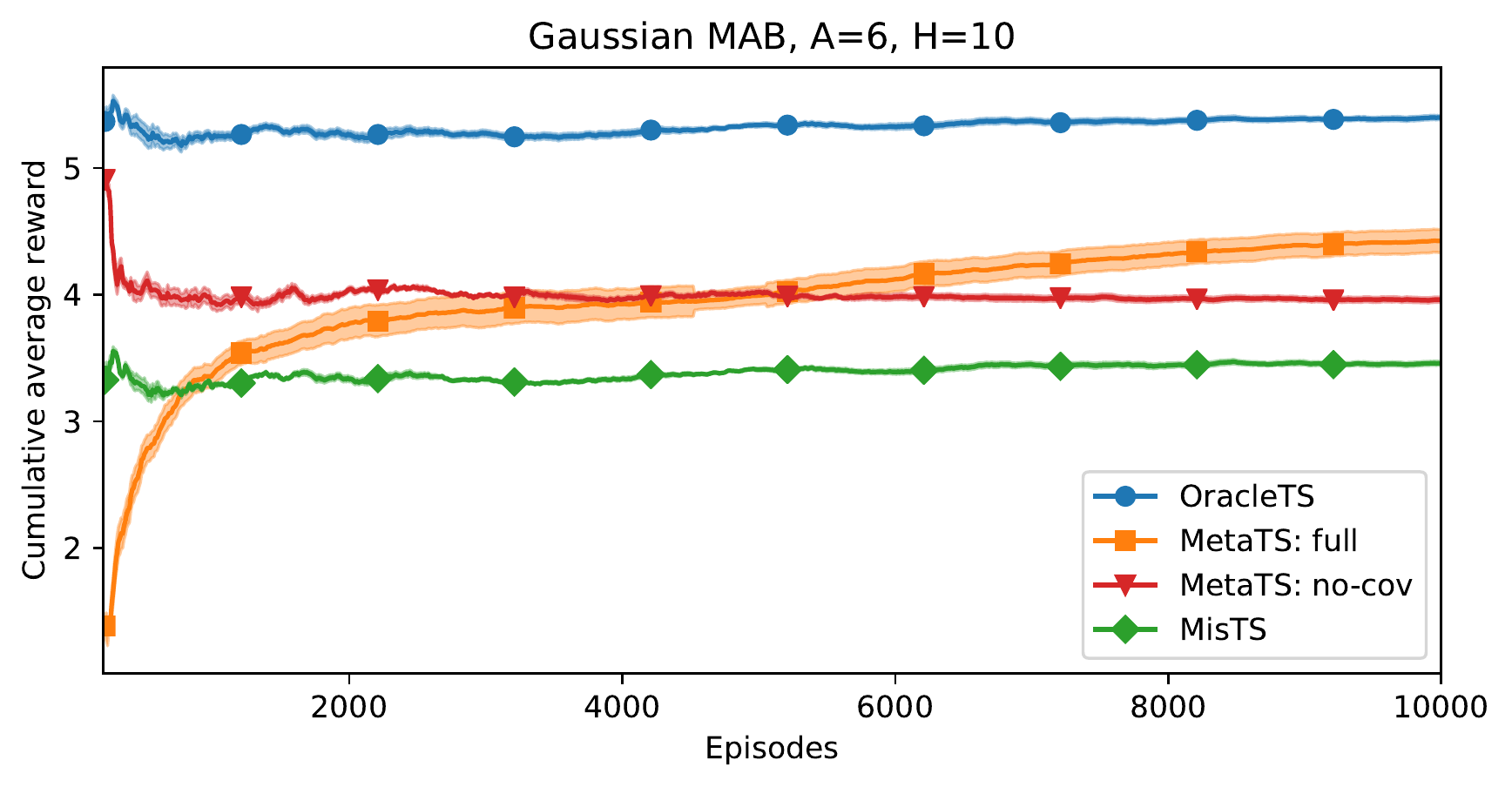}
\includegraphics[width=0.5\textwidth]{./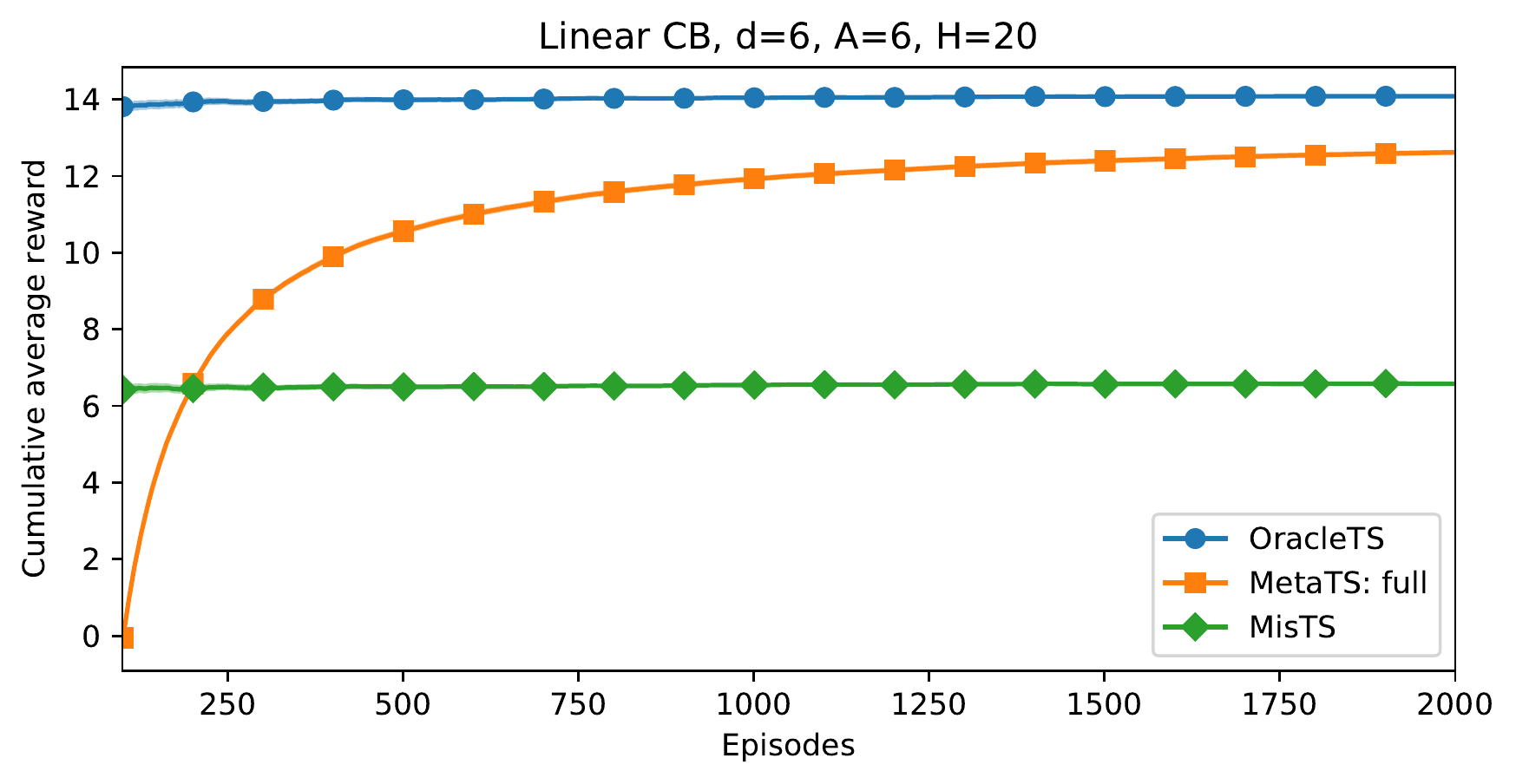}
\vspace{-0.5cm}
\caption{Learning curves for Gaussian MAB and linear CB
  experiments. We run 100 replicates per algorithm and visualize two
  standard errors with error bands. For meta-learners we tune the
  number of exploration rounds and display the performance of the best
  configuration at each point, which we call the upper envelope.}
  \vspace{-0.5cm}
\label{fig:gaussian_expts}
\end{figure}

\paragraph{Gaussian MAB.}
Our first scenario is a multi-arm\tledit{ed} bandit problem with Gaussian prior
and Gaussian reward. The instance has $|\calA|=6$ arms and each episode
has horizon $H=10$. The prior is $\mathcal{N}(\normmean, \normcov)$ where $\normmean
= [0.5,0,0,0.1,0,0]$ and $\normcov$ has block structure so that arms
$1,2,3$ are highly correlated, and analogously for arms $4,5,6$. The
rewards are Gaussian with variance $1$, which is known to all
learners.\looseness=-1

We run four algorithms. Two are non-meta-learning Thompson sampling algorithms: \algfont{OracleTS}, which uses the correct prior, and \algfont{MisTS}, which uses the misspecified prior $\mathcal{N}(\boldsymbol{0}, \mathbf{I})$.
We also run \algfont{MetaTS:no-cov} which only attempts to
meta-learn the prior mean $\mu_0$ and assumes that the prior
covariance matrix is the identity (this algorithm is essentially the
one studied in~\cite{kveton2021meta}). Finally, our algorithm is \algfont{MetaTS:full}
which meta-learns both the prior mean and covariance. Both
meta-learners are run in an explore-then-commit fashion where the
first $T_0$ episodes are used for exploration.\footnote{For
  \algfont{MetaTS:no-cov}, we follow~\cite{kveton2021meta} and only use the first step
  of each exploration episode for exploration, switching to TS with
  the current prior estimate for the rest of the episode. On the other
  hand, \algfont{MetaTS:full} explores for all time steps in the first $T_0$
  episodes.}\looseness=-1

In Figure~\ref{fig:gaussian_expts}, we plot the cumulative average
per-episode reward for each algorithm, where for the meta-learners we
sweep over many choices of $T_0$ and display the pointwise best (i.e.,
the upper envelope). The experiment clearly shows the value of
meta-learning as both \algfont{MetaTS:no-cov} and \algfont{MetaTS:full} quickly
outperform misspecified TS. Additionally, we also see the importance
of learning the covariance matrix, even though it can require many
samples. Indeed, the final performance of \algfont{MetaTS:full} with $T_0=5$K,
ignoring the regret incurred due to exploration, is competitive with
\algfont{OracleTS}, while \algfont{MetaTS:no-cov} asymptotes to a much lower
performance (see~\Cref{fig:test_errors} in~\Cref{app:experiments}).

\paragraph{Gaussian linear contextual bandits.}
Our second experiment concerns Gaussian linear contextual
bandits. Here we run \algfont{OracleTS}, \algfont{MisTS}, and
\algfont{MetaTS:full}, on a synthetic linear contextual bandit problem where
there are $|\calA|=6$ actions each with a $d=6$ dimensional action feature
(generated stochastically at each time step), and with horizon
$H=20$. The prior is over the linear parameter $\boldmu$ that
determines the reward for action-feature $x_a \in \mathbb{R}^d$ as
$r(a) \sim \mathcal{N}(\langle \boldmu, \boldx_a\rangle, 1)$. We set the
prior as $\mathcal{N}(\mathbf{1}, \normcov)$ where $\normcov$ is a
scaled-down version of the block diagonal matrix used in the previous
experiment. In the right panel of~\Cref{fig:gaussian_expts} we
again see that by meta-learning the prior, we quickly outperform the
misspecified approach and asymptotically achieve the oracle
performance. This demonstrates that Bayesian meta-learning is quite
broadly applicable and highlights the importance of our general
theoretical development.

\begin{figure}
\begin{minipage}{0.49\textwidth}
\includegraphics[width=\textwidth]{./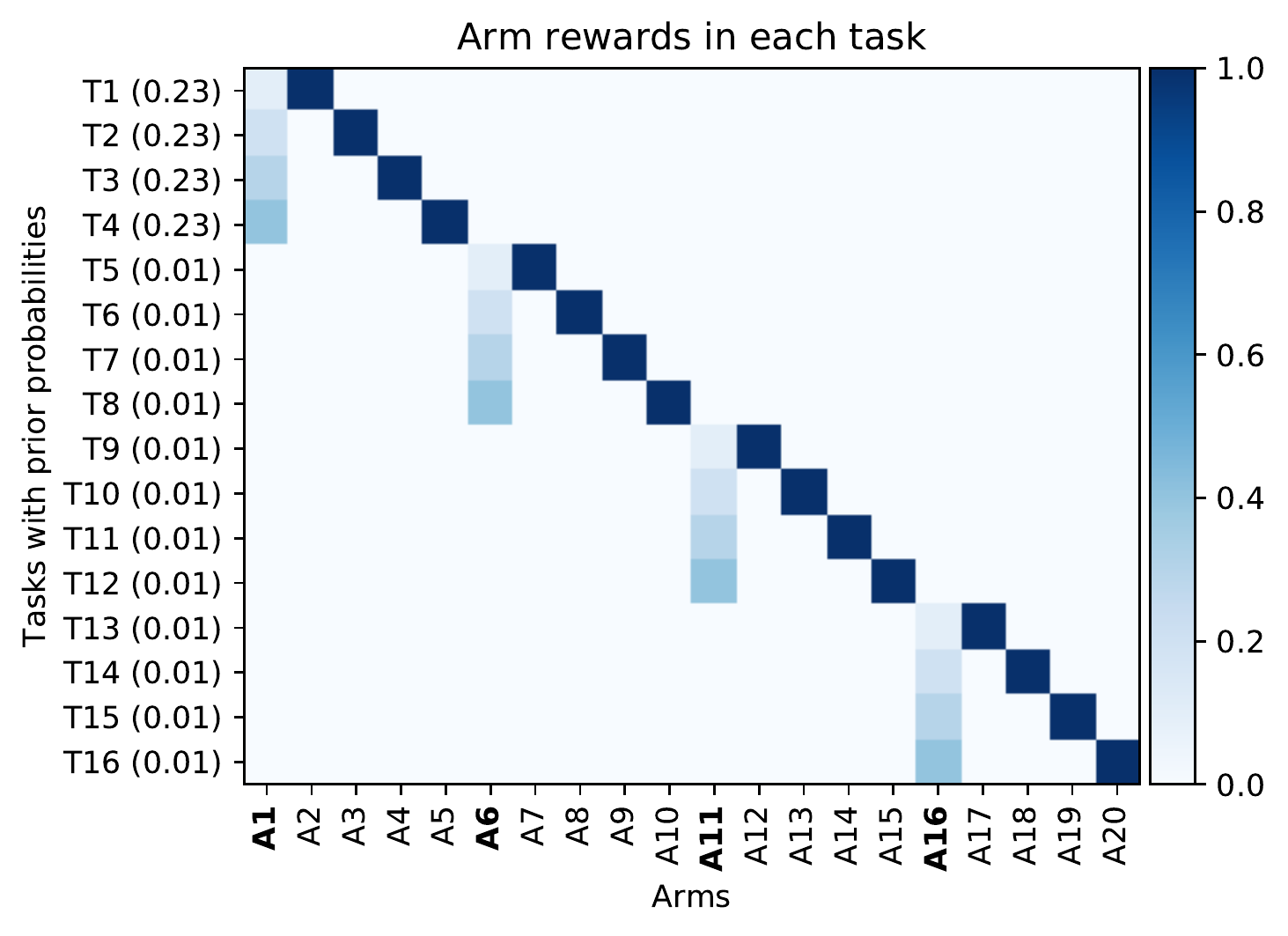}
\end{minipage}
\begin{minipage}{0.49\textwidth}
\vspace{-0.5cm}
\includegraphics[width=\textwidth]{./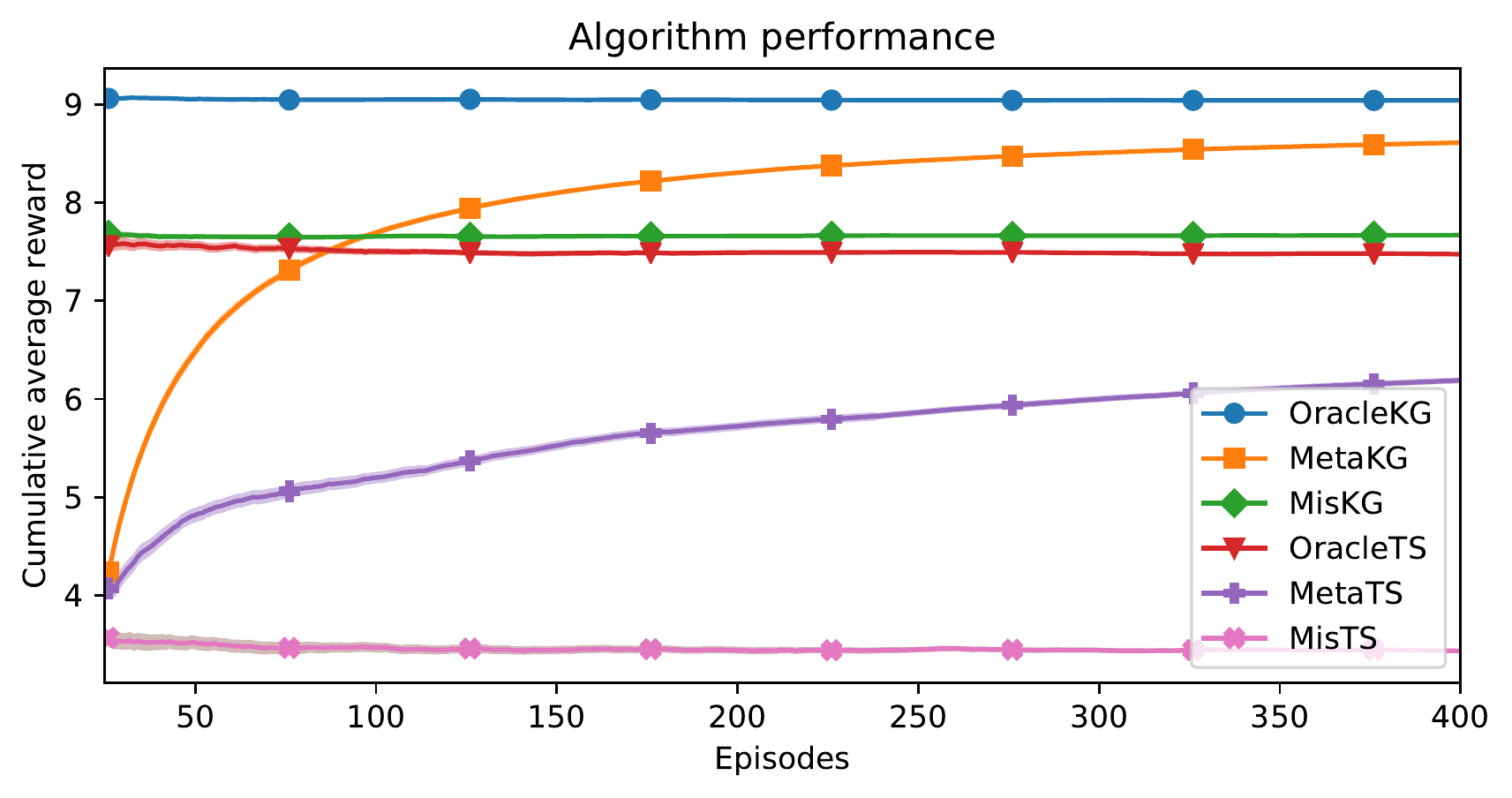}
\includegraphics[width=\textwidth]{./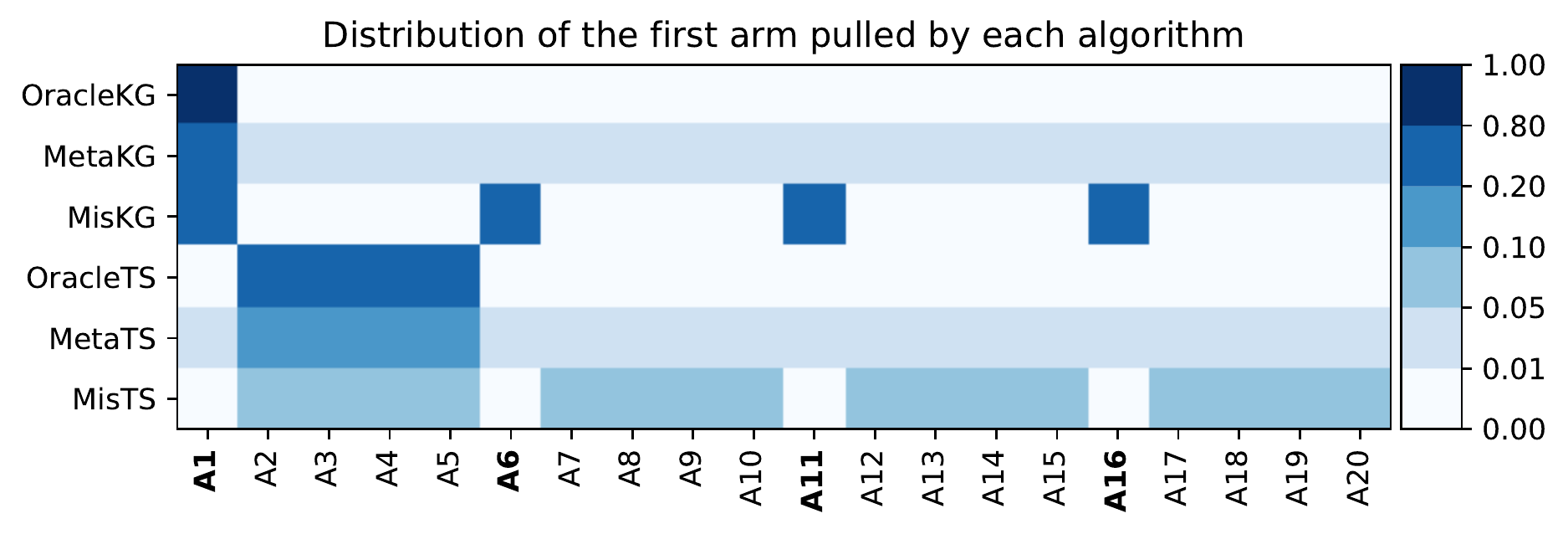}
\end{minipage}
\caption{Synthetic experiments with discrete MAB for $|\calA|=20$ and $H=10$. \emph{Left}: visualization
  of the instance showing the reward for each of the arms in
  each of the $16$ possible tasks along with the prior distribution
  over tasks (probabilities rounded, actual values are $9/40$ and $1/120$). \mbox{\emph{Top right}:} learning curves for 6 algorithms (100
  replicates, error bands at 2 standard errors, we tune the number of
  exploration rounds and plot upper envelopes for
  meta-learners). \mbox{\emph{Bottom right}:} empirical distribution of the first
  arm pulled in each episode by each algorithm. Note that the color scale is non-linear.}
\label{fig:kg_expts}
\end{figure}

\paragraph{Discrete bandits.}
Finally, we study a synthetic MAB setting with $|\calA|=20$ arms and a prior
supported on a finite set of $16$ reward distributions (tasks), under each of which
rewards are deterministic. The
instance is visualized in the left panel of
Figure~\ref{fig:kg_expts}. It is constructed so that each task has a
unique optimal arm and there are four arms that can quickly identify
which task the agent is in (arms A1, A6, A11, A16), so that it can infer the optimal
arm. Additionally, the prior is concentrated on the first four tasks,
so that pulling the first identifying arm almost always reveals the
current task. 

We evaluate 6 algorithms: Oracle, Misspecified, and Meta-learning each
with TS and Monte-Carlo Knowledge Gradient (an instantiation of the $\twompc(\theta)$ algorithm detailed in \iftoggle{neurips}{\Cref{app:mc_algs}}{\Cref{alg:two_mpc}}) as the base learners, and
we visualize the results in the top right panel of
Figure~\ref{fig:kg_expts}. Perhaps more revealing is the bottom right
panel of Figure~\ref{fig:kg_expts}, where we visualize the empirical
distribution over the first arm pull in each episode for each
algorithm. We see that \algfont{OracleTS} typically plays uniformly
over arms A2--A5 in the first round as these are highly likely to be
the optimal arm under the prior, while \algfont{MisTS} plays
uniformly over the 16 plausibly optimal arms. \algfont{MetaTS} quickly
learns to play uniformly over arms A2--A5 and is asymptotically
competitive with \algfont{OracleTS}. 

{The interesting property of this instance is that} playing the identifying arms is crucial for optimal behavior. However, since TS is myopic and these arms never produce large rewards, TS
will never play them. Thus, to achieve optimal behavior, we must use a less myopic base learner like Knowledge Gradient.
As can be seen, both
\algfont{OracleKG} and \algfont{MisKG} first play the
identifying arms, where the oracle almost always pulls the first one
while \algfont{MisKG} plays them uniformly. The performance of
\algfont{OracleKG} is much better than all TS configurations. Finally,
the meta-learning configuration of Knowledge Gradient quickly learns
to pull the first identifying arm and competes with \algfont{OracleKG}.

\section{Discussion}

In our simulations, we demonstrated the superiority of more expressive prior families (e.g., modeling means and covariances) and non-myopic base algorithms (e.g., Knowledge Gradient) over less expressive priors (e.g., product measures) and greedy base learners (e.g., Thompson sampling). 
Notably, the generality and flexibility of our theoretical contributions ensure robustness to prior misspecification even for these richer priors and sophisticated base learners. 

Still, theory and experiments alike point to a tradeoff: despite the potential for improved performance, richer prior families are harder to learn, and some base learners (e.g., $n$-Monte Carlo algorithms for large $n$) 
can be more sensitive to incorrect priors.  
It is an exciting direction for future work to investigate the joint problems of \emph{model selection} (over priors) and \emph{algorithm selection} (over base learners) in order to optimally navigate these tradeoffs. Perhaps model and algorithm selection can be coupled so that certain base learners exhibit improved performance, or greater robustness, over certain classes of priors. 
We would like to further understand how these tradeoffs interface with computational burdens of using certain priors and base learners, and whether our sensitivity analysis extends to computationally efficient approximations of sampling-based decision-making algorithms (e.g., via Laplace approximations, MCMC, Gibbs Sampling, and Variational Methods; the long-horizon performance of Thompson sampling under approximate inference has already been studied \cite{phan2019thompson}).
Finally, we hold hope that a more instance-dependent analysis may improve our sensitivity bounds for certain families of priors, which may in turn inform more clever exploration strategies that circumvent worst-case tradeoffs. 

\subsection*{Acknowledgements}
The authors thank Wen Sun for many discussions that helped shape the current paper.




\bibliographystyle{alpha}
\bibliography{bibliog}

\newpage
\tableofcontents
\appendix
\newpage

\section{Additional Experimental Details}
\label{app:experiments}
\begin{figure}
\includegraphics[width=0.5\textwidth]{./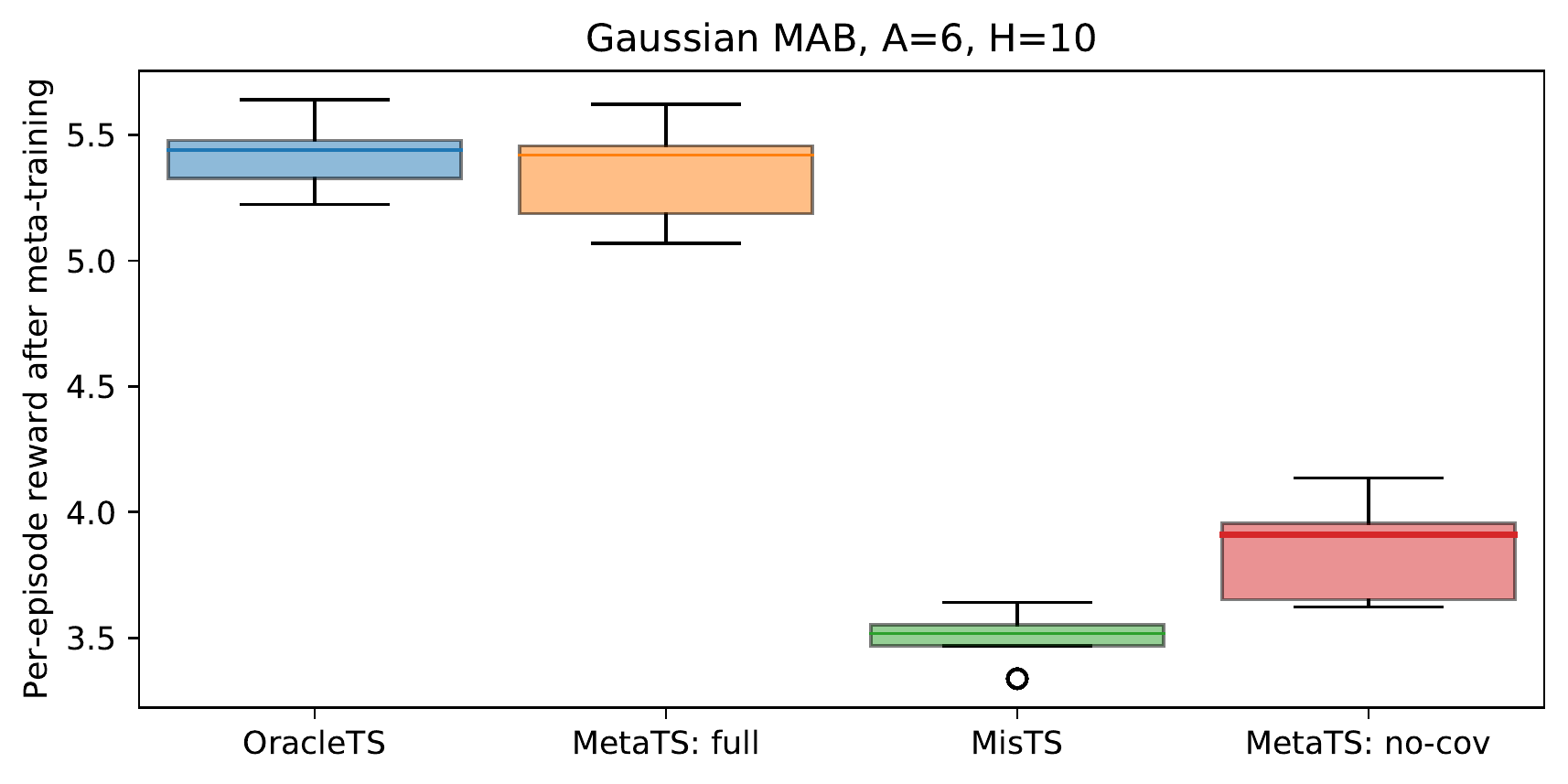}
\includegraphics[width=0.5\textwidth]{./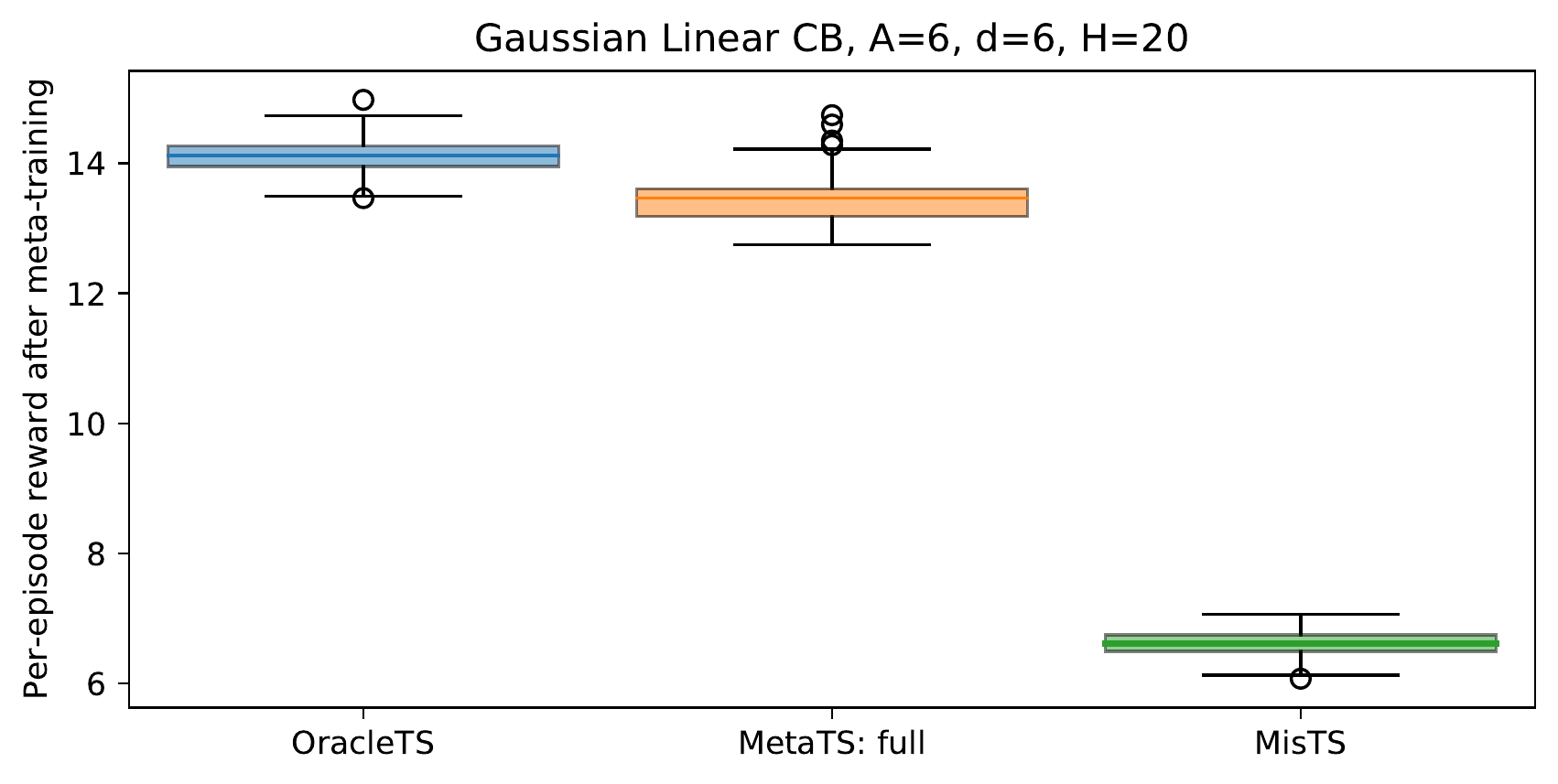}
\caption{Test performance in Gaussian MAB and Gaussian linear CB
  experiments.}
\label{fig:test_errors}
\end{figure}

In this section, we provide additional experimental details for each
setting. As a prelude, the total amount of compute is very minimal and
primarily inflated by the large number of replicates used in each
experiment. On a standard CPU cluster the experiments can easily be
completed in 2-4 hours, even with running 100 replicates for each
algorithm/configuration.

\subsection{Multi-armed bandit experiments}
As described, the left panel of~\Cref{fig:gaussian_expts} is based on
a $|\mathcal{A}|=6$ arm bandit problem with horizon $H=10$ and prior
$\mathcal{N}(\normmean,\normcov)$, where
\begin{align*}
\normmean = [0.5,0,0,0.1,0,0], \quad \textrm{and} \quad 
\normcov = 
\left(\begin{matrix}
1 & 0.9 & 0.9 & 0 & 0 & 0\\
0.9 & 1 & 0.9 & 0 & 0 & 0\\
0.9 & 0.9 & 1 & 0 & 0 & 0\\
0 & 0 & 0 & 1 & 0.9 & 0.9\\
0 & 0 & 0 & 0.9 & 1 & 0.9\\
0 & 0 & 0 & 0.9 & 0.9 & 1
\end{matrix}\right).
\end{align*}
The rewards are Gaussian, with variance $1.0$. 

The four algorithms we run are:
\begin{itemize}
\item \algfont{OracleTS}: The standard implementation of Gaussian Thompson sampling, with the correct prior $(\normmean,\normcov)$.
\item \algfont{MisTS}: The standard implementation of Gaussian Thompson sampling, with the incorrect prior $(\boldsymbol{0},\mathbf{I})$.
\item \algfont{MetaTS:full}: A meta-learning implementation of
  Gaussian Thompson sampling with an ``explore-then-commit''
  strategy. This algorithm has a hyperparameter $T_0$ which determines
  the number of exploration rounds. In the first $T_0$ rounds, the
  algorithm simply selects all actions uniformly at random. Then at
  the end of the $T_0$ exploration rounds, it forms an estimate
  $(\normmeanest,\normcovest)$ as follows:
\begin{align*}
\normmeanest &= \frac{|\mathcal{A}|}{T_0}\sum_{i=1}^{T_0} \widehat{\bm{\mu}}_i = \frac{|\mathcal{A}|}{T_0}\sum_{i=1}^{T_0}\left(\frac{1}{H}\sum_{h=1}^H \sum_{a \in \mathcal{A}} \one\{a_{i,h} = a\}e_a r_{i,h}\right),\\
\normcovest &= \frac{1}{T_0}\sum_{i=1}^{T_0} \left( \widehat{\bm{\mu}}_i \widehat{\bm{\mu}}_i^{\top} -\textrm{diag}( \widehat{\bm{\mu}}_i \widehat{\bm{\mu}}_i^{\top}) + \textrm{diag}\left( \frac{|\mathcal{A}|}{H} \sum_{h=1}^H \sum_a \one\{a_{i,h} = a\}e_a r^2_{i,h}\right) - \mathbf{I}\right) - \normmeanest\normmeanest^\top.
\end{align*}
Here $a_{i,h}$ is the action played at the $h^{\textrm{th}}$ time step
of the $i^{\textrm{th}}$ episode and $r_{i,h}$ is the corresponding
reward. It is not difficult to verify that both of these are unbiased
estimators for $\normmean$ and $\normcov$ respectively. We
additionally project $\normcovest$ onto the positive semidefinite
cone.  After the $T_0$ exploration rounds, \algfont{MetaTS:full} forms
the above estimators and runs standard Gaussian Thompson sampling with
the estimates $(\normmeanest,\normcovest)$.
\item \algfont{MetaTS:no-cov}: A meta-learning implementation of
  Gaussian Thompson sampling with an ``explore-then-commit'' strategy,
  which does not estimate the prior covariance. As above, it has a
  hyperparameter $T_0$ determining the number of exploration
  rounds. In the first $T_0$ rounds, the algorithm chooses just the
  first action uniformly at random and then chooses the remaining
  actions by instantiating Gaussian Thompson sampling with the current
  estimate of the prior mean and the incorrect prior covariance
  $\mathbf{I}$. The prior mean at round $t$ is estimated as
\begin{align*}
\normmeanest_t &= \frac{|\mathcal{A}|}{t-1}\sum_{i=1}^{t-1}\sum_{a \in \mathcal{A}} \one\{a_{i,1} = a\}e_a r_{i,1},
\end{align*}
which is analogous to the estimate above. After $T_0$ rounds, we set
$\normmeanest = \normmeanest_{T_0}$ and we run standard Gaussian
Thompson sampling with prior $(\normmeanest, \mathbf{I})$ for the
remaining rounds.
\end{itemize}

\paragraph{Experimental Protocol and Results.}
In the left panel of~\Cref{fig:gaussian_expts} we run each algorithm
(with each hyperparameter configuration) for 100 replicates with
different random seeds. For both \algfont{MetaTS} variants, we choose
$T_0$ from the set $\{200,400,600,\ldots,5000\}$. In the figure, we
record the average (across replicates) performance at each episode
number with error bands corresponding to $\pm 2$ standard errors.

For the algorithms with a hyperparameter, we plot the performance of
the pointwise best hyperparameter configuration. That is, we optimize
hyperparameters (based on average-across-replicates performance) for
each episode number $n$ individually.

In the left panel of~\Cref{fig:test_errors} we visualize the ``test
performance'' of the various algorithms, which corresponds to the
average per-episode performance for the last $5,000$ episodes. Here
the box plots visualize the 100 different replicates. For both
\algfont{MetaTS} variants, we use $T_0=5,000$ as the
hyperparameter. Note that since the total number of episodes is
$10,000$, both algorithms do not update their prior estimate for the episodes during
which we record performance.

\subsection{Linear contextual bandit experiments}
The experimental protocol is similar to the one above. Here we
consider a Gaussian linear contextual bandit setup with $|\calA|=6$
actions and $d=6$ dimensional action features and horizon $H=20$. The
prior is $\mathcal{N}(\normmean,\normcov)$ where
\begin{align*}
\normmean = \mathbf{1}, \quad \textrm{and} \quad \normcov = 0.1\times \left(\begin{matrix}
1 & 0.9 & 0.9 & 0 & 0 & 0\\
0.9 & 1 & 0.9 & 0 & 0 & 0\\
0.9 & 0.9 & 1 & 0 & 0 & 0\\
0 & 0 & 0 & 1 & 0.9 & 0.9\\
0 & 0 & 0 & 0.9 & 1 & 0.9\\
0 & 0 & 0 & 0.9 & 0.9 & 1
\end{matrix}\right).
\end{align*}
In each round the action features are generated by sampling each entry
from a standard normal distribution and then normalizing so that the
feature vector has $\ell_2$ norm equal to $1$. For action feature
$\boldx_a$ the reward is given by $r(a) \sim \mathcal{N}(\langle
\boldmu, \boldx_a\rangle, 1)$.

We run three algorithms here. The first two \algfont{OracleTS} and
\algfont{MisTS} are standard implementations of Gaussian linear
Thompson sampling with well-specified and mis-specified priors
respectively. Here \algfont{MisTS} is initialized with prior
$\mathcal{N}(\mathbf{0}, \mathbf{I})$. The final algorithm,
\algfont{MetaTS:full} is implemented in the explore-then-commit
fashion described above. The only difference is the estimator for the
prior. Here in each episode of the exploration stage, we choose
actions uniformly at random and use ordinary least squares to estimate
the parameter $\boldmu$ of the episode. The prior mean is simply
estimated using the average of these OLS solutions. The prior covariance is estimated as
\begin{align*}
\normcovest = \left(\frac{1}{T_0} \sum_{i=1}^{T_0} \widehat{\bm{\mu}}_i\widehat{\bm{\mu}}_i^\top - \Sigma_i^{-1}\right) - \normmeanest\normmeanest^\top,
\end{align*}
where $\Sigma_i = \sum_{h=1}^H \boldx_{h,a_h}\boldx_{h,a_h}^\top$ is the second
moment matrix of the action features chosen in the episode. As above,
this is an unbiased estimator of the prior covariance.

\paragraph{Experimental Protocol and Results.}
We follow the same protocol as above, running each algorithm for 100
replicates and, for \algfont{MetaTS:full}, we plot the pointwise best
performance across hyperparameter configurations. Here we tune $T_0
\in \{100,200,\ldots,1000\}$. In the right panel
of~\Cref{fig:test_errors} we plot the test performance of each
algorithm, measured as the average performance in the final $1,000$
episodes. We use $T_0 = 1000$ for \algfont{MetaTS:full}.

\subsection{Discrete bandits}
The final experiment is with the discrete MAB instance visualized
in~\Cref{fig:kg_expts}. As the instance is visualized in the left
panel, we only describe the algorithms and the experimental protocol.
As the reward distributions are singular, posteriors collapse
frequently in this experiment. Once this happens, all algorithms
simply play the best arm from then on.

Thompson sampling as a base learner is standard. We maintain a
posterior distribution over tasks, sample an instance/task from this
distribution, and play the best arm for that task. Posterior updates
are straightforward due to the singular nature of the reward
distributions.

For Knowledge Gradient, we implement a one-step look-ahead variant,
which is exactly as described in~\Cref{alg:two_mpc}, with $k_1=k_2=10$
and $\alpha = 1$. We also implement a random tie breaking scheme where
we choose randomly among actions with the maximum $V_a$.

We implement the meta-learners in a straightforward
explore-then-commit manner. In each exploration round, we choose
actions uniformly at random. If the posterior collapses, then we
increment a counter associated with the current task. If the posterior
does not collapse during the episode then we do not increment any
counter. After $T_0$ exploration rounds we estimate the posterior by
the empirical fraction of times we observed each task.

As above, we run 100 replicates of each algorithm. Misspecified
variants are initialized with the uniform prior over tasks. For the
meta-learners we tune $T_0 \in \{25,50,\ldots,200\}$. We plot the
pointwise best (across hyperparameters) mean performance across
replicates, with bands corresponding to $\pm 2$ standard errors. Note
that there is very little variance here since we run many replicates
and the problem has little noise. 

In the bottom panel of~\Cref{fig:kg_expts} we plot the empirical
distribution of the first action chose by each algorithm, where we
compute this distribution using all $400$ episodes and all $100$
replicates of each algorithm. For both meta-learners we use $T_0=100$
here.

\section{Proof of Sensitivity Bounds}\label{app:sensitivity}
\newcommand{\calF}{\mathcal{F}}
\newcommand{\rmd}{\mathrm{d}}
\newcommand{\scrF}{\mathscr{F}}
\newcommand{\boldxi}{\boldsymbol{\xi}}
\newcommand{\Dseed}{\mathcal{D}_{\mathrm{seed}}}
In this appendix, we give the proofs of \Cref{thm:n_montecarlo_body} and \Cref{prop:prior_sensitivity_tv_bound}. The results in this appendix are much more general than those stated in \Cref{sec:prior_sensitivity} and require us to introduce some new concepts. The following roadmap may be useful in navigating the rest of this appendix.
\begin{itemize}
    \item \Cref{sec:key_prop_tv} provides some key properties of total variation distance that are used in the rest of \Cref{app:sensitivity}, as well as in \Cref{app:mc_algs}.
    \item \Cref{sec:general_sensitivity} provides the statements of the main results of this section. In particular, we define our notion of upper tail expectation, we introduce our tail conditions, and we provide the statements of our generalizations of \Cref{thm:n_montecarlo_body} (\Cref{thm:n_montecarlo_general_tail} and \Cref{thm:n_montecarlo_tails}).
    \item \Cref{sec:tail_exp} and \Cref{sec:estimates_upper_bound} provide key properties of our upper tail expectation and bound the upper tail expectation under our tail conditions.  
\iftoggle{neurips}{
    \item \Cref{app:prop_tv_sens} and \Cref{sec:proof:lem:perf_diff_bandits} together give the proof of \Cref{prop:prior_sensitivity_tv_bound}.
}
{
    \item \Cref{sec:proof:lem:perf_diff_bandits} gives the proof of \Cref{lem:perf_diff_bandits}, which was used to prove~\Cref{prop:prior_sensitivity_tv_bound}. 
}
    \item \Cref{app:proof_n_montecarlo_general_tail} finishes the proof of \Cref{thm:n_montecarlo_general_tail}.
\end{itemize}
\iftoggle{neurips}{
    \Cref{fig:dependencies} illustrates relationships between the subsections in this appendix.
    \begin{figure}
    \begin{center}
    \includegraphics[width=0.9\textwidth]{./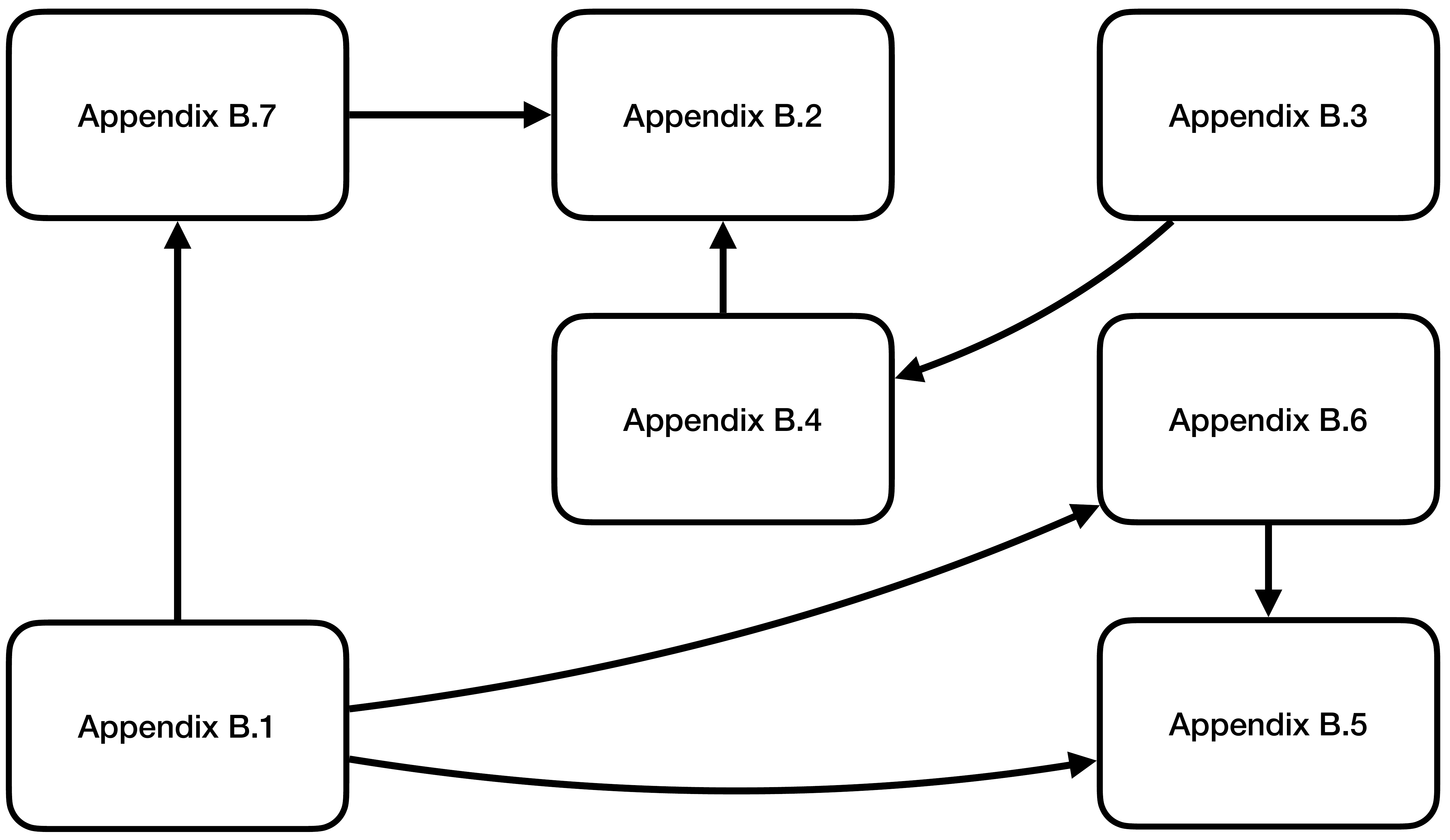}
    \end{center}
    \caption{Dependency graph of \Cref{app:sensitivity}. An arrow indicates that a result or proof is provided in the section that the arrow is coming out of and is used in the section where the arrow is entering.}
    \label{fig:dependencies}
    \end{figure}
}
{
}

\subsection{Key Properties of the Total Variation Distance \label{sec:key_prop_tv}}
\paragraph{Technical disclaimer.} In what follows, we will need that our probability space $(\Omega, \scrF)$ allows for the equivalence between total variation distance and couplings. One way that this can be guaranteed is if (a) our space $\Omega$ is Polish, i.e., that $\Omega$ is metrizable by a metric that makes it complete and separable and (b) our $\sigma$-algebra $\scrF$ is the Borel algebra \msedit{$\mathscr{B}(\Omega)$}, i.e., the $\sigma$-algebra generated by open sets in $\Omega$~\cite{lindvall2002lectures}. Furthermore, we assume all random variables $X:(\Omega,\scrF) \to \calX$ take values in a Polish space $\calX$. We endow $\calX$ with the Borel $\sigma$-algebra $\mathscr{B}(\calX)$, and assume that $X$ is measurable from $(\Omega,\scrF) \to (\calX,\mathscr{B}(\calX))$; that is, $X^{-1}(\mathcal{E}) \in \scrF$ for all $\mathcal{E} \in \mathscr{B}(\calX)$. We let $\Delta(\calX)$ denote the set of all Borel-measurable distributions on $\calX$. 

\paragraph{Randomized Algorithms.} Throughout, we often refer to \emph{randomized algorithms}. Formally, a family of randomized bandit algorithms $\alg(\theta)$ is a specified by a distribution $\mathcal{D}_{\mathrm{seed}}$ (independent of $\theta$), a domain $\Xi$ over random seeds $\boldxi$, and step-wise mappings $f_{1},\dots,f_H$ from trajectories, the random seed, and parameters $\theta$ to distributions over actions:
\begin{align*}
f_h(\tau_{h-1},\boldxi \mid \theta): \{h\text{-trajectories}\} \times \Xi \times \Theta \to \Delta(\calA).
\end{align*}
Each $\alg(\theta)$ operates as follows:
\begin{itemize}
    \item $\boldxi$ is drawn from $\Dseed$ at the start of the episode before interaction.
    \item At each step $h$, $a_h$ is chosen as $a_h \sim f_h(\tau_h,\boldxi \mid \theta)$, independently of the past
\end{itemize}
\begin{remark}[Sources of Randomness]\label{rem:randomness}
Note that we allow for \emph{two} sources of randomness: the draw of $a_h$ from the distribution $f_h(\tau_{h-1},\boldxi \mid \theta)$, and the initial random seed $\boldxi$ at the start of the episode. For many natural algorithms - such as those \Cref{app:mc_algs} - we do not need $\boldxi$, and can just represent the randomness via actions selected independently for trajectory-dependent distributions.  However, in some case, it may be desirable for there to be a random seed $\boldxi$ encoding randomness shared across stages. Moreover, the assumption that $\Dseed$ does not depend on $\theta$ is very mild, and can be satisfied by all families $\alg(\cdot)$ which can be run on a single random number generator independent of $\theta$.
\end{remark}

\paragraph{Total Variation and its Key Properties.} Recall the definition of the total variation distance.
\begin{definition}\label{defn:total_variation} Let $P,P'$ be two probability measures on a space $(\Omega,\scrF)$. Then $\tvarg{P}{P'} = \sup_{\calE}|P(\calE) - P'(\calE)|$ is the maximal difference in probabilities of measurable events $\calE \in \scrF$.
\end{definition}
In our proofs, we make use of the following elementary properties of the total variation distance.
\begin{lemma}[Total Variation Properties]\label{lem:key_tv_props} Let $P,P',P''$ be any three probability measures of the same probability space $(\Omega,\scrF)$.
\begin{itemize}
\item[\emph{(a)}] \emph{Coupling Form:} Let $Q$ be a coupling of $P$ and $P'$, i.e. a joint distribution over $(X, Y)$ such that its marginal distribution over $X$ is $P$ and its marginal distribution over $Y$ is $P'$. Then for any such coupling $Q$, we have
\[ \tvarg{P}{P'} \leq Q(X \neq Y). \]
Moreover, there exists a maximal coupling $Q$ such that \[ \tvarg{P}{P'} = Q(X \neq Y). \]
\item[\emph{(b)}] \emph{Variational Forms:} $\tvarg{P}{P'} = \sup_{\calE} P(\calE) - P'(\calE)$ (that is, without the absolute value). Moreover, if $E,E'$ denote the associated expectations,  and letting $V$ quantify $[0,1]$-bounded random variables on $(\Omega,\scrF)$,
\begin{align*}
\tvarg{P}{P'} = \sup\{E[V] - E'[V] \text{ s.t. } V:(\Omega,\scrF) \to [0,1]\}
\end{align*}
\item[\emph{(c)}] \emph{Symmetry:} $\tvarg{P}{P'} = \tvarg{P'}{P}$.
\item[\emph{(d)}] \emph{Triangle Inequality:} 
\begin{align*}
\tvarg{P}{P''} \le \tvarg{P}{P'} + \tvarg{P'}{P''}.
\end{align*}
\item[\emph{(e)}] \emph{Data Processing:} Let $(X,Y)$ be random variables on $(\Omega,\scrF)$. Then 
\begin{align*}
\tvarg{P(X)}{P'(X)} \leq \tvarg{P(X,Y)}{P'(X,Y)}.
\end{align*}
\item[\emph{(f)}] \emph{Tensorization:} Let $(X_1,\dots,X_n)$ be $n$ random variables on $(\Omega,\scrF)$ which are independent under both $P$ and $P'$. Then, 
\begin{align*}
\tvarg{P(X_1,\dots,X_n)}{P'(X_1,\dots,X_n)}  \le \sum_{i=1}^n \tvarg{P(X_i)}{P'(X_i)}.
\end{align*}
\end{itemize}
\end{lemma}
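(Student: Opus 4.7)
The plan is to prove the properties in an order that exposes their logical dependencies rather than in the order stated: I would first establish the variational form (b), from which (c), (d), and (e) follow almost immediately, then develop the coupling form (a), and finally derive tensorization (f) by combining (a) with an inductive argument.

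For the variational form (b), I would first verify that $\sup_{\calE} P(\calE) - P'(\calE) = \sup_{\calE} |P(\calE) - P'(\calE)|$: by symmetry of the event and its complement (replacing $\calE$ by $\calE^c$ flips the sign), the supremum without absolute value equals the supremum with absolute value, so the two definitions agree. For the $[0,1]$-valued test-function form, the $\ge$ direction is immediate by taking $V = \mathbbm{1}_\calE$; the $\le$ direction uses that any bounded measurable $V:\Omega \to [0,1]$ can be written as $V = \int_0^1 \mathbbm{1}\{V \ge t\}\,dt$, and a layer-cake expansion $E[V] - E'[V] = \int_0^1 (P(V\ge t) - P'(V\ge t))\,dt$ bounds the expected difference by $\sup_\calE P(\calE) - P'(\calE)$.

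From the variational form, symmetry (c) is trivial (swapping $P$ and $P'$ negates the inner expression but preserves the supremum of its absolute value). Triangle inequality (d) is obtained by writing $P(\calE) - P''(\calE) = (P(\calE) - P'(\calE)) + (P'(\calE) - P''(\calE))$, bounding each summand by the respective TV, and taking the supremum over $\calE$. Data processing (e) follows from the observation that every $\sigma(X)$-measurable event is of the form $\{X \in A\}$ for $A \in \mathscr{B}(\calX)$, and such events sit inside the $\sigma$-algebra generated by $(X,Y)$; the supremum in \Cref{defn:total_variation} is thus taken over a smaller class on the left, giving the inequality.

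For the coupling form (a), the easy direction is: for any coupling $Q$ of $(X,Y)$ with $X\sim P$, $Y\sim P'$, and any event $\calE$, we have $P(\calE) - P'(\calE) = Q(X\in\calE) - Q(Y\in\calE) \le Q(X \ne Y)$, so taking the supremum over $\calE$ and invoking (b) yields $\TV(P\|P') \le Q(X\ne Y)$. The harder direction---the existence of a \emph{maximal} coupling achieving equality---is the main technical obstacle, and is where the Polish-space hypothesis is used. I would invoke the standard construction: let $\nu = P \wedge P'$ denote the common-part measure, which exists by the Radon–Nikodym theorem after dominating both $P$ and $P'$ by $\mu = (P+P')/2$ and setting $\frac{d\nu}{d\mu} = \min(\frac{dP}{d\mu},\frac{dP'}{d\mu})$. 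Let $\alpha = \nu(\Omega) = 1 - \TV(P\|P')$ (this identity itself follows from the variational form by choosing $\calE = \{dP/d\mu > dP'/d\mu\}$). Then with probability $\alpha$, draw $X = Y$ from $\nu/\alpha$; with probability $1-\alpha$, draw $X$ and $Y$ independently from the normalized residuals $(P-\nu)/(1-\alpha)$ and $(P'-\nu)/(1-\alpha)$, which are supported on disjoint sets in the Polish setting. This coupling satisfies $Q(X\ne Y) = 1-\alpha = \TV(P\|P')$.

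Finally, for tensorization (f), I would proceed by induction on $n$. For each coordinate $i$, let $Q_i$ be a maximal coupling of $P(X_i)$ and $P'(X_i)$, and by product-independence under both measures, the product coupling $Q = \bigotimes_i Q_i$ is a valid coupling of the joint laws. A union bound gives $Q(X_{1:n} \ne Y_{1:n}) \le \sum_i Q_i(X_i \ne Y_i) = \sum_i \TV(P(X_i)\|P'(X_i))$, and applying (a) to this coupling concludes the proof. The construction of the maximal coupling in (a) is the main conceptual step; every other item reduces to bookkeeping with the variational characterization.
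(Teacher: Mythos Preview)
Your proposal is correct and largely parallels the paper's proof. The paper in fact cites \cite{lindvall2002lectures} for parts (a) and (b) rather than writing out the constructions you give; your explicit maximal-coupling construction via the common-part measure $\nu = P \wedge P'$ is the standard one and is fine. For (c), (d), and (f) your arguments are essentially identical to the paper's (same add-and-subtract for the triangle inequality, same product-of-maximal-couplings plus union bound for tensorization).

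The one genuine methodological difference is in (e), data processing. You argue via the variational form: $\sigma(X)$-measurable events form a subclass of $\sigma(X,Y)$-measurable events, so the supremum defining $\tvarg{P(X)}{P'(X)}$ is over a smaller set. The paper instead argues via the coupling form: take the maximal coupling $Q$ of $P(X,Y)$ and $P'(X,Y)$, and observe that $\{X \ne X'\} \subseteq \{(X,Y) \ne (X',Y')\}$, so $\tvarg{P(X)}{P'(X)} \le Q(X \ne X') \le Q((X,Y) \ne (X',Y')) = \tvarg{P(X,Y)}{P'(X,Y)}$. Your route is more elementary in that it does not need the existence of the maximal coupling; the paper's route is a one-liner once (a) is in hand. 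Both are equally valid.
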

\begin{proof}
The coupling and variational forms can be found in \cite[Chapter 1]{lindvall2002lectures}. Symmetry follows immediately from the definition. 

To see the triangle inequality, note that for any measurable $\calE \subset \Omega$, we have
\[ |P(\calE) - P''(\calE)| \le |P(\calE) - P'(\calE)| + |P'(\calE) - P''(\calE)| \le \tvarg{P}{P'} + \tvarg{P'}{P''}. \]
As the above holds for any such $\calE$, we can conclude
\[ \tvarg{P}{P''} \le \tvarg{P}{P'} + \tvarg{P'}{P''}. \]
For the data processing inequality, say $(X,Y)$ follow distribution $P$ and $(X',Y')$ follow $P'$, and let $Q$ be the maximal coupling of $P(X,Y)$ and $P'(X,Y)$. Then
\[ \tvarg{P(X)}{P'(X)} \le Q(X \neq X') \le Q((X,Y) \ne (X',Y')) = \tvarg{P(X,Y)}{P'(X,Y)}.\]
To prove the tensorization inequality, say $(X_1, \ldots, X_n)$ follow $P$ and $(X'_1, \ldots, X'_n)$ follow $P'$. For each $i$, let $Q_i$ be the maximal coupling of $P(X_i)$ and $P'(X'_i)$, and let $Q$ denote the product distribution of the $Q_i$'s. Note that $Q$ is a valid coupling of $P$ and $P'$, which are each product distributions. Then we have
\begin{align*}
    \tvarg{P}{P'} \le Q(X \neq X') \le \sum_{i=1}^n Q_i(X_i \ne X'_i) = \sum_{i=1}^n \tvarg{P(X_i)}{P'(X'_i)}.\tag*\qedhere
\end{align*}
\end{proof}

\begin{lemma}[Total Variation with Shared Marginal]\label{lem:tv_same_marginal} Let $P$ and $P'$ be joint distributions over random variables $(X,Y)$ such that the \emph{marginals} $P(X)$ and $P'(X)$ coincide. Then, 
\begin{align*}\tvarg{P(X,Y)}{P'(X,Y)} = \E_{X \sim P} \tvarg{P(Y \mid X)}{P'(Y \mid X)}.
\end{align*}
\end{lemma}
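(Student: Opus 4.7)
The cleanest route is via densities with respect to a common dominating measure, exploiting the disintegration theorem (which applies since the paper works over Polish spaces with regular conditionals). The plan is to reduce both sides of the claimed equality to the same integral expression, using the assumption $P(X) = P'(X)$ to factor the marginal out.

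First, I would pick a $\sigma$-finite dominating measure on the joint space, for instance $\mu = P + P'$, and let $p, p'$ be the densities of $P, P'$ with respect to $\mu$. By disintegration on the Polish space, we may write $\rmd\mu(x,y) = \rmd\mu_X(x)\,\rmd\mu_x(y)$, where $\mu_X$ is the $X$-marginal of $\mu$ and $\mu_x$ is a regular conditional kernel. Then
\begin{align*}
p(x,y) = p_X(x)\cdot p(y\mid x), \qquad p'(x,y) = p'_X(x)\cdot p'(y\mid x),
\end{align*}
where $p_X, p'_X$ are the marginal densities w.r.t.\ $\mu_X$ and $p(\cdot\mid x), p'(\cdot\mid x)$ are conditional densities w.r.t.\ $\mu_x$. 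The hypothesis $P(X) = P'(X)$ says exactly that $p_X(x) = p'_X(x)$ for $\mu_X$-almost every $x$.

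Next, I would apply the standard identity $2\tvarg{Q}{Q'} = \int |q - q'|\,\rmd\mu$ (valid for densities w.r.t.\ any common dominating measure) both to the joint distributions and, pointwise in $x$, to the conditionals. Plugging in the factorization above and using $p_X = p'_X$,
\begin{align*}
2\tvarg{P(X,Y)}{P'(X,Y)} &= \int \abs*{p_X(x)p(y\mid x) - p'_X(x)p'(y\mid x)}\,\rmd\mu_x(y)\,\rmd\mu_X(x) \\
&= \int p_X(x) \int \abs*{p(y\mid x) - p'(y\mid x)}\,\rmd\mu_x(y)\,\rmd\mu_X(x) \\
&= \int p_X(x)\cdot 2\tvarg{P(Y\mid X=x)}{P'(Y\mid X=x)}\,\rmd\mu_X(x),
\end{align*}
which is exactly $2\,\Expop_{X\sim P}\tvarg{P(Y\mid X)}{P'(Y\mid X)}$. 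Dividing by $2$ yields the claim.

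The only real subtlety is justifying the use of a regular conditional and the associated factorization of $\mu$; this is precisely why the paper restricts to Polish spaces with Borel $\sigma$-algebras in the technical disclaimer, so the disintegration theorem applies and $x \mapsto \tvarg{P(Y\mid X=x)}{P'(Y\mid X=x)}$ is a bona fide measurable function on which we may take expectations. An alternative route would use the variational form from \Cref{lem:key_tv_props}(b): the $\le$ direction follows immediately from the tower property since conditional expectation under $P$ and $P'$ only differs through the conditionals, while the $\ge$ direction requires a measurable selection of near-optimal witness sets $A_x$ attaining the TV supremum for each $x$. I expect the density approach to be the smoothest, so I would present that as the main argument.
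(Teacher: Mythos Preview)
Your density argument via disintegration is correct and yields the equality in one stroke. The paper takes a genuinely different route: it proves the two inequalities separately, using the variational form of total variation for the $\geq$ direction (taking $V(X,Y) = \ind[Y \in B_X]$ for an arbitrary family of witness sets $(B_x)$ and exploiting $P(X)=P'(X)$ via the tower property), and a coupling construction for the $\leq$ direction (draw $X=X'$ from the common marginal, then draw $(Y,Y')$ from the maximal coupling of the two conditionals, so that $Q((X,Y)\ne(X',Y')) = \E_{X\sim P}\tvarg{P(Y\mid X)}{P'(Y\mid X)}$). Your approach is slicker but leans more heavily on the Polish-space disintegration machinery; the paper's argument stays entirely within the coupling and variational characterizations already recorded in \Cref{lem:key_tv_props}, which keeps the toolkit self-contained. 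Your aside about measurable selection for the $\geq$ direction in the variational route is exactly the subtlety the paper's version glosses over when it quantifies over arbitrary $(B_x)$.
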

\begin{proof}
We first show that $\tvarg{P(X,Y)}{P'(X,Y)} \geq \E_{X \sim P} \tvarg{P(Y \mid X)}{P'(Y \mid X)}$. To see this let $(B_x)_{x \in \Omega}$ be any set of measurable events indexed by $\Omega$. Letting $V(X,Y) = \ind[Y \in B_X]$, the variational form of total variation distance (\Cref{lem:key_tv_props}) implies that
\[ \E_{(X,Y) \sim P}[V(X,Y)] - \E_{(X,Y) \sim P'}[V(X,Y)] \leq \tvarg{P(X,Y)}{P'(X,Y)}. \]
On the other hand, because $P(X) = P'(X)$, we have
\[ \E_{(X,Y) \sim P}[V(X,Y)] - \E_{(X,Y) \sim P'}[V(X,Y)] = \E_{X \sim P}\left[ P(Y \in B_X) - P'(Y \in B_X) \right].\]
Since the choice of $(B_x)_{x \in \Omega}$ was arbitrary, we can conclude that
\[ \tvarg{P(X,Y)}{P'(X,Y)} \geq \E_{X \sim P} \tvarg{P(Y \mid X)}{P'(Y \mid X)}.\]

Now to prove $\tvarg{P(X,Y)}{P'(X,Y)} \leq \E_{X \sim P} \tvarg{P(Y \mid X)}{P'(Y \mid X)}$, we construct a coupling $Q((X,Y), (X',Y'))$ of $P(X,Y)$ and $P'(X,Y)$ as follows. First draw $X \sim P(X)$ and set $X'=X$. Then let $(Y,Y')$ be drawn from the maximal coupling of $P(Y \mid X)$ and $P'(Y \mid X)$ (guaranteed by \Cref{lem:key_tv_props}). By construction, this satisfies that $Q(X,Y) = P(X,Y)$ and $Q(X',Y') = P'(X,Y)$. By the coupling inequality (\Cref{lem:key_tv_props}), we have
\begin{align*}
    \tvarg{P(X,Y)}{P'(X,Y)} \leq Q((X,Y) \neq (X',Y')) = \E_{X \sim P}\left[ \tvarg{P(Y \mid X)}{P'(Y \mid X)}\right].\tag*\qedhere
\end{align*}
\end{proof}

\begin{lemma}[Total Variation with Shared Conditional]\label{lem:tv_same_conditional}  Let $P$ and $P'$ be joint distributions over random variables $(X,Y)$ such that the \emph{conditionals} $P(Y \mid X)$ and $P'(Y \mid X)$ coincide. Then, 
\begin{align*}\tvarg{P(X,Y)}{P(X,Y)} = \tvarg{P(X)}{P'(X)}.
\end{align*}
\end{lemma}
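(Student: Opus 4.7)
The plan is to prove equality by establishing the two inequalities separately. The direction $\tvarg{P(X)}{P'(X)} \le \tvarg{P(X,Y)}{P'(X,Y)}$ is immediate from the data-processing inequality (\Cref{lem:key_tv_props}(e)) applied to the projection $(X,Y) \mapsto X$, so all of the work lies in the reverse inequality.

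For the reverse direction, I would construct an explicit coupling. Let $Q_X$ denote the maximal coupling of $P(X)$ and $P'(X)$ provided by \Cref{lem:key_tv_props}(a), so that $Q_X(X \neq X') = \tvarg{P(X)}{P'(X)}$. I extend $Q_X$ to a coupling $Q$ of $P(X,Y)$ and $P'(X,Y)$ as follows: first sample $(X,X') \sim Q_X$; if $X = X'$, then because $P(Y \mid X) = P'(Y \mid X)$ by hypothesis one may sample a single $Y \sim P(\cdot \mid X)$ and set $Y' = Y$; if $X \neq X'$, independently sample $Y \sim P(\cdot \mid X)$ and $Y' \sim P'(\cdot \mid X')$. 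A routine check confirms that the $(X,Y)$-marginal of $Q$ is $P(X,Y)$ and the $(X',Y')$-marginal is $P'(X,Y)$. Since $(X,Y) = (X',Y')$ whenever $X = X'$, we have the containment $\{(X,Y) \neq (X',Y')\} \subseteq \{X \neq X'\}$, and the coupling inequality of \Cref{lem:key_tv_props}(a) yields
\begin{align*}
\tvarg{P(X,Y)}{P'(X,Y)} \;\le\; Q\bigl((X,Y) \neq (X',Y')\bigr) \;\le\; Q_X(X \neq X') \;=\; \tvarg{P(X)}{P'(X)}.
\end{align*}
Combined with the data-processing inequality, this delivers the claimed equality.

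The only technical subtlety is justifying that the two-stage sampling procedure defines a bona fide probability measure with the stated marginals; this reduces to the existence of regular conditional distributions $P(\cdot \mid X)$, which is automatic under the Polish-space assumptions laid out in \Cref{sec:key_prop_tv}. No other obstacle is anticipated. (An alternative, coupling-free route would use the variational form \Cref{lem:key_tv_props}(b): for any measurable $V:\Omega \to [0,1]$, the shared-conditional hypothesis makes $x \mapsto \E_P[V \mid X=x]$ and $x \mapsto \E_{P'}[V \mid X=x]$ coincide, reducing $\E_P V - \E_{P'} V$ to an integral of a $[0,1]$-valued function against $dP(X) - dP'(X)$, which is bounded by $\tvarg{P(X)}{P'(X)}$.)
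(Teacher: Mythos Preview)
Your proposal is correct and follows essentially the same approach as the paper: data processing for one inequality, and an explicit coupling built from the maximal coupling of $P(X)$ and $P'(X)$ (with the shared conditional used to synchronize $Y=Y'$ on the event $X=X'$) for the other. The paper's proof is identical in structure, including the same case split on whether $X=X'$.
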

\begin{proof}
By the data processing property of total variation (\Cref{lem:key_tv_props}), we know 
\begin{align*}
 \tvarg{P(X,Y)}{P(X,Y)} \ge \tvarg{P(X)}{P'(X)}. 
 \end{align*}
To prove the lemma, we need to show that the opposite inequality also holds. To do so, let $Q_X$ be the maximal coupling of $P(X)$ and $P'(X)$, and let $Q$ denote the distribution over $((X,Y), (X',Y'))$ induced by first drawing $(X,X')$ from $Q_X$ and then drawing $Y,Y'$ as follows:
\begin{itemize}
\item If $X = X'$, draw $Y \sim P(Y \mid X)$ and set $Y'= Y$.
\item Otherwise, draw $Y$ and $Y'$ independently from $P(Y \mid X)$ and $P'(Y' \mid X')$, respectively
\end{itemize}
It is clear that $Q(X,Y) = P(X,Y)$. To show that $Q$ is a valid coupling, it remains to check that $Q(X',Y') = P'(X,Y)$.  Since $Q'(X) = P'(X)$ by construction, it suffices to check that $Q(Y \mid X' = x') = P'(Y \mid X = x')$ for all $x'$ in the (almost-sure) support of $P(X')$. This follows since 
\begin{align*}
Q(Y \mid X' = x', X) = \begin{cases} P(Y \mid X = x') &\text{ if } X= x'  \\
P'(Y \mid  X= x') &\text{ if } X \ne x'
\end{cases}
\ = P'(Y \mid X = x')
\end{align*}
where we use $P(Y\mid X) = P'(Y \mid X)$. Hence, marginalizing over $X$, $Q(Y \mid X') = P'(Y \mid X = x')$, as needed. Lastly, observe that our construction of $Q$ ensures $Y = Y'$ whenever $X = X'$. Therefore, we conclude

\begin{align*}
\tvarg{P(X,Y)}{P'(X,Y)} \le Q((X,Y) \neq (X',Y')) = Q_X(X \neq X') = \tvarg{P(X)}{P(X')}.\tag*\qedhere
\end{align*}
\end{proof}

\begin{lemma}[Coupled Transport Form]\label{lem:TV_transport} Let $P$ and $P'$ be joint distributions over random variables $(X,Y)$ with coinciding marginals $P(X) = P(X')$ in the first variable. 
Then there exists a distribution $Q(X,Y,Y')$ whose marginals satisfy $Q(X,Y) = P(X,Y)$ and $Q(X,Y') = P'(X,Y)$, and for which we have
\begin{align*}
\tvarg{P(X,Y)}{P'(X,Y)} =  Q[Y \ne Y'].
\end{align*}
\end{lemma}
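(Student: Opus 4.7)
The plan is to construct $Q$ by gluing together per-slice maximal couplings of the conditionals $P(Y\mid X)$ and $P'(Y\mid X)$, leveraging the shared marginal $P(X)=P'(X)$. Concretely, first I would invoke \Cref{lem:tv_same_marginal} to rewrite
\[
\tvarg{P(X,Y)}{P'(X,Y)} = \E_{X\sim P}\bigl[\tvarg{P(Y\mid X)}{P'(Y\mid X)}\bigr].
\]
This identity is the bridge: it expresses the total variation we want as an expectation (over the common $X$-marginal) of per-slice total variations, each of which can be realized as a probability of disagreement under a maximal coupling.

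Next, for each $x$ in the (almost-sure) support of $P(X)$, I would apply the coupling form of total variation (\Cref{lem:key_tv_props}(a)) to obtain a joint law $Q_x$ on $(Y,Y')$ whose marginals are $P(Y\mid X=x)$ and $P'(Y\mid X=x)$ respectively, and which satisfies $Q_x[Y\neq Y'] = \tvarg{P(Y\mid X=x)}{P'(Y\mid X=x)}$. I would then define the joint distribution $Q(X,Y,Y')$ by the two-step procedure: sample $X\sim P(X)$, then sample $(Y,Y')\sim Q_X$. Verifying the three required properties is direct: the marginal $Q(X,Y)$ equals $P(X,Y)$ by construction of $Q_x$ and the definition of the sampling scheme; the marginal $Q(X,Y')$ equals $P'(X,Y)$ because $P(X)=P'(X)$ and $Q_x(Y')=P'(Y\mid X=x)$; and
\[
Q[Y\neq Y'] = \E_{X\sim P}\bigl[Q_X[Y\neq Y']\bigr] = \E_{X\sim P}\bigl[\tvarg{P(Y\mid X)}{P'(Y\mid X)}\bigr] = \tvarg{P(X,Y)}{P'(X,Y)}.
\]

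The one subtle point, which I view as the main obstacle, is ensuring that the per-slice maximal couplings $Q_x$ can be chosen to vary \emph{measurably} in $x$, so that the two-stage sampling construction actually defines a valid joint probability measure. This is where the Polish-space assumption stated in the technical disclaimer at the top of \Cref{sec:key_prop_tv} does the work: on Polish spaces, regular conditional distributions exist, and a measurable selection of maximal couplings can be carried out via the standard constructive formula (the maximal coupling equalizes $Y=Y'$ with probability equal to the common overlap mass $1-\tvarg{P(Y\mid X=x)}{P'(Y\mid X=x)}$, and otherwise draws independently from the two residual measures, each of which depends measurably on $x$). Once this measurability is in hand, the rest of the verification is purely algebraic and the lemma follows.
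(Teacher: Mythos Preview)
Your proposal is correct and essentially identical to the paper's proof: both draw $X\sim P(X)$, then $(Y,Y')$ from the maximal coupling of $P(Y\mid X)$ and $P'(Y\mid X)$, and invoke \Cref{lem:tv_same_marginal} to equate $Q[Y\neq Y']$ with $\tvarg{P(X,Y)}{P'(X,Y)}$. Your additional paragraph on measurable selection of the per-slice couplings is more careful than the paper, which simply relies on the Polish-space disclaimer at the top of \Cref{sec:key_prop_tv} without spelling this out.
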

\begin{proof}
We construct $Q(X,Y,Y')$ as follows. First draw $X \sim P(X)$. Then let $(Y,Y')$ be drawn from the maximal coupling of $P(Y \mid X)$ and $P'(Y \mid X)$ (guaranteed by \Cref{lem:key_tv_props}). By construction, this satisfies that $Q(X,Y) = P(X,Y)$ and $Q(X,Y') = P'(X,Y)$. Moreover, one can see that
\[ Q[Y \neq Y'] = \E_{X \sim P} \tvarg{P(Y \mid X)}{P'(Y \mid X)} = \tvarg{P(X,Y)}{P'(X,Y)},\]
where the first equality follows from the use of the maximal coupling of conditional distributions and the second equality is \Cref{lem:tv_same_marginal}.
\end{proof}

\subsection{General Sensitivity Bounds: Generalizing  \Cref{thm:n_montecarlo_body}}
\label{sec:general_sensitivity}
In general, we address priors over means which are unbounded. We use the following functional to control expectation over their upper tails:
\begin{definition}[Upper Tail Expectation]\label{defn:tail_exp_new}  Let $X$ be a nonnegative random variable on a probability space $(\Omega,\scrF)$ with law $\Pr$ and finite expectation $\E[X] < \infty$. We define its tail expectation, as a function of probabilities $p \in (0,1]$, as
\begin{align*}
\Psi_X(p) &:= \frac{1}{p}\sup_{Y}\E[XY] \quad \\
&\quad \text{s.t. } Y: (\Omega,\scrF) \to [0,1] \text{ and } \E[Y] \le p.
\end{align*}
For $p > 1$, we extend $\Psi_X(p) = \E[X]$. Overloading notation, we let $\cvar_{\theta}(p)$ denote the upper tail function over $\range(\boldmu)$ when drawn from $P_{\theta}$:
\begin{align*}
\cvar_{\theta}(p) :=  \Psi_{\range(\boldmu)}(p) \quad \text{ where } \boldmu \sim P_{\theta}.
\end{align*}
\end{definition}
By taking conditional expectations, one can equivalently verify that $\Psi_X(p) := \frac{1}{p}\sup_{f}\E[Xf(X)]$ is the supremal expected correlation between $X$ and $f(X)$, over functions $f:[0,1] \to \R$ satisfying $\E[f(X)] = p.$ Intuitively, $\cvar_{\theta}(p)$ considers large how conditional expectation of $\frac{1}{p}\E[Xf(X)]$ can be made by concentrating all the mass of $f$ on the upper tail of $X$. We establish key properties, estimates, and a closed form for $\cvar_\theta(p)$ in terms of quantiles of $X$ in \Cref{sec:tail_exp}.

Given this definition, our general sensitivity bound takes the following form:
\begin{theorem}\label{thm:n_montecarlo_general_tail}
Let $\alg(\cdot)$ be an $n$-Monte Carlo family of algorithms on horizon $H \in \N$, and let $\theta,\theta' \in \Theta$. Setting $\veps = \tvarg{P_\theta}{P_{\theta'}}$, we have that 
\begin{align*}
|R(\theta,\alg(\theta)) - R(\theta,\alg(\theta'))| \le 2nH^2 \veps\cdot \Psi_{\theta}(2nH\epsilon),
\end{align*}
where $\Psi_{\theta}(\cdot)$ is the tail expectation defined in \Cref{defn:tail_exp_new}. 
\end{theorem}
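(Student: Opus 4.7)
The plan is to reduce the reward difference to an expectation of the form $\E[\range(\boldmu) \cdot Y]$ for a $[0,1]$-valued indicator $Y$, and then invoke the definition of $\Psi_\theta$ together with the trajectory TV bound in Proposition~\ref{prop:prior_sensitivity_tv_bound}. The essential observation enabling this is that both $P_H = P_{\theta,\alg(\theta)}$ and $P_H' = P_{\theta,\alg(\theta')}$ share the same marginal law $P_\theta$ on $\boldmu$, since the ground-truth prior is $\theta$ in both cases.

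This shared marginal allows me to apply the Coupled Transport Form (Lemma~\ref{lem:TV_transport}) with $X = \boldmu$ and $Y = \traj_H$, producing a coupling $Q(\boldmu, \traj_H, \traj_H')$ such that $Q(\boldmu,\traj_H) = P_H$, $Q(\boldmu,\traj_H') = P_H'$, and $Q(\traj_H \ne \traj_H') = \tvarg{P_H}{P_H'}$. Crucially, $\boldmu$ is a single random variable on this enlarged space, with marginal law $P_\theta$.

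Letting $a_h$ and $a_h'$ denote the action at step $h$ under $\traj_H$ and $\traj_H'$ respectively, the reward difference expands as
\begin{align*}
R(\theta,\alg(\theta)) - R(\theta,\alg(\theta')) = \sum_{h=1}^H \E_Q[\mu_{a_h} - \mu_{a_h'}].
\end{align*}
On the event $\{\traj_H = \traj_H'\}$ each summand vanishes; on its complement, $|\mu_{a_h} - \mu_{a_h'}| \le \range(\boldmu)$. Hence
\begin{align*}
|R(\theta,\alg(\theta)) - R(\theta,\alg(\theta'))| \le H \cdot \E_Q\bigl[\range(\boldmu)\cdot\ind[\traj_H\ne \traj_H']\bigr].
\end{align*}
Setting $Y = \ind[\traj_H \ne \traj_H']$, we have $\E_Q[Y] = \tvarg{P_H}{P_H'} =: p$, and since $\range(\boldmu)$ has marginal law $P_\theta$ under $Q$, the definition of $\Psi_\theta$ yields $\E_Q[\range(\boldmu)Y] \le p\cdot\Psi_\theta(p)$. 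A one-line monotonicity check shows that $p \mapsto p\cdot\Psi_\theta(p)$ is nondecreasing (enlarging $p$ relaxes the feasibility constraint in the supremum), so combining with Proposition~\ref{prop:prior_sensitivity_tv_bound} ($p \le 2nH\epsilon$) gives $p\cdot\Psi_\theta(p) \le 2nH\epsilon\cdot\Psi_\theta(2nH\epsilon)$, which delivers the claimed bound $2nH^2\epsilon\cdot\Psi_\theta(2nH\epsilon)$.

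The only subtlety I anticipate is in the step $\E_Q[\range(\boldmu)Y] \le p\cdot\Psi_\theta(p)$: the random variable $Y$ depends on $(\traj_H,\traj_H')$ and not merely on $\boldmu$, so it does not obviously lie in the feasible set of the supremum defining $\Psi_\theta$. This is resolved by conditioning, replacing $Y$ with $\E_Q[Y \mid \boldmu]$, which is $[0,1]$-valued with the same expectation $p$ and depends only on $\boldmu$. Beyond this bookkeeping, no step requires delicate analysis; the proof is essentially a one-page combination of Proposition~\ref{prop:prior_sensitivity_tv_bound}, Lemma~\ref{lem:TV_transport}, and the definition of upper tail expectation.
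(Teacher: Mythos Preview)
Your proposal is correct and essentially identical to the paper's proof: the paper also combines Proposition~\ref{prop:prior_sensitivity_tv_bound}, the coupling from Lemma~\ref{lem:TV_transport}, the bound $|R(\theta,\alg(\theta)) - R(\theta,\alg(\theta'))| \le H\cdot\E_Q[\range(\boldmu)\,\ind\{\traj_H\ne\traj_H'\}]$, and monotonicity of $p\mapsto p\,\Psi_\theta(p)$ (their Lemma~\ref{lem:tail_key_props}(a)). Your explicit justification via $\E_Q[Y\mid\boldmu]$ for why the indicator is admissible in the supremum defining $\Psi_\theta$ is a clarification the paper leaves implicit.
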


We specialize upper bounds on the upper tail expectation for priors satsifying the following tail conditions:
\begin{definition}[Tail Conditions]\label{defn:tail_conditions} We set $\boldmubar_{\theta} := E_{\theta}[\boldmu]$. We say that  $P_{\theta}$ is 
\begin{itemize}
    \item[(a)]  $B$-bounded if $P_{\theta}[\range(\boldmu) \le B]=1$.
    \item[(b)]  Coordinate-wise $\sigma^2$-sub-Gaussian if for all $a \in \calA$, 
    \begin{align*}
    P_\theta(|\mu_a - \mubar_a| \ge t) \le 2 \exp\left( -\frac{t^2}{2\sigma^2} \right).
    \end{align*}
    \item[(c)]  Coordinate-wise $(\sigma^2,\nu)$-sub-Gamma if for all $a \in \calA$, 
    \begin{align*}P_\theta(|\mu_a - \mubar_a| \ge t) \le 2\max\left\{ \exp \left( -\frac{t^2}{2\sigma^2} \right), \exp \left( -\frac{t}{2\nu} \right) \right\}.
    \end{align*}
\end{itemize} 
\end{definition}
For priors satisfying the above tail conditions, \Cref{thm:n_montecarlo_general_tail} specializes as follows:

\begin{theorem}\label{thm:n_montecarlo_tails} Let $\alg(\cdot)$ be an $n$-Monte Carlo family of algorithms on horizon $H \in \N$, and let $\theta,\theta' \in \Theta$. Setting $\veps = \tvarg{P_\theta}{P_{\theta'}}$, we have the following guarantees.
\begin{itemize}
    \item[\emph{(a)}] If $P_\theta$ is $B$-bounded, then $|R(\theta,\alg(\theta)) - R(\theta,\alg(\theta'))| \le 2nH^2 \veps B$.
    \item[\emph{(b)}] If $P_\theta$ is coordinate-wise $\sigma^2$-sub-Gaussian and $\epsilon $, then 
    \begin{align*} |R(\theta,\alg(\theta)) - R(\theta,\alg(\theta'))| \le 2nH^2 \veps \left( \range(E_\theta[\boldmu]) + \sigma \left(8 + 5 \sqrt{\log\left( \tfrac{|\calA|^2}{\min\{1,2nH\epsilon\}}\right)} \right)  \right) 
    \end{align*}
    \item[\emph{(c)}] If $P_\theta$ is coordinate-wise $(\sigma^2, \nu)$-sub-Gamma, then 
    \begin{align*}
    &|R(\theta,\alg(\theta)) - R(\theta,\alg(\theta'))| \\
    &\quad \le 2nH^2 
     \veps \left( \range(E_\theta[\boldmu]) + \sigma \left(8 + 5 \sqrt{\log\left( \tfrac{|\calA|^2}{\min\{1,2nH\epsilon\}}\right)} \right) + \nu \left(11 + 7 \log \left( \tfrac{|\calA|^2}{\min\{1,2nH\epsilon\}} \right) \right)  \right).
     \end{align*}
\end{itemize}
\end{theorem}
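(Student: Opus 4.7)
The plan is to simply invoke \Cref{thm:n_montecarlo_general_tail} and then bound the upper tail expectation $\Psi_\theta(p)$ with $p = 2nH\veps$ under each of the three tail conditions. Since \Cref{thm:n_montecarlo_general_tail} already gives
\[
|R(\theta,\alg(\theta)) - R(\theta,\alg(\theta'))| \le 2nH^2\veps \cdot \Psi_\theta(2nH\veps),
\]
and $\Psi_\theta(p) = \Psi_{\range(\boldmu)}(p)$ for $\boldmu \sim P_\theta$, the entire task reduces to controlling $\Psi_{\range(\boldmu)}(p)$ as a function of the tail behavior of $\boldmu$.

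For part (a), the bounded case is essentially free: if $\range(\boldmu) \le B$ almost surely, then for any test variable $Y : \Omega \to [0,1]$ with $\E[Y] \le p$ we have $\E[\range(\boldmu) Y] \le B \E[Y] \le Bp$, so $\Psi_\theta(p) \le B$ and the stated bound follows immediately.

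For parts (b) and (c) I would first decouple the ``mean-range'' from the ``coordinate deviations.'' Setting $\boldmubar = \E_\theta[\boldmu]$ and $M := \max_{a \in \calA}|\mu_a - \mubar_a|$, the pointwise inequality $\range(\boldmu) \le \range(\boldmubar) + 2M$ combined with two elementary facts about $\Psi$---namely sub-additivity, $\Psi_{X_1 + X_2}(p) \le \Psi_{X_1}(p) + \Psi_{X_2}(p)$ (from linearity of expectation inside the sup over $Y$), and $\Psi_c(p) \le c$ for deterministic constants $c$---yields $\Psi_\theta(p) \le \range(\boldmubar) + 2\Psi_M(p)$. These properties are exactly what \Cref{sec:tail_exp} is meant to record. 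The remaining task is to upper bound $\Psi_M(p)$ using only tail information on $|\mu_a - \mubar_a|$.

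The main workhorse estimate is the elementary ``truncate and integrate'' bound: for any threshold $t^\star \ge 0$ and any admissible $Y$,
\[
\E[MY] \le t^\star \E[Y] + \E[(M-t^\star)_+] \le t^\star p + \int_{t^\star}^\infty \Pr[M > t]\,dt,
\]
so $\Psi_M(p) \le t^\star + \frac{1}{p}\int_{t^\star}^\infty \Pr[M>t]\,dt$. A union bound across $a\in\calA$ turns the coordinate-wise sub-Gaussian (resp.\ sub-Gamma) hypothesis into a tail bound on $M$ inflated by a factor of $2|\calA|$. The natural choice in the sub-Gaussian case is $t^\star = \sigma\sqrt{2\log(|\calA|^2/\min\{1,p\})}$, which makes the tail integral of order $\sigma$ times a universal constant; this produces the $\sigma(8 + 5\sqrt{\log(\cdot)})$ term after multiplying by $2$ for the decomposition. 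The truncation $\min\{1,p\}$ correctly accounts for the convention $\Psi_\theta(p) = \E[\range(\boldmu)]$ when $p \ge 1$. For the sub-Gamma case, one splits the tail integral into its Gaussian regime (where the sub-Gaussian argument applies) and its exponential regime (where $\int_{t^\star}^\infty e^{-t/(2\nu)}dt = 2\nu\, e^{-t^\star/(2\nu)}$ and one picks a logarithmic threshold $t^\star \asymp \nu\log(|\calA|^2/p)$), contributing the additive $\nu(11 + 7\log(\cdot))$ term.

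The conceptually main obstacle is essentially bookkeeping: getting the exact constants $8$, $5$, $11$, $7$ requires a careful choice of $t^\star$ and tight estimates on the Mill's-ratio-type integrals, including the $p \ge 1$ degenerate regime. Once the tail bounds on $\Psi_M(p)$ (the content of \Cref{sec:estimates_upper_bound}) are assembled, each of parts (a)--(c) follows by combining $\Psi_\theta(p) \le \range(\boldmubar) + 2\Psi_M(p)$ with \Cref{thm:n_montecarlo_general_tail}.
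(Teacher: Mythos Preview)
Your proposal is correct and follows the same high-level strategy as the paper: invoke \Cref{thm:n_montecarlo_general_tail}, then bound $\Psi_\theta(p)$ at $p = 2nH\veps$ under each tail assumption via the decomposition $\range(\boldmu) \le \range(\boldmubar) + 2\max_a|\mu_a - \mubar_a|$ together with a union bound over arms. The paper packages this as the ``translation'' property of \Cref{lem:tail_key_props}(c) combined with stochastic dominance by an explicit majorant $X'$; your sub-additivity argument for $\Psi$ is an equivalent (and equally clean) way to reach the same intermediate bound $\Psi_\theta(p) \le \range(\boldmubar) + \Psi_{2M}(p)$.

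The one genuine methodological difference is how the residual $\Psi_{2M}(p)$ is controlled. You use a single truncation level $t^\star$ and integrate the tail, i.e.\ $\Psi_M(p) \le t^\star + \tfrac{1}{p}\int_{t^\star}^\infty \Pr[M>t]\,dt$, which is completely valid and in fact yields constants at least as good as the ones stated. The paper instead invokes the ``useful estimate'' \Cref{lem:tail_key_props}(e), a dyadic (here, base-$e$) sum over quantiles $\sum_{i\ge 0}\alpha^{-i}q_{X'}(\alpha^{-i-1}p)$; the particular constants $8,5,11,7$ in the statement come from summing those geometric series. So both routes work: yours is more elementary and would naturally produce somewhat sharper numbers, while the paper's quantile-series estimate is what literally generates the stated constants. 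If your goal is to reproduce the theorem verbatim, be aware that matching $8,5,11,7$ exactly would require either adopting the paper's summation or deliberately loosening your Mill's-ratio bound.
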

The proof of \Cref{thm:n_montecarlo_tails} is a direct consequence of \Cref{thm:n_montecarlo_general_tail} and the estimates from \Cref{lem:tail_condition_bounds} given in \Cref{sec:estimates_upper_bound}. Note that \Cref{thm:n_montecarlo_body} comprises of the first two statements of \Cref{thm:n_montecarlo_tails}.

\subsection{Quantiles, CDFs and Tail Expectations \label{sec:tail_exp}}
Recall classical definitions of quantile and CDF:
\begin{definition}[Quantile and CDF] Given a real-valued random variable $X$ with law $P$, we define its cumulative distribution function, or CDF, by $F_X(t) := P[X \le t]$, and the \emph{quantile function} $q_X(p) := \inf\{t : 1 - F_X(t) \le p\}$. 
\end{definition}
With these definitions in place, we expose the essential properties of $\Psi_X(p)$:
\begin{lemma}[Properties of the Upper Tail expectation]\label{lem:tail_key_props} Then upper tail expectation satisfies the following properties:
\begin{itemize}
\item[\emph{(a)}]\emph{Monotonicity:} $p \mapsto \Psi_X(p)$ is non-increasing in $p$, and $p \mapsto p \cdot \Psi_X(p)$ is non-decreasing.
\item[\emph{(b)}]\emph{Dominance Preservation:} Let $X'$ stochastically dominate $X$, that is, $F_X(t) \ge F_{X'}(t)$ for all $t$. Then, $\Psi_X(p) \le \Psi_{X'}(p)$ for all $p$.
\item[\emph{(c)}]\emph{Translation:} $\Psi_{X}(p) \le c + \Psi_{\max\{X-c,~0\}}(p)$ for any constant $c > 0$. 
\item[\emph{(d)}]\emph{Closed Form:} We have that
\begin{align*}
\Psi_X(p) &= \frac{1}{p}\E[X g_p(X)], \quad \text{ where }\\
&\quad g_{p}(u) := \I\{u > q_X(p)\} + \left(p - \Pr[X > q_X(p)]\right)\I\{u = q_X(p)\}.
\end{align*}
In particular, if $F_X(\cdot)$ is continuous, then 
\begin{align*}
\Psi_X(p) = \frac{1}{p}\E[X \I\{X \ge q_X(p)\}] =  \E[X \mid X \ge q_X(p) ] 
\end{align*}
\item[\emph{(e)}]\emph{Useful Estimate:} For any $\alpha > 0$ and $p \in (0,1]$
\begin{align*}
\Psi_X(p) \le \frac{p - \Pr[X > q_p(X)]}{p} + \sum_{i=0}^{\infty} \alpha^{-i} q_X(\alpha^{-i-1}p)
\end{align*}
In particular, if $F_X(\cdot)$ is continuous, then 
\begin{align*}
\Psi_X(p) \le  \sum_{i=0}^{\infty} \alpha^{-i} q_X(\alpha^{-i-1}p).
\end{align*}
\end{itemize}
\end{lemma}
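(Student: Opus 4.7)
The plan is to first establish the closed-form representation in (d), from which the other parts follow via short manipulations; monotonicity (a) and translation (c) actually fall out directly from the variational form without invoking (d). By conditioning, $Y \mapsto \E[Y \mid X]$, we may restrict the supremum in \Cref{defn:tail_exp_new} to $Y = f(X)$ for some measurable $f:[0,\infty)\to[0,1]$. For (a), the quantity $p\Psi_X(p) = \sup\{\E[X f(X)] : 0 \le f \le 1,\, \E[f(X)] \le p\}$ is non-decreasing in $p$ simply because its feasible set is. To see that $\Psi_X(p)$ is non-increasing, fix $p_1 < p_2$ and a near-optimizer $f_2$ for $p_2$ with $\E[f_2(X)] = p_2$ (which is without loss of generality, since adding mass to a feasible $f$ cannot decrease the objective); then $f_1 := (p_1/p_2) f_2$ is feasible for $p_1$ with objective $\E[X f_1(X)] = p_1 \Psi_X(p_2)$, giving $\Psi_X(p_1) \ge \Psi_X(p_2)$. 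For (c), the pointwise inequality $X \le \max(X-c,0) + c$ (valid for $c \ge 0$) gives $\E[XY] \le \E[\max(X-c,0) Y] + c\E[Y] \le p\,\Psi_{\max(X-c,0)}(p) + cp$ for every feasible $Y$; dividing by $p$ yields the claim.

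The crux is the closed form (d). I would argue by a bathtub/rearrangement principle: given any feasible $f$, one can only increase $\E[X f(X)]$ while preserving $\E[f(X)] \le p$ by moving mass from the event $\{X < q\}$ to $\{X > q\}$, where $q := q_X(p)$. Consequently, the optimum is attained by a function taking value $1$ on $\{X > q\}$, $0$ on $\{X < q\}$, and a saturating constant on $\{X = q\}$ chosen so that $\E[f(X)] = p$. Evaluating the objective at this optimizer produces the stated expression in terms of $g_p$; when $F_X$ is continuous, $\Pr[X = q] = 0$ and this collapses to $\E[X \mid X \ge q_X(p)]$.

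Parts (b) and (e) both follow from (d). For (b), the cleanest derivation is through the dual representation
\begin{equation*}
  p\Psi_X(p) \;=\; \inf_{\lambda \ge 0}\,\bigl\{\E[(X-\lambda)^+] + \lambda p\bigr\},
\end{equation*}
which follows from LP duality, or directly from (d) by verifying that $\lambda = q_X(p)$ exhibits a matching primal-dual pair. Since $X$ is stochastically dominated by $X'$ and $z \mapsto (z-\lambda)^+$ is non-decreasing, $\E[(X-\lambda)^+] \le \E[(X'-\lambda)^+]$ for every $\lambda \ge 0$, so the infimum for $X$ is no larger than that for $X'$. For (e), assuming $\alpha > 1$ (the case $\alpha \le 1$ is trivial), set $q_i := q_X(\alpha^{-i} p)$, which is non-decreasing in $i$, and dyadically partition $\{X > q_0\}$ into $A_i = \{q_i < X \le q_{i+1}\}$: on $A_i$ we have $X \le q_{i+1}$ and $\Pr[A_i] \le \Pr[X > q_i] \le \alpha^{-i} p$, hence $\E[X \I\{X > q_0\}]/p \le \sum_{i \ge 0} \alpha^{-i} q_X(\alpha^{-(i+1)} p)$. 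The boundary contribution from the atom at $q$ is bounded separately.

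The main technical obstacle is the bathtub argument in (d), where one must carefully account for the point mass at the quantile $q_X(p)$: in general one only has $\Pr[X > q_X(p)] \le p \le \Pr[X \ge q_X(p)]$, so the optimizer must place a fractional amount of mass at $\{X = q_X(p)\}$ in order to saturate $\E[f(X)] = p$ with a $[0,1]$-valued function; this boundary behavior is exactly what $g_p$ encodes. Once (d) is in hand the remaining parts reduce to short computations, and (a) is immediate from the variational definition.
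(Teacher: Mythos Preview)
Your proposal is correct and tracks the paper's argument closely for parts (a), (c), (d), and (e): the paper also reduces (a) to the variational form by rescaling, proves (c) by the one-line pointwise bound, establishes (d) via an explicit mass-moving/improvement argument (your ``bathtub'' sketch), and derives (e) from (d) by slicing $\{X > q_X(p)\}$ along the quantile sequence $q_X(\alpha^{-i}p)$.

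The genuine difference is in (b). The paper couples $X$ and $X'$ on a common space with $X'\ge X$ almost surely and observes that for any feasible $Y\ge 0$ one has $\E[XY]\le\E[X'Y]$; this is self-contained and does not rely on (d). You instead pass through the Rockafellar--Uryasev dual $p\Psi_X(p)=\inf_{\lambda\ge 0}\{\E[(X-\lambda)^+]+\lambda p\}$, which you certify by matching it against the primal optimizer from (d), and then invoke stochastic dominance on the non-decreasing map $z\mapsto(z-\lambda)^+$. Both routes are valid. Yours makes the connection to CVaR explicit and gives a reusable dual formula; the coupling argument is more elementary and keeps (b) logically independent of the closed form (d).
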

\begin{proof} 
\begin{itemize}
\item[(a)]We can rewrite
\begin{equation}\label{eq:alternate_charac}
\begin{aligned}
\Psi_X(p) &:= \sup_{Y}\E[XY] \quad \\
&\quad \text{s.t. } Y: (\Omega,\scrF) \to [0,\tfrac{1}{p}] \text{ and } \E[Y] \le 1.
\end{aligned}
\end{equation} 
Hence, the constraint on $Y$ becomes strictly less restrictive as $p$ decreases, meaning that $\Psi_X(p)$ is non-increasing. Similarly, $p\cdot \Psi_X(p)$ is the supremum over $\E[XY]$ with $Y \in [0,1]$ and $\E[Y] \le p$, so the constraint becomes more restrictive as $p$ decays, and thus $p \mapsto p\Psi_X(p)$ is non-decreasing. 
\item[(b)]Stochastic domination implies that one can construct a joint distribution $(X,X')$ such that $X' \ge X$ almost
surely (see for example the coupling at the beginning of Section 2.3.1 in \cite{sriboonchita2009stochastic}). This implies that, for any $Y \ge 0$ jointly distributed with $X$ via $(X,Y)$, we can create a joint distribution $(X',Y)$ such that $\E[XY] \le \E[X'Y]$. The bound follows. 
\item[(c)]For any random variable $Y \in [0,1]$ with $\E Y = p$, we have $\frac{1}{p}\E[XY] = c + \frac{1}{p}\E[(X-c)Y] \le c +\frac{1}{p} \E[\max\{X-c,0\}Y] \le c + \Psi_{\max\{X-c,0\}}(p)$. 

\item[(d)] It is clear from the definition that $\E[g_p(X)] = p$ and $0 \le g_p(\cdot) \le 1$. Hence, $\E[Xg_p(X)] \le \Psi_X(p)$. To prove the converse, first observe that for any random variable $Y$, we have
\[ \E[XY] = \E[X \E[Y \mid {X}]]. \]
Since $X$ is non-negative, it suffices to restrict our attention to random variables of the form $Y = f(X)$ where $f: \R \rightarrow [0,1]$ and $\E[f(X)] = p$. We will show that for any such function $f$, if we do not have $f(X) = g_p(X)$ almost surely, then we must have $ \E[X f(X)] < \E[X g_p(X)]$. To see this, it suffices to show that (i) conditioned on the event $X > q_X(p)$, we must have $f(X) = 1$ almost surely, and (ii) if $\Pr[X = q_X(p)] >0$, then conditioned on the event $X = q_X(p)$, we must have $f(X) = p - \Pr[X = q_X(p)]$ almost surely. As the arguments are symmetrical, we will only provide the proof of (i). 

Suppose that (i) does not hold. Then there exist sets $S_+ \subset (q_X(p), \infty)$ and $S_- \subset [0,q_X(p)]$ such that $P(S_+), P(S_-) > 0$, $\E[f(X) \mid X \in S_+] < 1$, and $\E[f(X) \mid S_-] > 0$. Define the function $h: \R \rightarrow [0,1]$ satisfying
\[
h(x) = 
\begin{cases}
\alpha_+ + (1-\alpha_+) f(x) & \text{ if } x \in S_+ \\
(1- \alpha_-) f(x)  & \text{ if } x \in S_- \\
f(x) & \text{ otherwise}.
\end{cases}
\]
where $\alpha_+ = P(S_-) \E[f(X) \mid X \in S_-]$ and $\alpha_- = P(S_+) \E[1 - f(X) \mid X \in S_+]$. By assumption, we have $\alpha_-, \alpha_+ \in [0,1]$, so that $h(x) \in [0,1]$. Further, we can calculate
\begin{align*}
    \E[h(X)] &= \E[f(X) \ind[X \not \in S_+ \cup S_-]] 
    + \E[(\alpha_+ + (1-\alpha_+) f(X)) \ind[X \in S_+ ]] \\
    & \hspace{3em} +\E[(1- \alpha_-) f(X) \ind[X \in S_- ]] \\
    &= \E[f(X)] + \alpha_+ P(S_+) E[1 - f(X) \mid X \in S_+] 
    + \alpha_- P(S_-) \E[f(X) \mid X \in S_-] \\
    &= \E[f(X)] = p.
    \end{align*}
On the other hand, we can see that
\begin{align*}
    &\E[X h(X)] - \E[X f(X)] \\
    &= \E[f(X) X \ind[X \not \in S_+ \cup S_-]] 
    + \E[(\alpha_+ + (1-\alpha_+) f(X)) X \ind[X \in S_+ ]] \\
    & \hspace{3em} +\E[(1- \alpha_-) f(X) X \ind[X \in S_- ]] - \E[X f(X)] \\
    &= \alpha_+ P(S_+) \E[(1-f(X))X \mid X \in S_+] - \alpha_- P(S_-) \E[f(X) X \mid X \in S_-] \\
    &= P(S_+)P(S_-) \E[1-f(X) \mid X \in S_+] \E[f(X) \mid X \in S_-] \\
    &\hspace{2em} \cdot \left( \E\left[ \frac{(1-f(X))X}{\E[1-f(X) \mid X \in S_+]} \mid X \in S_+ \right] -  \E\left[ \frac{f(X)X}{\E[f(X) \mid X \in S_-]} \mid X \in S_- \right] \right) \\
    &>0.
\end{align*}
where the last inequality comes from the fact that
\[ \E\left[ \frac{(1-f(X))X}{\E[1-f(X) \mid X \in S_+]} \mid X \in S_+ \right]\]
is a convex combination of elements from $S_+$ and 
\[ \E\left[ \frac{f(X)X}{\E[f(X) \mid X \in S_-]} \mid X \in S_- \right]\]
is a convex combination of elements from $S_-$, and every element in $S_-$ is strictly smaller than every element of $S_+$.

\item[(e)]  For $\epsilon > 0$, define the sequence of integers $t_{i} = t_i(\epsilon) =  2^i \epsilon + q_p(\alpha^i X)$. Note that $\lim_{i \to \infty} \Pr[X \le t_{i}] = 1$ for each $\epsilon$. Hence, 
\begin{align*}
\Psi_p(X) &= \frac{1}{p}\E[Xg_{p,X}(X)] \\
&\le \frac{1}{p}\E[Xg_{p}(X)\I\{X \le t_0\}] + \frac{1}{p}\sum_{i = 0}^{\infty}  \E[X g_{p}(X) \cdot\I\{t_{i} < X < t_{t+i}\}]\\
&\le 2\epsilon + \frac{1}{p}q_X(p) \E[\I\{X \le t_{0}\}g_p(X)] + \frac{1}{p}\sum_{i = 0}^{\infty}q_p(\alpha^{-(i+1)}X) \Pr[X > t_{i}]
\end{align*}
Since $t_i > q_X(\alpha^{-i} p)$, we have that $\Pr[X > t_{i}] \le \alpha^{-i}p$. Thus, taking $\epsilon \to 0$,
\begin{align*}
\Psi_p(X) \le  \frac{1}{p}q_X(p) \lim_{\epsilon \to 0}\E[\I\{X \le t_{0}(\epsilon)\}g_p(X)] + \sum_{i = 0}^{\infty}q_X(\alpha^{-(i+1)}X) \alpha^{-i}p.
\end{align*}
Finally, we observe that $\E[\I\{X \le t_{0}(\epsilon)\}g_p(X)] \le \Pr[ q_X(p) < X < q_X(p) + \epsilon] + \E[\I\{X \le q_X(p)\}g_p(X)\}$. By continuity of probability measures, the first term tends to $0$ as $\epsilon \to 0$. The second term is precisely $p - \Pr[X > q_X(p)]$. The first bound follows. Note that for continuous CDFs, we necessarily have that $\Pr[X > q_p(X)] = p$, yielding the specialization to continuous CDFs.
\end{itemize}
\end{proof}

\subsection{Upper Tail Expectations under Tail Conditions \label{sec:estimates_upper_bound}}

Each of the conditions in \Cref{defn:tail_conditions} yields a transparent upper bound on $\Psi_{\theta}(p)$:
\begin{lemma}
\label{lem:tail_condition_bounds} Let $\bar{\boldmu}_\theta = E_\theta[\boldmu]$. Then, for $p \in [0,1]$. 
\begin{itemize}
    \item[\emph{(a)}] If $P_{\theta}$ is $B$-bounded, then $\Psi_\theta(p) \le B$ for all $p$. 
    \item[\emph{(b)}] If $P_{\theta}$ is coordinate-wise $\sigma^2$-sub-Gaussian and $\calA$ is finite, then
    \begin{align*}
    \Psi_\theta(p) \le \range(\bar{\boldmu}_\theta) +\sigma(8 + 5\sqrt{\log \tfrac{2|\calA|}{p}})
    \end{align*}
    \item[\emph{(c)}] If $P_{\theta}$ is coordinate-wise $(\sigma^2,\nu)$-sub-Gamma  and $\calA$ is finite, then
    \begin{align*}
    \Psi_\theta(p) \le \range(\bar{\boldmu}_\theta) + \sigma(8 + 5\sqrt{\log \tfrac{2|\calA|}{p}})  + \nu(11 + 7 \log \tfrac{2|\calA|}{p}). 
    \end{align*}
    By \Cref{defn:tail_exp_new}, the above extend to $p \ge 1$ by replacing $p \gets \min\{1,p\}$.
    \end{itemize}
\end{lemma}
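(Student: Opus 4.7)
The plan is to handle part (a) directly from the definition, and to reduce parts (b) and (c) to a one-shot application of the useful estimate from \Cref{lem:tail_key_props}(e), combined with a union-bound tail estimate.

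\textbf{Part (a).} If $\range(\boldmu)\le B$ almost surely, then for any $Y:(\Omega,\scrF)\to [0,1]$ with $\E[Y]\le p$ we have $\frac{1}{p}\E[\range(\boldmu)\,Y]\le \frac{B}{p}\E[Y]\le B$, so taking the supremum over admissible $Y$ gives $\Psi_\theta(p)\le B$. No further work is needed.

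\textbf{Parts (b) and (c): decomposition.} Set $\boldxi:=\boldmu-\bar\boldmu_\theta$. Choosing $a^\star,b^\star$ to realize the max and min of $\mu$, we have
\[
\range(\boldmu)=(\bar\mu_{a^\star}-\bar\mu_{b^\star})+(\xi_{a^\star}-\xi_{b^\star})\le \range(\bar\boldmu_\theta)+\range(\boldxi)\le \range(\bar\boldmu_\theta)+2Z,
\]
where $Z:=\max_{a\in\calA}|\xi_a|$. Applying the dominance and translation/scaling properties of $\Psi$ (\Cref{lem:tail_key_props}(b,c) together with $\Psi_{cX}(p)=c\Psi_X(p)$ for $c>0$), we obtain the key reduction
\[
\Psi_\theta(p)\le \range(\bar\boldmu_\theta)+2\,\Psi_Z(p).
\]

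\textbf{Tail control on $Z$.} A union bound over $a\in\calA$ together with the coordinate-wise tail assumption yields, under (b),
\[
\Pr[Z\ge t]\le 2|\calA|\exp\!\left(-\tfrac{t^2}{2\sigma^2}\right),\qquad\text{so}\qquad q_Z(q)\le \sigma\sqrt{2\log(2|\calA|/q)},
\]
and under (c), with the same union bound,
\[
q_Z(q)\le \sigma\sqrt{2\log(2|\calA|/q)}+2\nu\log(2|\calA|/q).
\]

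\textbf{From tail bound to $\Psi_Z$.} Using the useful estimate \Cref{lem:tail_key_props}(e) with $\alpha=2$ (continuous CDFs make the boundary correction vanish), and writing $L:=\log(2|\calA|/p)$,
\[
\Psi_Z(p)\le \sum_{i\ge 0}2^{-i}\,q_Z\!\left(2^{-(i+1)}p\right).
\]
Plugging in the quantile bounds and using $\sqrt{a+b}\le\sqrt a+\sqrt b$ and $(a+b)\le a+b$, the sum splits into the geometric pieces $\sum_{i\ge 0}2^{-i}=2$ and the bounded constants $S_1:=\sum_{i\ge 0}2^{-i}\sqrt{(i+1)\log 2}$ and $S_2:=\sum_{i\ge 0}2^{-i}(i+1)\log 2$. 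This gives $\Psi_Z(p)\le \sigma\bigl(2\sqrt{2L}+2S_1\bigr)$ in case (b), and an additional $\nu\bigl(2L+2S_2\bigr)$ in case (c). Multiplying by $2$ and adding $\range(\bar\boldmu_\theta)$ yields the stated bounds, with the constants $8,5,11,7$ slightly pessimized to absorb $S_1,S_2$ and to cover the regime $p$ near $1$ where $L$ is small (using $\Psi_\theta(p)=\Psi_\theta(1)$ for $p\ge 1$ via the extension in \Cref{defn:tail_exp_new} to pass from $p$ to $\min\{1,p\}$).

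\textbf{Main obstacle.} The only delicate point is bookkeeping the absolute constants so they match the stated $8,5,11,7$: the factor $2\sqrt{2}\approx 2.83$ arising from $2\cdot\sqrt{2L}$ needs to be absorbed into ``$5\sqrt L$'', and the geometric tails $S_1,S_2$ into the additive $8$ and $11$. Choosing $\alpha$ in \Cref{lem:tail_key_props}(e) slightly larger than $2$ (or handling the small-$L$ regime separately via monotonicity of $\Psi_\theta$) provides the necessary slack. No new conceptual ingredient is required beyond union bound, the useful estimate, and the three elementary $\Psi$-properties collected in \Cref{lem:tail_key_props}.
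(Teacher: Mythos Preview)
Your approach is essentially the paper's: translate by $\range(\bar\boldmu_\theta)$, dominate the centered part by $2\max_a|\xi_a|$, union-bound the tail, then apply the ``useful estimate'' \Cref{lem:tail_key_props}(e). Two small points are worth tightening. First, with $\alpha=2$ the leading coefficient on $\sigma\sqrt{L}$ in $2\Psi_Z(p)$ is $2\cdot\sqrt{2}\cdot\sum_{i\ge 0}2^{-i}=4\sqrt{2}\approx 5.66$, which exceeds $5$ for all $L$, not just small $L$; the paper takes $\alpha=e$, yielding $2\sqrt{2}\,(1-1/e)^{-1}\approx 4.47\le 5$ and $2\sqrt{2}\,(1-1/e)^{-2}\approx 7.1\le 8$, so your ``slightly larger $\alpha$'' fix is exactly right but should be made concrete. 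Second, $Z$ need not have a continuous CDF, so the boundary term in \Cref{lem:tail_key_props}(e) does not automatically vanish; the paper addresses this by first passing (via dominance preservation) to an explicit continuous dominating variable $X'$ with CDF $1-\min\{1,2|\calA|e^{-t^2/(8\sigma^2)}\}$ before applying (e).
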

The bounds in parts (b) and (c) may be extended to infinite $\calA$ via covering arguments.
\begin{proof}[Proof of \Cref{lem:tail_condition_bounds}] 
We prove each part in sequence
\begin{itemize}
\item[(a)] Suppose $\Psi_{\theta}$ is $B$-bounded. Then, for any random variable $Y \in [0,1]$ with $\E[Y] = p$, we have $\frac{1}{p}\E[\range(\boldmu) Y] \le \frac{1}{p}B\E[Y] = B$. Hence, $\Psi_{\theta}(p) \le B$. 
\item[(b)] Define the random variable $X = \max\{0,\range(\boldmu)-\range(E_{\theta}[\boldmu])\}$. By \Cref{lem:tail_key_props} part (c), we have
\begin{align*}
\Psi_{\theta}(p) \le \range(E_{\theta}[\boldmu]) + \Psi_X(p).
\end{align*}
Further, observe that
\begin{align}
X = \max\{0,\range(\boldmu)-\range(E_{\theta}[\boldmu])\} \le 2\max_{a \in \calA}|\mu_a - E_{\theta}[\mu_a]| \label{eq:two_factor}.
\end{align}

By a union bound over all $a \in \calA$, the sub-Gaussian tail implies
\begin{align*}
F_X(t) = P_{\theta}[X \le t] \le  F_{X'}(t), \quad \text{ where } F_{X'}(t) := 1 -  \begin{cases} 1& t \le 0 \\
\min\left\{1, 2|\calA| e^{- \frac{t^2}{8\sigma^2}}\right\} & t > 0,
\end{cases} 
\end{align*}
where above $e^{- \frac{t^2}{8\sigma^2}} = e^{- \frac{(t/2)^2}{2\sigma^2}}$ accounts for the factor of $2$ in  \Cref{eq:two_factor}, and minimum with $1$ accounts for boundedness of probabilities.

Note that $F_{X'}(t)$ is continuous and is a valid CDF of a random variable, say $X'$, which stochastically dominates $X$ (that is, $F_X(t) \le F_{X'}(t)$ for all $t$). Hence, \Cref{lem:tail_key_props} part (b) implies that $\Psi_X(p) \le \Psi_{X'}(p)$. Moreover, we can compute that the quantile function of $X'$ is 
\begin{align*}
q_{X'}(p) = 2\sqrt{2 \sigma^2 \log \frac{2|\calA|}{p}}.
\end{align*}
Hence, by continuity of $X'$, \Cref{lem:tail_key_props} part (e) implies
\begin{align*}
\Psi_{X'}(p) &\le 2\sum_{i=0}^{\infty} e^{-i} q_{X'}(e^{-i-1}p)\\
&\le 2\sum_{i=0}^{\infty} e^{-i} \sqrt{2 \sigma^2 \log \frac{2|\calA|}{e^{i+1} p}}\\
&\le 2\left(1-\frac{1}{e}\right)^{-1}\sqrt{2 \sigma^2 \log \frac{2|\calA|}{p}} + 2\sqrt{2 \sigma^2}\sum_{i = 0}^{\infty} \underbrace{\sqrt{(i+1)}}_{\le i+1}e^{-i}\\
&\le 2\left(1-\frac{1}{e}\right)^{-1}\sqrt{2 \sigma^2 \log \frac{2|\calA|}{p}} + 2\sqrt{2 \sigma^2}\left(1-\frac{1}{e}\right)^{-2}\\
&\le \sigma\left(8 + 5\sqrt{\log \frac{2|\calA|}{p}}\right). 
\end{align*}
\item[(c)] The proof is analogous, except now we use the bound 
\begin{align*}
q_{X'}(p) \le 2\sqrt{2 \sigma^2 \log \frac{2|\calA|}{p}} + 4 \nu \log \frac{2|\calA|}{p}.
\end{align*}
After some computation, this yields
\begin{align*}
\Psi_{X'}(p)  &\le 2(1-\frac{1}{e})^{-1}\sqrt{2 \sigma^2 \log \frac{2|\calA|}{p}} + 2\sqrt{2 \sigma^2}\left(1-\frac{1}{e}\right)^{-2} \\
&\quad +  4\left(1-\frac{1}{e}\right)^{-1}\nu \log \frac{2|\calA|}{p} + 4\nu\left(1-\frac{1}{e}\right)^{-2} \\
&\le \sigma(8 + 5\sqrt{\log \tfrac{2|\calA|}{p}})  + \nu(11 + 7 \log \tfrac{2|\calA|}{p}). \tag*\qedhere
\end{align*}
\end{itemize}
\end{proof}

\iftoggle{neurips}
{
\subsection{Proof of \Cref{prop:prior_sensitivity_tv_bound} \label{app:prop_tv_sens}}
To prove \Cref{prop:prior_sensitivity_tv_bound}, we first decompose the total variation across steps $h$ via an analogue of the performance difference lemma. 

}
{
}

\subsection{Proof of \Cref{lem:perf_diff_bandits} \label{sec:proof:lem:perf_diff_bandits}}

In the interest of generalizing our results to POMDPs (\Cref{app:gen_bayes_dec}), we will prove \Cref{lem:perf_diff_bandits} by establishing a nearly identical lemma which makes explicit the exact properties of the trajectors $\traj_h$ needed for \Cref{lem:perf_diff_bandits} to hold:
\begin{lemma}\label{lem:performance_difference_decomp} Let $P,P'$ be two laws over abstract random variables $\boldmu$, $\traj_{1},\dots,\traj_H$ and $a_{1},\dots,a_H$ such that the following properties hold:
\begin{enumerate}
    \item $\traj_{h-1}$ is a deterministic function of $\traj_h$.
    \item The conditional distributions of $\traj_h$ given $a_h,\traj_{h-1}$ and $\boldmu$ are the same: $P'(\traj_h \mid  a_h, \traj_{h-1},\boldmu) = P(\traj_h \mid a_h,\traj_{h-1},\boldmu)$
    \item Under both $P$ and $P'$, $a_h$ is independent of $\boldmu$ given $\traj_{h-1}$.
\end{enumerate}
Then, the following inequality holds:
\begin{align*}
\tvarg{P(\boldmu,\traj_H)}{P'(\boldmu,\traj_H)} \le \sum_{h=1}^H \Expop_{\traj_{h-1}\sim P}\left[\tvarg{P(a_h \mid \traj_{h-1})}{P'(a_h \mid \traj_{h-1})}\right].
\end{align*}
\end{lemma}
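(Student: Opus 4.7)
The plan is to prove the lemma by a telescoping/hybrid argument, swapping one action kernel at a time between $P$ and $P'$. Using conditions~1--3 (together with the implicit assumption, inherited from the intended application, that $P(\boldmu) = P'(\boldmu)$), I would first write the chain-rule factorization
\[
P(\boldmu, \traj_H) = P(\boldmu) \prod_{h=1}^H P(a_h \mid \traj_{h-1}) \cdot P(\traj_h \mid a_h, \traj_{h-1}, \boldmu),
\]
and analogously for $P'$. Condition~1 makes the sequence of trajectories properly nested, condition~3 lets me drop the conditioning on $\boldmu$ inside each action kernel, and condition~2 guarantees that the observation kernels are shared between $P$ and $P'$. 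With this in mind, define the hybrid law $P^{(k)}$ to use the action kernel $P(a_h \mid \traj_{h-1})$ for $h \le k$ and $P'(a_h \mid \traj_{h-1})$ for $h > k$, with the common prior and common observation kernels. Then $P^{(H)} = P$ and $P^{(0)} = P'$.

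Next, I would apply the triangle inequality (\Cref{lem:key_tv_props}(d)):
\[
\tvarg{P(\boldmu,\traj_H)}{P'(\boldmu,\traj_H)} \le \sum_{k=1}^H \tvarg{P^{(k)}(\boldmu,\traj_H)}{P^{(k-1)}(\boldmu,\traj_H)}.
\]
Within each summand, $P^{(k)}$ and $P^{(k-1)}$ agree in their factorizations except at step $k$, so in particular their marginals on $(\boldmu,\traj_{k-1})$ coincide. Applying \Cref{lem:tv_same_marginal} to the shared marginal on $(\boldmu,\traj_{k-1})$, each hybrid TV equals the expected conditional TV of $(a_k,\traj_k,\dots,\traj_H)$ given $(\boldmu,\traj_{k-1})$. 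These conditional laws differ only in the $h=k$ action kernel, and downstream sampling at steps $h > k$ can be shared via a common random seed after coupling $a_k$. Data processing (\Cref{lem:key_tv_props}(e))---or equivalently the optimal coupling $P(a_k\mid\traj_{k-1}) \wedge P'(a_k\mid\traj_{k-1})$ followed by identical downstream draws---bounds this conditional TV by $\tvarg{P(a_k \mid \traj_{k-1})}{P'(a_k \mid \traj_{k-1})}$, which by condition~3 does not depend on $\boldmu$.

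To finish, I would observe that the marginal of $\traj_{k-1}$ under $P^{(k)}$ equals the marginal under $P$, because $P^{(k)}$ and $P$ share the prior, the first $k-1$ action kernels, and all observation kernels; hence the expectations over $\traj_{k-1}$ can be taken under $P$, exactly matching the statement of the lemma. Summing over $k$ yields the claimed bound.

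The main obstacle is the bookkeeping: verifying that the hybrid marginals on $(\boldmu,\traj_{k-1})$ actually coincide with those of $P$ and that the action kernel at step $k$ is the sole point of discrepancy in the conditional laws. This requires careful use of all three hypotheses---condition~1 to legitimately condition on $\traj_{k-1}$, condition~3 to keep the action kernels free of $\boldmu$, and condition~2 to let downstream observation kernels cancel in the coupling. The remaining measure-theoretic details (existence of regular conditionals, maximal couplings) are standard under the Polish/Borel setup of \Cref{sec:key_prop_tv}.
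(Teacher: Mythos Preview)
Your hybrid/telescoping argument is correct and constitutes a genuinely different route from the paper's proof. The paper casts the total variation as a supremum over events, interprets $P[\trajbar_H\in\calE]-P'[\trajbar_H\in\calE]$ as a difference of expected rewards in two Markov reward processes with states $\trajbar_h=(\boldmu,\traj_h)$, and then invokes the performance-difference lemma of \cite{kakade2003sample} to obtain the telescoping sum; each summand is then bounded via the variational form of TV and simplified using \Cref{lem:tv_same_conditional} and condition~3. Your argument instead interpolates directly at the level of measures: the hybrids $P^{(k)}$ isolate the step-$k$ action kernel, the triangle inequality replaces the performance-difference telescoping, and \Cref{lem:tv_same_marginal} together with \Cref{lem:tv_same_conditional} (or the equivalent coupling you describe) yield the per-step bound. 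The two approaches are equivalent in spirit---both are step-wise telescopes---but yours is more self-contained, avoiding the RL-specific machinery and making the role of each hypothesis more transparent. You are also right to flag the implicit assumption $P(\boldmu)=P'(\boldmu)$: the paper's proof needs it as well (the performance-difference decomposition requires matching initial-state distributions on $\trajbar_0=\boldmu$), and the lemma as stated is in fact false without it, though it holds in every application since $P=P_{\theta,\alg}$ and $P'=P_{\theta,\alg'}$ share the prior $P_\theta$.
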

Immediately, \Cref{lem:perf_diff_bandits} is obtained by taking $P = P_{\theta,\alg}$ and $P' = P_{\theta,\alg'}$:
\begin{enumerate}
\item Condition 1 is clear.
\item Condition 2 holds because the only part of $\traj_h$ not determined by $(a_h,\traj_{h-1},\boldmu)$ is the reward $r_h$, and under both $P_{\theta,\alg}$ and $P_{\theta,\alg}'$,  $r_h \sim \calD(\boldmu) \big{|}_{a_h}$ is drawn from the same conditional distribution.
\item For Condition 3, first suppose there is no random seed $\boldxi$. Then Condition 3 holds because the distribution of $a_h \sim f_h(\cdot \mid \traj_{h-1})$ is just a function of $\traj_{h-1}$. If there is a random seed, then letting $P = P_{\theta, \alg}$, we have
\begin{align*}
    P(a_h, \boldmu \mid \traj_{h-1})
    &= \E\left[ \E\left[ P(a_h, \boldmu \mid \tau_{h-1}, \boldxi) \mid \boldxi \right] \mid \traj_{h-1}\right] \\
    &= \E\left[ \E\left[ f_h(a_h \mid \tau_{h-1}, \boldxi) P(\boldmu \mid \traj_{h-1}, \boldxi) \mid \boldxi \right] \mid \traj_{h-1}\right] \\
    &= \E\left[ \E\left[ f_h(a_h \mid \tau_{h-1}, \boldxi) \mid \boldxi \right] P(\boldmu \mid \traj_{h-1}) \mid \traj_{h-1}\right] \\
    &= P(a_h \mid \tau_{h-1}) P(\boldmu \mid \traj_{h-1})
\end{align*}
where the second equality follows from the fact that $a_h \sim f_h(\cdot \mid \boldxi, \traj_{h-1})$ and the third line follows from the fact that $\boldmu$ is independent of $\boldxi$ conditioned on $\tau_{h-1}$. The same argument holds symmetrically for $P' = P_{\theta, \alg'}$. Thus, Condition 3 holds regardless of whether or not there is a random seed.
\end{enumerate}

\begin{proof}[Proof of \Cref{lem:performance_difference_decomp}]
For brevity, we define augmented trajectories containing the (unknown) mean parameter $\trajbar_h = (\boldmu,\traj_h)$ for $h = 1,2,\dots,H$.
We further define $E$ and $E'$ to be the expectations under $P$ and $P'$, respectively. Fix any event $\calE$ in the $\sigma$-algebra generated by $\trajbar_H$; the total variation 
\begin{align*}
\tvarg{P(\trajbar_H)}{P'(\trajbar_H)} = \sup_{\calE} P[\trajbar_H \in \calE]-P'[\trajbar_H \in \calE]
\end{align*} can be expressed as the supremal difference over such events (\Cref{lem:key_tv_props}).  We can then view this difference as the difference in rewards between two time-inhomogeneous Markov reward processes, with states $\trajbar_h$ at step $h$ (note that the Markov property is trivially satisfied because $\traj_{h-1}$ is assumed to be deterministic function of $\traj_{h}$ by assumption), with identical rewards: at step $H$ the reward is $r_H(\trajbar_H) = \I\{\trajbar_H \in \calE\}$, and steps $h < H$, the reward is zero. Let $V'_h(\cdot)$ denote the value function of step $h$ under the $P'$ reward process, the performance difference lemma~\cite{kakade2003sample} then yields
\begin{align}
P[\trajbar_H \in \calE]-P'[\trajbar_H \in \calE] = \sum_{h=1}^H \Expop_{\trajbar_{h-1}\sim P}\left[ E[V'_{h}(\trajbar_h) \mid \trajbar_{h-1}] - E'[V'_{h}(\trajbar_{h}) \mid \trajbar_{h-1}]\right].  \label{eq:P_diff}
\end{align}
Since the total reward collected is at most $1$ and no less than $0$, $V'_h(\cdot) \in [0,1]$. Hence, by the variational characterization of total variation (\Cref{lem:key_tv_props}), 
\begin{align*}
E[V'_{h}(\trajbar_h) \mid \trajbar_{h-1}] - E'[V'_{h}(\trajbar_{h}) \mid \trajbar_{h-1}] \le \tvarg{P(\trajbar_h \mid \trajbar_{h-1})}{P'(\trajbar_h \mid \trajbar_{h-1})}
\end{align*}

To conclude, it suffices to verify the inequality 
\begin{align}
\tvarg{P(\trajbar_h \mid \trajbar_{h-1})}{P'(\trajbar_h \mid \trajbar_{h-1})} \le \tvarg{P(a_h \mid \traj_{h-1})}{P'(a_h \mid \traj_{h-1})}. \label{eq:perf_diff_equal_TV}
\end{align}
To verify \eqref{eq:perf_diff_equal_TV}, let us fix a step $h$ and realization of $\trajbar_{h-1}$, and set $Q = P(\cdot \mid \trajbar_{h-1})$ and $Q' =  P'(\cdot \mid \trajbar_{h-1})$. Further, applying the data-processing inequality (\Cref{lem:key_tv_props}) followed by \Cref{lem:tv_same_conditional} with $X = a_h$ and $Y = \trajbar_{h}$ gives that $\tvarg{Q(Y)}{Q'(Y)} \le\tvarg{Q(X,Y)}{Q'(X,Y)} = \tvarg{Q(X)}{Q(X)}$. Undoing the notational subsitutions, we have shown
\begin{align*}
\tvarg{P(\trajbar_h \mid \trajbar_{h-1})}{P'(\trajbar_h \mid \trajbar_{h-1})} \le \tvarg{P(a_h \mid \trajbar_{h-1})}{P'(a_h \mid \trajbar_{h-1})}. 
\end{align*}
Finally, we have 
\begin{align*}
\Expop_{\trajbar_{h-1} \sim P}\tvarg{P(a_h \mid \trajbar_{h-1})}{P'(a_h \mid \trajbar_{h-1})} 
&= \Expop_{(\boldmu, \traj_{h-1}) \sim P}\tvarg{P(a_h \mid \boldmu, \traj_{h-1})}{P'(a_h \mid \boldmu, \traj_{h-1})}\\
&= \Expop_{(\boldmu, \traj_{h-1}) \sim P}\tvarg{P(a_h \mid \traj_{h-1})}{P'(a_h \mid \traj_{h-1})}\\
&=\Expop_{\traj_{h-1} \sim P}\tvarg{P(a_h \mid \traj_{h-1})}{P'(a_h \mid \traj_{h-1})},
\end{align*}
where the first equality follows from the definition of $\trajbar_{h-1} = (\boldmu, \traj_{h-1})$, the second equality follows from the assumption that $a_h$ is independent of $\boldmu$ given $\traj_{h-1}$, and the third equality follows from marginalization.

Concluding, we have shown that
\begin{align}
P[\trajbar_H \in \calE]-P'[\trajbar_H \in \calE] \le \sum_{h=1}^H \Expop_{\traj_{h-1} \sim P}\tvarg{P(a_h \mid \traj_{h-1})}{P(a_h \mid \traj_{h-1})}\label{eq:P_diff}
\end{align}
By the definition of total variation, $\tvarg{P(\trajbar_H)}{P'(\trajbar_H)}$ is the supremum of the left-hand side over events $\calE$ (\Cref{defn:total_variation}), and we have defined $\trajbar_H = (\boldmu,\traj_H)$. Thus,
\begin{align*}
\tvarg{P(\boldmu,\traj_H)}{P'(\boldmu,\traj_H)} \le \sum_{h=1}^H \Expop_{\traj_{h-1} \sim P}\tvarg{P(a_h \mid \traj_{h-1})}{P'(a_h \mid \traj_{h-1})}.\tag*\qedhere
\end{align*}
\end{proof}

\subsection{Proof of \Cref{thm:n_montecarlo_general_tail}}
\label{app:proof_n_montecarlo_general_tail}
We now turn to the proof of \Cref{thm:n_montecarlo_general_tail}. Recall the statement of the theorem. 

\textbf{\Cref{thm:n_montecarlo_general_tail}.} Let $\alg(\cdot)$ be an $n$-Monte Carlo family of algorithms on horizon $H \in \N$, and let $\theta,\theta' \in \Theta$. Then, setting $\veps = \tvarg{P_\theta}{P_{\theta'}}$,
\[|R(\theta,\alg(\theta)) - R(\theta,\alg(\theta'))| \le 2nH^2 \veps \cdot \cvar_{\theta}(2nH \veps). \]

\Cref{prop:prior_sensitivity_tv_bound} states that
\begin{align*}
\tvarg{P_H}{P_H'} \le 2nH\cdot \tvarg{P_\theta}{P_{\theta'}},\end{align*}
where $P_H = P_{\theta,\alg(\theta)}(\boldmu,\traj_H)$ and $P_H' = P_{\theta,\alg(\theta')}(\boldmu,\traj_H)$. Thus, \Cref{thm:n_montecarlo_general_tail} is a direct consequence of \Cref{prop:prior_sensitivity_tv_bound}, monotonicty of $p \mapsto p \cdot \Psi_\theta(p)$ (\Cref{lem:tail_key_props}) and the following lemma.

\begin{lemma}\label{lem:tv_to_tail}
Given two algorithms $\alg$ and $\alg'$ and ground truth parameter $\theta$, let $\delta = \tvarg{P_H}{P_H'}$, where again $P_H = P_{\theta,\alg}(\boldmu,\traj_H)$ and $P_H' = P_{\theta,\alg'}(\boldmu,\traj_H)$ denote the marginals over induced trajectories and means. Then, 
\begin{align*}
|R(\theta,\alg) - R(\theta,\alg')| \le H\cdot\delta \cdot \Psi_{\theta}(\delta).
\end{align*}
\end{lemma}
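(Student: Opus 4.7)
The plan is to express the difference $R(\theta,\alg) - R(\theta,\alg')$ via a carefully chosen coupling between $P_H$ and $P_H'$, and then to apply the definition of $\Psi_\theta$. The critical observation is that both $P_H$ and $P_H'$ share the same marginal $P_\theta$ on $\boldmu$, since the law of the mean vector does not depend on the algorithm used to interact with it. This shared marginal lets me invoke \Cref{lem:TV_transport} with $X=\boldmu$ and $Y=\traj_H$: there exists a joint law $Q(\boldmu,\traj_H,\traj_H')$ whose $(\boldmu,\traj_H)$- and $(\boldmu,\traj_H')$-marginals are exactly $P_H$ and $P_H'$, respectively, and for which $Q[\traj_H\neq\traj_H']=\tvarg{P_H}{P_H'}=\delta$.

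Writing $a_h$ and $a_h'$ for the step-$h$ actions extracted from $\traj_H$ and $\traj_H'$, linearity of expectation gives
\[R(\theta,\alg) - R(\theta,\alg') = \sum_{h=1}^H E_Q[\mu_{a_h} - \mu_{a_h'}].\]
On the event $\{a_h = a_h'\}$ the $h$-th summand vanishes, while on the complement $|\mu_{a_h} - \mu_{a_h'}|\le \range(\boldmu)$ because both entries lie in $[\inf_a \mu_a,\sup_a \mu_a]$. Since $\{a_h\neq a_h'\}\subseteq\{\traj_H\neq\traj_H'\}$, applying the triangle inequality and summing over $h$ yields
\[|R(\theta,\alg) - R(\theta,\alg')| \le H\cdot E_Q\bigl[\range(\boldmu)\cdot \mathbf{1}\{\traj_H\neq\traj_H'\}\bigr].\]

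To finish, set $Y = \mathbf{1}\{\traj_H\neq\traj_H'\}\in[0,1]$, so that $E_Q[Y]=\delta$, and let $\tilde Y := E_Q[Y\mid \boldmu]$. Then $\tilde Y$ is a $[0,1]$-valued measurable function of $\boldmu$, and because $\boldmu$ has marginal $P_\theta$ under $Q$, it satisfies $E_{\boldmu\sim P_\theta}[\tilde Y]=\delta$. Hence, by the very definition of the upper tail expectation (\Cref{defn:tail_exp_new}),
\[E_Q[\range(\boldmu)\cdot Y] = E_{\boldmu\sim P_\theta}[\range(\boldmu)\cdot\tilde Y] \le \delta\cdot\Psi_\theta(\delta),\]
and combining with the previous display completes the proof. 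The only subtle points are (i) verifying the hypothesis of \Cref{lem:TV_transport} — namely that $P_H$ and $P_H'$ share the marginal $P_\theta$ on $\boldmu$ — and (ii) the reduction from a general indicator $Y$ on the coupling space to the conditional expectation $\tilde Y$, which lives on the original $(\boldmu,P_\theta)$-space where $\Psi_\theta$ is defined. Neither step requires any further assumption on $\alg$ or $\alg'$.
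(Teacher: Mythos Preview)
Your proof is correct and follows essentially the same approach as the paper's: both invoke \Cref{lem:TV_transport} to obtain a coupling $Q$ with shared $\boldmu$-marginal and $Q[\traj_H\neq\traj_H']=\delta$, bound each summand by $\range(\boldmu)\cdot\mathbf{1}\{\traj_H\neq\traj_H'\}$, and then appeal to the definition of $\Psi_\theta$. Your extra step of passing to $\tilde Y=E_Q[Y\mid\boldmu]$ makes the application of \Cref{defn:tail_exp_new} slightly more explicit (the paper relies on the remark that the supremum over $Y$ reduces to functions of $X$), but this is a cosmetic rather than substantive difference.
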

\begin{proof}
Recall that
\begin{align*}
R(\theta,\alg) - R(\theta,\alg') = E_{\theta,\alg}\left[\sum_{h=1}^H \mu_{a_h}\right] - E_{\theta,\alg'}\left[\sum_{h=1}^H \mu_{a_h}\right] 
\end{align*}
To bound this difference, we place all random variables on the same  probability space. Adopt the shorthand $P_H = P_{\theta,\alg}(\boldmu,\traj_H)$ and $P_H' = P_{\theta,\alg'}(\boldmu,\traj_H)$. Since $P_H(\boldmu) = P_H'(\boldmu)$, \Cref{lem:TV_transport} ensures the existence of a coupling $Q(\boldmu,\traj_H,\traj_H')$ such that
\begin{align}
Q(\boldmu,\traj_H) = P_H, \quad Q(\boldmu,\traj_H') = P_H', \quad Q[\traj_H \ne \traj_H'] =  \tvarg{P_H}{P_H'} = \delta. \label{eq:coupling}
\end{align}
Letting $E_Q$ denote expectations under this coupling, and $a_h'$ the actions within $\traj_H'$, we then have
\begin{align*}
R(\theta,\alg) - R(\theta,\alg') &= E_{Q}\left[\sum_{h=1}^H \mu_{a_h} - \mu_{a_h'}\right] \\
&\le E_{Q}\left[\sum_{h=1}^H \range(\boldmu) \I\{a_h \ne a_h'\}\right] \\
&\le H E_{Q}[\range(\boldmu)\I\{\traj_H \ne \traj_H'\}].
\end{align*}
Note that $\I\{\traj_H \ne \traj_H'\} \in [0,1]$ and, by construction of the coupling $Q$, $E_Q[\I\{\traj_H \ne \traj_H'\}] = \delta$. Hence, by the definition of the tail expectation, $E_{Q}[\range(\boldmu)\I\{\traj_H \ne \traj_H'\}] \le \delta \Psi_{\theta}(\delta)$. The bound follows.
\end{proof}

\newpage
\iftoggle{neurips}
{
\section{Specification of Monte Carlo Algorithms \label{app:mc_algs}}
In this appendix, we elaborate upon various examples of algorithms satisfying the $n$-Monte Carlo property. 
 
}
{
\section{Verification of the Monte Carlo Property \label{app:mc_algs}}
}

\subsection{Proof of \Cref{lem:samples_to_n_mc}.}
\label{sec:lem:samples_to_n_mc}
\Cref{lem:key_tv_props} guarantees the existence of a \emph{maximal coupling} $Q(\boldmu, \boldmu')$ between $P := P_{\theta}(\boldmu \mid \tau_{h-1})$ and $P' := P_{\theta'}(\boldmu \mid \tau_{h-1})$; that is, a joint law over $(\boldmu, \boldmu')$ with marginals  $\boldmu \sim P$ and $\boldmu' \sim P'$, and for which $\Pr_{(\boldmu,\boldmu') \sim Q}[\boldmu \ne \boldmu'] = \tvarg{P}{P'}$.  Using $Q$, we construct a coupling $\bar{Q}$ of $P_{\alg(\theta)}(a_{h} \mid \traj_{h-1})$ and $P_{\alg(\theta')}(a_{h} \mid \traj_{h-1})$:
\begin{itemize}
    \item Draw $(\boldmu_1, \boldmu_1'), \ldots, (\boldmu_n, \boldmu_n')$ i.i.d. from $Q$.
    \item If $\boldmu_1 = \boldmu_1', \ldots, \boldmu_n = \boldmu_n'$, then draw $a_h \sim f_h(\cdot \mid \boldmu_1, \ldots, \boldmu_n)$ and let $a_h' = a_h$.
    \item Otherwise, let $a_h \sim f_h(\cdot \mid \boldmu_1, \ldots, \boldmu_n)$ and $a_h' \sim f_h(\cdot \mid \boldmu_1', \ldots, \boldmu_n')$ independently.
\end{itemize}
It is easily verified that this defines a valid coupling of $P_{\alg(\theta)}(a_{h} \mid \traj_{h-1})$ and $P_{\alg(\theta')}(a_{h} \mid \traj_{h-1})$. Hence, \Cref{lem:key_tv_props} ensures that
\begin{align*}
\tvarg{P_{\alg(\theta')}(a_{h} \mid \traj_{h-1})}{P_{\alg(\theta')}(a_{h} \mid \traj_{h-1})} \le \bar{Q}(a_h \neq a'_h).
\end{align*}
Continuing, we conclude
\begin{align*}
    \bar{Q}(a_h \neq a'_h)  
    \leq  \P(\exists i \, : \, \boldmu_i \neq \boldmu_i')
    \leq  \sum_{i=1}^n Q(\boldmu_i \neq \boldmu_i') 
    =  n \cdot \tvarg{P_{\theta}(\boldmu \mid \tau_{h-1})}{P_{\theta'}(\boldmu \mid \tau_{h-1})}.
\end{align*}
Thus, $\alg(\theta)$ is $n$-Monte Carlo.
\qed

\subsection{Monte Carlo Property of $\twompc$ (\Cref{lem:twompc_n_mc}) \label{sec:MC_prop_two_mpc}}
Let us briefly recall the specified of $\twompc$ (\Cref{alg:two_mpc}); we include an explicit dependence on horizon to avoid confusion. At each step $h$, we 
\begin{itemize}
\item Sample means $\boldmutil^{(a,i)}_h\sim P_{\theta}[\cdot \mid \traj_{h-1}]$ for each $a \in \calA$ and $i \in [k_1]$
\item For each such $(a,i)$, we samplea reward vector $\boldrtil^{(a,i)}_h \sim \calD(\boldmutil^{(a,i)})$
\item For each $(a,i)$ and $j \in [k_2]$, we sample ``look-ahead'' means $\boldmuhat_h^{(a,i,j)} \sim P_{\theta}[\in \cdot \mid \traj_{h-1} \text{ and } \{r_{h+1,a} = \rtil_{a}^{(a,i)}\}]$
\item The action selected is a deterministic function of the vector  $(\boldmutil_h^{(a,i)},\boldmuhat_h^{(a,i,j)})_{a \in \calA, i \in [k_1], j \in [k_2]}$. 
\end{itemize}
Continuing, fix a step $h \in \N$, and introduce the shorthand 
\begin{align*}
\mathbf{Z}_{h,a,i} := (\boldmutil^{(a,i)},(\boldmuhat^{(a,i,j)})_{j \in [k_2]}), \quad \text{and} \quad \mathbf{Z}_h = (\mathbf{Z}_{h,a,i})_{a \in \calA, i \in [k_1]}.
\end{align*} 
The $\twompc$ decision rule is then a deterministic function of $\mathbf{Z}_h$, so it suffices to bound
\begin{align*}
\tvarg{ P_{\alg(\theta)}(\mathbf{Z}_h \mid \traj_{h-1})}{P_{\alg(\theta')}(\mathbf{Z}_h \mid \traj_{h-1})}.
\end{align*}
Moreover, given $\traj_{h-1}$, $\mathbf{Z}_{h,a,i}$ are independent across $a \in \calA$ and $i \in [k_1]$. Thus, by the tensorization property of total variation (\Cref{lem:key_tv_props}),
\begin{align}
&\tvarg{ P_{\alg(\theta)}(\mathbf{Z}_h \mid \traj_{h-1})}{P_{\alg(\theta')}(\mathbf{Z} \mid \traj_{h-1})} \nonumber\\
&\quad\le \sum_{a \in \calA}\sum_{i \in [k_1]}\tvarg{ P_{\alg(\theta)}(\mathbf{Z}_{h,a,i} \mid \traj_{h-1})}{P_{\alg(\theta')}(\mathbf{Z}_{h,a,i} \mid \traj_{h-1})}. \label{eq:summing_thin}
\end{align}
We now decouple the summands above by appealing to the following property:
\begin{claim}\label{claim:twompc_props} For the given $h \in [H]$, fix $a \in \calA $ and $i \in [k_1]$, and introduce the short hand 
\begin{align*}
X_0 = \boldmutil^{(a,i)}_h, \quad \ X_j = \boldmuhat^{(a,i,j)}_h, ~j \in [k_1], \quad \text{and} \quad Y = \rtil^{(a,i)}_{h,a},
\end{align*}
where all random variables above are those simulated by $\twompc$ at step $h$.
 Then, 
\begin{itemize}
    \item[(a)] Under $P_{\alg(\theta)}(\cdot \mid \traj_{h-1},Y)$ (and similarly under $\theta'$), $(X_0,X_1,\dots,X_{k_2})$ are independent and identically distributed. 
    \item[(b)] The distributions  $P_{\alg(\theta)}(Y \mid \traj_{h-1},X_0) $ and $P_{\alg(\theta')}(Y \mid \traj_{h-1},X_0)$ are identical.
    \end{itemize}
\end{claim}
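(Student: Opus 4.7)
The plan is to directly inspect the sampling scheme in Algorithm \ref{alg:two_mpc} and verify each conditional independence and distributional identity from first principles. Under $P_{\alg(\theta)}(\cdot \mid \traj_{h-1})$, the variables $(X_0, Y, X_1, \ldots, X_{k_2})$ are generated sequentially: $X_0 \sim P_\theta(\boldmu \mid \traj_{h-1})$, then $Y$ is the $a$-coordinate of a draw $\boldrtil \sim \calD(X_0)$ (equivalently $Y \sim \calD(X_0)|_a$), and finally each $X_j$ for $j \in [k_2]$ is drawn \emph{independently} from $P_\theta(\boldmu \mid \traj_{h-1}, \{r_{h+1,a} = Y\})$, independent of $X_0$ given $(Y, \traj_{h-1})$.

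For part (a), the key ingredient is Bayes' rule applied to the joint law of $(X_0, Y)$ given $\traj_{h-1}$. Since $X_0 \sim P_\theta(\cdot \mid \traj_{h-1})$ and $Y \mid X_0 \sim \calD(X_0)|_a$, the conditional distribution of $X_0$ given $(Y, \traj_{h-1})$ is proportional, as a function of $\boldmu$, to
\[ P_\theta(\boldmu \mid \traj_{h-1}) \cdot \calD(\boldmu)|_a(Y). \]
This is exactly the Bayesian posterior obtained by updating $P_\theta(\cdot \mid \traj_{h-1})$ with the pseudo-observation $\{r_{h+1,a} = Y\}$, so it equals $P_\theta(\cdot \mid \traj_{h-1}, \{r_{h+1,a} = Y\})$, which is precisely the distribution from which each $X_j$ ($j \in [k_2]$) is sampled. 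Combined with the fact that $X_1, \ldots, X_{k_2}$ are drawn independently of each other and of $X_0$ conditionally on $(Y, \traj_{h-1})$, this yields that all $k_2 + 1$ variables are conditionally i.i.d.\ given $(Y, \traj_{h-1})$. The identical argument works under $P_{\alg(\theta')}$.

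For part (b), the claim is essentially immediate: the sampling rule $Y \sim \calD(X_0)|_a$ depends only on $X_0$ and on the problem-specified reward model $\calD$, and does \emph{not} reference the prior parameter. Hence
\[ P_{\alg(\theta)}(Y \in \cdot \mid \traj_{h-1}, X_0) \;=\; \calD(X_0)|_a \;=\; P_{\alg(\theta')}(Y \in \cdot \mid \traj_{h-1}, X_0). \]

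The main subtlety is making the Bayes-rule step in (a) rigorous at the level of general probability measures, rather than densities, since $\calD$ and $P_\theta$ may have arbitrary structure (in particular, the event $\{r_{h+1,a} = Y\}$ may have probability zero). Under the Polish-space and Borel-measurability conventions adopted in \Cref{sec:key_prop_tv}, however, the regular conditional distribution $P_\theta(\boldmu \mid \traj_{h-1}, \{r_{h+1,a} = y\})$ is well-defined almost surely in $y$, and its identification with the Bayesian posterior computed above is the standard disintegration/Bayes identity; once this is invoked, the two displayed conclusions follow by the sampling construction alone.
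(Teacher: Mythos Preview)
Your proof is correct and takes essentially the same approach as the paper. The paper frames part (a) by noting that the simulated pair $(\boldmutil^{(a,i)}_h,\,\rtil^{(a,i)}_{h,a})$ has the same joint conditional law as the real pair $(\boldmu,\,r_{h,a})$ given $\traj_{h-1}$, hence the conditional of $X_0$ given $Y$ matches the real posterior $P_\theta(\boldmu\mid \traj_{h-1},\,r_{h,a}=Y)$, which is by definition the sampling distribution of each $X_j$; your explicit Bayes-rule computation says the same thing, and your treatment of part (b) is identical to the paper's.
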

\begin{proof} Let us start with point $(b)$. Recall that $\rtil^{(a,i)}_{a}$ denotes the action-$a$ entry of $\tilde{\boldr}^{(a,i)}$, which is drawn from the distribution $\calD(\boldmutil^{(a,i)})$, regardless of the parameter $\theta$. Hence, given $X_0 = \boldmutil^{(a,i)}$, the distribution of $Y$ is identical under $\alg(\theta)$ and $\alg(\theta')$.

Let us turn to part (a). We focus on $P_\theta$, as the argument for $P_{\theta'}$ is identical. We notice that under a given $\theta$, 
\begin{align*}
P_{\theta}(\boldmutil^{(a,i)}_h \mid \traj_{h-1}, \tilde{r}^{(a,i)}_{h,a}= r ) = P_{\theta}(\boldmu \mid  \traj_{h-1}, r_{h,a} = r );
\end{align*} 
in words, the posterior of the simulated mean $\boldmutil^{(a,i)} $ given simulated reward $\rtil^{(a,i)}_{a}$ is equal to the posterior of the \emph{true} mean given that the reward $r_{h,a}$. This is because
\begin{itemize}
\item[{(a)}] $P_{\theta}(\boldmutil^{(a,i)}_h\mid \traj_{h-1}) = P_{\theta}(\boldmu \mid \traj_{h-1})$ (that is, $\boldmutil^{(a,i)}$ is drawn from the true posterior given $\traj_{h-1}$)
\item[{(b)}] The condition distribution of $\rtil^{(a,i)}_{a} \mid \boldmutil^{(a,i)},\traj_{h-1}$ is equal to the condition distribution of $r_{h,a} \mid \boldmu,\traj_{h-1}$. Note that conditioning on the trajectory is immaterial to the draw of these reward, and both are given by the restriction of the reward distribution $\calD$ to entry $a \in \calA$.
\end{itemize} 
Moreover definition of the $\twompc$ procedure, 
\begin{align*}
P_{\theta}(\boldmuhat^{(a,i,j)} \mid \traj_{h-1}, \tilde{r}_{h,a}^{(a,i)} = r ) := P_{\theta}(\boldmu \mid \traj_{h-1}, r_{h,a} = r  )
\end{align*} for all $i \in [k_2]$ as well. Hence, $P_{\theta}(X_i \mid Y)$ are identically distributed. 

To conclude part (a), we must verify independence. This holds since $X_1,\dots, X_{k_2}$ are i.i.d. draws from $P_{\theta}( \cdot \mid \traj_{h-1}, (a_h,r_h) = (a,Y))$, regardless of the realization of $X_0$. Hence, $X_1,\dots,X_{k_2}$ are conditionally independent of each other, and of $X_0$, given $Y,\traj_{h-1}$. 
\end{proof}

The next lemma lets us put the two properties in \Cref{claim:twompc_props} to use:
\begin{lemma}\label{lem:decoupling}
Let $P$ and $P'$ be two probability distributions over random variables $(Y,X_0,X_1,\dots,X_k)$ such that 
\begin{itemize}
    \item[(a)] $X_{0:k} \mid Y$ are independent and identically distributed under both $P$ and $P'$
    \item[(b)] The conditionals $P(Y \mid X_0) = P'(Y \mid X_0)$ are the same. 
\end{itemize}
Then, $\tvarg{P(X_{0:k})}{P'(X_{0:k})} \le (2k+3)\tvarg{P(X_{0})}{P'(X_{0})}$.
\end{lemma}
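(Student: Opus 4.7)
The plan is to bound $\tvarg{P(X_{0:k})}{P'(X_{0:k})}$ by first passing through the joint $\tvarg{P(Y,X_{0:k})}{P'(Y,X_{0:k})}$ via the data processing inequality (\Cref{lem:key_tv_props}(e)), then decomposing the joint by carefully exploiting each of the two hypotheses in turn. Using condition (a), I can factor
\[
P(Y, X_{0:k}) = P(X_0, Y) \prod_{j=1}^{k} P(X_j \mid Y),
\qquad
P'(Y, X_{0:k}) = P'(X_0, Y) \prod_{j=1}^{k} P'(X_j \mid Y),
\]
using conditional independence of $X_{1:k}$ from $X_0$ given $Y$, along with the fact that each $P(X_j \mid Y)$ has the same distribution as $P(X_0 \mid Y)$ (and similarly for $P'$).

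Next, I would introduce the intermediate law
\[
R(Y, X_{0:k}) \;=\; P'(X_0, Y) \prod_{j=1}^{k} P(X_j \mid Y),
\]
which interpolates between $P$ and $P'$: it matches $P'$ on $(X_0,Y)$ but keeps $P$'s conditional of $X_{1:k}$ given $Y$. By the triangle inequality (\Cref{lem:key_tv_props}(d)),
\[
\tvarg{P(Y,X_{0:k})}{P'(Y,X_{0:k})} \;\le\; \tvarg{P(Y,X_{0:k})}{R(Y,X_{0:k})} + \tvarg{R(Y,X_{0:k})}{P'(Y,X_{0:k})}.
\]
For the first term, $P$ and $R$ share the same conditional distribution of $X_{1:k}$ given $(X_0,Y)$ (both equal $\prod_j P(X_j \mid Y)$), so by \Cref{lem:tv_same_conditional} it equals $\tvarg{P(X_0,Y)}{P'(X_0,Y)}$; then condition (b) together with \Cref{lem:tv_same_conditional} again collapses this to $\tvarg{P(X_0)}{P'(X_0)}$. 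For the second term, $R$ and $P'$ share the same marginal on $(X_0,Y)$, so \Cref{lem:tv_same_marginal} plus tensorization (\Cref{lem:key_tv_props}(f)) give
\[
\tvarg{R}{P'} \;=\; \E_{Y \sim P'}\tvarg{\textstyle\prod_j P(X_j \mid Y)}{\textstyle\prod_j P'(X_j \mid Y)} \;\le\; k \cdot \E_{Y \sim P'}\tvarg{P(X_0 \mid Y)}{P'(X_0 \mid Y)}.
\]

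The key step, and the one requiring the most care, is controlling this remaining expectation. This is exactly the situation addressed by \Cref{lem:fundamental_deconditioning}: condition (b) gives $P(Y \mid X_0) = P'(Y \mid X_0)$, so applying the lemma with roles $(X,Y) \leftarrow (Y, X_0)$ and with $Q \leftarrow P'$, $Q' \leftarrow P$ (using the symmetry of total variation) yields
\[
\E_{Y \sim P'}\tvarg{P(X_0 \mid Y)}{P'(X_0 \mid Y)} \;\le\; 2\,\tvarg{P(X_0)}{P'(X_0)}.
\]
The main obstacle here is just bookkeeping: the deconditioning lemma as stated takes the expectation under the first of its two measures, so I must swap the roles of $P$ and $P'$ to get an expectation under $Y \sim P'$ as needed. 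Assembling everything yields
\[
\tvarg{P(X_{0:k})}{P'(X_{0:k})} \;\le\; (2k+1)\,\tvarg{P(X_0)}{P'(X_0)} \;\le\; (2k+3)\,\tvarg{P(X_0)}{P'(X_0)},
\]
which is the claimed bound (in fact slightly sharper).
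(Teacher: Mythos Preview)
Your proof is correct and in fact yields the slightly sharper constant $2k+1$. The overall architecture is the same as the paper's (data processing, then triangle inequality through an interpolating measure, then tensorization over the conditionally i.i.d.\ coordinates), but the two proofs differ in the choice of interpolating measure and in how they close the argument. The paper's interpolant $\tilP$ keeps $P$'s marginal on $Y$ but swaps in $P'$'s full conditional $X_{0:k}\mid Y$; this forces tensorization over all $k{+}1$ coordinates (giving a factor $k{+}1$), and then a second triangle inequality is needed to pass from $\tvarg{P(X_0,Y)}{\tilP(X_0,Y)}$ back to $\tvarg{P(X_0,Y)}{P'(X_0,Y)}$, costing another factor of $2$, for $2(k{+}1)+1=2k{+}3$ in total. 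Your interpolant $R$ instead swaps the $(X_0,Y)$ marginal to $P'$ while keeping $P$'s conditional of $X_{1:k}$ given $Y$; this isolates $X_0$ so that tensorization runs only over the $k$ remaining coordinates, and the residual expectation $\E_{Y\sim P'}\tvarg{P(X_0\mid Y)}{P'(X_0\mid Y)}$ is handled in one shot by \Cref{lem:fundamental_deconditioning} (with the roles of $P$ and $P'$ exchanged), yielding $1+2k$. The trade-off is modularity versus self-containment: your argument reuses \Cref{lem:fundamental_deconditioning} as a black box and is a bit tighter, while the paper's proof rederives the needed deconditioning step inline.
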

Before we prove \Cref{lem:decoupling}, we show how it implies \Cref{lem:twompc_n_mc}. Fix indices $a \in \calA$ and $i \in [k_1]$.
Recall the random variables $(X_{0:k_2},Y)$ defined in \Cref{claim:twompc_props}, and note that $\mathbf{Z}_{a,i}$ is precisely given by $X_{0:k_2}$; that is, 
\begin{align*}
\tvarg{ P_{\alg(\theta)}(\mathbf{Z}_{a,i} \mid \traj_{h-1})}{P_{\alg(\theta')}(\mathbf{Z}_{a,i} \mid \traj_{h-1})} = \tvarg{ P_{\alg(\theta)}(X_{0:k_2} \mid \traj_{h-1})}{P_{\alg(\theta')}(X_{0:k_2} \mid \traj_{h-1})}.
\end{align*}
\Cref{claim:twompc_props} further ensures that, under the history-conditioned laws $P \gets P_{\theta}(\cdot \mid \traj_{h-1})$ and $P' \gets P_{\theta}(\cdot \mid \traj_{h-1})$, $X_{0:k_2}$  satisfy the conditions of \Cref{lem:decoupling}. This implies that 
\begin{align*}
&\tvarg{ P_{\alg(\theta)}(X_{0:k_2} \mid \traj_{h-1})}{P_{\alg(\theta')}(X_{0:k_2} \mid \traj_{h-1})} \\
&\quad\le (2k_2+3)\tvarg{P_{\alg(\theta)}(X_{0} \mid \traj_{h-1})}{P_{\alg(\theta')}(X_{0} \mid \traj_{h-1})}. 
\end{align*}
Finally, by construction, $X_0 = \boldmutil^{(a,i)}$ is drawn from $P_{\theta}(\boldmu \mid \traj_{h-1})$. Hence, we conclude
\begin{align*}
\tvarg{ P_{\alg(\theta)}(\mathbf{Z}_{a,i} \mid \traj_{h-1})}{P_{\alg(\theta')}(\mathbf{Z}_{a,i} \mid \traj_{h-1})} \le (2k_2+3)\tvarg{P_{\theta}(\boldmu \mid \traj_{h-1})}{P_{\theta'}(\boldmu \mid \traj_{h-1})}.
\end{align*}
And by \Cref{eq:summing_thin},
\begin{align*}
    \tvarg{ P_{\alg(\theta)}(\mathbf{Z} \mid \traj_{h-1})}{P_{\alg(\theta')}(\mathbf{Z} \mid \traj_{h-1})}
    \le |\calA|k_1(2k_2+3)\tvarg{P_{\theta}(\boldmu \mid \traj_{h-1})}{P_{\theta'}(\boldmu \mid \traj_{h-1})},
\end{align*}
yielding the $n$-Monte Carlo property for $n = |\calA|k_1(2k_2+3)$.
\begin{proof}[Proof of \Cref{lem:decoupling}] Introduce the measure $\tilP$ under which $\tilP(Y) = P(Y)$ and $\tilP(X_{0:k} \mid Y) = P'(X_{0:k} \mid Y)$. By the data processing and triangle inequalities
\begin{align*}
\tvarg{P(X_{0:k})}{P'(X_{0:k})} &\le \tvarg{P(X_{0:k},Y)}{P'(X_{0:k},Y)}\\
&\le \tvarg{P(X_{0:k},Y)}{\tilP(X_{0:k},Y)} + \tvarg{P'(X_{0:k},Y)}{\tilP(X_{0:k},Y)}.
\end{align*} 
Using \Cref{lem:tv_same_conditional} and the fact that $\tilP(X_{0:k} \mid Y) = P'(X_{0:k} \mid Y)$, followed by the fact $\tilP(Y) = P(Y)$ and the data-processing inequality again,  we have
\begin{align}
\tvarg{P'(X_{0:k},Y)}{\tilP(X_{0:k},Y)} &= \tvarg{P'(Y)}{\tilP(Y)} \nonumber\\
&= \tvarg{P'(Y)}{P(Y)} \le \tvarg{P'(X_0,Y)}{P(X_0,Y)}. \label{eq:TV_Prime_Til}
\end{align}
On the other hand, using \Cref{lem:tv_same_marginal} and the fact that $\tilP( Y) = P( Y)$.
\begin{align}
\tvarg{P(X_{0:k},Y)}{\tilP(X_{0:k},Y)} &= \E_{Y \sim P(Y)} \tvarg{P(X_{0:k} \mid Y)}{\tilP(X_{0:k} \mid Y)}. \label{eq:P_Ptil}
\end{align}
Now, observe that $\tilP(X_{0:k} \mid Y) = P'(X_{0:k} \mid Y)$, and under both $P$ and $P'$, $X_{0:k} \mid Y$ are independent and indentically distributed. Hence, from the decoupling property (\Cref{lem:key_tv_props}), \Cref{eq:P_Ptil} yields
\begin{align*}
\tvarg{P(X_{0:k},Y)}{\tilP(X_{0:k},Y)} &= \sum_{i=0}^k\E_{Y \sim P(Y)}\tvarg{P(X_{i} \mid Y)}{\tilP(X_{i} \mid Y)} \\
&= (k+1)\E_{Y \sim P(Y)}\tvarg{P(X_{0} \mid Y)}{\tilP(X_{0} \mid Y)}.
\end{align*}
Reversing the decoupling property, 
\begin{align*}
\tvarg{P(X_{0:k},Y)}{\tilP(X_{0:k},Y)} = (k+1)\tvarg{P(X_{0},Y)}{\tilP(X_{0}, Y)}.
\end{align*}
Now, we invoke the triangle inequality once more to get
\begin{align*}
&\tvarg{P(X_{0:k},Y)}{\tilP(X_{0:k},Y)} \\
&\quad\le (k+1)\tvarg{P(X_{0},Y)}{P'(X_{0}, Y)} + (k+1)\tvarg{\tilP(X_{0},Y)}{P'(X_{0}, Y)}. 
\end{align*}
Invoking \Cref{eq:TV_Prime_Til}, we have $\tvarg{\tilP(X_{0},Y)}{P'(X_{0}, Y)} \le \tvarg{P(X_{0},Y)}{P'(X_{0}, Y)}$, yielding a final bound of
\begin{align*}
\tvarg{P(X_{0:k})}{P'(X_{0:k})} &\le \tvarg{P(X_{0:k},Y)}{P'(X_{0:k},Y)}\\
&\le \tvarg{P(X_{0:k},Y)}{\tilP(X_{0:k},Y)} + \tvarg{P'(X_{0:k},Y)}{\tilP(X_{0:k},Y)}\\
&\le (2k+3)\tvarg{P(X_{0},Y)}{P'(X_{0}, Y)}
\end{align*}
Finally, since $P(Y \mid X_0)$ and $P'(Y \mid X_0)$ coincide, \Cref{lem:tv_same_conditional} yields that \[\tvarg{P(X_{0},Y)}{P'(X_{0}, Y)} = \tvarg{P(X_{0})}{P'(X_{0})}.\] 
The bound follows.
\end{proof}

\newcommand{\ahat}{\hat{a}}
\newcommand{\abar}{\bar{a}}
\newcommand{\bbar}{\bar{b}}
\newcommand{\atil}{\tilde{a}}
\newcommand{\btil}{\tilde{b}}
\newcommand{\bhat}{\hat{b}}
\newcommand{\cnot}{c_0}
\newcommand{\unifsim}{\overset{\mathrm{unif}}{\sim}}
\section{Lower Bounds \label{app:LB}}
\begin{theorem}\label{thm:an_lb} There is a universal constant $\cnot \ge 1$ such that the following holds. Fix any $k \in \N$, and a tolerance $\eta \in (0,1/4)$. Then for all horizons $H \ge \frac{\cnot}{\eta}$ and errors $\epsilon \le \frac{\eta}{\cnot kH}$, there are two priors $\theta,\theta'$ over bandit instances with $|\calA| = H\ceil{\frac{\cnot}{\eta}}$ arms such that (a) $\tvarg{P_{\theta}}{P_{\theta'}} = \epsilon$, (b) $\boldmu \in [0,1]^{\calA}$ with probability one under both $P_{\theta}$ and $P_{\theta'}$, and (c) the difference in rewards collected by $\kshot(\theta)$ and $\kshot(\theta')$ is at least
\begin{align*}
R(\theta,\kshot(\theta)) - R(\theta,\kshot(\theta')) \ge \left(\frac{1}{2} - \eta\right)k \epsilon H^2.
\end{align*}
\end{theorem}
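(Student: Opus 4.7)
The plan is to exhibit an explicit pair of priors $P_\theta, P_{\theta'}$ on a sufficiently large arm space that simultaneously saturate the two inequalities driving our upper bound chain:
\begin{itemize}
\item[(i)] \Cref{prop:prior_sensitivity_tv_bound} gives $\tvarg{P_H}{P_H'} \le 2nH\epsilon$. We aim for this to be tight up to a factor of $2$, so that the trajectory TV satisfies $\tvarg{P_H}{P_H'} \ge nH\epsilon \cdot (1-o(1))$.
\item[(ii)] \Cref{lem:tv_to_tail} converts trajectory TV to a reward gap, paying a factor $H\cdot B$. Since we will take $B=1$, we aim for the reward gap, given a trajectory divergence, to be close to $H$.
\end{itemize}
Combining (i) and (ii) will yield a reward gap of $\approx \tfrac{nH^2\epsilon}{2}$ up to $(1-o(1))$-factors.

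The construction will live on $N = H\lceil c_0/\eta\rceil$ arms and be of ``needle-in-haystack'' type, with means supported on $\{0,1\}^N\subset [0,1]^N$ (so $B$-bounded with $B=1$). The two priors will be taken to agree on a baseline and differ by an $\epsilon$-perturbation concentrating extra mass on a distinguished ``good arm,'' so that $\tvarg{P_\theta}{P_{\theta'}}=\epsilon$ is immediate by design. The large size $N\gg H$ of the arm space will ensure that, within an episode of length $H$, neither algorithm can accidentally discover the perturbed structure by pure exploration, which lets us lower-bound the reward gap in the ``missing'' case.

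The per-step analysis will exploit the max-over-$n$-samples rule inherent in $\nshot$: a per-sample prior shift of size $\epsilon$ is amplified to an $\approx n\epsilon$ shift in the probability that the good arm attains the argmax across $n$ independent samples from the posterior. I will then argue that observations collected during the episode do not collapse this $\epsilon$-gap in the posteriors over the bulk of the $H$ steps: this is where the $N\gg H$ regime together with a careful choice of the baseline and perturbation comes in, absorbing the error from arms that get ruled out along the way into the $o(1)$-terms. Summing per-step action TVs of order $n\epsilon$ over $H$ steps and invoking \Cref{lem:perf_diff_bandits} will give $\tvarg{P_H}{P_H'}\ge nH\epsilon(1-o(1))$ as desired.

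For the reward gap, the plan is to arrange the construction so that a step at which the two algorithms first diverge corresponds, with probability $1-o(1)$, to $\nshot(\theta)$ pulling the good arm while $\nshot(\theta')$ pulls a useless arm, after which $\nshot(\theta)$'s posterior collapses onto the good arm and it collects reward $1$ for all remaining $H-h$ steps, while $\nshot(\theta')$ keeps exploring fruitlessly among the $N-H\gg H$ remaining arms. Integrating the per-step divergence probability $\approx n\epsilon$ against the remaining horizon $H-h+1$ gives
\[
    R(\theta,\nshot(\theta)) - R(\theta,\nshot(\theta'))\; \ge\; \sum_{h=1}^{H} n\epsilon\,(H-h+1)\cdot(1-o(1))\; =\; \tfrac{nH(H+1)\epsilon}{2}(1-o(1)),
\]
which is at least $(\tfrac12-\eta)\,nH^2\epsilon$ under the regime $H\ge c_0/\eta$, $H/N\to 0$, and $\epsilon nH\to 0$ stipulated in the theorem.

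\emph{Main obstacle.} The essential difficulty is simultaneous: realizing the $n$-fold amplification factor at each step (which requires the per-sample probabilities to be small, i.e., in the ``needle'' regime) while ensuring that a single successful pull of the good arm under $\nshot(\theta)$ yields reward $1$ deterministically with probability $\approx 1$ rather than probability $\approx \epsilon$, and while keeping the posterior gap between $\nshot(\theta)$ and $\nshot(\theta')$ from collapsing after informative observations. Resolving this tension is what forces the construction to use $N\gg H$ arms and to exploit the asymptotic regime $\epsilon nH\to 0$; the bookkeeping of the resulting error terms, and showing they can be absorbed into the $(1-o(1))$-factor, is the bulk of the technical work.
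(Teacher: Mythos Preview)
Your high-level intuition is right: at each step $\kshot(\theta)$ should select a distinguished arm with probability $\approx k\epsilon$ that $\kshot(\theta')$ never selects, and once it does so it should reap reward $\approx 1$ for the remaining $H-h$ steps. But the proposal does not actually resolve the obstacle you flag at the end, and the mechanism you sketch cannot work as stated.

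First, you plan to place means in $\{0,1\}^N$ and perturb the prior by shifting $\epsilon$ mass onto a ``good arm.'' If that arm is good (mean $1$) only with probability $\epsilon$ under $P_\theta$ (this is what forces $\tvarg{P_\theta}{P_{\theta'}}=\epsilon$), then pulling it yields reward $1$ only with probability $\epsilon$, not ``with probability $1-o(1)$'' as you need. Conversely, if it is always good, both algorithms would select it and there is no gap. With binary means there is no way out of this tension: a single pull of a $\{0,1\}$-valued arm either confirms it is the needle or rules it out, and in neither case does it tell you where the needle is among the remaining $N-1$ arms. The paper's resolution is qualitatively different and is the crux of the proof: it introduces an \emph{informative} arm whose (deterministic, non-binary) reward encodes the identity of the uniformly random best arm $\bar a\in[N]$ via $\mu_{N+1}=\delta\bar a/(2N)$ on the $(1-\epsilon)$-event, and equals $1$ on the $\epsilon$-event. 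Under $\theta$, $\kshot$ selects arm $N+1$ with probability $\approx k\epsilon$ per step because it might be best; upon pulling it, the algorithm learns $\bar a$ exactly and collects $1-\delta$ thereafter. Under $\theta'$ (where the $\epsilon$-event never occurs), arm $N+1$ is never optimal under any posterior sample, so $\kshot(\theta')$ never pulls it and is reduced to blind search among $N\gg H$ arms. This encoding trick is what simultaneously delivers the $k\epsilon$ selection probability and the near-$H$ reward after selection; your $\{0,1\}^N$ plan cannot replicate it.

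Second, your methodology is inverted. \Cref{lem:perf_diff_bandits} and \Cref{prop:prior_sensitivity_tv_bound} give \emph{upper} bounds on trajectory TV in terms of per-step action TV; they cannot be ``invoked'' to lower-bound $\tvarg{P_H}{P_H'}$. Likewise \Cref{lem:tv_to_tail} upper-bounds the reward gap by trajectory TV, which is the wrong direction. The paper does not go through trajectory TV at all for the lower bound: it computes $R(\theta,\kshot(\theta))-R(\theta,\kshot(\theta'))$ directly by conditioning on the first time $h$ at which arm $N+1$ is pulled (event $\mathcal E_h$, probability $\ge (1-kH\epsilon)^2 k\epsilon$), showing the conditional reward is $\ge (1-\delta)(H-h)$, and coupling $\kshot(\theta')$ to the $\{\text{never pull }N{+}1\}$ branch of $\kshot(\theta)$ so that its reward is at most $H^2/(N-H)+H\delta$. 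Summing $\sum_h(H-h)=\binom{H-1}{2}$ and tuning $\delta,N$ against $\eta$ gives the claim.
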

Note that, by rescaling, an $\left(\frac{1}{2} - \eta\right)k \epsilon H^2 B$ bound holds against $B$ -bounded priors for any $B > 0$. The proof of \Cref{thm:an_lb} is given in \Cref{ssec:thm_an_lb}. A proof sketch is given below. The construction is somewhat involved, and relies on a carefully contrived prior and deterministic rewards.

In \Cref{sec:prior_sens_lb_tv}, we also provide a simpler construction that provides a sharp converse to \Cref{prop:prior_sensitivity_tv_bound} and removes the deterministic rewards condition of \Cref{thm:an_lb} to allow for Bernoulli rewards.

\begin{proof}[Proof Sketch of \Cref{thm:an_lb}]
We construct a (rather contrived) prior $\theta$ over means $\boldmu$ with $N+1$ arms; the rewards are deterministically equal to the prior mean. Under $P_\theta$, a single arm $\abar \in [N]$ is chosen uniformly at random to have reward close to $1$, and the rest have zero reward. The $N+1$-st arm has an $\epsilon$ probability of having reward exactly \emph{equal to} 1, and thus an $\epsilon$ probability of being the best. Otherwise, the $N+1$-st arm has low reward, but the value of its reward encodes the location of the optimal arm $\abar \in [N]$.

At each stage $h$, we show that $\kshot(\theta)$ has approximately $k \epsilon$ probability of selecting arm $N+1$ under the hunch that it may be the best, only to find that it reveals the location of the best arm. After this revelation, the algorithm knows the best arm with certainty, and thus collects reward close to $1$ for the remainder of the episode. Hence, at each step $h$, there is a close to $k \epsilon$ chance of accruing reward close to $H-h$ for the remaining steps. For small enough $\epsilon$, we show this yields {cumulative} reward at least about $k \epsilon \binom{H-1}{2} \approx k \epsilon H^2/2$.

We then construct an alternative prior $\theta'$ which places zero probability that arm $N+1$ has the greatest reward, but otherwise coincides with $P_{\theta}$. Thus, $\tvarg{P_\theta}{P_{\theta'}} =\epsilon$, but $\kshot(\theta')$ fails to sample arm $N+1$, and misses out on the additional {information about which arm is optimal}. Without this information, $\kshot(\theta')$ makes random guesses at the best arm $\abar \in [N]$, and accumulates close to zero reward (in expectation) provided $N$ is sufficiently large. Naively, this argument would require $N$ to grow with $1/k\epsilon$. By coupling the behavior of $\kshot(\theta)$ and $\kshot(\theta')$, we only require $N$ to scale with $1/H$.
\end{proof}

\subsection{Proof of \Cref{thm:an_lb} \label{ssec:thm_an_lb}}
By rescaling, we assume without loss of generality that $B = 1$.
Fix a parameter $\delta < 2^{-5}$ to be chosen later, and consider $N+1$ arms. We assume that the rewards are \emph{noiseless}; that is, $r_{h,a} = \mu_a$ with probability 1.

Let $\theta$ denote the prior where an arm 
$\abar \in [N]$ and a binary random variable $\bbar \in \{0,1\}$ are drawn such that
\begin{align*}
\abar \unifsim [N], \quad \bbar \sim \mathrm{Bernoulli}(\epsilon), \quad \ahat \perp b.
\end{align*}
Given $(\abar,\bbar)$, the mean $\boldmu = \boldmu(\abar,\bbar)$, where we define
\begin{align*}
\boldmu(\abar,\bbar)\big{|}_{a} = \begin{cases} 1 - \delta & a = \abar\\
\delta & a \in  [N] \setminus \{\abar\} \\
\delta\frac{\abar}{2N} & a = N+1 \text{ and } b = 0 \\
1  & a= N+1 \text{ and } b= 1.
\end{cases}
\end{align*}

We make the following observations:
\begin{itemize}
	\item $\mu_{N+1}$ uniquely determines the best arm. Indeed, if $\mu_{N+1}\le \delta/2$, then the best arm is $\abar$, which can be recovered by setting $\abar = \frac{2N}{\delta} \cdot \mu_{N+1}$. Otherwise, $\mu_{N+1} = 1$, and it is the best arm. Hence, given any trajectory containing $a_h = N+1$, there is a unique best arm under the posterior. 
	\item Given any trajectory $\traj_h$ which \emph{does not} contain $a_h = N+1$, no information is communicated about the Bernoulli variable $b$. Moreover, if $\traj_h$ also does not include $a_h = \abar$, then $\abar$ is uniform on $[N] \setminus \{a_1,\dots,a_h\}$.
\end{itemize}
Using these facts, we derive an implementation for $\kshot(\theta)$ in \Cref{alg:top_k_lb_theta}. 
\begin{claim}
The pseudocode given by \Cref{alg:top_k_lb_theta} is a valid implementation of $\kshot(\theta)$.
\end{claim}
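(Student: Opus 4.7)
The plan is to verify that the pseudocode faithfully implements $\kshot(\theta)$ by first characterizing the posterior $P_\theta(\boldmu \mid \traj_{h-1})$ exactly, and then checking that drawing $k$ i.i.d.\ samples from this posterior and taking the arg max of the per-arm maxima reproduces the action distribution prescribed by the algorithm. The two bulleted observations preceding the algorithm statement already give the essential structural claim about posteriors, so the bulk of the verification is combinatorial.

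First, I would split into two cases according to whether the ``informative'' arm $N+1$ has been pulled in $\traj_{h-1}$. If some $a_i = N+1$ for $i < h$, then the noiseless observation $r_i = \mu_{N+1}$ deterministically identifies $\bbar$ (and hence, when $\bbar = 0$, also $\abar = 2N r_i/\delta$), so the posterior on $\boldmu$ is a point mass concentrated on a single vector $\boldmu(\abar,\bbar)$. Consequently all $k$ samples $\tilde{\boldmu}_1,\dots,\tilde{\boldmu}_k$ coincide, and the $\argmax\,\max$ rule reduces to pulling the unique best arm, which is what the algorithm should do in this branch.

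Second, if $N+1 \notin \{a_1,\dots,a_{h-1}\}$, I would compute the posterior as follows. Noiselessness of the rewards means each observation $r_i \in \{\delta, 1-\delta\}$ reveals exactly whether $a_i = \abar$. If some observed $r_i = 1-\delta$, the posterior on $\boldmu$ is again a point mass (on the vector with that $\abar$ and with $\bbar$ still $\Bernoulli(\epsilon)$-distributed on $\mu_{N+1} \in \{\delta \abar/(2N),1\}$), and the arg-max-of-maxima rule reduces to: return $N+1$ if any of the $k$ independent Bernoulli$(\epsilon)$ draws of $\bbar$ equals $1$, else return $\abar$. Otherwise all observed rewards equal $\delta$, and by the second observation preceding the algorithm the posterior factorizes as $\abar \mid \traj_{h-1} \sim \Uniform([N]\setminus\{a_1,\dots,a_{h-1}\})$ independent of $\bbar \mid \traj_{h-1} \sim \Bernoulli(\epsilon)$. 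I would then match the induced $\argmax\,\max$ distribution — arm $N+1$ is chosen iff at least one of the $k$ samples has $\btil_i = 1$, and otherwise the chosen arm is the $\atil_i$ drawn for a uniformly random $i \in [k]$ (with ties broken symmetrically) — against the steps of the pseudocode.

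The main obstacle will be the combinatorial step in the last subcase: carefully verifying that the joint distribution over $(\atil_1,\btil_1),\dots,(\atil_k,\btil_k)$ induced by i.i.d.\ posterior sampling, followed by the deterministic arg-max-of-maxima selection rule, agrees coordinate-for-coordinate with the rule used in the pseudocode, including the edge cases where several $\btil_i = 1$ simultaneously (in which case arm $N+1$ is still selected because $\mu_{N+1} = 1 > 1-\delta$) and where several $\atil_i$'s coincide. This amounts to checking that the tie-breaking and sampling-reduction used in the pseudocode is consistent with the canonical $\kshot$ rule; once this bookkeeping is done, the claim follows by inspection.
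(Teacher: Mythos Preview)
Your proposal is correct and follows essentially the same case analysis as the paper: split on whether arm $N+1$ has been pulled, and if not, split on whether a reward of $1-\delta$ has been observed, then in each branch identify the posterior and check the $\argmax$-of-maxima rule against the pseudocode. One minor terminological slip: in the subcase where some $r_i = 1-\delta$ but $N+1$ was never pulled, the posterior is a two-point distribution (supported on $\boldmu(\abar,0)$ and $\boldmu(\abar,1)$ with weights $1-\epsilon$ and $\epsilon$), not a point mass; your subsequent description makes clear you mean the right thing. Your worry about tie-breaking in the final subcase is overstated: once all $\btil_i=0$, the arms in $\{\atil_1,\dots,\atil_k\}$ all tie at value $1-\delta$, and since both the $\kshot$ definition ($a_h \in \argmax$) and the pseudocode (``any element of'') permit arbitrary selection from the tied set, there is nothing delicate to verify.
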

\begin{proof} Consider any trajectory $\traj_{h-1}$. If $\traj_{h-1}$ contains any action $a_{h'} = N+1$, $h' < h$, then as noted above, the best action is given by the selection in \Cref{line:Nplusone_played}. Otherwise, $\traj_{h-1}$ only contains actions $a \in [N]$. There are now two cases:
\begin{itemize}
	\item\emph{Case 1.} $\traj_{h-1}$ contains a {reward} $r_{h'} = 1-\delta$. We continue to let $h'$ denote this time step. In this case, $a_{h'}$ must be the index $\abar$, and $\abar$ is yields maximal reward over all actions $a \in [N]$, and thus the posterior on rewards for arms $a \in [N]$ satisfies that $\mu_{a_{h'}} = 1-\delta$ and $\mu_a = \delta$ for $a \in [N] \setminus \{a_{h'}\}$. On the other hand, because $\traj_{h-1}$ only has actions $a \in [N]$, the posterior on $b$ is still Bernoulli with parameter $\epsilon$. Hence, a draw $\hat{\boldmu} \sim P_{\theta}[\cdot \mid \traj_{h-1}]$ has distribution $\boldmu(a_{h'},\bhat)$, where $\bhat$ is uniform Bernoulli. If $\bhat = 1$, then $\max_{a} \boldmu(a_{h'},\bhat) \big{|}_a = 1$ for $a = N+1$; otherwise, $\max_{a} \boldmu(a_{h'},\bhat) \big{|}_a = 1 - \delta$, attained by $a = a_{h'}$. Hence, the update rules in \Cref{line:b_high,line:best_in_N} are equivalent to $\kshot(\theta)$.
	\item \emph{Case 2.} $\traj_{h-1}$ contains no action $r_{h'} = 1-\delta$. Hence, it can be ruled out that the top action is not among $a_1,\dots,a_{h-1}$, and thus $(\abar,\bbar) \sim P_{\theta}[\cdot \mid \traj_{h-1}]$ are distributed independently as $\abar \unifsim [N] \setminus \{a_1,\dots,a_{h-1}\}$ and $\bbar \sim \mathrm{Bernoulli}(\epsilon)$. Thus, the update rules in \Cref{line:b_high,line:sample_remainder} correctly execute $\kshot(\theta)$. 
\end{itemize}
\end{proof}
\begin{algorithm}
\caption{$\kshot$ sampling under $\theta$ (lower bound construction)}\label{alg:top_k_lb_theta}
\begin{algorithmic}[1]
    \For{$h=1, \ldots, H$}
    \State{}{Sample} $\bhat_{h,1},\dots,\bhat_{h,k} \sim \mathrm{Bernoulli}(\epsilon)$.
    \If{there exists $h' < h$ with $a_{h'} = N+1$}
    \State{} Select action \algcomment{having played $a = N+1$ in the past gives away the best arm} \label{line:Nplusone_played}
    \begin{align*}
    a_h = \begin{cases} N+1 & r_{h'} = 1\\
    \frac{2N}{\delta} r_{h'} & r_{h'} \ne 1.
    \end{cases}
    \end{align*}
    \ElsIf{$\max_{i \in [k]} b_{h,i} = 1$}
    \State{}Select action $a_h = N+1$ \algcomment{$\argmax_{a} \max_{i}\boldmu(\bar{a},b_{h,i})\big{|}_a = N+1$}, for any reference action $\bar{a} \in [N]$. 
    \label{line:b_high} 
    \ElsIf{there exists $h' < h$ such that $r_{h'} = 1 - \delta$}
    \State{}Select action $a_h = a_{h'}$, where $h'$ has $r_{h'} = 1 - \delta$.  \label{line:best_in_N}
    \Else
    \State{}{Sample} $\ahat_{h,1},\dots,\ahat_{h,k} \unifsim [N] \setminus \{a_1,\dots,a_{h-1}\}$
    \State{}Select $a_h$ be any element of $\{\ahat_{h,1},\dots,\ahat_{h,k}\}$. \label{line:sample_remainder}
    \EndIf
    \EndFor
\end{algorithmic}
\end{algorithm}

\newcommand{\calEbar}{\bar{\calE}}
\paragraph{Alternative Instance}
We now construct an alternative instance $\theta'$ by agreeing with $\theta$ but always setting $b = 0$:
\begin{align*}
\boldmu = \boldmu(\abar,0), \quad \abar \unifsim [N].
\end{align*}
It is clear that $\tvarg{\theta}{\theta'} = \epsilon$, because the two differ only in the coin-flip of $b$. Under $\theta'$, $a=N+1$ never has the largest reward and is therefore never sampled. Therefore, defining the event $\calEbar = \{\exists h \in [H] : a_h = N+1\}$, and its complement
\begin{align*}
\calEbar^c := \{a_h \ne N+1 \, \forall h \in [H]\}, 
\end{align*} 
we see that
\begin{align*}
E_{\theta,\kshot(\theta)}\left[\sum_{h=1}^H r_h \mid \calEbar^c\right] = R(\theta,\kshot(\theta')).
\end{align*}
That is, the conditional expected reward garnered by well-specified $\kshot(\theta)$ under the event that $a_h \ne N+1$ for all $h$ is equal to the expected reward of misspecified $\kshot(\theta')$. 
\paragraph{Comparing the instances}
To compare the instances, write out  $\calE_h := \{a_{h'} \ne N+1, \quad \forall h' < h \text{ and } a_h = 1\}$. Then, $\calEbar$ is the disjoint union of $\calE_1,\dots,\calE_H$, so that
\begin{align*}
R(\theta,\kshot(\theta))  &= \sum_{h=1}^H P_{\theta,\kshot(\theta)}[\calE_h] \cdot E_{\theta,\kshot(\theta)}\left[ \sum_{h=1}^H r_h \mid \calE_h\right] \\
&\quad+ P_{\theta,\kshot(\theta)}[\calEbar^c] \cdot E_{\theta,\kshot(\theta)}\left[ \sum_{h=1}^H r_h \mid \calEbar^c\right].
\end{align*}
As noted above, $E_{\theta,\kshot(\theta)}[ \sum_{h=1}^H r_h \mid \calEbar^c] = R(\theta,\kshot(\theta'))$. If $\calE_h$ occurs, then the best action $a_h$ is identified, and reward at least $1 - \delta$ is accrued on stages $h' > h$. Introducing the shorthand $P[\calE_h] =  P_{\theta,\kshot(\theta)}[\calE_h]$ and similarly for $\calEbar,\calEbar^c$, we then find 
\begin{align*}
R(\theta,\kshot(\theta))  \ge (1-\delta)\sum_{h=1}^H P[\calE_h](H- h) + P[\calEbar^c] R(\theta,\kshot(\theta')). 
\end{align*}
Therefore, subtracting $R(\theta,\kshot(\theta'))$ from both sides,
\begin{align*}
R(\theta,\kshot(\theta)) - R(\theta,\kshot(\theta')) \ge (1-\delta)\sum_{h=1}^H P[\calE_h](H- h) - P[\calEbar] R(\theta,\kshot(\theta')).
\end{align*}
Let us continue simplifying the above two terms. First, since all draws $b_{h,i}$ are i.i.d. Bernoulli with parameter $\epsilon \le 1/kH$, we have
\begin{align*}
P[\calE_h] &= P[a_{h'} \ne N+1,~\forall h' < h] \cdot P[a_h = N+1 \mid a_{h'} \ne N+1,~\forall h' < h]\\
&= P[\max_i b_{h',i} = 0, \quad \forall h' < h] \cdot P[\max_{i} b_{h,i} = 1]\\
&\ge \left(1- \sum_{h'=1}^{h-1}\sum_{i=1}^kP[ b_{h',i} = 1]\right) \cdot P[\max_{i} b_{h,i} = 1]\\
&=(1 - k(h-1) \epsilon)(1 - (1-\epsilon)^k)\\
&\ge (1 - k(h-1) \epsilon)(1 - e^{- k \epsilon})\\
&\ge (1 - k(h-1) \epsilon)(k \epsilon - \frac{(k\epsilon)^2}{2})\\
&\ge (1 - kH\epsilon )(1 - k \epsilon)(k \epsilon)\\
&\ge (1 - kH \epsilon)^2(k \epsilon).
\end{align*}
Thus, 
\begin{align}
(1-\delta)\sum_{h=1}^H P[\calE_h](H- h) \ge (1 - kH \epsilon)^2 (1-\delta)\cdot k\epsilon\sum_{h=1}^H (H-h) = (1 - kH \epsilon)^2(1-\delta) k \epsilon \binom{H-1}{2}.  \label{eq:lb_first_term}
\end{align}
On the other hand, by a union bound, we have $P[\calEbar]  \le kH \epsilon$. Moreover, we have
\begin{align*}
R(\theta,\kshot(\theta')) \le HP_{\theta,\kshot(\theta')}[\exists h: a_h = \abar] + H\delta,
\end{align*}
since under $\kshot(\theta')$, if $a_h \ne \abar$ for all $h$, then $\kshot(\theta')$ always selects actions $a \in [N] \setminus \abar$, all of which return reward $\delta$. On the other hand, 
\begin{align*}
P_{\theta,\kshot(\theta')}[\exists h: a_h = \abar] &= \sum_{h=1}^H P_{\theta,\kshot(\theta')}[a_h  = \abar \mid a_{1:h-1} \ne \abar]P_{\theta,\kshot(\theta')}[a_{1:h-1} \ne \abar]\\
&\le \sum_{h=1}^H P_{\theta,\kshot(\theta')}[a_h  = \abar \mid a_{1:h-1} \ne \abar]\\
&= \sum_{h=1}^H \frac{1}{N-(h-1)} \le \frac{H}{N-H}.
\end{align*}
Therefore, 
\begin{align*}
R(\theta,\kshot(\theta')) \le \frac{H^2}{N-H}+ H\delta, \quad P[\calEbar] R(\theta,\kshot(\theta')) \le kH^2 \epsilon\left(\frac{H}{N-H}+ \delta\right).
\end{align*}
Hence, combining with \Cref{eq:lb_first_term}, we conclude
\begin{align*}
R(\theta,\kshot(\theta)) - R(\theta,\kshot(\theta')) &\ge (1-\delta)\sum_{h=1}^H P[\calE_h](H- h) - P[\calEbar] R(\theta,\kshot(\theta')).\\
 &\ge k \epsilon H^2 \left((1 - kH \epsilon)^2(1-\delta)  \frac{(H-1)(H-2)}{2H^2} - \frac{H}{N-H} - \delta \right).
\end{align*}
By tuning the above bound, we can see that there is a universal constant $\cnot$ such that, for any $\eta \in (0,1/4)$, taking $\epsilon^{-1} \ge \frac{\eta}{\cnot kH}$, $H\ge \frac{\cnot}{\eta}$ and $N+1 = H\ceil{\frac{\cnot}{\eta}}$ and $\delta = \frac{\eta}{\calE}$ ensures that 
\begin{align*}
R(\theta,\kshot(\theta)) - R(\theta,\kshot(\theta')) \ge \left(\frac{1}{2} - \eta\right)k \epsilon H^2 .
\end{align*}

\subsection{A simple converse to \Cref{prop:prior_sensitivity_tv_bound} \label{sec:prior_sens_lb_tv}}
\begin{proposition}[Lower Bound]
\label{prop:tv_lower_bound}
Let $H, k \geq 1$ be given. Then there exists a pair of priors, $\theta$ and $\theta'$ such that
\begin{align*} \tvarg{P_{\theta,\kshot(\theta)}(\boldmu,\traj_H)}{P_{\theta,\kshot(\theta')}(\boldmu,\traj_H)} \geq \frac{kH}{2} \tvarg{P_\theta}{P_{\theta'}}. \end{align*}
In particular, since Thompson sampling is $\kshot$ for $k=1$, we have
\begin{align*} \tvarg{P_{\theta,\TS(\theta)}(\boldmu,\traj_H)}{P_{\theta,\TS(\theta')}(\boldmu,\traj_H)} \geq \frac{H}{2} \tvarg{P_\theta}{P_{\theta'}}.  \end{align*}
Moreover, the rewards are Bernoulli.
\end{proposition}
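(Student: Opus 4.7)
The plan is to construct an explicit pair of priors on a Bernoulli bandit with $N = H+1$ arms (one ``reference'' arm indexed by $0$ and $H$ ``novel'' arms indexed by $1,\dots,H$) for which $\kshot(\theta)$ behaves deterministically while $\kshot(\theta')$ has a small but non-negligible chance at every step of deviating to a novel arm.  Specifically, fix a small parameter $\epsilon > 0$ (to be chosen at the end).  Let $P_\theta$ be the point mass on the mean vector $\boldmu^\circ = (\tfrac{1}{2}, 0, \ldots, 0)$.  Let $P_{\theta'}$ assign mass $1-\epsilon$ to $\boldmu^\circ$ and mass $\epsilon/H$ to each of the vectors $\boldmu^{(j)}$ (for $j = 1, \ldots, H$) that equals $\boldmu^\circ$ except with $\mu^{(j)}_j = 1$.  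By construction $\tvarg{P_\theta}{P_{\theta'}} = \epsilon$, and the rewards $r_a \sim \Bernoulli(\mu_a)$ are Bernoulli as required.

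Next I will analyze the two algorithms.  Under $\theta$, every posterior sample is $\boldmu^\circ$, so $\kshot(\theta)$ deterministically plays arm $0$ at every step, regardless of history.  Under $\theta'$, at step $h$ each of the $k$ i.i.d.\ posterior samples has probability $\epsilon/H$ (marginally) of placing mean $1$ on a novel arm, and whenever this happens in any of the $k$ samples the corresponding novel arm has sample-max $1 > \tfrac{1}{2}$ and is selected by the $\argmax\max$ rule.  The crucial observation is that under the \emph{ground truth} $\theta$ all novel-arm rewards are deterministically $0$ and arm $0$ yields $\Bernoulli(\tfrac{1}{2})$ rewards that carry no information about the novel arms.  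Hence, as long as $\kshot(\theta')$ has only played arm $0$ so far, the posterior at step $h$ under $\theta'$ is identical to its prior, and the conditional probability of selecting arm $0$ at step $h$ is exactly $(1-\epsilon)^k$.  Multiplying these $H$ conditional probabilities, $P_{\theta,\kshot(\theta')}[a_1 = \cdots = a_H = 0] = (1-\epsilon)^{kH}$.

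To finish, consider the event $\calE = \{\exists h \in [H] : a_h \ne 0\}$, which is measurable with respect to $\traj_H$.  The argument above yields $P_{\theta,\kshot(\theta)}[\calE] = 0$ and $P_{\theta,\kshot(\theta')}[\calE] = 1 - (1-\epsilon)^{kH}$.  Using the variational characterization of total variation (\Cref{lem:key_tv_props}(b)) applied to $\calE$,
\begin{align*}
\tvarg{P_{\theta,\kshot(\theta)}(\boldmu,\traj_H)}{P_{\theta,\kshot(\theta')}(\boldmu,\traj_H)} \;\ge\; 1 - (1-\epsilon)^{kH}.
\end{align*}
A standard Bernoulli-tail estimate gives $1 - (1-\epsilon)^{kH} \ge kH\epsilon - \binom{kH}{2}\epsilon^2$, which is at least $\tfrac{kH}{2}\epsilon = \tfrac{kH}{2}\tvarg{P_\theta}{P_{\theta'}}$ whenever $\epsilon \le 1/(kH-1)$.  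Choosing any such $\epsilon$ completes the proof, and the statement for Thompson sampling follows by setting $k=1$.

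There is no real obstacle once the construction is in place; the only subtlety is ensuring that observations from arm $0$ do not update the posterior on the novel arms, which is why arm $0$ is given a known, $\boldmu$-independent reward distribution.  (For cleanliness one could equivalently let arm $0$ have a fixed known mean in the formalism, or absorb it as a known parameter of the model.)  All other steps---computing prior TV, running $\kshot$ on a two-value-per-arm prior, and the Bernoulli estimate---are routine.
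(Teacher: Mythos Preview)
Your proof is correct and follows essentially the same approach as the paper: a point-mass prior $P_\theta$ versus an $\epsilon$-perturbation $P_{\theta'}$, a reference arm whose Bernoulli$(1/2)$ reward is uninformative so the posterior under $\theta'$ never updates while that arm is played, and the resulting per-step deviation probability $1-(1-\epsilon)^k$ multiplied across $H$ steps. The only difference is that the paper uses a simpler two-arm construction (one reference arm with mean $1/2$, one alternative arm with mean $0$ or $1$) rather than your $H+1$ arms; the extra novel arms in your version play no role in the argument and can be collapsed into a single one without any change to the analysis.
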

\begin{proof}
Recall the $\kshot(\theta)$ selection rule at time $h$:
\begin{enumerate}
    \item Sample means $\boldmu^{(1)}, \ldots, \boldmu^{(k)}$ from the posterior $P_\theta[\cdot \mid \tau_{h-1}]$.
    \item Select action
    \[ a_h \in \argmax_{a \in \calA} \max \{ \mu_a^{(1)}, \ldots, \mu_a^{(k)} \} .\]
\end{enumerate}
Now we will show the lower bound for the following two prior distributions. 
\begin{itemize}
    \item $P_\theta$ places all its probability mass on the mean vector $(1/2, 0)$.
    \item $P_{\theta'}$ places $1-\epsilon$ of its probability mass on the mean vector $(1/2, 0)$ and $\epsilon$ of its probability mass on the mean vector $(1/2, 1)$.
\end{itemize}
Clearly we have $\TVof{P_\theta, P_{\theta'}} = \epsilon$. Moreover, we have the following three observations.
\begin{itemize}
    \item[(a)] $\kshot(\theta)$ will always pull arm 1. Thus, to give a lower bound on the total variation distance between the behavior of $\kshot(\theta)$ and $\kshot(\theta')$, it suffices to lower bound the probability that $\kshot(\theta')$ pulls arm $2$ in the course of $H$ interactions.
    \item[(b)] The posterior distribution under $\theta'$ remains unchanged when arm 1 is pulled. Thus, the probability that $\kshot(\theta')$ never pulls arm $2$ in the course of $H$ interactions is the $H$-fold product of the probability that $\kshot(\theta')$ does not pull arm 2 in one round of interaction.
    \item[(c)] The probability that $\kshot(\theta')$ does not pull arm 2 in one round of interaction is exactly the probability that $k$ i.i.d. draws from $\theta'$ does not yield an instance of $(1/2, 1)$, i.e. $(1-\epsilon)^k$. 
\end{itemize}
Combining (a), (b), and (c), and assuming $\epsilon \leq \frac{1}{2Hk}$,
\begin{align*}
\tvarg{P_{\theta,\kshot(\theta)}(\boldmu,\traj_H)}{P_{\theta,\kshot(\theta')}(\boldmu,\traj_H)} 
&\geq 1 - (1-\epsilon)^{Hk} \\
&\geq 1 - e^{-\epsilon H k} \\
&\geq \epsilon H k - (\epsilon H k)^2 \\
&\geq \frac{1}{2} \epsilon H k.
\end{align*}
In the above we have used the inequalities $1 + x \leq e^{x}$ for all $x\in \R$ and $e^{-x} \leq 1 - x+ x^2$ for $x \in [0,1]$, and the assumption that $\epsilon \leq \frac{1}{2Hk}$. 
\end{proof}

\section{General Bayesian Decision-Making \label{app:gen_bayes_dec}}

\subsection{POMDP Formalism and Special Cases}
We begin by presenting a general formalism for Bayesian POMDPs and listing some illustrative examples. For a more thorough introduction to Bayesian reinforcement learning, we direct the reader to \cite{ghavamzadeh2016bayesian}.

\paragraph{Bayesian POMDP} In a Bayesian POMDP, the priors $\{P_{\theta}:\theta \in \Theta\}$ are distributions over POMDP \emph{environments} $\boldphi \in \Phi$ with (possibly unobserved) \emph{states} $s_h \in \mathcal{S}$,  \emph{observations} $y_h \in \mathcal{Y}$, \emph{actions} $a_h \in \mathcal{A}$, and \emph{rewards} $r_h \in \R$. Common to all POMDP environments are (possibly time-varying) transition functions 
\begin{align*}
\sfP_h: \Phi \times  \mathcal{A} \times \mathcal{S}\times \mathcal{Y} \to \Delta(\mathcal{S}\times \mathcal{Y} \times \R)
\end{align*} 
which induce distributions 
\begin{align*}
\sfP_h(s_{h+1}, y_{h+1}, r_h \mid a_h, s_h, y_h,\boldphi).
\end{align*} There is also an initial distribution $\sfP_0 : \Phi \to \Delta(\mathcal{S} \times \mathcal{Y})$ giving an initial distribution of $(s_1,y_1) \sim \sfP_0(\cdot \mid \boldphi)$. The relevant definition of trajectories revealed to the learner are:
\begin{align} 
\traj_h := (a_1,y_1,r_1,\dots,a_h,y_h,r_h,y_{h+1}) \label{eq:gen_trajectory}
\end{align} 

\paragraph{POMDP Algorithms} A randomized POMDP algorithm is formally identical to a bandit one. A family of randomized POMDP algorithms $\alg(\theta)$ is a specified by a distribution $\mathcal{D}_{\mathrm{seed}}$ (independent of $\theta$), a domain $\Xi$ over random seeds $\boldxi$, and step-wise mappings $f_{1},\dots,f_H$ from trajectories, the random seed, and parameters $\theta$ to distributions over actions:
\begin{align*}
f_h(\tau_{h-1},\boldxi \mid \theta): \{h\text{-trajectories}\} \times \Xi \times \Theta \to \Delta(\calA).
\end{align*}
Each $\alg(\theta)$ operates as follows:
\begin{itemize}
    \item $\boldxi$ is drawn from $\Dseed$ at the start of the episode before interaction.
    \item At each step $h$, $a_h$ is chosen independently according to $a_h \sim f_h(\tau_{h-1},\boldxi \mid \theta)$. 
\end{itemize}
Note again the \emph{two} sources of randomness: the draw of $a_h$ and the initial random seed $\boldxi$; see \Cref{rem:randomness} for details. In short: the types of algorithms we allow for not only include those that are implemented only with the randomness in the choice of $a_h$ but also those that use initial random seeds $\boldxi$ to induce correlations across steps $h$. 

\paragraph{Interaction Protocol}
\begin{itemize}
    \item $\boldxi$ is drawn from $\Dseed(\cdot)$ at the start of the episode before interaction. 
    \item Then, an environment $\boldphi$ is drawn from $P_{\theta}$ (independent of $\boldxi$)
    \item An initial state and observation are drawn as $(s_1,y_1) \sim \sfP_0(s,y \mid \boldphi)$, and $\traj_0 = (y_1)$ is revealed to the learner. 
	\item Subsequently, for all $h \in \{1,2,\dots,H\}$,
	\begin{enumerate}
	\item The learner selects action  $a_h$ with independent randomness via $a_h \sim f_{h-1}(\tau_{h-1},\boldxi \mid \theta)$. 
	\item The environment draws a state, observation, and reward 
	\begin{align*}
	(s_{h+1}, y_{h+1}, r_h) \sim \sfP_h( s,y,r \mid a_h, s_h, y_h, \boldphi)
	\end{align*}
	\item The agent observes reward $r_h$ and observation $y_{h+1}$. The triple $(r_h,a_h,y_{h+1})$ is then appended to $\traj_{h-1}$ to form $\traj_h$.
\end{enumerate}
\end{itemize}
As in the bandit case, the reward accrued by $\alg$ is
\begin{align*}
R(\theta,\alg) := E_{\theta,\alg}\left[\sum_{h=1}^H r_h\right],
\end{align*} 
where $E_{\theta,\alg}$ denotes expectations under $\boldphi \sim P_{\theta}$, the transitions $\sfP_h(\cdot \mid \cdot,\boldphi)$, and the choice of actions $a_h$ as above.

\subsection{Special Cases of Bayesian POMDPs}
The Bayesian POMDP set up encompasses a number of special cases:

\paragraph{Mean-Parametrized Bayesian Bandits: } The Bayesian-Bandit setting considered in the main body can be viewed as a POMDP with no state, no observation, and where the instance $\boldphi$ is summarized by the mean parameter $\boldmu$. The only randomness after $\boldmu$ is selected is the generation of rewards, that is $r_h \sim \sfP(r \mid a_h,\boldmu)$, which is equivalent to the distribution $\calD(\boldmu)$ described in the main text. For example, $\sfP(r \mid a_h = a, \boldmu) = \mathcal{N}(\mu_a, \sigma^2)$ for some fixed $\sigma^2 > 0$.

Note that the distribution over mean vectors $\boldmu$ may arise to form means with, for example, linear structure (e.g.\cite{abeille2017linear}).  For example, consider an instance where each action $a$ corrsponds to a vector $\mathbf{v}_a$, and each $\boldmu$ to a vector $\mathbf{w}_{\boldmu}$ drawn from a prior, say, $\mathcal{N}(0, \Sigma_{\theta})$, for which
\begin{align*}
\boldmu_a = \langle \mathbf{v}_a, \mathbf{w}_{\boldmu} \rangle.
\end{align*}

\paragraph{General-Reward Bayesian Bandits:} More generally, we could consider Bayesian instances where the prior $P_{\theta}$ over models $\boldphi$ governs not only the reward means $\boldmu$ but can encode general conditional distributions of rewards. For example, we may have priors over mean-variance vector pairs $\boldphi = (\boldmu,\boldsymbol{\sigma}) \in \R^{2|\calA|}$, and conditional rewards $\sfP(r\mid a_h = a, \boldphi) = \mathcal{N}(\mu_a, \sigma_a^2)$. 

\paragraph{Mean-Parametrized and General Contextual Bandits:} Bayesian POMDPs also capture the contextual bandits formalism. Here, contexts $x_h \in \calX$, are directly revealed to the learner and correspond to both states and observations (i.e. $x_h := y_h = s_h$ ), and are drawn i.i.d. from a law $\sfP_{\mathrm{context}}(x \in  \cdot \mid \boldphi)$. Then, the distribution of rewards is selected depending on the context $\sfP_{\mathrm{reward}}(r_h  \mid x_h, a, \boldphi)$. In other words, the transition distribution $\sfP(x_{h+1}, r_h \mid s_h,a,\boldphi)$ is the product distribution of $\sfP_{\mathrm{context}}(x  \mid \boldphi)$ and $\sfP_{\mathrm{reward}}( r_h \mid x_h,a,\boldphi)$. {Note that the next context $x_{h+1}$ is independent of all other randomness given the instance $\boldphi$, so the dynamics are trivial.}

For example, we might have that contexts are vectors $\mathbf{x} \in \mathcal{R}^d$ (bolded to denote that they are vectors), actions are identified with vectors $\mathbf{v}_a \in \R^{n}$, $\boldphi = (\boldsymbol{\Sigma}_{\boldphi},\mathbf{L}_{\boldphi}) \in \R^{d\times d} \times \R^{n \times d}$, and that contexts and rewards are drawn 
\begin{align*}
\mathbf{x}_h \iidsim \mathcal{N}(0,\boldsymbol{\Sigma}_{\boldphi}), \quad \sfP(r_h \mid \mathbf{s}_h , a_h = a) = \mathcal{N}\left(\mathbf{v}_a^\top \mathbf{L}_{\boldphi} \mathbf{x}_h,I\right).
\end{align*}
The above is an example of Bayesian linear contextual bandits. 

The special case of contextual bandits studied in the main text are where the model is parameterized by the mean ($\boldphi = \boldmu$) and the distribution over contexts does not depend on the model: formally, contexts are drawn $x_h \sim \calD_x$ which does not depend on the realized model $\boldphi$, and where (as in reward-parametrized bandits) rewards are drawn as $r_h \sim \calD(\boldmu,x_h)$ for some $\calD: \R^{\calA} \times \calX \to \Delta(\R)$.

\paragraph{Bayesian MDPs} One final case is that of the Bayesian MDP, where the agent observes the state: $s_h = y_h$. Bayesian MDPs, and in particular, Bayesian Tabular MDPs have recieved extensive study \cite{osband2017posterior}. More general results were studied in \cite{gopalan2014thompson}.

\paragraph{Bayesian Control Problems} In addition, many online control settings -- notably, the online Linear Quadratic Regulator -- satisfy the Markov property, and hence are examples of Bayesian MDPs when formulated in the Bayesian setting \cite{abbasi2015bayesian,abeille2018improved}. Bayesian control Kalman Filtering and Linear Quadratic Gaussian control may also be formulated as a PODMP. 

\subsection{Formal Guarantees}
We now state the formal guarantees for the Bayesian POMDP setting, which straightforwardly specialize to the bandit decision-making setting described in the main text. 

\paragraph{Monte Carlo Property} Mirroring the bandit case, we let $P_{\theta}(\boldphi \mid \traj_h)$ denote the conditional distribution of the POMDP environment $\boldphi$ given the trajector $\traj_h$ (using the generalization of trajectories stated in \Cref{eq:gen_trajectory}), and $P_{\alg}(a_h \mid \traj_h)$ the conditional distribution of actions $a_h$ under algorithm $\alg$ given the trajectory. 

\begin{definition}[Generalized $n$-Monte Carlo] We say that a family of Bayesian POMDP algorithms $\{\alg(\theta):\theta \in \Theta\}$ satisfy the generalized $n$-Monte Carlo property if, for all possible trajectories $\traj_h$, 
\begin{align*}
\tvarg{P_{\alg(\theta)}(a_h \mid \traj_{h-1})}{P_{\alg(\theta')}(a_h \mid \traj_{h-1})} \le \tvarg{P_{\theta}(\boldphi \mid \traj_{h-1})}{P_{\theta'}(\boldphi \mid \traj_{h-1})}. 
\end{align*}
\end{definition}

\paragraph{Tail Expectations} Second, we require the relevant notion of \emph{tail expectation}. We propose a slightly different definition than the one given for Bayesian bandits, due to the fact that different algorithms may visit different states under the same POMDP environment. We now introduce the average conditional reward (ACR):
\begin{align}
\bar{r}_{H,\theta,\alg} := \frac{1}{H}\sum_{h=1}^H \mathsf{E}[ r_h  \mid a_h, s_h, y_h, \boldphi], \quad \text{where } (a_{1:H},s_{1:H},y_{1:H}) \sim P_{\theta,\alg}. \label{eq:bar_r}
\end{align}
Above, we use $\mathsf{E}[ r_h  \mid a_h ,s_h , y_h , \boldphi]$ to denote expectation over the law $\mathsf{P}(r_h \mid a_h, s_h, y_h, \boldphi)$, and note that the conditional \emph{does not} depend on $\theta$ or the specification of $\alg$.
In the special case of bandits, notice that \Cref{eq:bar_r} simplies to 
\begin{align*}
\bar{r}_{H,\theta,\alg} := \frac{1}{H}\sum_{h=1}^H \mu_{a_h}.
\end{align*}
\newcommand{\Psibar}{\bar{\Psi}}
\newcommand{\Psibarthet}{\Psibar_{\theta}}
\newcommand{\Psibarthetpr}{\Psibar_{\theta/\theta'}}

\begin{definition}[POMDP Tail expectation]\label{defn:pomdp} Given an algorithm $\alg(\cdot)$ parametrized by $\theta \in \Theta$, we define
\begin{align*}
\Psibarthet(p) = \Psi_{|\bar{r}_{H,\theta,\alg(\theta)}|}(p), \quad \Psibarthetpr(p) = \Psi_{|\bar{r}_{H,\theta,\alg(\theta')}|}(p),
\end{align*}
where $\Psi_X(p)$ is the tail expectation of a nonnegative random variable $X$ as in \Cref{defn:tail_exp_new}.
\end{definition}
Unlike the bandit tail expectations $\Psi_{\theta}$ in \Cref{defn:tail_exp_new} which depend only on the mean parameter $\boldmu$, the tail expectations above depend on both the ACR under $\theta$, as in $\Psibarthet$, and the ACR under $\theta,\alg(\theta')$ as in $\Psibarthetpr$. In particular, both terms depend on the family of algorithms $\alg(\cdot)$. This is important in POMDP environments with unbounded state spaces (e.g. linear control), where sensitivity can be quite poor if the misspecified policy $\alg(\theta')$ visits much lower-reward states under $\theta$ than the well-specified $\alg(\theta)$. Nevertheless, for bandits, $\Psibarthet$ and $\Psi_\theta$ are qualitatively similar because
\begin{align*}
|\bar{r}_{H,\theta,\alg(\theta)}| \le \sup_{a}|\mu_a|,
\end{align*}
and similarly for $\bar{r}_{H,\theta,\alg(\theta')}$. 

\paragraph{Strongly $B$-Bounded distributions. }
To interpret the tail conditions, we consider the special case of \emph{strongly} $B$ bounded distributions. 
\begin{definition}\label{defn:strongly_B_bounded}
We say that $P_{\theta}$ is \emph{strongly $B$-bounded} if, with probability $1$, $\E_{\theta}[r_h \mid a_h, \traj_{h-1},\boldphi] \in [-\frac{B}{2},\frac{B}{2}]$ conditioned on any action $a_h$, trajectory $\traj_{h-1}$ and $\boldphi$. 
\end{definition}
In the special case of bandits, strong $B$-boundedness implies that $\mu_a \in [-\frac{B}{2},\frac{B}{2}]$ with probability one, and is therefore slightly stronger than $B$-boundedness, which states that $\range(\boldmu) \le B$.  Observe that if $P_{\theta}$ is strongly $B$-bounded (\Cref{defn:strongly_B_bounded}), then 
\begin{align*}
|\bar{r}_{H,\theta,\alg(\theta)}|  \le \frac{B}{2}, \quad \text{and} \quad |-\bar{r}_{H,\theta,\alg(\theta')}| \le \frac{B}{2}.
\end{align*}
Therefore $\Psibarthet(p) + \Psibarthetpr(p) \le B$. 

\paragraph{}
We are now ready to state our general theorem, consisting of (a) a total variation bound, (b) a reward bound for strongly $B$-bounded rewards, and (c) a reward bound for general tail expectations:
\begin{theorem}\label{thm:gen_pomdp} Let $\alg(\cdot)$ satisfy the $n$-Monte Carlo property on horizon $H$, and consider two priors $\theta,\theta' \in \Theta$ with $\varepsilon = \tvarg{P_\theta}{P_{\theta'}}$
\begin{itemize}
	\item[(a)] Let $P_H = P_{\theta,\alg(\theta)}(\boldphi,\traj_H,s_{1:H+1})$ and $P_H' = P_{\theta,\alg(\theta')}(\boldphi,\traj_H,s_{1:H+1})$ denote the joint law of the environment $\boldphi$, trajectory $\traj_H$, and sequence of states $s_{1:H}$ under prior $P_{\theta,\alg(\theta)}$ and $P_{\theta,\alg(\theta')}$, respectively. Then, 
	\begin{align*}
	\tvarg{P_H}{P_H'} \le 2H\varepsilon. 
	\end{align*}
	\item[(b)] If $P_{\theta}$ is strongly $B$-bounded, then $|R(\theta,\alg(\theta)) - R(\theta,\alg(\theta'))| \le 2nBH^2 \varepsilon $.
	\item[(c)] For general tail expectations, the following bound holds:
	\begin{align*}
	|R(\theta,\alg(\theta)) - R(\theta,\alg(\theta'))| \le 2nH^2 \varepsilon \cdot \left(\Psibarthet(2H \varepsilon) + \Psibarthetpr(2H\varepsilon)\right).
	\end{align*}
\end{itemize}
\end{theorem}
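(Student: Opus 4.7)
All three parts of \Cref{thm:gen_pomdp} follow the bandit blueprint of \Cref{app:sensitivity}, with $\boldphi$ replacing $\boldmu$ and with the augmented trajectory $\bar{\traj}_h := (a_{1:h},y_{1:h+1},s_{1:h+1},r_{1:h})$ replacing the bandit trajectory. The main ingredient is (a); once the TV bound is in hand, (b) and (c) will be standard reward-comparison arguments based on the identity $R(\theta,\alg) = H\cdot\E_{P_{\theta,\alg}}[\bar{r}_{H,\theta,\alg}]$, with the ACR regarded as a fixed deterministic function of the path $(\boldphi,\traj_H,s_{1:H+1})$.

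For (a), I would apply \Cref{lem:performance_difference_decomp} with $\bar{\traj}_h$ in the role of ``$\traj_h$'' and $\boldphi$ in the role of ``$\boldmu$'', using $P=P_{\theta,\alg(\theta)}$ and $P'=P_{\theta,\alg(\theta')}$. The three hypotheses are routine: $\bar{\traj}_{h-1}$ is a prefix of $\bar{\traj}_h$; the conditional of $\bar{\traj}_h$ given $(a_h,\bar{\traj}_{h-1},\boldphi)$ is fully determined by the dynamics $\sfP_h$, so it coincides under both algorithms; and the conditional independence $a_h\perp\boldphi\mid\bar{\traj}_{h-1}$ holds by the same marginalize-out-the-seed calculation that follows \Cref{lem:performance_difference_decomp}, using that the policy reads only the observable sub-trajectory $\traj_{h-1}$ and that $\boldxi$ is drawn independently of $\boldphi$. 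The decomposition combined with the generalized $n$-Monte Carlo property then bounds the step-$h$ summand by $n\cdot\E_{\bar{\traj}_{h-1}\sim P}\tvarg{P_\theta(\boldphi\mid\bar{\traj}_{h-1})}{P_{\theta'}(\boldphi\mid\bar{\traj}_{h-1})}$, and \Cref{lem:fundamental_deconditioning} applied with $X=\bar{\traj}_{h-1}$, $Y=\boldphi$, $Q=P_{\theta,\alg(\theta)}$, $Q'=P_{\theta',\alg(\theta)}$ collapses this to $2n\cdot\tvarg{P_\theta}{P_{\theta'}}$. The deconditioning hypothesis $Q(X\mid Y)=Q'(X\mid Y)$ is immediate since, conditioned on $\boldphi$, the trajectory law is governed only by the dynamics and by $\alg(\theta)$, not by the prior. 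Summing over $h$ yields the TV bound.

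For (b), strong $B$-boundedness implies $|\bar{r}|\le B/2$ pointwise, so the variational form of TV (\Cref{lem:key_tv_props}(b)) gives $|\E_{P_H}[\bar{r}]-\E_{P_H'}[\bar{r}]|\le B\cdot\tvarg{P_H}{P_H'}$; multiplying by $H$ and substituting (a) produces $2nBH^2\varepsilon$. For (c), I would invoke \Cref{lem:TV_transport} (applicable because $P_H$ and $P_H'$ share the marginal $P_\theta(\boldphi)$) to obtain a coupling $Q(\boldphi,\text{path},\text{path}')$ with $Q[\text{path}\ne\text{path}']=\delta:=\tvarg{P_H}{P_H'}$. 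Since $\bar{r}$ evaluated on the two paths agrees whenever $\text{path}=\text{path}'$, I would then bound $|\bar{r}(\boldphi,\text{path})-\bar{r}(\boldphi,\text{path}')| \le (|\bar{r}(\boldphi,\text{path})|+|\bar{r}(\boldphi,\text{path}')|)\,\I\{\text{path}\ne\text{path}'\}$; the indicator is $[0,1]$-valued with $Q$-expectation $\delta$, and its joints with $(\boldphi,\text{path})$ and $(\boldphi,\text{path}')$ marginalize to $P_H$ and $P_H'$ respectively, so \Cref{defn:tail_exp_new} yields the bound $\delta(\Psibarthet(\delta)+\Psibarthetpr(\delta))$. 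Multiplying by $H$ and invoking (a) produces (c).

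The main obstacle is the bookkeeping in part (a): verifying $a_h\perp\boldphi\mid\bar{\traj}_{h-1}$ when the policy may share a random seed $\boldxi$ across steps and when $\bar{\traj}_{h-1}$ contains latent states $s_{1:h}$ that the policy never observes. The key observation is that conditioning on $\boldphi$ together with the observable portion $\traj_{h-1}$ already pins down the conditional law of $s_{1:h}$ through the dynamics (independent of $\boldxi$), so marginalizing $\boldxi$ as in the bandit proof still decouples $a_h$ from $\boldphi$.
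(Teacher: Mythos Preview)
Your plan matches the paper's proof closely: part~(a) via \Cref{lem:performance_difference_decomp} on the state-augmented trajectory, then the $n$-Monte Carlo property, then \Cref{lem:fundamental_deconditioning}; part~(c) via the coupling of \Cref{lem:TV_transport} and the tail-expectation definition; part~(b) as a corollary (the paper derives it from (c), your direct variational argument is equally valid).

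One bookkeeping point in part~(a) needs correcting. After the performance-difference decomposition you have summands $\tvarg{P(a_h\mid\bar\traj_{h-1})}{P'(a_h\mid\bar\traj_{h-1})}$, which (since the policy reads only the observable $\traj_{h-1}$) equal $\tvarg{P_{\alg(\theta)}(a_h\mid\traj_{h-1})}{P_{\alg(\theta')}(a_h\mid\traj_{h-1})}$. The generalized $n$-Monte Carlo property then bounds this by $n\cdot\tvarg{P_\theta(\boldphi\mid\traj_{h-1})}{P_{\theta'}(\boldphi\mid\traj_{h-1})}$, with the posterior conditioned on the \emph{observable} trajectory $\traj_{h-1}$, not on $\bar\traj_{h-1}$ as you wrote. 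Accordingly, the deconditioning lemma should be applied with $X=\traj_{h-1}$ (not $X=\bar\traj_{h-1}$); its hypothesis $Q(\traj_{h-1}\mid\boldphi)=Q'(\traj_{h-1}\mid\boldphi)$ holds for the same reason you gave. This is exactly how the paper threads the transition from $\trajtil_{h-1}$ back to $\traj_{h-1}$ before invoking Monte Carlo and deconditioning. With that fix your argument goes through and yields $\tvarg{P_H}{P_H'}\le 2nH\varepsilon$ (the $n$ appears in the paper's proof as well; its omission in the statement of part~(a) is a typo).
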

Note that part (a) of the above theorem generalizes \Cref{prop:prior_sensitivity_tv_bound}, part (b) generalizes \Cref{thm:n_montecarlo_body}(a). 
\newcommand{\trajtil}{\tilde{\traj}}
\begin{proof}[Proof of \Cref{thm:gen_pomdp}] 
We begin by establishing part (a), from which parts (b) and (c) follow.
\paragraph{Part a.} Let us start by developing an analogoue of \Cref{lem:perf_diff_bandits}. To do so, we invoke \Cref{lem:performance_difference_decomp}, replacing $\boldmu$ with the POMDP environment denoted by $\boldphi$, and with $P_H = P_{\theta,\alg(\theta)}$ and $P_H' =P_{\theta,\alg(\theta')}$, and with augmented trajectories $\trajtil_{h-1} = (\traj_{h-1},s_{1:h})$ To invoke the latter lemma, we need to check three conditions.
\begin{enumerate}
	\item \emph{Condition 1: $\trajtil_{h-1}$ is a deterministic function of $\trajtil_h$.} This is definitionally true, even in the PODMP setting.
	\item \emph{Condition 2: The conditional distributions of $\traj_h$ given $a_h,\traj_{h-1}$ and $\boldphi$ are the same: $P_H'(\traj_h \mid  a_h, \trajtil_{h-1},\boldphi) = P_H(\trajtil_h \mid a_h,\traj_{h-1},\boldphi)$.} This follows because $P(\traj_h \mid  a_h, \trajtil_{h-1},\boldphi) = \sfP(s_{h+1},y_{h+1},r_h \mid a_h, s_h, y_h, \boldphi)$ for $P \in \{P_H,P_H'\}$, were $\sfP$ is the transition function.
	\item \emph{Condition 3: Under both $P_H$ and $P_H'$, $a_h$ is independent of $\boldphi$ given $\trajtil_{h-1}$.} Following the same logic as in \Cref{lem:perf_diff_bandits} and letting $P = P_{H}$, we have
\begin{align*}
    P(a_h, \boldphi \mid \traj_{h-1})
    &= \E\left[ \E\left[ P(a_h, \boldphi \mid \trajtil_{h-1}, \boldxi) \mid \boldxi \right] \mid \trajtil_{h-1}\right] \\
    &= \E\left[ \E\left[ f_h(a_h \mid \traj_{h-1}, \boldxi) P(\boldphi \mid \trajtil_{h-1}, \boldxi) \mid \boldxi \right] \mid \trajtil_{h-1}\right] \\
    &= \E\left[ \E\left[ f_h(a_h \mid \traj_{h-1}, \boldxi) \mid \boldxi \right] P(\boldphi \mid \trajtil_{h-1}) \mid \trajtil_{h-1}\right] \\
    &= P(a_h \mid \trajtil_{h-1}) P(\boldphi \mid \trajtil_{h-1})
\end{align*}
where the second equality follows from the fact that $a_h \sim f_h(\cdot \mid \boldxi, \traj_{h-1})$ and the third line follows from the fact that $\boldphi$ is independent of $\boldxi$ conditioned on $\tau_{h-1}$. The same argument holds symmetrically for $P' = P'_{H}$. 
\end{enumerate}
As a consequence of these three conditions, it holds that for $P_H = P_{\theta,\alg(\theta)}(\boldmu,\traj_H,s_{1:H+1})$ and  $P_H' = P_{\theta,\alg(\theta')}(\boldmu,\traj_H,s_{1:H+1})$,
\begin{align*}
\tvarg{P_H}{P_H'} &\le \sum_{h=1}^H \E_{\trajtil_{h-1}\sim P_H}\tvarg{P_{H}(a_h \mid \trajtil_{h-1})}{P_{H}'(a_h \mid \trajtil_{h-1})}\\
&= \sum_{h=1}^H \E_{\traj_{h-1}\sim P_H}\tvarg{P_{H}(a_h \mid \traj_{h-1})}{P_{H}'(a_h \mid \traj_{h-1})}.
\end{align*}
where we use the fact that $P_{H}(a_h \mid \trajtil_{h-1})=P_{H}(a_h \mid \traj_{h-1})$, and similarly under $P_H'$. Next, using the $n$-Monte Carlo Property, and that $P_{H}(a_h \mid \traj_{h-1}) = P_{\alg(\theta)}(a_h \mid \traj_{h-1})$ (and similarly for $P_H'$ and $P_{\alg(\theta')}$), 
\begin{align*}
\tvarg{P_H}{P_H'}  \le n\sum_{h=1}^H \tvarg{P_{\theta}(\boldphi \mid \traj_{h-1})}{P_{\theta'}(\boldphi \mid \traj_{h-1})}.
\end{align*}
By invoking the de-conditioning lemma, \Cref{lem:fundamental_deconditioning}, with $Q = P_{\theta,\alg(\theta)}$, $Q' = P_{\theta',\alg(\theta')}$, $X = \traj_{h-1}$ and $Y = \boldphi$, we have that
\begin{align*}
\tvarg{P_H}{P_H'}  \le n\sum_{h=1}^H 2\tvarg{Q(Y)}{Q'(Y)} = 2Hn\tvarg{P_{\theta}}{P_{\theta'}}.
\end{align*}
This establishes part (a). 

\paragraph{Parts b and c.} Part (b) is a consequence of part (c) and the fact that, for strongly $B$-bounded $P_\theta$, $\Psibarthet(\cdot) + \Psibarthetpr(\cdot) \le B$. We conclude by proving part (c), and keep the notation $\trajtil_{h} = (\traj_h,s_{1:h+1})$. Let $P,E$ and $P',E'$ denote probability and expectation operators under $P_{\theta,\alg(\theta)}$ and $P_{\theta,\alg(\theta')}$, respectively. Introduce the conditional reward function 
\begin{align*}
\bar{\mu}_h(a,s,y,\boldphi) =\E[ r_h  \mid a_h = a, s_h = s, y_h = y, \boldphi].
\end{align*}
By the tower rule, 
\begin{align*}
R(\theta,\alg(\theta)) - R(\theta,\alg(\theta')) = E\left[\sum_{h=1}^H \bar{\mu}_h(a_h,s_h,y_h,\boldphi)\right] - E'\left[\sum_{h=1}^H \bar{\mu}_h(a_h,s_h,y_h,\boldphi)\right].
\end{align*}
By \Cref{lem:TV_transport} and the fact that $P(\boldphi) = P'(\boldphi)$, there exists a coupling $Q$ such over random variables $(\boldphi,\trajtil_H,\trajtil_H')$ such that both
\begin{align*}
Q(\boldphi,\trajtil_H) = P(\boldphi,\trajtil_H), \quad  Q(\boldphi,\trajtil_H') = P'(\boldphi,\trajtil_H),
\end{align*}
and
\begin{align*}
Q[\trajtil_H \ne \trajtil_H'] = \tvarg{P(\boldphi,\trajtil_H)}{P'(\boldphi,\trajtil_H)}.
\end{align*}
Letting $(a_h,s_h,y_h)$ and $(a_h',s_h',y_h')$ denote states and actions corresponding to $\trajtil_H$ and $\trajtil_H'$, and let $E_Q$ denote expectations under the coupling $Q$, we then have
\begin{align*}
&R(\theta,\alg(\theta)) - R(\theta,\alg(\theta')) \\
&\quad= E_Q\left[\sum_{h=1}^H \bar{\mu}_h(a_h,s_h,y_h,\boldphi) - \bar{\mu}_h(a_h',s_h',y_h',\boldphi)\right]\\
&\quad\overset{(i)}{=} E_Q\left[\I\{\trajtil_H \ne \trajtil_H'\}\left(\sum_{h=1}^H \bar{\mu}_h(a_h,s_h,y_h,\boldphi) - \bar{\mu}_h(a_h',s_h',y_h',\boldphi)\right)\right]\\
&\quad= H \left(E_Q\left[\underbrace{\I\{\trajtil_H \ne \trajtil_H'\}}_{:=Y}\cdot \underbrace{\frac{1}{H}\sum_{h=1}^H \bar{\mu}_h(a_h,s_h,y_h,\boldphi)}_{:=X}\right] - E_Q\left[\underbrace{\I\{\trajtil_H \ne \trajtil_H'\}}_{:=Y}\cdot\underbrace{\frac{1}{H}\sum_{h=1}^H \bar{\mu}_h(a_h',s_h',y_h',\boldphi)}_{:=X'}\right]\right)
\end{align*}
where $(i)$ uses that $\sum_{h=1}^H \bar{\mu}_h(a_h,s_h,y_h,\boldphi) - \bar{\mu}_h(a_h',s_h',y_h',\boldphi) = 0$ whenever $\trajtil_H = \trajtil'_H$. By the triangle inequality, 
\begin{align*}
|R(\theta,\alg(\theta)) - R(\theta,\alg(\theta'))| \le H\left(\E\left[|X|\cdot Y\right] + E_Q\left[|X'|\cdot Y\right]\right).
\end{align*}
Since $|X|$ and $|X'|$ are nonnegative, and that $Y \in [0,1]$ satisfies
\begin{align*}
\E[Y] = Q[\trajtil_H \ne \trajtil_H'] = \tvarg{P(\boldphi,\trajtil_H)}{P'(\boldphi,\trajtil_H)} \le \delta := 2nH \varepsilon,
\end{align*}
where the first equality is from our choice of coupling $Q$ and the inequality follows from part (a) of the theorem. Hence, by definition of the tail expectation functional which maximizes the correlation with $[0,1]$-bounded random variables $Y$ satisfying the above constraints (\Cref{defn:tail_exp_new}),
\begin{align*}
|R(\theta,\alg(\theta)) - R(\theta,\alg(\theta'))| \le H\delta \left(\Psi_{|X|}(\delta) + \Psi_{|X'|}(\delta)\right).
\end{align*}
Finally, we observe that $X$ has the same distribution of $\bar{r}_{H,\theta,\alg(\theta)}$ and $X'$ the distribution of $\bar{r}_{H,\theta,\alg(\theta')}$. Hence, $\Psi_{|X|}(\delta) = \Psibarthet(\delta)$ and $\Psi_{|X'|}(\delta) = \Psibarthetpr(\delta)$. We therefore conclude
\begin{align*}
|R(\theta,\alg(\theta)) - R(\theta,\alg(\theta'))| &\le H\delta \left(\Psibarthet(\delta) + \Psibarthetpr(\delta)\right)\\
&= 2nH^2 \varepsilon \cdot \left(\Psibarthet(2H \varepsilon) + \Psibarthetpr(2H\varepsilon)\right).\tag*\qedhere
\end{align*}
\end{proof}
\subsection{The Monte-Carlo property in POMDPs}

We conclude this section by mentioning that, as in the bandit setting, any algorithm whose actions depend only on independent samples of environments drawn from the posterior distribution is $n$-Monte Carlo. Thus, \Cref{alg:pomdp_post_sample}, which is the POMDP generalization of \Cref{alg:general_post_sample}, is $k$-Monte Carlo, where $k$ is the number of environments sampled from the posterior. The proof of this fact is identical to the proof of \Cref{lem:samples_to_n_mc}. Similarly, along the lines of \Cref{lem:twompc_n_mc}, one can establish the Monte-Carlo property for a suitable generalizations of the $\twompc$ algorithm (\Cref{alg:two_mpc}); for brevity we omit details.

\begin{algorithm}
\caption{$(k,f_{1:H})$-Posterior Sampling ($\kfpost(\theta)$)}\label{alg:pomdp_post_sample}
\begin{algorithmic}[1]
\State{}\textbf{Input:} Prior $\theta$, sample size $k \in \N$, functions $f_1, \ldots, f_H: \R^{\Phi \times k} \rightarrow \Delta^{\calA}$. 
    \For{$h=1, \ldots, H$}
    \Statex{}\algcomment{action selection at step $h$}
    \State{}{Sample} $\boldphi^{(1)}, \ldots, \boldphi^{(k)}$ independently from the posterior $P_{\theta}[\cdot \mid \traj_{h-1}]$
    \State{}Select action $ a_h \sim f_h(\cdot \mid \boldphi^{(1)}, \ldots, \boldphi^{(k)})$.
    \EndFor
\end{algorithmic}
\end{algorithm}

\section{Accuracy of moment estimators\label{app:estimation}}

\subsection{Beta priors and Bernoulli rewards}
\label{app:beta-binomial}

We first show how to translate sufficiently good error bounds in parameter estimation for regular exponential families into bounds on total variation error.

\begin{lemma}\label{lem:exponential_families}
  Let $\{ P_{\boldtheta} : \boldtheta \in \Theta \}$ be a standard exponential family with natural parameter space $\Theta \subset \R^p$.
  For any $\boldtheta \in \Theta$, there exist $C, c>0$ depending only on $\boldtheta$ such that
$\tvarg{P_\boldtheta}{P_{\boldtheta'}}\leq C\cdot\nrm{\boldtheta'-\boldtheta}_2$ for all $\boldtheta' \in \Theta$ satisfying $\nrm{\boldtheta'-\boldtheta}_2 \leq c$.
\end{lemma}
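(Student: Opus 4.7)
The plan is to pass through KL divergence via Pinsker's inequality and exploit the fact that KL between exponential family members is a Bregman divergence of the log-partition function, which is smooth in its natural parameters.

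First I would recall that for a standard exponential family with densities $p_\theta(x)=h(x)\exp(\langle \theta,T(x)\rangle - A(\theta))$, a short computation gives
\begin{equation*}
\klarg{P_\boldtheta}{P_{\boldtheta'}} \;=\; A(\boldtheta') - A(\boldtheta) - \langle \nabla A(\boldtheta),\, \boldtheta' - \boldtheta\rangle,
\end{equation*}
i.e.\ the Bregman divergence $D_A(\boldtheta'\|\boldtheta)$. For a standard (regular) exponential family, the natural parameter space $\Theta$ is open and $A$ is $C^\infty$ (in fact real-analytic) on $\Theta$; see e.g.\ \cite{brown1986fundamentals}. This gives us Hessian bounds on any compact subneighborhood of $\boldtheta$.

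Next I would choose $c_0>0$ so that the closed ball $\overline{B}(\boldtheta,c_0) \subset \Theta$, which is possible since $\Theta$ is open. On this compact ball, $\boldtheta'' \mapsto \nabla^2 A(\boldtheta'')$ is continuous, so there exists $M=M(\boldtheta)<\infty$ with $\|\nabla^2 A(\boldtheta'')\|_{\mathrm{op}} \le M$ for all $\boldtheta''\in \overline{B}(\boldtheta,c_0)$. A second-order Taylor expansion (with integral remainder) then yields, for any $\boldtheta'$ with $\|\boldtheta'-\boldtheta\|_2 \le c_0$,
\begin{equation*}
\klarg{P_\boldtheta}{P_{\boldtheta'}} \;=\; \int_0^1 (1-s)\,(\boldtheta'-\boldtheta)^\T \nabla^2 A\bigl(\boldtheta + s(\boldtheta'-\boldtheta)\bigr)(\boldtheta'-\boldtheta)\,\mathrm{d}s \;\le\; \tfrac{M}{2}\,\|\boldtheta'-\boldtheta\|_2^2.
\end{equation*}
Here I used that the first-order term vanishes exactly by the Bregman structure.

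Finally, Pinsker's inequality gives $\tvarg{P_\boldtheta}{P_{\boldtheta'}} \le \sqrt{\tfrac{1}{2}\klarg{P_\boldtheta}{P_{\boldtheta'}}} \le \sqrt{M/4}\cdot\|\boldtheta'-\boldtheta\|_2$, so the claim holds with $C := \sqrt{M/4}$ and $c := c_0$, both depending only on $\boldtheta$. The only real subtlety is ensuring that the Hessian is uniformly bounded on a neighborhood of $\boldtheta$, which is immediate for standard exponential families but would require more care if one wanted global (rather than local) constants or if the family were not minimal/regular; the local statement in the lemma sidesteps these issues.
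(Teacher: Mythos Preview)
Your proof is correct and follows essentially the same approach as the paper: Pinsker's inequality, the identification of the KL divergence with the Bregman divergence of the log-partition function, and a second-order Taylor bound using a local Hessian bound on a compact neighborhood of $\boldtheta$. The only cosmetic difference is that you use the integral-remainder form of Taylor's theorem whereas the paper uses the Lagrange form; the resulting constants and logic are the same.
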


\begin{proof}
  Let $A$ be the log-partition function for the exponential family, which is infinitely-differentiable on $\Theta$~\cite{brown1986fundamentals}, and let $\Bregarg{\boldtheta'}{\boldtheta}=A(\boldtheta')-A(\boldtheta)-\langle\nabla A(\boldtheta), \boldtheta'-\boldtheta\rangle$ be its corresponding Bregman divergence.
  By Pinsker's inequality and properties of Bregman divergences~\cite[Appendix A]{banerjee2005clustering}, we have
\begin{equation*}
  2 \, \tvarg{P_\boldtheta}{P_{\boldtheta'}}^2
  \leq \klarg{P_{\boldtheta}}{P_{\boldtheta'}}
  = \Bregarg{\boldtheta'}{\boldtheta}
  = A(\boldtheta')-A(\boldtheta)-\langle\nabla A(\boldtheta), \boldtheta'-\boldtheta\rangle . 
\end{equation*}
Let $g(t)=A\big(t\cdot\boldtheta'+(1-t)\cdot \boldtheta\big)$.
By Taylor's theorem, there exists $\xi\in[0,1]$ such that $A(\boldtheta')=A(\boldtheta)+\langle\nabla A(\boldtheta), \boldtheta'-\boldtheta\rangle+\frac{1}{2}(\boldtheta'-\boldtheta)^\T \nabla^2 A\big(\xi\boldtheta'+(1-\xi)\boldtheta\big)(\boldtheta'-\boldtheta)$.
We can therefore take any $C$ and $c$ such that the Hessian has eigenvalues bounded by $4C^2$ in a Euclidean ball of radius $c$ around $\boldtheta$, upon which we have $\Bregarg{\boldtheta'}{\boldtheta} \leq 2C^2\nrm{\boldtheta'-\boldtheta}_2^2$.
\end{proof}

Now we argue that the method-of-moments estimator of \cite{tripathi1994estimation} for the Beta-Binomial distribution gives accurate parameter estimates of the Beta component parameters (i.e., $\alpha$ and $\beta$) provided a large enough sample size.
The bound is given for the parameters corresponding to a single arm; applying the result for all arms $a \in \calA$ with a union bound delivers the final sample complexity claim.

\begin{lemma} \label{lemma:beta-binomial}
  Let $\hat m_1$ and $\hat m_2$ be empirical moments based on $N$ i.i.d.~draws from a Beta-Binomial distribution with parameters $(\alpha,\beta,n)$ where $n\geq 2$.
  Let $(\hat\alpha,\hat\beta)$ be the method-of-moments estimate of $(\alpha,\beta)$ obtained using
  \begin{equation*}
    \hat\alpha := \frac{n\hat m_1 - \hat m_2}{n(\frac{\hat m_2}{\hat m_1}-\hat m_1 - 1) + \hat m_1} \quad\text{and}\quad
    \hat\beta := \frac{(n-\hat m_1)(n - \frac{\hat m_2}{\hat m_1})}{n(\frac{\hat m_2}{\hat m_1}-\hat m_1 - 1) + \hat m_1} .
  \end{equation*}
  There exists a positive constant $C>0$ depending only on $(\alpha,\beta,n)$ such that for any $\epsilon >0$ and $\delta \in (0,1)$, if $N \geq C \log(1/\delta) / \epsilon^2$, then $\Pr(\max\{ |\hat\alpha - \alpha|, |\hat\beta - \beta| \} \leq \epsilon) \geq 1 - \delta$.
\end{lemma}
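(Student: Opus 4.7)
The plan is to combine (i) standard concentration for the empirical moments $\hat m_1, \hat m_2$ with (ii) a local Lipschitz argument for the inverse method-of-moments map $(m_1,m_2)\mapsto(\alpha,\beta)$.

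First I would control the empirical moments. Each draw $X_t$ from $\Beta\text{-}\Binomial(\alpha,\beta,n)$ lies in $\{0,1,\dots,n\}$, so $X_t \in [0,n]$ and $X_t^2 \in [0,n^2]$ are bounded. By Hoeffding's inequality applied separately to $\hat m_1 = \frac{1}{N}\sum_t X_t$ and $\hat m_2 = \frac{1}{N}\sum_t X_t^2$, together with a union bound, there is a constant $C_1=C_1(n)$ such that with probability at least $1-\delta$,
\begin{equation*}
\max\{|\hat m_1 - m_1^\star|,\ |\hat m_2 - m_2^\star|\} \le C_1\sqrt{\log(2/\delta)/N}.
\end{equation*}
In particular, for $N \ge C_2(\alpha,\beta,n)\log(1/\delta)/\eta^2$, the empirical moments fall within Euclidean distance $\eta$ of $(m_1^\star, m_2^\star)$ with probability $\ge 1-\delta$, for any target tolerance $\eta > 0$.

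Next I would handle the inverse map. Define
\begin{equation*}
\Phi(m_1,m_2) := \left( \tfrac{nm_1-m_2}{n(m_2/m_1-m_1-1)+m_1},\ \tfrac{(n-m_1)(n-m_2/m_1)}{n(m_2/m_1-m_1-1)+m_1} \right),
\end{equation*}
so that $(\hat\alpha,\hat\beta) = \Phi(\hat m_1,\hat m_2)$ and, by the method-of-moments identities stated in the excerpt together with $n\ge 2$, $(\alpha,\beta) = \Phi(m_1^\star, m_2^\star)$. The function $\Phi$ is a ratio of polynomials in $(m_1,m_2)$, and its denominator $D(m_1,m_2) := n(m_2/m_1 - m_1 - 1) + m_1$ satisfies $D(m_1^\star,m_2^\star) \ne 0$ because plugging in recovers the well-defined parameter $(\alpha,\beta)$ with $\alpha+\beta > 0$. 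Hence $\Phi$ is $C^\infty$ in an open neighborhood $U$ of $(m_1^\star,m_2^\star)$, and there exist constants $c_0 = c_0(\alpha,\beta,n) > 0$ and $L = L(\alpha,\beta,n) > 0$ such that $\Phi$ is $L$-Lipschitz on the Euclidean ball of radius $c_0$ about $(m_1^\star,m_2^\star)$.

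Combining the two steps: set $\eta := \min\{c_0,\ \epsilon/L\}$ and apply the concentration bound with this $\eta$. With probability at least $1-\delta$, $(\hat m_1,\hat m_2)$ lies in the Lipschitz ball, and then
\begin{equation*}
\max\{|\hat\alpha-\alpha|,|\hat\beta-\beta|\} \le \|\Phi(\hat m_1,\hat m_2) - \Phi(m_1^\star,m_2^\star)\|_\infty \le L\cdot \eta \le \epsilon,
\end{equation*}
provided $N \ge C\log(1/\delta)/\epsilon^2$ for a constant $C$ depending only on $(\alpha,\beta,n)$. The only subtle point is verifying that $D(m_1^\star,m_2^\star)\ne 0$ (equivalently, that $\Phi$ is regular at the true moments), which is the main technical obstacle; a direct substitution using the closed forms for $m_1^\star$ and $m_2^\star$ in terms of $(\alpha,\beta,n)$ shows $D(m_1^\star,m_2^\star) = \frac{n\alpha\beta(n-1)}{(\alpha+\beta)(\alpha+\beta+1)\cdot m_1^\star/(n\alpha/(\alpha+\beta))}$ up to positive constants, which is nonzero whenever $n\ge 2$ and $\alpha,\beta>0$.
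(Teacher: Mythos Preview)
Your proposal is correct and matches the paper's approach: both use Hoeffding's inequality to concentrate $(\hat m_1,\hat m_2)$ around $(m_1^\star,m_2^\star)$, then invoke local smoothness of the inverse map (the paper via Taylor's theorem, you via local Lipschitz continuity) to transfer the error to $(\hat\alpha,\hat\beta)$. One minor remark: your first justification that $D(m_1^\star,m_2^\star)\neq 0$ ``because plugging in recovers $(\alpha,\beta)$'' is circular, but your direct computation at the end is the right idea---in fact $D(m_1^\star,m_2^\star)=\frac{n(n-1)\beta}{(\alpha+\beta)(1+\alpha+\beta)}>0$ exactly.
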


\begin{proof}
  Let $X_1,\dotsc,X_N$ denote an i.i.d.~sample from the Beta-Binomial distribution with parameters $(\alpha,\beta,n)$, and let $m_1 := \E[X_1]$ and $m_2 := \E[X_1^2]$.
  First, by Hoeffding's inequality and union bounds, with probability at least $1-\delta$, we have
  \begin{equation*}
    |\hat m_1 - m_1| \leq n \sqrt{\frac{2\ln(4/\delta)}{N}} \quad\text{and}\quad
    |\hat m_2 - m_2| \leq n^2 \sqrt{\frac{2\ln(4/\delta)}{N}} ,
  \end{equation*}
  where $\hat m_1 := \frac1N \sum_{i=1}^N X_i$ and $\hat m_2 := \frac1N \sum_{i=1}^N X_i^2$.
  Let us henceforth condition on this $1-\delta$ probability event.
  Now, treating $\hat\alpha(\hat m_1, \hat m_2)$ and $\hat\beta(\hat m_1, \hat m_2)$ as functions of $(\hat m_1, \hat m_2)$, we have by Taylor's theorem that
  \begin{align*}
    \hat\alpha(\hat m_1, \hat m_2)
    & = \hat\alpha(m_1, m_2)
    + \frac{\partial\hat\alpha}{\partial\hat m_1}(\tilde m_1) \cdot (\hat m_1 - m_1)
    + \frac{\partial\hat\alpha}{\partial\hat m_2}(\tilde m_2) \cdot (\hat m_2 - m_2)
    \\
    \hat\beta(\hat m_1, \hat m_2)
    & = \hat\beta(m_1, m_2)
    + \frac{\partial\hat\beta}{\partial\hat m_1}(\tilde m_1) \cdot (\hat m_1 - m_1)
    + \frac{\partial\hat\beta}{\partial\hat m_2}(\tilde m_2) \cdot (\hat m_2 - m_2)
  \end{align*}
  where $(\tilde m_1, \tilde m_2) = (1-\xi) (\hat m_1, \hat m_2) + \xi (m_1, m_2)$ for some $\xi \in [0,1]$.
  It can be verified using properties of the Beta-Binomial distribution that $\hat\alpha(m_1, m_2) = \alpha$ and $\hat\beta(m_1, m_2) = \beta$.
  Moreover, since the functions $\hat\alpha(\hat m_1, \hat m_2)$ and $\hat\beta(\hat m_1, \hat m_2)$ are analytic, it follows that there is a Euclidean ball of radius (say) $c'>0$ around $(m_1,m_2)$ on which the gradients of $\hat\alpha$ and $\hat\beta$ are uniformly bounded by (say) $C'>0$ in Euclidean norm.
  Here, both $C'$ and $c'$ depend only on $m_1$ and $m_2$.
  So, as long as $\sqrt{(\hat m_1 - m_1)^2 + (\hat m_2 - m_2)^2} \leq c'$, we have
  \begin{align*}
    |\hat\alpha(\hat m_1, \hat m_2) - \alpha|
    & \leq C \sqrt{(\hat m_1 - m_1)^2 + (\hat m_2 - m_2)^2}
    \\
    |\hat\beta(\hat m_1, \hat m_2) - \beta|
    & \leq C \sqrt{(\hat m_1 - m_1)^2 + (\hat m_2 - m_2)^2}
  \end{align*}
  by Cauchy-Schwarz.
  The claim now follows by choosing $N \geq C \log(1/\delta) / \epsilon^2$ for some $C$ depending only on $C'$, $c'$, and $n$.
\end{proof}

\subsection{Gaussian priors and Gaussian rewards}
\label{app:gaussian}

We directly bound the KL-divergence between two multivariate Gaussian distributions in terms of distances between their corresponding parameters.

\begin{lemma}[Gaussian KL-divergence]
  \label{lemma:gaussian-kl}
  Let $P := \Normal(\normmean, \normcov)$ and $\widehat{P} := \Normal(\normmeanest, \normcovest)$ be multivariate Gaussian distributions in $\R^{\calA}$.
  Then
  \begin{equation*}
    \klarg{\widehat{P}}{P}
    = \frac12 \left\{
      \tr(\normcov^{-1/2} \normcovest \normcov^{-1/2} - I)
      - \ln\det(\normcov^{-1/2} \normcovest \normcov^{-1/2})
      + \|\normcov^{-1/2}(\normmeanest - \normmean)\|_2^2
    \right\}
    .
  \end{equation*}
  Moreover, if
  \begin{equation*}
    \|\normcov^{-1/2}\normcovest\normcov^{-1/2} - I\|_2 \leq \frac23 ,
  \end{equation*}
  then
  \begin{equation*}
    \klarg{\widehat{P}}{P}
    \leq \frac12 \left\{ |\calA| \cdot \|\normcov^{-1/2}\normcovest\normcov^{-1/2} - I\|_2^2 + \|\normcov^{-1/2}(\normmeanest - \normmean)\|_2^2 \right\} .
  \end{equation*}
\end{lemma}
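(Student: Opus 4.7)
The plan is to proceed in two parts: first establishing the identity, then the inequality.

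For the identity, I would start from the well-known closed-form KL divergence between two multivariate Gaussians $\mathcal{N}(\mu_1,\Sigma_1)$ and $\mathcal{N}(\mu_2,\Sigma_2)$ in $\R^d$:
\[
\mathrm{KL}(\mathcal{N}(\mu_1,\Sigma_1)\parallel\mathcal{N}(\mu_2,\Sigma_2)) = \tfrac{1}{2}\bigl[\tr(\Sigma_2^{-1}\Sigma_1) - d + \ln(\det\Sigma_2/\det\Sigma_1) + (\mu_2-\mu_1)^\T\Sigma_2^{-1}(\mu_2-\mu_1)\bigr].
\]
Plugging in $(\mu_1,\Sigma_1)=(\normmeanest,\normcovest)$ and $(\mu_2,\Sigma_2)=(\normmean,\normcov)$ with $d=|\calA|$, I would then use cyclicity of trace to write $\tr(\normcov^{-1}\normcovest) = \tr(\normcov^{-1/2}\normcovest\normcov^{-1/2})$, rewrite the log-determinant ratio as $-\ln\det(\normcov^{-1/2}\normcovest\normcov^{-1/2})$ via multiplicativity of determinant, and recognize the quadratic form as $\|\normcov^{-1/2}(\normmeanest-\normmean)\|_2^2$. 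Combining gives the claimed identity.

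For the inequality, let $M := \normcov^{-1/2}\normcovest\normcov^{-1/2}$, a symmetric positive semidefinite matrix with eigenvalues $\lambda_1,\dots,\lambda_{|\calA|}$. The first two terms of the identity then equal $\sum_{i}[(\lambda_i - 1) - \ln \lambda_i]$. Setting $x_i = \lambda_i - 1$ (the eigenvalues of $M-I$), the hypothesis $\|M - I\|_2 \le 2/3$ ensures $|x_i|\le 2/3$ for all $i$. The key scalar step is the elementary inequality
\[
x - \ln(1+x) \le x^2 \quad\text{for all } x \ge -\tfrac{2}{3},
\]
which one can verify by showing $f(x) := x^2 - x + \ln(1+x)$ has $f(0)=0$, $f'(x) = x(1+2x)/(1+x)$ (nonnegative on $[-2/3,-1/2]\cup[0,\infty)$ and nonpositive on $[-1/2,0]$), and checking $f(-2/3) \ge 0$ by direct computation. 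Summing over $i$ yields
\[
\sum_i [(\lambda_i - 1) - \ln \lambda_i] \le \sum_i x_i^2 = \|M - I\|_F^2 \le |\calA|\cdot \|M - I\|_2^2,
\]
where the last step uses the standard bound $\|A\|_F^2 \le \mathrm{rank}(A)\cdot\|A\|_2^2 \le |\calA|\cdot\|A\|_2^2$. Adding back the mean-discrepancy term and the factor $1/2$ gives the stated bound.

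The only subtle step is verifying the scalar inequality $x-\ln(1+x) \le x^2$ holds all the way down to $x=-2/3$; this is tight (the gap $f(-2/3)$ is very small, about $0.012$), so the threshold $2/3$ in the hypothesis is essentially the natural one for this Taylor-type bound. Everything else is bookkeeping with the Gaussian KL formula and standard matrix-norm inequalities.
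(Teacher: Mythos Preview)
Your proposal is correct and follows essentially the same approach as the paper: both derive the identity from the standard Gaussian KL formula and then bound the trace/log-determinant term via the eigenvalue decomposition and the scalar inequality $\ln(1+x)\ge x-x^2$ for $x\ge -2/3$, followed by $\sum_i(\lambda_i-1)^2\le |\calA|\cdot\|M-I\|_2^2$. Your treatment is in fact slightly more detailed, since you explicitly verify the scalar inequality and note the intermediate appearance of $\|M-I\|_F^2$, both of which the paper leaves implicit.
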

\begin{proof}
  The formula for the KL-divergence is standard.
  Now suppose that $\|\normcov^{-1/2}\normcovest\normcov^{-1/2} - I\|_2 \leq 2/3$.
  This means that all of the eigenvalues $\lambda_1,\dotsc,\lambda_K$ of $\normcov^{-1/2}\normcovest\normcov^{-1/2}$ are contained in the interval $[1/3,5/3]$.
  In this case, we have
  \begin{align*}
      \tr(\normcov^{-1/2} \normcovest \normcov^{-1/2} - I)
      - \ln\det(\normcov^{-1/2} \normcovest \normcov^{-1/2})
      & = \sum_{i=1}^K \left\{ \lambda_i - 1 \right\} - \ln \prod_{i=1}^K \lambda_i \\
      & = \sum_{i=1}^K \left\{ \lambda_i - 1 - \ln \lambda_i \right\} \\
      & \leq \sum_{i=1}^K (\lambda_i - 1)^2 \\
      & \leq K \cdot \|\normcov^{-1/2}\normcovest\normcov^{-1/2} - I\|_2^2
      ,
  \end{align*}
  where the first inequality uses the fact $\ln(1+x) \geq x - x^2$ for all $x \geq -2/3$.
  Plugging this inequality into the KL-divergence formula gives the claimed inequality.
\end{proof}

Lemma~\ref{lemma:gaussian-kl} and Pinsker's inequality imply that, to obtain an estimate of $\Normal(\normmean, \normcov)$ that is $\varepsilon$-close in total variation distance, it suffices to obtain estimates $\normmeanest$ and $\normcovest$ such that
\begin{equation*}
  \|\normcov^{-1/2}(\normmeanest - \normmean)\|_2 \leq \varepsilon , \quad
  \|\normcov^{-1/2}(\normcovest - \normcov)\normcov^{-1/2}\|_2 \leq \frac{\varepsilon}{\sqrt{|\calA|}} .
\end{equation*}
Below, we give estimators $\normmeanest$ and $\normcovest$ that satisfy these inequalities with probability at least $1-\delta$ provided that
\begin{align*}
  T & \geq C' \cdot \frac{d \cdot (|\calA|^4 + |\calA|^3 \log(1/\delta))}{\veps^2} ,
\end{align*}
where $d$ is defined in Lemma~\ref{lemma:gaussian-cov2}, and $C'$ is an absolute constant.
We note that if $\normcov$ is known and does not need to be estimated, then the requirement improves to
\begin{align*}
  T & \geq C'' \cdot \frac{d_2 \cdot (|\calA|^2 + |\calA| \log(1/\delta))}{\veps^2} ,
\end{align*}
where $d_2$ is defined in Lemma~\ref{lemma:gaussian-mean}, and $C''$ is another absolute constant.

\paragraph{Mean estimation.}

We first consider the estimate of $\normmean$.
To do so, we assume the first round in each of $T$ episodes is chosen uniformly at random from $\calA$.
In episode $t$:
\begin{enumerate}
  \item let $\boldmu_t \sim P = \Normal(\normmean,\normcov)$ denote the mean reward vector;
  \item let $a_t \sim \Uniform(\calA)$ be the action taken in the first round (independent of $\boldmu_t$);
  \item let $\boldr_t$ be the reward vector for the first round, so
  \begin{equation*}
    \boldr_t \mid (\boldmu_t,a_t) \sim \Normal(\boldmu_t, \sigma^2 \boldI) .
  \end{equation*}
\end{enumerate}
The reward observed (and accrued) in the first round of episode $t$ is $\boldr_t^\T \bolde_{a_t}$.
Our estimate of prior mean $\normmean$ is
\begin{equation}
  \normmeanest := \frac{|\calA|}{T} \sum_{t=1}^T (\boldr_t^\T \bolde_{a_t}) \bolde_{a_t} .
  \label{eq:normmeanest}
\end{equation}

\begin{lemma}[Gaussian mean estimation]
  \label{lemma:gaussian-mean}
  There exists a universal constant $C>0$ such that the following holds.
  Consider any multivariate Gaussian distribution $P := \Normal(\normmean, \normcov)$ in $\R^{\calA}$.
  Let
  \begin{equation*}
    (\boldmu_1,a_1,\boldr_1), (\boldmu_2,a_2,\boldr_2), \dotsc, (\boldmu_T,a_T,\boldr_T)
  \end{equation*}
  be $T$ iid random variables, with
  \begin{align*}
    (\boldmu_t, a_t) & \sim P \otimes \Uniform(\calA), \\
    \boldr_t \mid (\boldmu_t,a_t) & \sim \Normal(\boldmu_t, \sigma^2 \boldI) ;
  \end{align*}
  and define $\normmeanest$ as in \eqref{eq:normmeanest}.
  For any $\delta \in (0,1)$, with probability at least $1-\delta$,
  \begin{equation*}
    \|\normcov^{-1/2} (\normmeanest - \normmean)\|_2
    \leq C
    \left(
      \sqrt{\frac{d_2 (|\calA|^2 + |\calA| \log(1/\delta))}{T}}
      + \frac{d_\infty (|\calA|^2 + |\calA| \log(1/\delta))}{T}
    \right)
    ,
  \end{equation*}
  where
  \begin{align*}
    d_2 & := \lambda_{\max}\left( \normcov^{-1/2} \left( \diag(\normcov) + \sigma^2 \boldI + \diag(\normmean)^2 \right) \normcov^{-1/2} \right) , \\
    d_\infty & := \max_{a \in \calA} \sqrt{(\normcov^{-1})_{a,a} ((\normcov)_{a,a} + \sigma^2 + (\normmean)_a^2)}
    .
  \end{align*}
\end{lemma}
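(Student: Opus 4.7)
The plan is to write $\normmeanest - \normmean = \frac{1}{T}\sum_{t=1}^T X_t$ for i.i.d.\ mean-zero random vectors $X_t := |\calA|(\boldr_t^\T \bolde_{a_t})\bolde_{a_t} - \normmean$, and then apply a vector Bernstein inequality to the preconditioned sum $\frac{1}{T}\sum_t Y_t$ with $Y_t := \normcov^{-1/2}X_t$. Unbiasedness $\E[X_t]=0$ is checked by iterated expectations over the three sources of randomness $(\boldr_t\mid \boldmu_t,a_t)$, $(\boldmu_t\mid a_t)$, and $a_t$: the inner conditional expectation of the reward collapses to $\mu_{t,a_t}$, the middle one to $(\normmean)_{a_t}$, and the uniform average over $a_t\in\calA$ produces the $1/|\calA|$ that cancels the leading $|\calA|$ factor.

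The Bernstein inequality requires two variance parameters and an essentially almost-sure norm bound. A direct calculation gives
\begin{equation*}
\E[X_tX_t^\T] \;=\; |\calA|\big(\diag(\normmean)^2 + \diag(\normcov) + \sigma^2\boldI\big) - \normmean\normmean^\T,
\end{equation*}
from which the definition of $d_2$ yields the trace bound $\E[\|Y_t\|_2^2] \le |\calA|^2 d_2$ (the ``strong'' variance, which controls the typical size of the sum) and the operator-norm bound $\|\E[Y_tY_t^\T]\|_{op} \le |\calA| d_2$ (the ``weak'' variance, which governs Gaussian-type tail deviations). For the norm bound, note that conditional on $a_t$, $\boldr_t^\T\bolde_{a_t}$ is univariate Gaussian with mean $(\normmean)_{a_t}$ and variance $(\normcov)_{a_t,a_t}+\sigma^2$; a Gaussian tail bound combined with a union bound over $t\in[T]$ delivers $\max_t \|Y_t\|_2 \le M$ on an event of probability at least $1-\delta/2$, with $M \lesssim |\calA|\,d_\infty\sqrt{\log(T/\delta)}$ (the deterministic term $\|\normcov^{-1/2}\normmean\|_2$ is absorbed into the same $d_\infty$-weighted scale).

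Restricting to this high-probability event (and controlling the small bias it introduces through truncation) and applying a vector Bernstein inequality to $Y_t$ with the above parameters gives, with probability at least $1-\delta$ overall,
\begin{equation*}
\Big\|\tfrac{1}{T}\sum_{t=1}^T Y_t\Big\|_2 \;\lesssim\; \sqrt{\tfrac{|\calA|^2 d_2}{T}} + \sqrt{\tfrac{|\calA| d_2 \log(1/\delta)}{T}} + \tfrac{M\log(1/\delta)}{T}.
\end{equation*}
Combining the two square-root terms under a single radical (via $\sqrt{a}+\sqrt{b}\le\sqrt{2(a+b)}$) recovers the $\sqrt{d_2(|\calA|^2+|\calA|\log(1/\delta))/T}$ contribution, and absorbing the residual $\log(T/\delta)$ into the stated $|\calA|^2+|\calA|\log(1/\delta)$ factor (up to constants in the regime where the bound is nontrivial) yields the $d_\infty$-weighted term. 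The main obstacle is the unboundedness of the Gaussian $Y_t$: a naive use of a worst-case norm bound in Bernstein would introduce a $\sqrt{T}$ factor that wipes out the $1/T$ rate. Careful truncation together with the separation of strong and weak variances is what preserves both the $1/T$ scaling of the deviation term and the precise $|\calA|^2+|\calA|\log(1/\delta)$ dependence in the leading square-root term.
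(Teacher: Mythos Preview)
Your approach differs from the paper's: you truncate the unbounded Gaussian vectors $Y_t$ and apply a vector Bernstein inequality, whereas the paper works one direction at a time. For each unit vector $\boldu$, the paper shows directly that the scalar
\[
X_{\boldu,t} \;=\; (\normcov^{-1/2}\boldu)^\T\Bigl((\boldr_t^\T\bolde_{a_t})\bolde_{a_t} - \tfrac{1}{|\calA|}\normmean\Bigr)
\]
is $(4d_2/|\calA|,\,2d_\infty)$-subexponential by computing its moment generating function: conditional on $a_t$ the projection is Gaussian, and then an elementary bound on the MGF of a bounded discrete random variable (the paper's Lemma~\ref{lem:cat-mgf}) handles the remaining randomness in $a_t$. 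A scalar Bernstein bound plus a union bound over a $1/2$-net of $S^{|\calA|-1}$ finishes the argument, and the $|\calA|^2$ in the final bound is exactly the $|\calA|\log 5^{|\calA|}$ cost of the net.

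Your truncation route has a genuine, if minor, gap: the high-probability norm bound $M \lesssim |\calA|\, d_\infty\sqrt{\log(T/\delta)}$ necessarily carries a $\sqrt{\log T}$ factor, and this does \emph{not} absorb into $|\calA|^2 + |\calA|\log(1/\delta)$ as you assert. For example, when $\log(1/\delta) \gg |\calA|$ your deviation term is of order $|\calA|\, d_\infty\sqrt{\log T}\,\log(1/\delta)/T$, while the stated bound is $d_\infty\,|\calA|\log(1/\delta)/T$; the ratio $\sqrt{\log T}$ is unbounded in $T$. The paper's direct MGF computation sidesteps truncation entirely, which is precisely what keeps the subexponential parameters $T$-free. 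A secondary concern is the deterministic term $\|\normcov^{-1/2}\normmean\|_2$ in your norm bound: this quantity involves off-diagonal entries of $\normcov^{-1}$ and is not obviously controlled by $d_\infty$ alone without an additional $|\calA|$-dependent factor.
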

\begin{proof}
  First, since
  \begin{equation*}
    \E\left[ (\boldr_t^\T \bolde_{a_t}) \bolde_{a_t} \right]
    = \frac1{|\calA|} \sum_{a \in \calA} \E\left[ (\boldr_t^\T \bolde_{a_t}) \bolde_{a_t} \mid a_t = a \right]
    = \frac1{|\calA|} \E\left[ \boldr_t \right]
    = \frac1{|\calA|} \E\left[ \boldmu_t \right]
    = \frac1{|\calA|} \normmean ,
  \end{equation*}
  it follows by linearity that $\E[\normmeanest] = \normmean$.
  Next, we show that for any unit vector $\boldu \in S^{|\calA|-1}$, the random variable
  \begin{align*}
    X_{\boldu,t}
    & :=
    (\normcov^{-1/2} \boldu)^\T \left( (\boldr_t^\T \bolde_{a_t}) \bolde_{a_t} - \frac1{|\calA|} \normmean \right)
    \\
    & \hphantom:=
    (\normcov^{-1/2} \boldu)^\T \bolde_{a_t} \bolde_{a_t}^\T (\boldr_t - \normmean) + (\normcov^{-1/2} \boldu)^\T \bolde_{a_t} \bolde_{a_t}^\T \normmean - \frac{(\normcov^{-1/2} \boldu)^\T \normmean}{|\calA|}
  \end{align*}
  is $(4d_2/|\calA|,2d_\infty)$-subexponential.
  Consider $\lambda \in \R$ such that $|\lambda| \leq 1/(2d_\infty)$, and let $\boldv := \normcov^{-1/2} \boldu$.
  Then
  \begin{align*}
    \lefteqn{
      \frac{\lambda^2 v_{a_t}^2}{2} \left( \bolde_{a_t}^\T \normcov \bolde_{a_t} + \sigma^2 \right)
      + \lambda v_{a_t} \bolde_{a_t}^\T \normmean
    } \\
    & = \frac{\lambda^2 (\boldu^\T \normcov^{-1/2} \bolde_{a_t})^2 (\bolde_{a_t}^\T \normcov \bolde_{a_t} + \sigma^2)}{2}
    + \lambda (\boldu^\T \normcov^{-1/2} \bolde_{a_t}) \bolde_{a_t}^\T \normmean \\
    & \leq \frac{\lambda^2 \bolde_{a_t}^\T \normcov^{-1} \bolde_{a_t} (\bolde_{a_t}^\T \normcov \bolde_{a_t} + \sigma^2)}{2}
    + |\lambda| \sqrt{\bolde_{a_t}^\T \normcov^{-1} \bolde_{a_t}} |\bolde_{a_t}^\T \normmean| \\
    & \leq \frac{\lambda^2 d_\infty^2}{2} + |\lambda| d_\infty \leq 1
  \end{align*}
  where the first inequality follows by Cauchy-Schwarz, the second inequality follows by definition of $d_\infty$, and the third inequality follows by assumption on $\lambda$.
  Further, observe that $\boldr_t$ has the same distribution as $\normmean + \normcov^{1/2} \boldx + \sigma \boldy$, where $(\boldx,\boldy) \iidsim \Normal(\bm0,\boldI)^{\otimes 2}$, independent of $a_t$.
  Since $a_t$ and $(\boldx,\boldy)$ are independent,
  \begin{align*}
    \E\exp(\lambda X_{\boldu,t})
    & = \E\left[ \E\left[ \exp\left( \lambda v_{a_t} \bolde_{a_t}^\T (\normcov^{1/2} \boldx + \sigma \boldy) + \lambda \boldv^\T \bolde_{a_t} \bolde_{a_t}^\T \normmean - \frac{\lambda \boldv^\T \normmean}{|\calA|} \right) \mid a_t \right] \right] \\
    & = \E\left[ \exp\left( \frac{\lambda^2 v_{a_t}^2 (\bolde_{a_t}^\T \normcov \bolde_{a_t} + \sigma^2)}{2} + \lambda \boldv^\T \bolde_{a_t} \bolde_{a_t}^\T \normmean - \frac{\lambda \boldv^\T \normmean}{|\calA|} \right) \right] \\
    & \leq \exp\left( \sum_{a \in \calA} \frac{2\lambda^2 v_a^2 \left( \bolde_a^\T \normcov \bolde_a + \sigma^2 + (\bolde_a^\T \normmean)^2 \right)}{|\calA|} \right) \\
    & = \exp\left( \frac{2\lambda^2 \boldu^\T \normcov^{-1/2} \left( \diag(\normcov) + \sigma^2 \boldI + \diag(\normmean)^2 \right) \normcov^{-1/2} \boldu}{|\calA|} \right) \\
    & \leq \exp\left( \frac{2d_2 \lambda^2}{|\calA|} \right) ,
  \end{align*}
  where we have used the moment generating function of standard Gaussian random variables, Lemma~\ref{lem:cat-mgf} with the inequality from the previous display, and the definition of $d_2$.
  Thus $X_{\boldu,t}$ is $(4d_2/|\calA|,2d_\infty)$-subexponential.
  By independence, $\sum_{t=1}^T X_{\boldu,t}$ is $(4Td_2/|\calA|,2d_\infty)$-subexponential.
  For any $\delta' \in (0,1)$, a Bernstein inequality for subexponential random variables~\cite[Theorem 2.8.1]{vershynin2018high} gives, with probability at least $1-\delta'$,
  \begin{equation*}
    \sum_{t=1}^T X_{\boldu,t}
    \leq \frac{C}2 \left(
      \sqrt{
        \frac{T d_2 \log(1/\delta')}{|\calA|}
      }
      + d_\infty \log(1/\delta')
    \right)
    .
  \end{equation*}
  Combining with a union bound over all choices of $\boldu$ from a $(1/2)$-net $N$ of $S^{|\calA|-1}$ shows that with probability at least $1 - |N|\delta'$, the inequality in the previous display holds for all $\boldu \in N$.
  A standard volume argument shows that we can take $|N| \leq 5^{|\calA|}$~\cite[Corollary 4.2.13]{vershynin2018high}.
  Therefore, the claim follows by choosing $\delta' := \delta/5^{|\calA|}$ and observing that~\cite[Exercise 4.4.2]{vershynin2018high}
  \begin{align*}
    \|\normcov^{-1/2} (\normmeanest - \normmean)\|_2
    & = \frac{|\calA|}{T} \sup_{\boldu \in S^{{|\calA|}-1}}
    \sum_{t=1}^T X_{\boldu,t}
    \leq \frac{2|\calA|}{T} \sup_{\boldu \in N}
    \sum_{t=1}^T X_{\boldu,t}.\tag*\qedhere
  \end{align*}
\end{proof}

\paragraph{Covariance estimation.}

We now consider the estimate of $\normcov$.
To do so, we first consider the case where $\normmean$ is already known, so only $\normcov$ needs to be estimated.
We assume the first two rounds in each of $T$ episodes are chosen independently and uniformly at random from $\calA$.
In episode $t$:
\begin{enumerate}
  \item let $\boldmu_t \sim P = \Normal(\normmean,\normcov)$ denote the mean reward vector;
  \item let $a_t, b_t \iidsim \Uniform(\calA)$ be the actions taken in the first two rounds (independent of $\boldmu_t$);
  \item let $\boldr_t$ and $\bolds_t$ be the reward vectors for the first two rounds, so
  \begin{equation*}
    \boldr_t, \bolds_t \mid (\boldmu_t,a_t,b_t) \iidsim \Normal(\boldmu_t, \sigma^2 \boldI) .
  \end{equation*}
\end{enumerate}
The rewards observed (and accrued) in the first two rounds of episode $t$ are $\boldr_t^\T \bolde_{a_t}$ and $\bolds_t^\T \bolde_{b_t}$.
Our estimate of prior covariance $\normcov$ is
\begin{equation} \label{eq:normcovest}
  \normcovest := \frac{|\calA|^2}{2T} \sum_{t=1}^T (\boldr_t - \normmean)^\T \bolde_{a_t} (\bolds_t - \normmean)^\T \bolde_{b_t} \left( \bolde_{a_t} \bolde_{b_t}^\T + \bolde_{b_t} \bolde_{a_t}^\T \right).
\end{equation}

\begin{lemma}[Gaussian covariance estimation with known mean]
  \label{lemma:gaussian-cov}
  There exists a universal constant $C>0$ such that the following holds.
  Consider any multivariate Gaussian distribution $P := \Normal(\normmean, \normcov)$ in $\R^{\calA}$.
  Let
  \begin{equation*}
    (\boldmu_1,a_1,b_1,\boldr_1,\bolds_1),
    (\boldmu_2,a_2,b_2,\boldr_2,\bolds_2), \dotsc,
    (\boldmu_T,a_T,b_T,\boldr_T,\bolds_T)
  \end{equation*}
  be $T$ iid random variables, with
  \begin{align*}
    (\boldmu_t, a_t, b_t) & \sim P \otimes \Uniform(\calA) \otimes \Uniform(\calA), \\
    (\boldr_t,\bolds_t) \mid (\boldmu_t,a_t,b_t) & \sim \Normal(\boldmu_t, \sigma^2 \boldI) \otimes \Normal(\boldmu_t, \sigma^2 \boldI);
  \end{align*}
  and define $\normcovest$ as in \eqref{eq:normcovest}.
  For any $\delta \in (0,1)$, with probability at least $1-\delta$,
  \begin{equation*}
    \|\normcov^{-1/2}(\normcovest - \normcov)\normcov^{-1/2}\|_2
    \leq C
    \sqrt{ d }
    \left(
      \sqrt{\frac{|\calA|^3 + |\calA|^2 \log(1/\delta)}{T}}
      + \frac{|\calA|^3 + |\calA|^2 \log(1/\delta)}{T}
    \right)
    ,
  \end{equation*}
  where
  \begin{align*}
    d & := \frac{\sigma^4 + \max_{a \in \calA} (\normcov)_{a,a}^2}{\lambda_{\min}(\normcov)^2} .
  \end{align*}
\end{lemma}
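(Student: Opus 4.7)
The proof will parallel that of \Cref{lemma:gaussian-mean}. First, unbiasedness $\E[\normcovest] = \normcov$ follows by conditioning on $\boldmu_t$ and using that $\boldr_t, \bolds_t$ are conditionally independent with common mean $\boldmu_t$: this gives $\E[(\boldr_t-\normmean)(\bolds_t-\normmean)^\T \mid \boldmu_t] = (\boldmu_t-\normmean)(\boldmu_t-\normmean)^\T$, and hence $\E[(\boldr_t-\normmean)_a(\bolds_t-\normmean)_b] = (\normcov)_{a,b}$. Averaging over the independent uniform $(a_t,b_t)$ and symmetrizing across the two terms $\bolde_{a_t}\bolde_{b_t}^\T + \bolde_{b_t}\bolde_{a_t}^\T$ absorbs the prefactor $|\calA|^2/(2T)$ and yields $\E[\normcovest] = \normcov$.

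Writing $M := \normcov^{-1/2}(\normcovest - \normcov)\normcov^{-1/2} = \frac{1}{T}\sum_{t=1}^T Z_t$ as an iid sum of mean-zero symmetric matrices, and invoking a $(1/2)$-net $N \subseteq S^{|\calA|-1}$ of size $|N| \le 5^{|\calA|}$, we get $\|M\|_2 \le 2\sup_{\boldu \in N}|\boldu^\T M\boldu|$. For each fixed unit $\boldu$, set $\boldw := \normcov^{-1/2}\boldu$ and define the iid, mean-zero scalar
\[
X_{\boldu,t} := \boldu^\T Z_t \boldu = |\calA|^2 \, (\boldr_t-\normmean)_{a_t}(\bolds_t-\normmean)_{b_t}\, w_{a_t}w_{b_t} - 1.
\]
The heart of the argument is to show that $X_{\boldu,t}$ is $(K_1\,|\calA|^2 d,\, K_2\,|\calA|^2\sqrt{d})$-sub-exponential, for absolute constants $K_1,K_2$. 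The variance proxy follows by conditioning on $(a_t,b_t) = (a,b)$: the fourth moment $\E[(\boldr_t-\normmean)_a^2(\bolds_t-\normmean)_b^2]$ is at most $O(\max_c(\normcov)_{c,c}^2 + \sigma^4)$ by expanding out the correlated-Gaussian product, and the average $\frac{1}{|\calA|^2}\sum_{a,b} w_a^2 w_b^2 = \|\boldw\|_2^4/|\calA|^2 \le 1/(|\calA|^2 \lambda_{\min}(\normcov)^2)$ combines with the $|\calA|^4$ prefactor to yield $\E[X_{\boldu,t}^2] \lesssim |\calA|^2 d$. The $\psi_1$ scale comes from noting that $(\boldr_t-\normmean)_a(\bolds_t-\normmean)_b$ is a product of two Gaussians with marginal variance $O(\sigma^2 + \max_c(\normcov)_{c,c})$, hence sub-exponential with the same parameter, while $|w_a w_b| \le 1/\lambda_{\min}(\normcov)$; the MGF bound is then assembled via a calculation analogous to the one underlying \Cref{lem:cat-mgf} in the mean-estimation proof.

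Once the sub-exponential parameters are established, applying the scalar sub-exponential Bernstein inequality~\cite[Theorem 2.8.1]{vershynin2018high} at confidence $\delta' := \delta/5^{|\calA|}$ and union-bounding over $\boldu \in N$ gives, with probability at least $1-\delta$,
\[
\sup_{\boldu \in N}\left|\frac{1}{T}\sum_{t=1}^T X_{\boldu,t}\right| \lesssim \sqrt{\frac{|\calA|^2 d\,(|\calA| + \log(1/\delta))}{T}} + \frac{|\calA|^2\sqrt{d}\,(|\calA|+\log(1/\delta))}{T},
\]
which, after the factor-of-$2$ net loss and collecting $|\calA|^2(|\calA|+\log(1/\delta)) = |\calA|^3 + |\calA|^2\log(1/\delta)$, is exactly the claimed bound. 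The principal obstacle will be the moment calculations: pinning down the precise constant $d = (\sigma^4 + \max_c(\normcov)_{c,c}^2)/\lambda_{\min}(\normcov)^2$ requires carefully handling the cross-correlations of $(\boldr_t-\normmean)_a$ with $(\bolds_t-\normmean)_b$ (which are coupled for $a \ne b$ through the latent $\boldmu_t$) and exploiting $\|\boldw\|_2^2 \le 1/\lambda_{\min}(\normcov)$ in exactly the right places so that no spurious dimensional factors appear when summing the weights $w_a^2 w_b^2$ against the fourth moments.
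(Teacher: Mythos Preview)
Your plan is essentially the paper's approach: unbiasedness, then show each centered summand $X_{\boldu,t}$ is sub-exponential with variance proxy $O(d/|\calA|^2)$ (in the paper's normalization; $O(|\calA|^2 d)$ in yours) and scale $O(\sqrt d)$ (resp.\ $O(|\calA|^2\sqrt d)$), then Bernstein plus a net. Two points deserve attention.

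\emph{Net size.} A $(1/2)$-net does not suffice for the quadratic-form bound: for a symmetric matrix one has $\|M\|_2 \le (1-2\epsilon)^{-1}\sup_{\boldu\in N_\epsilon}|\boldu^\T M\boldu|$, which blows up at $\epsilon=1/2$. The paper uses a $(1/4)$-net (size $\le 9^{|\calA|}$) to get the factor $2$; your $5^{|\calA|}$ bound comes from the vector (not quadratic) setting and should be replaced.

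\emph{The MGF step.} Your route to the sub-exponential parameters is right in spirit but the sentence ``assembled via a calculation analogous to \Cref{lem:cat-mgf}'' hides the hardest part. Conditioning on $(a_t,b_t)$ does not leave a bounded random variable to which \Cref{lem:cat-mgf} applies directly; you first have to integrate out the Gaussian randomness. The obstacle is that $(\boldr_t-\normmean)_{a_t}$ and $(\bolds_t-\normmean)_{b_t}$ are \emph{correlated} through the shared $\boldmu_t$, so their product is not a simple sub-exponential. The paper handles this by writing $\boldr_t=\normmean+\normcov^{1/2}\boldx+\sigma\boldy$ and $\bolds_t=\normmean+\normcov^{1/2}\boldx+\sigma\boldz$ with $\boldx,\boldy,\boldz$ independent standard Gaussians, integrating out $(\boldy,\boldz)$ in closed form, then recognizing the remaining integrand as $\exp(\boldx^\T\boldA\boldx)$ for a low-rank matrix $\boldA$ and using the $\chi^2$ MGF. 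Only after these two integrations do you get a deterministic function of $(a_t,b_t)$ to which \Cref{lem:cat-mgf} applies. Without this explicit calculation, appealing only to the $\psi_1$ norm of the conditional variable would give variance proxy $|\calA|^4 d$ rather than $|\calA|^2 d$, and hence an extra $|\calA|$ inside the square root. Your separate variance computation is correct, but variance $+$ $\psi_1$ bound alone does not yield the tight $(v,c)$ pair; you need either the paper's staged MGF integration or a full moment-growth argument $\E|X_{\boldu,t}|^k\le \tfrac{k!}{2}\,v\,c^{k-2}$ to close the gap.
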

\begin{proof}
  The proof is very similar to that of Lemma~\ref{lemma:gaussian-mean}.
  We first observe that
  \begin{equation*}
    \E
    \left[
      \frac12
      (\boldr_t - \normmean)^\T \bolde_{a_t}  
      (\bolds_t - \normmean)^\T \bolde_{b_t}
      \left( \bolde_{a_t} \bolde_{b_t}^\T + \bolde_{b_t} \bolde_{a_t}^\T \right)
    \right]
    = \frac1{|\calA|^2} \normcov .
  \end{equation*}
  We claim that for any unit vector $\boldu \in S^{|\calA|-1}$,
  \begin{equation*}
    X_{\boldu,t} :=
    (\normcov^{-1/2} \boldu)^\T
    \left(
      \frac12
      (\boldr_t - \normmean)^\T \bolde_{a_t}  
      (\bolds_t - \normmean)^\T \bolde_{b_t}
      \left( \bolde_{a_t} \bolde_{b_t}^\T + \bolde_{b_t} \bolde_{a_t}^\T \right)
      - \frac1{|\calA|^2} \normcov
    \right)
    (\normcov^{-1/2} \boldu)
  \end{equation*}
  is $(v,c)$-subexponential with $v = O(d/|\calA|^2)$ and $c = O(\sqrt{d})$.
  We defer this argument until the end.
  By independence, $\sum_{t=1}^T X_{\boldu,t}$
  is $(Tv,c)$-subexponential.
  For any $\delta' \in (0,1)$, a Bernstein inequality for subexponential random variables~\cite[Theorem 2.8.1]{vershynin2018high} gives, with probability at least $1-\delta'$,
  \begin{equation*}
    \biggl|
    \sum_{t=1}^T
    X_{\boldu,t}
    \biggr|
    \leq \frac{C}2 \left(
      \sqrt{
        \frac{T d \log(1/\delta')}{|\calA|^2}
      }
      + \sqrt{d} \log(1/\delta')
    \right)
    .
  \end{equation*}
  Combining with a union bound over all choices of $\boldu$ from a $(1/4)$-net $N$ of $S^{|\calA|-1}$ shows that with probability at least $1 - |N|\delta'$, the inequality in the previous display holds for all $\boldu \in N$.
  A standard volume argument shows that we can take $|N| \leq 9^{|\calA|}$~\cite[Corollary 4.2.13]{vershynin2018high}.
  Therefore, the claim follows by choosing $\delta' := \delta/9^{|\calA|}$ and observing that~\cite[Exercise 4.4.3(b)]{vershynin2018high}

  \begin{align*}
      \|\normcov^{-1/2} (\normcovest - \normcov) \normcov^{-1/2}\|_2
    & = \frac{|\calA|^2}{T} \sup_{\boldu \in S^{|\calA|-1}}
    \biggl|
    \sum_{t=1}^T 
    X_{\boldu,t}
    \biggr|
    \leq \frac{2|\calA|^2}{T} \sup_{\boldu \in N}
    \biggl|
    \sum_{t=1}^T 
    X_{\boldu,t}
    \biggr|
  \end{align*}

  It remains to show that $X_{\boldu,t}$ is $(v,c)$-subexponential with $v = O(d/|\calA|^2)$ and $c = O(\sqrt{d})$.
  Observe that $(\boldr_t,\bolds_t,a_t,b_t)$ has the same joint distribution as $(\normmean + \normcov^{1/2} \boldx + \sigma \boldy, \normmean + \normcov^{1/2} \boldx + \sigma \boldz,a_t,b_t)$, where $\boldx, \boldy, \boldz$ are i.i.d.~$\Normal(\bm0,\boldI)$ random vectors in $\R^{\calA}$, independent of $(a_t,b_t)$.
  Let $\boldv := \normcov^{-1/2} \boldu$ and $\boldw := \normcov^{1/2} \boldx$, so
  \begin{align*}
    X_{\boldu,t}
    & =
    (\normcov^{-1/2} \boldu)^\T
    \left(
      \frac12
      (\boldr_t - \normmean)^\T \bolde_{a_t}  
      (\bolds_t - \normmean)^\T \bolde_{b_t}
      \left( \bolde_{a_t} \bolde_{b_t}^\T + \bolde_{b_t} \bolde_{a_t}^\T \right)
      - \frac1{|\calA|^2} \normcov
    \right)
    (\normcov^{-1/2} \boldu)
    \\
    & \stackrel{\text{dist}}{=}
    (\normcov^{-1/2} \boldu)^\T
    \left(
      \frac12
      (\boldw + \sigma \boldy)^\T \bolde_{a_t}  
      (\boldw + \sigma \boldz)^\T \bolde_{b_t}  
      \left( \bolde_{a_t} \bolde_{b_t}^\T + \bolde_{b_t} \bolde_{a_t}^\T \right)
      - \frac1{|\calA|^2} \normcov
    \right)
    (\normcov^{-1/2} \boldu)
    \\
    & =
    \left(
      v_{a_t} v_{b_t}
      (w_{a_t} + \sigma y_{a_t})
      (w_{b_t} + \sigma z_{b_t})
      - \frac1{|\calA|^2}
    \right)
    .
  \end{align*}
  Now we fix $\lambda \in \R$ such that $|\lambda| \leq 1/(C \sqrt{d})$ for some sufficiently large constant $C>0$, and bound the moment generating function of $X_{\boldu,t}$ at $\lambda$.
  To do so, we use the above characterization of the distribution of $X_{\boldu,t}$ in terms of the independent Gaussian random vectors.
  First, taking expectation only with respect to $(\boldy,\boldz)$ (i.e., conditional on $a_t,b_t,\boldx$):
  \begin{align*}
    \lefteqn{ \E\left[ \exp(\lambda X_{\boldu,t}) \right] } \\
    & =
    \E\left[
      \E\left[
        \exp\left(
          \lambda
          \left(
            v_{a_t} v_{b_t}
            (w_{a_t} + \sigma y_{a_t})
            (w_{b_t} + \sigma z_{b_t})
            - \frac1{|\calA|^2}
          \right)
        \right)
        \mid a_t,b_t,\boldx
      \right]
    \right]
    \\
    & = \E\left[
      \exp\left(
        \eta w_{a_t} w_{b_t}
        + \frac{\eta^2 \sigma^2 w_{a_t}^2 + \eta^2 \sigma^2 w_{b_t}^2 + \eta^3 \sigma^4 w_{a_t} w_{b_t}}{2(1 - \eta^2 \sigma^4)}
        + \frac12 \ln \frac{1}{1-\eta^2\sigma^4}
        - \frac{\lambda}{|\calA|^2}
      \right)
    \right]
    \\
    & \leq \E\left[
      \exp\left(
        \eta w_{a_t} w_{b_t}
        + \frac{\eta^2 \sigma^2 w_{a_t}^2 + \eta^2 \sigma^2 w_{b_t}^2 + \eta^3 \sigma^4 w_{a_t} w_{b_t}}{2(1 - \eta^2 \sigma^4)}
        + \eta^2\sigma^4
        - \frac{\lambda}{|\calA|^2}
      \right)
    \right]
  \end{align*}
  where $\eta := \lambda v_{a_t} v_{b_t}$ satisfies $\eta^2 \sigma^4 \leq 1/2$ (due to the assumption on $\lambda$).
  Next, we note that
  \begin{align*}
    \eta w_{a_t} w_{b_t}
    + \frac{\eta^2 \sigma^2 w_{a_t}^2 + \eta^2 \sigma^2 w_{b_t}^2 + \eta^3 \sigma^4 w_{a_t} w_{b_t}}{2(1 - \eta^2 \sigma^4)}
    & = \boldx^\T \boldA \boldx
  \end{align*}
  where $\boldA$ is the random symmetric matrix defined by
  \begin{align*}
    \boldA
    & :=
    \frac{\eta}{2(1-\eta^2\sigma^4)}
    \normcov^{1/2}
    \left(
      (1 - \eta^2\sigma^4/2)
      (\bolde_{a_t} \bolde_{b_t}^\T + \bolde_{b_t} \bolde_{a_t}^\T)
      + \eta\sigma^2
      (\bolde_{a_t} \bolde_{a_t}^\T + \bolde_{b_t} \bolde_{b_t}^\T)
    \right)
    \normcov^{1/2}
    \\
    & \hphantom:=
    \eta\left( 1 + \frac{\eta^2\sigma^4}{2(1-\eta^2\sigma^4)} \right)
    \normcov^{1/2}
    \left(
      \frac12(\bolde_{a_t} \bolde_{b_t}^\T + \bolde_{b_t} \bolde_{a_t}^\T)
    \right)
    \normcov^{1/2}
    \\
    & \qquad
    + \frac{\eta^2\sigma^2}{1-\eta^2\sigma^4}
    \normcov^{1/2}
    \left(
      \frac12(\bolde_{a_t} \bolde_{a_t}^\T + \bolde_{b_t} \bolde_{b_t}^\T)
    \right)
  \end{align*}
  (where the randomness comes from $a_t,b_t$).
  Since $\boldA$ is symmetric, it has real eigenvalues $\lambda_1, \lambda_2, \dotsc, \lambda_{|\calA|}$.
  We shall ensure via the assumption on $\lambda$ that $\|\boldA\|_2 \leq 1/3$, which implies that $|\lambda_i| \leq 1/3$ for all $i$.
  The rotational invariance of $\Normal(\bm0,\boldI)$ implies that the distribution of $\boldx^\T \boldA \boldx$ (conditional on $a_t,b_t$) is the same as that of $\sum_i \lambda_i t_i$, where $t_1, t_2, \dotsc, t_{|\calA|}$ are i.i.d.~$\chi^2(1)$ random variables.
  This implies that
  \begin{align*}
    \E\left[
      \exp\left( \boldx^\T \boldA \boldx \right) \mid a_t,b_t
    \right]
    & = \exp\left( \frac12 \sum_i \ln \frac{1}{1-2\lambda_i} \right) \\
    & \leq \exp\left( \sum_i \lambda_i + 2\lambda_i^2 \right)
    = \exp\left( \trace(\boldA) + 2\|\boldA\|_{\F}^2 \right)
  \end{align*}
  where the inequality uses the bound $|\lambda_i| \leq 1/3$.
  We expand $\trace(\boldA)$ to reveal its dependence on $a_t,b_t$:
  \begin{equation*}
    \trace(\boldA)
    = \eta (\normcov)_{a_t,b_t} + \frac{\eta^3 \sigma^4 (\normcov)_{a_t,b_t} + \eta^2 \sigma^2 ((\normcov)_{a_t,a_t} + (\normcov)_{b_t,b_t})}{2(1-\eta^2\sigma^4)} .
  \end{equation*}
  And we bound $\|\boldA\|_{\F}^2$ as follows:
  \begin{align*}
    \|\boldA\|_{\F}^2
    & \leq 2\eta^2 \left( 1 + \frac{\eta^2\sigma^4}{2(1-\eta^2\sigma^4)} \right)^2
    \left\|
      \normcov^{1/2}
      \left(
        \frac12(\bolde_{a_t} \bolde_{b_t}^\T + \bolde_{b_t} \bolde_{a_t}^\T)
      \right)
      \normcov^{1/2}
    \right\|_{\F}^2
    \\
    & \qquad
    + 2\eta^2\left( \frac{\eta\sigma^2}{1-\eta^2\sigma^4} \right)^2
    \left\|
      \normcov^{1/2}
      \left(
        \frac12(\bolde_{a_t} \bolde_{a_t}^\T + \bolde_{b_t} \bolde_{b_t}^\T)
      \right)
      \normcov^{1/2}
    \right\|_{\F}^2
    \\
    & = \eta^2 \left( 1 + \frac{\eta^2\sigma^4}{2(1-\eta^2\sigma^4)} \right)^2
    \left( (\normcov)_{a_t,b_t}^2 + (\normcov)_{a_t,a_t} (\normcov)_{b_t,b_t} \right)
    \\
    & \qquad
    + \frac{\eta^2}{2} \left( \frac{\eta\sigma^2}{1-\eta^2\sigma^4} \right)^2
    \left( (\normcov)_{a_t,a_t}^2 + 2(\normcov)_{a_t,b_t}^2 + (\normcov)_{b_t,b_t}^2 \right)
    \\
    & \leq 4\eta^2 \left( 1 + \frac{\eta^2\sigma^4}{2(1-\eta^2\sigma^4)} \right)^2
    \left( (\normcov)_{a_t,a_t}^2 + (\normcov)_{b_t,b_t}^2 \right)
    \\
    & \qquad
    + \eta^2 \left( \frac{\eta\sigma^2}{1-\eta^2\sigma^4} \right)^2
    \left( (\normcov)_{a_t,a_t}^2 + (\normcov)_{b_t,b_t}^2 \right)
    \\
    & \leq 11\eta^2 \left( (\normcov)_{a_t,a_t}^2 + (\normcov)_{b_t,b_t}^2 \right)
    .
  \end{align*}
  Above the first inequality follows by the triangle inequality and the fact $(a+b)^2 \leq 2(a^2+b^2)$ for $a,b\geq0$; the second inequality uses Cauchy-Schwarz and the AM/GM inequality; the third inequality uses the assumption $\eta^2 \sigma^4 \leq 1/2$.
  Thus, we have shown that
  \begin{align*}
    \E\left[ \exp(\lambda X_{\boldu,t}) \right]
    & \leq \E\left[
      \exp\left(
        \trace(\boldA) + 2\|\boldA\|_{\F}^2
        + \eta^2\sigma^4
        - \frac{\lambda}{|\calA|^2}
      \right)
    \right]
    \\
    & \leq \E\biggl[
      \exp\biggl(
        \eta (\normcov)_{a_t,b_t} + \frac{\eta^3 \sigma^4 (\normcov)_{a_t,b_t} + \eta^2 \sigma^2 ((\normcov)_{a_t,a_t} + (\normcov)_{b_t,b_t})}{2(1-\eta^2\sigma^4)}
        \\
    & \qquad\qquad\qquad
        + 11\eta^2 \left( (\normcov)_{a_t,a_t}^2 + (\normcov)_{b_t,b_t}^2 \right)
        - \frac{\lambda}{|\calA|^2}
      \biggr)
    \biggr]
    .
  \end{align*}
  Define
  \begin{align*}
    \alpha_{a,b} & := \beta_{a,b} + \gamma_{a,b} \\
    \beta_{a,b} & := \lambda v_a v_b (\normcov)_{a,b} \\
    \gamma_{a,b} & := \frac{\lambda^3 v_a^3 v_b^3 \sigma^4 (\normcov)_{a,b} + \lambda^2 v_a^2 v_b^2 \sigma^2 ((\normcov)_{a,a} + (\normcov)_{b,b})}{2(1-\lambda^2 v_a^2 v_b^2 \sigma^4)} \\
                 & \qquad + \lambda^2 v_a^2 v_b^2 (11((\normcov)_{a,a}^2 + (\normcov)_{b,b}^2) + \sigma^4) .
  \end{align*}
  Observe that
  \begin{equation*}
    \frac1{|\calA|^2} \sum_{a,b \in \calA} \beta_{a,b}
    = 1
    .
  \end{equation*}
  The assumptions $\lambda$ ensure that $|\beta_{a,b}| + |\gamma_{a,b}| \leq 1$, so we can apply Lemma~\ref{lem:cat-mgf} to bound the final expression in the previous display to obtain the inequality
  \begin{align*}
    \E\left[ \exp(\lambda X_{\boldu,t}) \right]
    & \leq
    \exp\left( \frac1{|\calA|^2} \sum_{a,b \in \calA} \left( \alpha_{a,b} + \alpha_{a,b}^2 \right) - \frac{\lambda}{|\calA|^2} \right)
    \\
    & \leq
    \exp\left( \frac1{|\calA|^2} \sum_{a,b \in \calA} (4\gamma_{a,b} + \beta_{a,b}^2) \right)
    \\
    & \leq
    \exp\left( \frac{8\|\boldv\|_2^4(\sigma^4 + \|\diag(\normcov)\|_2^2)}{|\calA|^2} \left[ \frac{|\lambda| \|\boldv\|_\infty^2 \sigma^2 + 2}{4(1-\lambda^2\|\boldv\|_\infty^4 \sigma^4)} + 23 \right] \frac{\lambda^2}{2} \right)
    \\
    & \leq
    \exp\left( \frac{200\|\boldv\|_2^4(\sigma^4 + \|\diag(\normcov)\|_2^2)}{|\calA|^2} \cdot \frac{\lambda^2}{2} \right)
    ,
  \end{align*}
  where the inequalities use the bounds on $|\beta_{a,b}|$ and $|\gamma_{a,b}|$, and the additional bound $|\lambda| \|\boldv\|_\infty^2 ( \sigma^2 + \|\diag(\normcov)\|_2 ) \leq 1/10$ which is implied by the assumption on $\lambda$.
  The final bound is $\exp(C (d/|\calA|^2) \cdot \lambda^2/2)$ for a sufficiently large absolute constant $C>0$.
\end{proof}

\paragraph{Covariance estimation, redux.}

Now we consider the case where both $\normmean$ and $\normcov$ are unknown and need to be estimated.
A standard approach to estimating $\normcov$ is to simply estimate the second moment of $\boldmu_t$ (instead of its covariance), and then to subtract $\normmeanest \normmeanest^\T$ using some estimate $\normmeanest$ of $\normmean$.
However, the quality of our estimate of $\normmean$ (described above) depends on properties of $\normmean$ itself, which should not be necessary.
Below, we instead analyze an estimator of $\normcov$ based on differences, essentially leveraging the fact that the variance of a random variable $X$ is half the expected squared difference between $X$ and an independent copy of itself.

We assume the first two rounds in each of $2T$ episodes are chosen independently and uniformly at random from $\calA$.
However, we use the same two chosen actions in two consecutive episodes.
That is, in episodes $2t-1$ and $2t$:
\begin{enumerate}
  \item let $\boldmu_t, \tilde\boldmu_t \iidsim P = \Normal(\normmean,\normcov)$ denote the mean reward vectors for episodes $2t-1$ and $2t$;
  \item let $a_t, b_t \iidsim \Uniform(\calA)$ be the actions taken in the first two rounds (independent of $\boldmu_t$, $\tilde\boldmu_t$);
  \item
    let $\boldr_t$ and $\bolds_t$ be the reward vectors for the first two rounds of episode $2t-1$, and let
    $\tilde\boldr_t$ and $\tilde\bolds_t$ be the reward vectors for the first two rounds of episode $2t$, so
    \begin{equation*}
      (\boldr_t, \bolds_t, \tilde\boldr_t, \tilde\bolds_t) \mid (\boldmu_t,\tilde\boldmu_t,a_t,b_t) \sim \Normal(\boldmu_t, \sigma^2 \boldI) \otimes \Normal(\boldmu_t, \sigma^2 \boldI) \otimes \Normal(\tilde\boldmu_t, \sigma^2 \boldI) \otimes \Normal(\tilde\boldmu_t, \sigma^2 \boldI) .
    \end{equation*}
\end{enumerate}
The rewards observed (and accrued) in the first two rounds of episode $2t-1$ are $\boldr_t^\T \bolde_{a_t}$ and $\bolds_t^\T \bolde_{b_t}$, and the rewards observed (and accrued) in the first two rounds of episode $2t$ are $\tilde\boldr_t^\T \bolde_{a_t}$ and $\tilde\bolds_t^\T \bolde_{b_t}$.
Our estimate of prior covariance $\normcov$ is
\begin{equation} \label{eq:normcovest2}
  \normcovest := \frac{|\calA|^2}{4T} \sum_{t=1}^T (\boldr_t - \tilde\boldr_t)^\T \bolde_{a_t} (\bolds_t - \tilde\bolds_t)^\T \bolde_{b_t} \left( \bolde_{a_t} \bolde_{b_t}^\T + \bolde_{b_t} \bolde_{a_t}^\T \right).
\end{equation}

\begin{lemma}[Gaussian covariance estimation with unknown mean]
  \label{lemma:gaussian-cov2}
  There exists a universal constant $C>0$ such that the following holds.
  Consider any multivariate Gaussian distribution $P := \Normal(\normmean, \normcov)$ in $\R^{\calA}$.
  Let
  \begin{equation*}
    (\boldmu_1,\tilde\boldmu_1,a_1,b_1,\boldr_1,\bolds_1,\tilde\boldr_1,\tilde\bolds_1),
    (\boldmu_2,\tilde\boldmu_2,a_2,b_2,\boldr_2,\bolds_2,\tilde\boldr_2,\tilde\bolds_2),
    \dotsc,
    (\boldmu_T,\tilde\boldmu_T,a_T,b_T,\boldr_T,\bolds_T,\tilde\boldr_T,\tilde\bolds_T),
  \end{equation*}
  be $T$ iid random variables, with
  \begin{align*}
    (\boldmu_t, \tilde\boldmu_t, a_t, b_t) & \sim P \otimes P \otimes \Uniform(\calA) \otimes \Uniform(\calA), \\
    (\boldr_t,\bolds_t,\tilde\boldr_t,\tilde\bolds_t) \mid (\boldmu_t,\tilde\boldmu_t,a_t,b_t) & \sim \Normal(\boldmu_t, \sigma^2 \boldI) \otimes \Normal(\boldmu_t, \sigma^2 \boldI) \otimes \Normal(\tilde\boldmu_t, \sigma^2 \boldI) \otimes \Normal(\tilde\boldmu_t, \sigma^2 \boldI);
  \end{align*}
  and define $\normcovest$ as in \eqref{eq:normcovest2}.
  For any $\delta \in (0,1)$, with probability at least $1-\delta$,
  \begin{equation*}
    \|\normcov^{-1/2}(\normcovest - \normcov)\normcov^{-1/2}\|_2
    \leq C
    \sqrt{ d }
    \left(
      \sqrt{\frac{|\calA|^3 + |\calA|^2 \log(1/\delta)}{T}}
      + \frac{|\calA|^3 + |\calA|^2 \log(1/\delta)}{T}
    \right)
    ,
  \end{equation*}
  where
  \begin{align*}
    d & := \frac{\sigma^4 + \max_{a \in \calA} (\normcov)_{a,a}^2}{\lambda_{\min}(\normcov)^2} .
  \end{align*}
\end{lemma}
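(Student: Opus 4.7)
The plan is to reduce Lemma~\ref{lemma:gaussian-cov2} to the known-mean covariance estimator analyzed in Lemma~\ref{lemma:gaussian-cov} via a simple symmetrization: taking differences of rewards collected in two consecutive episodes with the same chosen actions cancels the unknown prior mean, turning the problem into one of estimating the covariance of a mean-zero Gaussian prior. Concretely, for each $t \in [T]$ I would define the rescaled differences
\begin{equation*}
\boldr_t' := \frac{\boldr_t - \tilde\boldr_t}{\sqrt{2}}, \quad \bolds_t' := \frac{\bolds_t - \tilde\bolds_t}{\sqrt{2}}, \quad \boldmu_t' := \frac{\boldmu_t - \tilde\boldmu_t}{\sqrt{2}},
\end{equation*}
and observe that the factor of $1/(4T)$ in the definition of $\normcovest$ combines with two factors of $\sqrt{2}$ to yield
\begin{equation*}
\normcovest = \frac{|\calA|^2}{2T} \sum_{t=1}^T (\boldr_t')^\T \bolde_{a_t} \, (\bolds_t')^\T \bolde_{b_t} \left( \bolde_{a_t} \bolde_{b_t}^\T + \bolde_{b_t} \bolde_{a_t}^\T \right),
\end{equation*}
which has exactly the form of the known-mean estimator~\eqref{eq:normcovest} applied to the transformed data with prior mean equal to $\boldsymbol{0}$.

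Next, I would verify that the transformed samples $(\boldmu_t', a_t, b_t, \boldr_t', \bolds_t')$ satisfy the hypotheses of Lemma~\ref{lemma:gaussian-cov} with prior $P' := \Normal(\boldsymbol{0}, \normcov)$ and noise variance $\sigma^2$. Independence across $t$ is immediate since the transformation is deterministic and acts separately on each episode-pair. The marginal law $\boldmu_t' \sim \Normal(\boldsymbol{0}, \normcov)$ follows from $\boldmu_t, \tilde\boldmu_t \iidsim \Normal(\normmean, \normcov)$ being independent. For the conditional law of the rewards, given $(\boldmu_t, \tilde\boldmu_t, a_t, b_t)$ the independent draws $\boldr_t, \tilde\boldr_t$ yield $\boldr_t - \tilde\boldr_t \sim \Normal(\boldmu_t - \tilde\boldmu_t, 2\sigma^2 \boldI)$, and likewise for $\bolds_t - \tilde\bolds_t$, with the two differences conditionally independent; rescaling by $1/\sqrt{2}$ gives $(\boldr_t', \bolds_t') \mid (\boldmu_t, \tilde\boldmu_t, a_t, b_t) \sim \Normal(\boldmu_t', \sigma^2 \boldI)^{\otimes 2}$. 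Since this conditional law depends on $(\boldmu_t, \tilde\boldmu_t)$ only through $\boldmu_t'$, conditioning on $\boldmu_t'$ alone produces the same distribution, matching the required hypothesis exactly.

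Applying Lemma~\ref{lemma:gaussian-cov} directly to this transformed problem then yields the stated bound. The quantity $d$ defined there depends only on $\normcov$ and $\sigma^2$ (not on the prior mean), so its value coincides with the $d$ in the present lemma. There is no real obstacle: the symmetrization is designed precisely to eliminate dependence on $\normmean$, and the bookkeeping with factors of $\sqrt{2}$ makes both the noise variance and the prior covariance in the transformed problem agree with those in the original. The only point requiring care is confirming that conditioning on the coarser $\sigma$-algebra generated by $\boldmu_t'$ (rather than on $(\boldmu_t, \tilde\boldmu_t)$) still produces the conditional distribution needed by Lemma~\ref{lemma:gaussian-cov}, which holds because the conditional law of $(\boldr_t',\bolds_t')$ depends on the pair $(\boldmu_t, \tilde\boldmu_t)$ only through their difference.
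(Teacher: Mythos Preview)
Your reduction is correct. The rescaling by $1/\sqrt{2}$ cleanly makes both the prior covariance and the reward noise variance in the transformed data match those of the original problem, so Lemma~\ref{lemma:gaussian-cov} applies verbatim with $P' = \Normal(\boldsymbol{0},\normcov)$, and the parameter $d$ is unchanged because it does not involve the prior mean. The conditional-independence and measurability checks you flag are all fine.

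The paper itself does not write out a proof; it simply states that the argument is ``completely analogous to that of Lemma~\ref{lemma:gaussian-cov}'', implicitly inviting the reader to rerun the subexponential moment calculation with $(\boldr_t - \tilde\boldr_t, \bolds_t - \tilde\bolds_t)$ in place of $(\boldr_t - \normmean, \bolds_t - \normmean)$. Your approach is slightly more economical: rather than reproducing the MGF bounds and the $\epsilon$-net argument, you invoke Lemma~\ref{lemma:gaussian-cov} as a black box after the symmetrization. This buys you a shorter proof with no new computation, at the modest cost of the extra verification that the transformed samples genuinely satisfy the hypotheses of the earlier lemma (in particular, that conditioning on $\boldmu_t'$ alone suffices). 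Both routes are valid and yield the identical conclusion.
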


We omit the proof of Lemma~\ref{lemma:gaussian-cov2} since it is completely analogous to that of Lemma~\ref{lemma:gaussian-cov}.

The following lemma is used to bound the exponential moment of a discrete real-valued random variable.
\begin{lemma} \label{lem:cat-mgf}
  Let $Y$ be a random variable supported on $\{ \alpha_1,\dotsc,\alpha_K \} \subset \R$ with $\alpha_i \leq 1$ and $p_i := \Pr(Y = \alpha_i)$ for all $i$.
  Then
  \begin{equation*}
    \E[ \exp(Y) ]
    \leq \exp\left( \sum_{i=1}^K p_i \alpha_i + p_i \alpha_i^2 \right)
    .
  \end{equation*}
\end{lemma}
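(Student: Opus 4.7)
The statement is a standard moment-generating-function inequality for a bounded discrete random variable, so the plan is short and essentially calculus-based. I would reduce it to a one-variable inequality applied pointwise, then use the elementary bound $1+x \leq e^x$.

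The key lemma I would establish first is: for every real $\alpha \leq 1$,
\[
    e^{\alpha} \;\leq\; 1 + \alpha + \alpha^{2}.
\]
To see this, set $\phi(\alpha) := 1 + \alpha + \alpha^{2} - e^{\alpha}$. Then $\phi(0)=0$, $\phi'(\alpha) = 1 + 2\alpha - e^{\alpha}$, and $\phi'(0) = 0$. Computing $\phi''(\alpha) = 2 - e^{\alpha}$ shows $\phi''$ is positive on $(-\infty,\log 2)$ and negative on $(\log 2, \infty)$, so $\phi'$ is unimodal with maximum at $\alpha=\log 2$. Since $\phi'(0)=0$, $\phi'(-1)<0$, and $\phi'(1) = 3 - e > 0$, the only zero of $\phi'$ on $(-\infty,1]$ is $\alpha=0$, and $\phi'$ is negative on $(-\infty,0)$ and positive on $(0,1]$. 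Hence $\phi$ attains its minimum on $(-\infty,1]$ at $\alpha=0$, where $\phi(0)=0$, proving the claim.

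Given this pointwise bound, the main estimate is almost immediate. Because each $\alpha_i \leq 1$, we have
\[
    \mathbb{E}[\exp(Y)]
    \;=\; \sum_{i=1}^{K} p_i e^{\alpha_i}
    \;\leq\; \sum_{i=1}^{K} p_i \bigl(1 + \alpha_i + \alpha_i^{2}\bigr)
    \;=\; 1 + \sum_{i=1}^{K} p_i \alpha_i + \sum_{i=1}^{K} p_i \alpha_i^{2}.
\]
Then applying the inequality $1 + x \leq e^{x}$ (valid for all real $x$) to $x = \sum_i p_i\alpha_i + \sum_i p_i\alpha_i^{2}$ yields
\[
    \mathbb{E}[\exp(Y)] \;\leq\; \exp\!\left(\sum_{i=1}^{K} p_i \alpha_i + \sum_{i=1}^{K} p_i \alpha_i^{2}\right),
\]
which is the desired conclusion.

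There is really no obstacle here; the only non-routine step is verifying the pointwise inequality $e^{\alpha} \leq 1 + \alpha + \alpha^{2}$ on the half-line $\alpha \leq 1$. Its truncation at $\alpha = 1$ matches the hypothesis exactly, which is why the hypothesis $\alpha_i \leq 1$ is needed (for $\alpha$ much larger than $1$ the inequality fails because $e^{\alpha}$ grows faster than any polynomial). The two-line calculus argument outlined above handles that step, and the rest of the proof is a direct application followed by one use of $1+x \leq e^{x}$.
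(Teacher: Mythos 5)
Your proof is correct and follows essentially the same route as the paper: apply the pointwise bound $e^{t}\le 1+t+t^{2}$ for $t\le 1$ termwise to $\E[\exp(Y)]$, then finish with $1+x\le e^{x}$. The only difference is that you supply the short calculus verification of the pointwise inequality, which the paper simply asserts.
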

\begin{proof}
  Since $e^t \leq 1 + t + t^2$ for all $t \leq 1$, we have
  \begin{equation*}
    \E[ \exp(Y) ]
    \leq \E[ 1 + Y + Y^2 ]
    = 1 + \sum_{i=1}^K q_i \alpha_i + \sum_{i=1}^K q_i \alpha_i^2 .
  \end{equation*}
  The claim now follows since $1+t\leq e^t$ for all $t \in \R$.
\end{proof}

\end{document}